\newcommand{\Rmnum}[1]{\expandafter\@slowromancap\romannumeral #1@}
\newtheorem{theorem}{Theorem}
\newtheorem*{theorem*}{Theorem}
\newtheorem{definition}{Definition}
\newtheorem{lemma}{Lemma}
\newtheorem*{lemma*}{Lemma}
\newtheorem{claim}{Claim}
\newtheorem*{claim*}{Claim}
\newtheorem*{cor*}{Corollary}
\newtheorem{remark}{Remark}
\newtheorem{fact}{Fact}
\newtheorem{assumption}{Assumption}
\newcommand{\namedref}[2]{\hyperref[#2]{#1~\ref*{#2}}}
\newcommand{\GD}{\text{GD}}
\newcommand{\Algorithmref}[1]{\namedref{Algorithm}{algo:#1}}
\newcommand{\Assumptionref}[1]{\namedref{Assumption}{assump:#1}}
\newcommand{\Sectionref}[1]{\namedref{Section}{sec:#1}}
\newcommand{\Subsectionref}[1]{\namedref{Section}{subsec:#1}}
\newcommand{\Appendixref}[1]{\namedref{Appendix}{app:#1}}
\newcommand{\Theoremref}[1]{\namedref{Theorem}{thm:#1}}
\newcommand{\Lemmaref}[1]{\namedref{Lemma}{lem:#1}}
\newcommand{\Remarkref}[1]{\namedref{Remark}{remark:#1}}
\newcommand{\Claimref}[1]{\namedref{Claim}{claim:#1}}
\newcommand{\Factref}[1]{\namedref{Fact}{fact:#1}}
\newcommand{\Figureref}[1]{\namedref{Figure}{fig:#1}}
\newcommand{\Pageref}[1]{\hyperref[#1]{page~\pageref*{#1}}}
\newcommand{\eps}{\ensuremath{\epsilon}\xspace}
\definecolor{darkred}{rgb}{0.5, 0, 0} 
\definecolor{darkblue}{rgb}{0,0,0.5} 
\newcommand{\topk}{\ensuremath{\mathrm{top}_k}\xspace}
\newcommand{\randk}{\ensuremath{\mathrm{rand}_k}\xspace}
\newcommand{\select}{\ensuremath{\mathrm{select}}\xspace}
\renewcommand{\(}{\ensuremath{\left(}\xspace}
\renewcommand{\)}{\ensuremath{\right)}\xspace}
\newcommand{\vct}[1]{\boldsymbol{#1}}
\newcommand{\bmu}{\ensuremath{\boldsymbol{\mu}}\xspace}
\newcommand{\sigmag}{\ensuremath{\sigma_{\text{g}}}\xspace}
\newcommand{\bzero}{\ensuremath{{\bf 0}}\xspace}
\newcommand{\bg}{\ensuremath{\boldsymbol{g}}\xspace}
\newcommand{\btg}{\ensuremath{\widehat{\bg}}\xspace}
\newcommand{\btlg}{\ensuremath{\widetilde{\bg}}\xspace}
\newcommand{\bx}{\ensuremath{\boldsymbol{x}}\xspace}
\newcommand{\bp}{\ensuremath{\boldsymbol{p}}\xspace}
\newcommand{\bhx}{\ensuremath{\widehat{\bx}}\xspace}
\newcommand{\by}{\ensuremath{\boldsymbol{y}}\xspace}
\newcommand{\bty}{\ensuremath{\widetilde{\by}}\xspace}
\newcommand{\bz}{\ensuremath{\boldsymbol{z}}\xspace}
\newcommand{\bw}{\ensuremath{\boldsymbol{w}}\xspace}
\newcommand{\btw}{\ensuremath{\widetilde{\bw}}\xspace}
\newcommand{\bu}{\ensuremath{\boldsymbol{u}}\xspace}
\newcommand{\bv}{\ensuremath{\boldsymbol{v}}\xspace}
\newcommand{\bA}{\ensuremath{{\bf A}}\xspace}
\newcommand{\bB}{\ensuremath{{\bf B}}\xspace}
\newcommand{\btF}{\ensuremath{\widehat{F}}\xspace}
\newcommand{\btlF}{\ensuremath{\widetilde{F}}\xspace}
\newcommand{\bG}{\ensuremath{{\bf G}}\xspace}
\newcommand{\bI}{\ensuremath{{\bf I}}\xspace}
\newcommand{\bM}{\ensuremath{{\bf M}}\xspace}
\newcommand{\bP}{\ensuremath{{\bf P}}\xspace}
\newcommand{\calA}{\ensuremath{\mathcal{A}}\xspace}
\newcommand{\calB}{\ensuremath{\mathcal{B}}\xspace}
\newcommand{\calC}{\ensuremath{\mathcal{C}}\xspace}
\newcommand{\calD}{\ensuremath{\mathcal{D}}\xspace}
\newcommand{\calE}{\ensuremath{\mathcal{E}}\xspace}
\newcommand{\calS}{\ensuremath{\mathcal{S}}\xspace}
\newcommand{\calT}{\ensuremath{\mathcal{T}}\xspace}
\newcommand{\bW}{\ensuremath{{\bf W}}\xspace}
\newcommand{\bY}{\ensuremath{{\bf Y}}\xspace}
\newcommand{\bZ}{\ensuremath{{\bf Z}}\xspace}
\newcommand{\bbE}{\ensuremath{\mathbb{E}}\xspace}
\newcommand{\calF}{\ensuremath{\mathcal{F}}\xspace}
\newcommand{\calH}{\ensuremath{\mathcal{H}}\xspace}
\newcommand{\calK}{\ensuremath{\mathcal{K}}\xspace}
\newcommand{\calO}{\ensuremath{\mathcal{O}}\xspace}
\newcommand{\calQ}{\ensuremath{\mathcal{Q}}\xspace}
\newcommand{\Y}{\ensuremath{\mathcal{Y}}\xspace}
\newcommand{\calV}{\ensuremath{\mathcal{V}}\xspace}
\newcommand{\W}{\ensuremath{\mathcal{W}}\xspace}
\newcommand{\N}{\ensuremath{\mathcal{N}}\xspace}
\newcommand{\R}{\ensuremath{\mathbb{R}}\xspace}
\renewcommand{\paragraph}[1]{\smallskip\noindent{\bf #1}~}
\begin{document}

\title{Byzantine-Resilient SGD in High Dimensions on \\ Heterogeneous Data\thanks{This work was supported by the NSF grants \#1740047, \#1514731, and by the UC-NL grant LFR-18-548554.}}

\author{\medskip Deepesh Data and Suhas Diggavi \\
University of California, Los Angeles, USA \\ Email: \texttt{deepesh.data@gmail.com, suhas@ee.ucla.edu}}

\date{}
\maketitle

\begin{abstract}
We study distributed stochastic gradient descent (SGD) in the master-worker architecture under Byzantine attacks. 
We consider heterogeneous data model, where different workers may have different local datasets, and we do not make any probabilistic assumption on data generation. 
At the core of our algorithm, we use the polynomial-time outlier-filtering procedure for robust mean estimation proposed by Steinhardt et al.\ (ITCS 2018) 
to filter-out corrupt gradients.
In order to be able to apply their filtering procedure in our {\em heterogeneous} data setting where workers compute {\em stochastic} gradients, we derive a new matrix concentration result, which may be of independent interest.

We provide convergence analyses for smooth strongly-convex and non-convex objectives. 
We derive our results under the bounded variance assumption on local stochastic gradients and a {\em deterministic} condition on datasets, namely, gradient dissimilarity; and for both these quantities, we provide concrete bounds in the statistical heterogeneous data model.
We give a trade-off between the mini-batch size for stochastic gradients and the approximation error. 
Our algorithm can tolerate up to $\frac{1}{4}$ fraction Byzantine workers. It can find approximate optimal parameters in the strongly-convex setting exponentially fast, and reach to an approximate stationary point in the non-convex setting with a linear speed, thus, matching the convergence rates of vanilla SGD in the Byzantine-free setting. 

We also propose and analyze a Byzantine-resilient SGD algorithm with gradient compression, 
where workers send $k$ random coordinates of their gradients. 
Under mild conditions, we show a $\frac{d}{k}$-factor saving in communication bits as well as decoding complexity over our compression-free algorithm without affecting its convergence rate (order-wise) and the approximation error.

\end{abstract}

{\allowdisplaybreaks

\section{Introduction}\label{sec:intro}
Stochastic gradient descent (SGD) \cite{RobbinsMonro51} is the main workhorse behind the optimization procedure in several modern large-scale learning algorithms \cite{Bottou10}. In this paper, we consider a master-worker architecture, where the training data is distributed across several machines (workers) and a central node (master) wants to learn a machine learning model using SGD~\cite{mapreduce}.
This setting naturally arises in the case of {\em federated learning} \cite{KonecnyThesis,jakub2016fed}, where user devices are recruited to help build machine learning models.
This can also arise in a distributed setup, where data is partitioned and stored in many servers to speed up the computation. 
In such scenarios, the recruited worker nodes may not be trusted with their computation, 
either because of non-Byzantine failures, such as software bugs, noisy training data, etc.,
or because of Byzantine attacks, where corrupt nodes may manipulate the information to their advantage \cite{LamportShPe82}.
These Byzantine adversaries may collaborate and arbitrarily deviate from their pre-specified programs.
Training machine learning models in the presence of Byzantine attacks has received attention lately
\cite{Bartlett-Byz18,Bartlett-Byz_nonconvex19,Alistarh_Byz-SGD18,SuX_Byz19,ChenSX_Byz17,Krum_Byz17,SignSGD_Byz19,Draco_ByzGD18,Detox_ByzSGD19,DataSoDi_arxiv-Byz19,DataSoDi_ByzGD19,DataDi_ByzCD19,NirupamVa_Byz-SGD19} and also in the context of the Internet of Battlefield Things (IoBT) \cite{IoBTproject}.
See also \cite[Section 5]{Kairouz-FL-survey19} for a detailed survey on Byzantine-robustness in federated learning. 
The importance of this problem motivates us to study Byzantine-resilient optimization algorithms that are suitable for large-scale learning problems.
See \Subsectionref{related-work} where we put our work in context with these works.

In this paper, we study distributed SGD in the presence of Byzantine adversaries for empirical risk minimization.
The training data is distributed across $R$ different workers, and master wants to iteratively build a machine learning model using the gradients computed at the workers. All workers have potentially different local datasets, and we do not make any probabilisitic assumption on data generation. 
In our setup, up to $\eps R$ workers (where $\eps>0$ is a constant) may be under Byzantine attacks, and corrupt workers may collaborate and report adversarially chosen gradients to the master; see \Figureref{dist-byz-sgd}.
See also \Subsectionref{adversary-model} for more details on our adversary model.

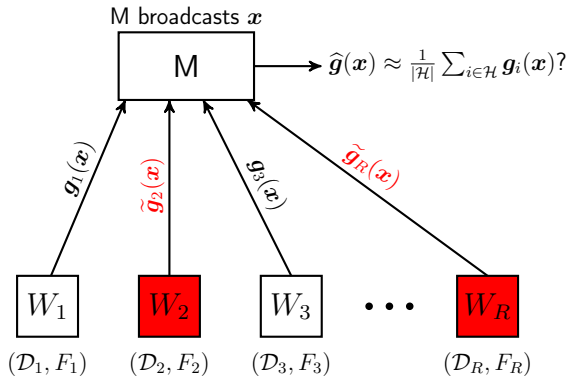
\begin{SCfigure}
\centering
\begin{tikzpicture}[scale=0.9, >=stealth', font=\sffamily\large, thick]
\node [scale=0.9, right] at (-0.2, 5.7) {Distributed SGD with Byzantine adversaries};
\node [rotate=0, scale=0.75] at (2.5,4.75) {M \text{broadcasts} $\bx$};
\draw (1.5,3.5) rectangle (3.5,4.5) node [pos=0.5] {M};
\draw [->] (3.5,4.0) -- (4.5,4.0);
\node [scale=0.75, right] at (4.5,4.0) {$\btg(\bx)\approx\frac{1}{|\calH|}\sum_{i\in\calH}\bg_i(\bx)$?};

\draw (0,0) rectangle (0.9,0.9) node [pos=0.5] {$W_1$}; \node [scale=0.75] at (0.45, -0.4) {$(\calD_1,F_1)$};
\draw [fill=red] (1.8,0) rectangle (2.7,0.9) node [pos=0.5] {$W_2$}; \node [scale=0.75] at (2.25, -0.4) {$(\calD_2,F_2)$};
\draw (3.6,0) rectangle (4.5,0.9) node [pos=0.5] {$W_3$}; \node [scale=0.75] at (4.05, -0.4) {$(\calD_3,F_3)$};
\draw [fill=black] (5.2,0.45) circle [radius=0.04];
\draw [fill=black] (5.5,0.45) circle [radius=0.04];
\draw [fill=black] (5.8,0.45) circle [radius=0.04];
\draw [fill=red] (6.5,0) rectangle (7.4,0.9) node [pos=0.5] {$W_R$}; \node [scale=0.75] at (6.95, -0.4) {$(\calD_R,F_R)$};

\draw [->] (0.5,0.9) -- (1.6,3.5); \node [scale=0.75, rotate=68] at (0.89,2.5) {$\bg_1(\bx)$};
\draw [->] (2.25,0.9) -- (2.25,3.5); \node [scale=0.75, rotate=90, red] at (2.0,2.25) {$\btlg_2(\bx)$};
\draw [->] (4.05,0.9) -- (2.75,3.5); \node [scale=0.75, rotate=-65] at (3.7,2.25) {$\bg_3(\bx)$};
\draw [->] (6.95,0.9) -- (3.4,3.5); \node [scale=0.75, rotate=-32, red] at (5.25,2.5) {$\btlg_R(\bx)$};
\end{tikzpicture}
\centering
\caption{In the master-worker architecture for distributed optimization, each of the $R$ workers (denoted by $W_i$) stores local datasets -- worker $r$ stores $\calD_r$ with an associated local loss function $F_r$. We are in a heterogeneous data setting, where the local datasets $\calD_r$'s are arbitrary and are not necessarily generated from the same distribution.
Master (denoted by M) wants to learn a machine learning model through SGD which minimizes the average of local loss functions; see \eqref{eq:problem-expression}.
The adversarial nodes are denoted in red color. Let $\calH$ denote the set of honest workers.
In any SGD iteration, master broadcasts the current model parameter vector $\bx$ to all workers.
Each honest worker $i$ computes the stochastic gradient $\bg_i(\bx)$ and sends it back to the master; 
corrupt nodes may send arbitrary vectors.
Master wants to compute $\btg(\bx)\approx\frac{1}{|\calH|}\sum_{i\in\calH}\bg_i(\bx)$ in order to update the model parameter vector.
Computing $\btg(\bx)$ and providing convergence analyses for strongly-convex and non-convex objectives is the subject of this paper.}
\label{fig:dist-byz-sgd}
\end{SCfigure}

\subsection{Our contributions}
In our Byzantine-resilient SGD algorithm (see \Algorithmref{Byz-SGD}), we use a non-trivial decoding at master to filter-out corrupt gradients, which is inspired by the recent advances in high-dimensional robust mean estimation \cite{Robust_mean_LRV16,Robust_mean_DiakonikolasKK016,Resilience_SCV18}.
In particular, we use the outlier-filtering method developed by \cite{Resilience_SCV18}.
Our algorithm can tolerate $\eps\leq\frac{1}{4}$ fraction of corrupt worker nodes.
Our results can be summarized as follows:
\begin{itemize}
\item We give convergence analyses for smooth strongly-convex and non-convex objectives.
We assume these functional properties (e.g., smoothness, convexity, etc.) only for the global objective function, and the local objective functions (defined by the local datasets) at different workers may be arbitrary. 
We identify a {\em deterministic} condition on datasets, namely, that the gradient dissimilarity at different workers is bounded, and we derive our convergence results under this condition. 
For SGD, we make a standard assumption that the local variances (due to sampling stochastic gradients) at all workers are bounded.
We provide concrete upper bounds on the gradient dissimilarity as well as the local variances in the statistical heterogeneous model under different distributional assumptions (sub-exponential and sub-Gaussian) on local gradients; see \Theoremref{concrete-kappa-bound_stat} and \Theoremref{variance-bound_stat} in \Sectionref{statistical-model}.

\item In the strongly-convex case, our algorithm can find optimal parameters within an approximation error of $\calO(\kappa^2+\frac{\sigma^2}{bR}+\frac{\sigma^2d\eps'}{bR})$ (where $\eps'>\eps$ is any constant and $b$ is the mini-batch size for stochastic gradients) ``exponentially fast''; and in the non-convex case, it can find an approximate stationary point within the same error with ``linear speed'', i.e., with a rate of $\nicefrac{1}{T}$.
The $\kappa^2$ term in the approximation error quantifies the gradient dissimilarity, and is equal to zero if all workers are assumed to have access to the same dataset, which (i.e., $\kappa=0$) had been the underlying assumption in most results in Byzantine-resilient SGD; see, for example, \cite{Krum_Byz17,Alistarh_Byz-SGD18}.
The second term $\frac{\sigma^2}{bR}$ is the standard SGD variance term and
the third term $\frac{\sigma^2d\eps'}{bR}$ is due to Byzantine attacks.
Note that both these terms can be made small by taking a sufficiently large mini-batch size of stochastic gradients. 
Note that when workers compute full-batch gradients (i.e., $\sigma=0$), the approximation error becomes $\calO(\kappa^2)$.\footnote{It is not surprising that when $\kappa=0$, we reach to an exact optimum in full-batch GD -- when $\kappa=0$, all workers have the same data, and master can decode the correct gradient by simply taking the majority vote of the received gradients.}
See \Theoremref{SGD_convergence} for our mini-batch SGD results and \Theoremref{full-batch-GD} for our full-batch GD results.

\item In order to be able to apply the robust mean estimation result of \cite{Resilience_SCV18} to our {\em heterogeneous} data setting where workers sample {\em stochastic} gradients from their local datasets, we derive a new matrix concentration bound (stated in \Theoremref{gradient-estimator}), by building upon a recent result from \cite{Untrusted-data_Charikar17}, which was developed to tackle several robust learning tasks.
See \Subsectionref{bounded-variance_subset} for more details.

\item We extend the above results to a setting, where, instead of sending full gradients, workers send {\em compressed} gradients to master. In particular, workers send random $k$ coordinates of their $d$-dimensional gradients to master; see \Algorithmref{algo_compression}. Under a mild condition that $\eps$ is constant and $k\geq(1-\eps)R$, i.e., the adversary corrupts a constant fraction of workers and $k$ is more than the number of honest workers, we show that the convergence rate and the approximation error achieved by this algorithm match (order-wise) with its compression-free counterpart. This has two direct implications: first, it gives a saving of $\frac{d}{k}$ factor in communication bits as well as in the decoding complexity, which could be as large as $\Omega(\frac{d}{R})$; and second, it shows that incorporating error-feedback cannot improve the convergence of our algorithm, which is in sharp contrast with the Byzantine-free SGD with compression, where error-feedback provably improves the convergence rate \cite{memSGD}. See \Theoremref{convergence_randk} for our results with gradient compression.
\end{itemize}
As far as we know, this is the first paper that studies Byzantine-resilient distributed SGD under standard SGD assumptions in the heterogeneous data model, where different workers may have different local datasets with no probabilistic assumptions on data generation.\footnote{Note that \Theoremref{concrete-kappa-bound_stat} and \Theoremref{variance-bound_stat} are just to demonstrate the behavior of the parameters if the data was generated by a heterogeneous statistical model.}
We can control the approximation error (in our convergence results) with the batch size of the mini-batch stochastic gradients, and can reduce it a point where the approximation error only depends on the heterogeneity in the local datasets. Note that the approximation error consists of two types of terms: one is a fixed term that captures the gradient dissimilarity among local datasets; and the other is a statistical error term, which comes from two sources, one from stochastic sampling of gradients and the other from robustly estimating the true gradient. Reducing the first type of statistical error by increased mini-batch is standard, but an implication of our result is that we can even control the second type of statistical error (which is due to Byzantine attacks) in the same way.

\subsection{Related work}\label{subsec:related-work}
Byzantine-resilient distributed computing has a long history \cite{LamportShPe82} and is a very well studied topic, which has received recent attention in the context of distributed learning. 
The approach taken to tackle Byzantine attacks in literature can be broadly divided into two categories: One using statistical assumptions on the data (e.g., all workers have data generated from the same distribution, or in the extreme case, all workers have the same data) -- detailed references are given below; and the other using coding-theoretic or redundancy-based techniques \cite{Draco_ByzGD18,DataSoDi_arxiv-Byz19,Detox_ByzSGD19}.

Byzantine SGD has been studied under the assumption that all workers have access to the entire dataset:
\cite{Krum_Byz17} employed a heuristic to compute the smallest ball containing $(1-\eps)R$ gradients, where an $\eps$ fraction of the $R$ workers are corrupt. They consider non-convex objectives and showed an almost sure convergence of gradients under stringent assumptions. \cite{Alistarh_Byz-SGD18} filter out corrupt workers based on gradients received in different iterations. Using martingale-based methods, they showed convergence under an assumption that the stochastic error in gradients is bounded with probability 1 (instead of assuming bounded variance). 
\cite{NirupamVa_Byz-SGD19} studies linear regression only -- it removes corrupt nodes based on norm-filtering, and achieves an error that scales with the number of data points.
In contrast, our results in this paper are for general smooth strongly-convex and non-convex objectives under a standard SGD assumption of bounded variance, where different workers have different local datasets and we assume no data distribution (thus, deviating from the standard assumption in Byzantine-resilient SGD of all workers having access to the same dataset \cite{Krum_Byz17,Alistarh_Byz-SGD18}). Our results are qualitatively different from these papers, where we can control the approximation error by the mini-batch size (larger the mini-batch size, better the accuracy), and with sufficiently large batch size, we can even get an error that depends only on the heterogeneity bound (which is captured through the gradient dissimilarity).

There have been works in {\em full-batch} gradient descent against Byzantine attacks, where data at workers is assumed to be drawn i.i.d.~from a probability distribution, and the goal is to minimize the {\em population risk} \cite{Bartlett-Byz18,Bartlett-Byz_nonconvex19,SuX_Byz19,ChenSX_Byz17}. 
In contrast, we consider a more realistic {\em heterogeneous} data setting, where different workers may have local datasets satisfying bounded gradient dissimilarity (the bound is $\kappa^2$, as stated in \eqref{bounded_local-global}), and our goal is to minimize the {\em empirical risk}. 
We give concrete bounds on $\kappa^2$ for a statistical heterogeneous model, 
where data points at different workers are sampled from different distributions, but i.i.d.\ from the same distribution at any given worker. In this setting, we prove that $\kappa^2 = \kappa_{\text{mean}}^2 + \calO\(\frac{d\log(nd)}{n}\)$, where $\kappa_{\text{mean}}$ captures the difference between local and global population means and $n$ is the number of data samples at each worker. Note that $\kappa_{\text{mean}}^2=0$ in the above-mentioned works, as all workers sample data from the same distribution. 
Though our main focus in this paper is on SGD, we also give our specialized results for full-batch GD, with which we will compare the results in above works.
With the above bound on $\kappa^2$ (derived in the statistical heterogeneous data model), the approximation error in our convergence results with full-batch GD for both strongly-convex and non-convex objectives is bounded by $\widetilde{\calO}\(\kappa_{\text{mean}}^2 + \frac{d}{n}\)$.\footnote{The $\widetilde{\calO}$ and $\widetilde{\Omega}$ notations hide logarithmic factors.}
\cite{Bartlett-Byz18} employed coordinate-wise median and trimmed median, and got an approximation error of $\widetilde{\calO}\(\frac{d^2}{nR}\)$ for both convex and non-convex objectives, which could be prohibitive in high-dimensional problems; 
\cite{ChenSX_Byz17} and \cite{SuX_Byz19} considered only strongly-convex objectives, where \cite{ChenSX_Byz17} used decoding based on median-of-means and gave an error of $\widetilde{\calO}\(\frac{\eps d}{n}\)$, and \cite{SuX_Byz19} improved it to $\widetilde{\calO}\(\frac{d}{nR}\)$ for constant $\eps$  -- observe that these papers only study strongly-convex objectives, whereas, in addition, we study non-convex objectives too. The decoding algorithm in \cite{SuX_Byz19} is taken from \cite{Resilience_SCV18}, which is based on robust mean estimation, and we also use that algorithm in our decoding. Note that \cite{SuX_Byz19} requires $\widetilde{\Omega}(d^2)$ data points to succeed, whereas, we require $\widetilde{\Omega}(dR)$ data points, which could be significantly smaller when $d\gg R$. 
\cite{Bartlett-Byz_nonconvex19} proposed and analyzed an algorithm to avoid saddle-point attacks in non-convex problems and provided {\em second-order} convergence guarantees. 
In the high-dimensional setting, they also used the decoding algorithm of \cite{Resilience_SCV18} and gave an approximation error of $\widetilde{\calO}\(\frac{d}{nR}\)$ in their i.i.d.\ homogeneous data setting.
Note that our bound $\widetilde{\calO}\(\kappa_{\text{mean}}^2 + \frac{d}{n}\)$ on the approximation error is in a more general {\em heterogeneous} data setting, and we believe that our results can also be extended to combat the saddle-point attacks in non-convex problems, which we leave as part of the future work.

Apart from the heterogeneity in data, there are other technical differences between \cite{SuX_Byz19,Bartlett-Byz_nonconvex19} and our work, and we would like to point out one of them here. In order to use the decoding algorithm of \cite{Resilience_SCV18}, both these works derive a matrix concentration bound, the need of which arises because they minimize the population risk.
In this paper, since we minimize the empirical risk, we do not need such a result. 
However, we do need to prove a matrix concentration bound (which is of a very different nature than theirs, and we use entirely different tools to prove that), the need of which arises because the gradients are {\em stochastic} due to SGD -- if we work with {\em full-batch} deterministic gradients, we would not need any of such concentration bounds. See also the discussion in \Subsectionref{bounded-variance_subset} for more details and \Theoremref{gradient-estimator} for our new matrix concentration result.
Note that \cite{SuX_Byz19} analyzed {\em full-batch} gradient descent only for strongly-convex objectives in the i.i.d.\ homogeneous data setting and left a few problems open, including analyzing the {\em stochastic} gradient descent, convergence for non-convex objectives, and an algorithm with gradient compression.
In this paper, we resolve all these open problems (while minimizing the empirical risk) in a more general {\em heterogeneous} data setting, and provide comprehensive analyses of Byzantine SGD and prove its convergence for both strongly-convex and non-convex objectives.
Note that \cite{Bartlett-Byz_nonconvex19} also provided a convergence analysis for non-convex objectives in the same setting as that of \cite{SuX_Byz19}, i.e., the i.i.d.\ homogeneous data setting while minimizing the population risk with full-batch GD; in contrast, our results are in a {\em heterogeneous} data setting, and we minimize the empirical risk with SGD.

As far as we know, not much has been studied for Byzantine learning with gradient compression, except for a few notable exceptions of \cite{SignSGD_Byz19,Compressed_ByzGD19}. 
Under the assumption that all workers have access to the same data,
\cite{SignSGD_Byz19} achieves compression using a 1-bit quantizer -- sign of the gradient vector -- and performs a simple decoding at master node using the majority vote.
They assume that each component of the stochastic gradients has symmetric and unimodal distribution around its mean. Their algorithm can only tolerate ``blind multiplicative adversaries'', which restricts the adversary to multiply the gradient by any vector of its choice, but it has to be decided before observing the gradient.\footnote{It is not hard to come up with a slightly more powerful adversarial attack that breaks their system.}
Their convergence results only hold under large mini-batch stochastic gradients, where the mini-batch size is equal to the total number of iterations.
\cite{Compressed_ByzGD19} studies {\em full batch} gradient descent under the i.i.d.~homogeneous data assumption. Their setting and distributional assumptions are similar to \cite{Bartlett-Byz18}, and they get an approximation error of $\widetilde{\calO}\(\frac{d^2}{nR}\)$, which could be prohibitive in high-dimensional settings. Their results are for an arbitrary compressor, but their decoding algorithm employs only norm filtering; see the discussion in \Sectionref{robust-grad-est} on why norm-based filtering is not sufficient for getting good approximation guarantees in high-dimensional learning.
In contrast to the settings in both these papers, we study mini-batch distributed SGD on {\em heterogeneous} data under standard SGD assumptions (with different workers having different datasets and no data distribution), where we can control the approximation error by the mini-batch size, and thus obtain qualitatively different results.

\subsection{Paper organization}
We describe our problem setup in \Sectionref{problem-setup}.
We state our main convergence results in \Sectionref{our-results} and extend them with gradient compression in \Sectionref{compression}.
We describe the core part of our algorithm, the robust gradient estimation, and our new matrix concentration result in \Sectionref{robust-grad-est}. 
We instantiate our assumptions in the statistical heterogeneous data model in \Sectionref{statistical-model}.
We conclude with a few open problems in \Sectionref{conclusion}.

\subsection{Notation}
For any $n,n_1,n_2\in\mathbb{N}$ such that $n_1\leq n_2$, we denote the set $\{1,2,\hdots,n\}$ by $[n]$, 
and the set $\{n_1,n_1+1,\hdots,n_2\}$ by $[n_1:n_2]$.
For any finite set $\calK\subset\mathbb{N}$, we write $k\in_U\calK$ to denote that $k$ is sampled uniformly at random from $\calK$.
We denote matrices with bold capital letters $\bA,\bB,$ etc., and 
vectors with bold small letters $\bx,\by$, etc.
All vector norms in this paper are $\ell_2$-norm, and, for simplicity, without explicitly writing $\|\cdot\|_2$, we will just denote them by $\|\cdot\|$. For a matrix $\bA$, we denote the matrix norm of $\bA$ (induced by the $\ell_2$-norm on the vector space) by $\|\bA\|$ (instead of explicitly writing $\|\bA\|_2$), which is equal to the largest singular value of $\bA$.
For a matrix $\bA$, we write $\bA\succeq\bzero$ and $\bA\succ\bzero$ to denote that $\bA$ is positive semi-definite and positive definite, respectively. For two matrices $\bA,\bB$, we write $\bA\preceq\bB$ to denote that $(\bB-\bA)$ is positive semi-definite.
For a square matrix $\bA$, we denote its largest eigenvalue by $\lambda_{\max}(\bA)$.

\section{Problem Setup}\label{sec:problem-setup}
In the master-worker architecture that we consider in this paper, each of the $R$ workers may have different datasets; see \Figureref{dist-byz-sgd}.
Let the dataset stored at the $r$'th worker be denoted by $\calD_{r}=\{\bz_{r,1},\bz_{r,2},\hdots,\bz_{r,n_r}\}$, which is a collection of $n_r$ data points for some $n_r\in\mathbb{N}$. 
We allow different workers to have different number of data points.
Let $\calC\subseteq\R^d$ denote the parameter space.
We can take $\calC$ to be equal to $\R^d$ in the absence of any constraints on parameters or a compact and convex set otherwise.
Note that the dimension of the data samples may be much smaller than $d$. 
For example, in the case of neural networks, the dimension of the data samples is equal to the number of inputs in the first layer, which may be much smaller than the dimension $d$ of the model learned.

Our goal is to learn a model $\bx\in\calC$ that minimizes the average loss $F(\bx):=\frac{1}{R}\sum_{r=1}^R F_r(\bx)$, where $F:\R^d\to\R$ denotes the global loss function, and for each $r\in[R]$, $F_r:\R^d\to\R$ denotes the local loss function at worker $r$. For $r\in[R]$, $F_r$ is defined as $F_r(\bx):= \frac{1}{n_r}\sum_{i=1}^{n_r}F_{r,i}(\bx)$, where $F_{r,i}(\bx)$ denotes the loss associated with $\bz_{r,i}$ (the $i$'th data-point at worker $r$) with respect to (w.r.t.) the model $\bx$.
Note that $F_r(\bx)$ denotes the average loss associated with the data-points in $\calD_r$ w.r.t.\ the model $\bx$, 
and we want to find an $\bx\in\calC$ that minimizes the average loss $\frac{1}{R}\sum_{r=1}^R F_r(\bx)$.
Formally, we want to solve the following minimization problem:
\begin{align}\label{eq:problem-expression}
\arg \min_{\bx\in \calC}\left(F(\bx) := \frac{1}{R}\sum_{r=1}^R F_r(\bx)\right).
\end{align} 
All convergence results in this paper only require properties of the global loss function $F$; the local loss functions $F_r,r\in[R]$ may be arbitrary.
For example, in the smooth strongly-convex case, we only require $F$ to be smooth and strongly-convex, 
and we do not impose any condition on $F_r$'s. Similarly for the smooth non-convex case.

When $F$ is strongly-convex, let the minimization in \eqref{eq:problem-expression} be attained at $\bx^*$ and we assume that $\bx^*\in\calC$. In the case of non-convex $F$, as standard in literature, 
we find a stationary point where the gradient becomes zero.

We can minimize \eqref{eq:problem-expression} using {\em distributed stochastic gradient descent} (SGD), which is an iterative algorithm that proceeds as follows: 
Initialize the model $\bx^{0}:=\bzero$.
At the $t$'th iteration, for $t\geq0$, master broadcasts $\bx^{t}$; each worker $r\in[R]$ sends $\bg_r(\bx^{t}):=\nabla F_{r,r_t}(\bx^{t})$ to the master for a randomly chosen $r_t\in_U[n_r]$, independent of the choice of other workers; 
master updates the parameter vector according to the following update rule:
\begin{align}\label{eq:sgd-update-rule}
\bx^{0}:=\bzero; \quad \bx^{t+1} = \bx^{t} - \eta\frac{1}{R}\sum_{r=1}^R \bg_r(\bx^{t}), \ \ t = 0,1,2,\hdots
\end{align}
Here, $\eta$ denotes the learning rate. 
We make the following assumptions about distributed SGD.

Note that, for any $r\in[R]$, $\bbE_{i\in_U[n_r]}[\nabla F_{r,i}(\bx)]=\nabla F_r(\bx)$ holds for every $\bx\in\R^d$.
\begin{assumption}[Bounded local variances]\label{assump:bounded_local-variance}
The stochastic gradient sampled from any local dataset is uniformly bounded over $\calC$ for all workers, i.e., there exists a finite $\sigma$, such that
\begin{align}\label{bounded_local-variance}
\bbE_{i\in_U[n_r]}\|\nabla F_{r,i}(\bx) - \nabla F_r(\bx)\|^2 \leq \sigma^2, \quad \forall \bx\in\calC, r\in[R].  
\end{align}
\end{assumption}
\begin{assumption}[Bounded gradient dissimilarity]\label{assump:gradient-dissimilarity}
The difference between the local gradients $\nabla F_r(\bx), r\in[R]$ and the global gradient $\nabla F(\bx)=\frac{1}{R}\sum_{r=1}^R\nabla F_r(\bx)$ is uniformly bounded over $\calC$ for all workers, i.e., there exists a finite $\kappa$, such that
\begin{align}\label{bounded_local-global}
\|\nabla F_r(\bx) - \nabla F(\bx)\|^2 \leq \kappa^2, \quad \forall \bx\in\calC, r\in[R].  
\end{align}
\end{assumption}
Though we assume that all local datasets have the same $\sigma,\kappa$, this is without loss of generality -- in case different datasets have different $\sigma_r,\kappa_r,r\in[R]$, since $\sigma,\kappa$ are upper bounds in \eqref{bounded_local-variance}, \eqref{bounded_local-global}, respectively, we can take $\sigma=\max_{r\in[R]}\sigma_r$ and $\kappa=\max_{r\in[R]}\kappa_r$ to be the maximum of the corresponding local parameters.

\Assumptionref{bounded_local-variance} is standard in the SGD literature.
In \Assumptionref{gradient-dissimilarity}, $\kappa$ quantifies the deviation between the local loss functions $F_r, r\in[R]$ and the global loss function $F$, and this assumption states that this deviation is bounded. Note that when all workers have access to the same dataset, we have $\kappa=0$, which has been the standard assumption for Byzantine SGD in literature \cite{Alistarh_Byz-SGD18,Krum_Byz17}. \Assumptionref{gradient-dissimilarity} has been used earlier; see, for example, \cite{Momentum_linear-speedup19}, which studies decentralized SGD with momentum (without Byzantine workers).

\begin{remark}\label{remark:bounded_local-global}
Observe that, in distributed algorithms in the presence of Byzantine adversaries, since we do not know which subset of $\eps R$ workers are corrupt, we have to make some assumption on the data to provide relationships among gradients sampled at different nodes for reliable decoding. 
Otherwise, since the adversary can corrupt any subset of $\eps R$ workers and we do not assume any relationship among gradients, there is no way we can do reliable decoding at the master.
Previous works created such relationships by assuming that either the workers' data is drawn i.i.d.~from a probability distribution \cite{Bartlett-Byz18,Bartlett-Byz_nonconvex19,SuX_Byz19,ChenSX_Byz17} or all workers can sample gradients from the same data \cite{Alistarh_Byz-SGD18,Krum_Byz17}.  
Other works have used redundancy-based or coding-theoretic techniques to provide such relationships among gradients \cite{Draco_ByzGD18,DataSoDi_arxiv-Byz19,Detox_ByzSGD19}.
All these approaches fall short of in a distributed setup such as federated learning \cite{jakub2016fed},
where workers have non-i.i.d.\ data and therefore cannot sample gradients from the same data, 
and it is infeasible to perform data encoding across different nodes.
In this paper, we neither use coding/redundancy-based techniques, nor do we make any probabilistic assumptions on the data generation; and we allow different workers to have different arbitrary datasets.
As argued above, since we have to make some assumption that correlates local data, we assume uniform boundedness \eqref{bounded_local-global} of the deviation of local gradients from the global gradient, which is a much weaker assumption that the above-mentioned ones.
\end{remark}

The gradient dissimilarity bound in \eqref{bounded_local-global} can be seen as a {\em deterministic} condition on local datasets, under which we derive our results. 
All results (matrix concentration and convergence) in this paper are given in terms of the variance bound $\sigma^2$ and the gradient dissimilarity bound $\kappa^2$.
In \Sectionref{statistical-model}, we provide concrete bounds on $\sigma,\kappa$ in the statistical {\em heterogeneous} model under different distributional assumptions (sub-exponential and sub-Gaussian) on local gradients. 
In the statistical heterogeneous model, local datasets at different workers are generated from potentially different distributions. This model is more suitable for federated learning \cite{jakub2016fed} than the statistical {\em homogeneous} model considered in literature \cite{ChenSX_Byz17,Bartlett-Byz18,SuX_Byz19,Bartlett-Byz_nonconvex19}, where all local datasets are generated from the {\em same} distribution.
Note that we make distributional assumptions on data generation {\em only} to derive bounds on $\sigma,\kappa$. Other than that, we do not make any distributional assumption on the data and all results in this paper hold for arbitrary datasets satisfying \eqref{bounded_local-variance}, \eqref{bounded_local-global}.

In the parameter update rule \eqref{eq:sgd-update-rule}, once the workers send the local stochastic gradients to master,
it aggregates them by taking their average and updates the parameter vector according to \eqref{eq:sgd-update-rule}.
Observe that, this simple aggregation rule (i.e., averaging) at master is vulnerable to Byzantine attacks, because, instead of sending the true stochastic gradients, the corrupt workers may send adversarially chosen vectors to disrupt the computation -- it is known that even a single Byzantine worker can prevent the algorithm to converge, even worse, it can cause the algorithm to converge to an adversarially chosen point \cite{Krum_Byz17}. Our adversary model is described  next.

\subsection{Adversary model}\label{subsec:adversary-model}
We assume that an $\eps$ fraction of $R$ workers are corrupt; as we see later, we can tolerate $\eps\leq\frac{1}{4}$. The corrupt workers can collaborate and arbitrarily deviate from their pre-specified programs: In any SGD iteration, instead of sending the true stochastic gradients, corrupt workers can send adversarially chosen vectors (they may not even send anything if they wish, in which case, the master can treat them as {\em erasures} and replace them with a fixed value). Note that, in the erasure case, master knows which workers are corrupt; whereas, in the Byzantine problem, master does not have this information.

Our algorithms are also resilient against a more powerful {\em adaptive} and {\em mobile} adversary (which can corrupt a different set of $\eps R$ workers in different SGD iterations based on the knowledge it has gathered in the past),\footnote{We do not allow a mobile adversary to contaminate local datasets of the compromised nodes; otherwise, after a certain number of iterations, it can end up contaminating the entire data stored at all the workers, which renders solving the optimization problem in \eqref{eq:problem-expression} meaningless.} as long as it does not change the set of corrupt workers {\em after} observing the gradients in any iteration; otherwise, the gradients of honest workers may not remain independent, a property we need in order to derive our matrix concentration result stated in \Theoremref{gradient-estimator}.
Note that since we allow a mobile adversary, we cannot consider optimizing the expression in \eqref{eq:problem-expression} with respect to the data stored only at the honest workers, as there is no fixed set of honest workers during the entire optimization procedure.

\section{Our Results}\label{sec:our-results}
We tackle the Byzantine behavior of corrupt workers by applying a non-trivial decoding algorithm at the master in each SGD iteration. Our decoding algorithm is inspired by the recent breakthrough results in theoretical computer science for  robust mean estimation \cite{Robust_mean_LRV16,Robust_mean_DiakonikolasKK016,Resilience_SCV18};
see \Sectionref{robust-grad-est} for more details. 

Before stating our results, we need to formally define {\em mini-batch} SGD. Note that we can speed up the convergence of distributed SGD by having each worker sample many data points (without replacement), say, $b\geq1$ data points, and send the average gradients on these data points to the master. This is called mini-batch SGD. 
To formalize this, for any $\bx\in\R^d, r\in[R], b\in[n_r]$, consider the following set 
\begin{align}\label{bigger_set}
\calF_r^{\otimes b}(\bx):=\left\{ \frac{1}{b}\sum_{i\in\calH_b}\nabla F_{r,i}(\bx) :  \calH_b\in \binom{[n_r]}{b} \right\}.
\end{align}
Let $H_b$ denote a random variable taking values in $\binom{[n_r]}{b}$ with uniform distribution.
We denote a mini-batch stochastic gradient with batch size $b$ by $\nabla F_{r,H_b}(\bx)$, which is a uniformly random element of $\calF_r^{\otimes b}(\bx)$.
It is not hard to show that the mean of $\nabla F_{r,H_b}(\bx)$ remains unchanged and is equal to $\nabla F_r(\bx)=\frac{1}{n_r}\sum_{i=1}^{n_r}\nabla F_{r,i}(\bx)$, and the variance reduces by a factor of $b$:
\begin{align}
\bbE_{H_b} \left[\nabla F_{r,H_b}(\bx)\right] &= \nabla F_r(\bx), \label{same_mean} \\
\bbE_{H_b} \left\|\nabla F_{r,H_b}(\bx) - \nabla F_r(\bx)\right\|^2 &\leq \frac{\sigma^2}{b}.\label{reduced_variance}
\end{align}
Note that the variance bound in \eqref{reduced_variance} trivially follows if we assume that the workers sample stochastic gradients {\em with} replacement.
In this paper, since workers sample stochastic gradients {\em without} replacement, we can in fact show a stronger bound of $\bbE_{H_b} \left\|\nabla F_{r,H_b}(\bx) - \nabla F_r(\bx)\right\|^2 \leq \frac{(n_r-b)}{b(n_r-1)}\sigma^2$.
However, for simplicity of exposition, we only use the weaker variance bound of \eqref{reduced_variance} in this paper.
\begin{algorithm}[t]
   \caption{Byzantine-Resilient SGD}\label{algo:Byz-SGD}
\begin{algorithmic}[1]
   \STATE {\bf Initialize.} Set $\bx^0 := \bzero$. Fix a constant learning rate $\eta$ and a mini-batch size $b$.
   \FOR{$t=0$ {\bfseries to} $T-1$}
  \STATE \textbf{On Workers:}
   \FOR{$r=1$ {\bfseries to} $R$}
   \STATE Receive $\bx^t$ from master. Take a mini-batch stochastic gradient $\bg_r(\bx^{t}) \in_U \calF_r^{\otimes b}(\bx^{t})$.
   \STATE 
   $\btlg_r(\bx^{t}) = 
   \begin{cases}
   \bg_r(\bx^{t}) & \text{ if worker $r$ is honest}, \\
   \divideontimes & \text{if worker $r$ is corrupt},
   \end{cases}$
   
   where $\divideontimes$ is an arbitrary vector in $\R^d$.
   \STATE Send $\btlg_r(\bx^{t})$ to master. 
   \ENDFOR
   
   \STATE \textbf{At Master:}
   \STATE Receive $\{\btlg_r(\bx^{t})\}_{r=1}^R$ from the $R$ workers.
   \STATE Apply the decoding algorithm {\sc RGE} (described in \Algorithmref{robust-grad-est} in \Appendixref{robust-mean-estimation}) on $\{\btlg_r(\bx^{t})\}_{r=1}^R$. Let $$\btg(\bx^{t})=\textsc{RGE}(\btlg_1(\bx^{t}),\hdots,\btlg_R(\bx^{t})).$$
   \STATE Update the parameter vector: 
   \begin{align*}
   \bhx^{t+1} = \bx^{t} - \eta\btg(\bx^{t}); \qquad 
   \bx^{t+1} = \Pi_{\calC}\left(\bhx^{t+1}\right).
   \end{align*}
   \STATE Broadcast $\bx^{t+1}$ to all workers.
   \ENDFOR
\end{algorithmic}
\end{algorithm}

We present our Byzantine-resilient SGD algorithm in \Algorithmref{Byz-SGD}.
Our convergence results are for both strongly-convex and non-convex smooth functions.
Before stating them, we need some definitions first. 
\begin{itemize}
\item {\bf $L$-smoothness:} A function $F:\calC\to\R$ is called $L$-smooth over $\calC\subseteq\R^d$, if for every $\bx,\by\in\calC$, we have $\|\nabla F(\bx) - \nabla F(\by)\| \leq L\|\bx-\by\|$ (this property is also known as $L$-Lipschitz gradients). This is also equivalent to $F(\by) \leq F(\bx) + \langle\nabla F(\bx), \by-\bx\rangle + \frac{L}{2}\|\bx-\by\|^2$.
\item {\bf $\mu$-strong convexity:} A function $F:\calC\to\R$ is called $\mu$-strongly convex over $\calC\subseteq\R^d$, if for every $\bx,\by\in\calC$, we have $F(\by) \geq F(\bx) + \langle\nabla F(\bx), \by-\bx\rangle + \frac{\mu}{2}\|\bx-\by\|^2$.
\end{itemize}

\subsection{Convergence results}\label{subsec:SGD_main-results}
\begin{theorem}[Strongly-convex and Non-convex]\label{thm:SGD_convergence}
Suppose an $\epsilon>0$ fraction of $R$ workers are adversarially corrupt.
For an $L$-smooth 
global objective function $F:\calC\to\R$, let \Algorithmref{Byz-SGD} generate a sequence of iterates $\{\bx^t\}_{t=0}^T$ when run with a fixed learning rate $\eta$, where in the $t$'th iteration, every honest worker $r\in[R]$ samples a mini-batch stochastic gradient from $\calF_r^{\otimes b}(\bx^t)$, satisfying \eqref{same_mean} and \eqref{reduced_variance} (corrupt workers may send arbitrary vectors).
Fix an arbitrary constant $\eps'>0$. If $\eps\leq\frac{1}{4} - \eps'$, then with probability at least $1-T\exp(-\frac{\eps'^2(1-\eps)R}{16})$, we have the following convergence guarantees:
\begin{itemize}
\item {\bf Strongly-convex:} If $F$ is also $\mu$-strongly convex and we take $\eta=\frac{\mu}{L^2}$, then we have 
\begin{align}\label{convex_convergence_rate}
\bbE\|\bx^{T} - \bx^* \|^2 &\leq \left(1-\frac{\mu^2}{2L^2}\right)^T\|\bx^0-\bx^*\|^2 + \frac{2L^2}{\mu^4}\varGamma.
\end{align}
If we take $T= {\log\left(\frac{\mu^4}{L^2\varGamma}\|\bx^0-\bx^*\|^2\right)}/{\log(\frac{1}{1-\nicefrac{\mu^2}{2L^2}})}$, we get $\bbE\|\bx^{T} - \bx^* \|^2 \leq \frac{3L^2}{\mu^4}\varGamma$.
\item {\bf Non-convex:} If we take $\eta=\frac{1}{4L}$, then we have 
\begin{align}\label{nonconvex_convergence_rate}
\frac{1}{T}\sum_{t=0}^T\bbE\|\nabla F(\bx^t)\|^2 &\leq \frac{8L^2}{T}\|\bx^0-\bx^*\|^2 + \varGamma,
\end{align}
If we take $T = \frac{8L^2\|\bx^0-\bx^*\|^2}{\varGamma}$, we get $\frac{1}{T}\sum_{t=0}^T\bbE\|\nabla F(\bx^t)\|^2 \leq 2\varGamma$.
\end{itemize}
In both \eqref{convex_convergence_rate} and \eqref{nonconvex_convergence_rate}, expectation is taken over the sampling of mini-batch stochastic gradients. Here, $\varGamma=\frac{9\sigma^2}{(1-(\eps+\eps'))bR} + 9\kappa^2 + 9\varUpsilon^2$ with $\varUpsilon = \calO\left(\sigma_0\sqrt{\eps+\eps'}\right)$, where $\sigma_0^2 = \frac{24\sigma^2}{b\eps'}\left(1 + \frac{d}{(1-(\eps+\eps'))R}\right) + 16\kappa^2$.
\end{theorem}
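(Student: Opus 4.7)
The plan is to carry out a standard SGD-style descent analysis after replacing the true global gradient by the decoded vector $\btg(\bx^t)$ produced by the master. The key per-iteration quantity is the error $\be^t := \btg(\bx^t) - \nabla F(\bx^t)$, which I would decompose as
\begin{align*}
\be^t = \bigl(\btg(\bx^t) - \overline{\bg}_{\calH}(\bx^t)\bigr) + \bigl(\overline{\bg}_{\calH}(\bx^t) - \nabla F(\bx^t)\bigr),
\end{align*}
where $\overline{\bg}_{\calH}(\bx^t) := \frac{1}{|\calH|}\sum_{r\in\calH}\bg_r(\bx^t)$ is the average of the honest stochastic gradients. The second term's squared norm is bounded by $\tfrac{\sigma^2}{b|\calH|} + \kappa^2$ by combining \Assumptionref{bounded_local-variance} (with the mini-batch variance reduction \eqref{reduced_variance}) and \Assumptionref{gradient-dissimilarity}, using independence of honest gradients across workers. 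The first term is governed by the guarantee of the robust gradient estimator {\sc RGE}: conditional on the event that the empirical covariance of the honest gradients is bounded by $\sigma_0^2$, {\sc RGE} returns $\btg$ with $\|\btg - \overline{\bg}_{\calH}\| \leq \varUpsilon = \calO(\sigma_0\sqrt{\eps+\eps'})$. This latter covariance bound is exactly what \Theoremref{gradient-estimator} (the new matrix concentration result) supplies, with failure probability $\exp(-\eps'^2(1-\eps)R/16)$ per iteration; a union bound over $T$ iterations yields the claimed high-probability guarantee.

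For the strongly-convex case, I would start from the projection non-expansiveness $\|\bx^{t+1}-\bx^*\|^2 \leq \|\bx^t - \eta\btg(\bx^t) - \bx^*\|^2$ (valid since $\bx^*\in\calC$) and expand to obtain
\begin{align*}
\|\bx^{t+1}-\bx^*\|^2 \leq \|\bx^t-\bx^*\|^2 - 2\eta\langle \nabla F(\bx^t),\bx^t-\bx^*\rangle - 2\eta\langle \be^t,\bx^t-\bx^*\rangle + \eta^2\|\btg(\bx^t)\|^2.
\end{align*}
I would then invoke $\mu$-strong convexity to bound $\langle\nabla F(\bx^t),\bx^t-\bx^*\rangle \geq \mu\|\bx^t-\bx^*\|^2$ (using $\nabla F(\bx^*)=\bzero$) and $L$-smoothness with $\nabla F(\bx^*)=\bzero$ to bound $\|\nabla F(\bx^t)\|^2 \leq L^2\|\bx^t-\bx^*\|^2$. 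Applying Young's inequality to the cross term $\langle \be^t,\bx^t-\bx^*\rangle$ and $\|\btg\|^2 \leq 2\|\nabla F(\bx^t)\|^2 + 2\|\be^t\|^2$ allows me to absorb the $\bx^t$-dependent terms into a contraction factor. Choosing $\eta = \mu/L^2$ produces the contraction $1-\tfrac{\mu^2}{2L^2}$, and summing the resulting geometric recursion delivers the additive floor $\tfrac{2L^2}{\mu^4}\varGamma$ in \eqref{convex_convergence_rate}.

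For the non-convex case, I would apply the $L$-smoothness descent lemma $F(\bx^{t+1}) \leq F(\bx^t) + \langle \nabla F(\bx^t),\bx^{t+1}-\bx^t\rangle + \tfrac{L}{2}\|\bx^{t+1}-\bx^t\|^2$, substitute $\bx^{t+1}-\bx^t = -\eta\btg(\bx^t)$ (projection onto $\calC$ only improves the bound when $F$ is defined on $\calC$), and use $\btg = \nabla F + \be^t$ together with $2\langle \nabla F,\be^t\rangle \geq -\|\nabla F\|^2 - \|\be^t\|^2$ to obtain a one-step decrease of the form $F(\bx^{t+1}) \leq F(\bx^t) - c_1\eta\|\nabla F(\bx^t)\|^2 + c_2\eta\|\be^t\|^2$, with $c_1>0$ secured by the choice $\eta=1/(4L)$. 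Telescoping from $t=0$ to $T$ and dividing by $T$ gives \eqref{nonconvex_convergence_rate}, where the initial gap $F(\bx^0)-F(\bx^*) \leq \tfrac{L}{2}\|\bx^0-\bx^*\|^2$ follows from smoothness.

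The main obstacle, and the point at which the novel contribution of the paper is actually used, is the uniform bound $\bbE\|\be^t\|^2 = \calO(\varGamma)$. The RGE guarantee is conditional on a covariance-boundedness property of the honest workers' stochastic gradients; because the gradients are stochastic rather than deterministic and the data is heterogeneous, this property does not hold automatically and must be established via \Theoremref{gradient-estimator}. I must also be careful that the mobile/adaptive adversary is required not to reassign corrupt workers \emph{after} seeing gradients in a given round, so that the honest gradients within an iteration remain mutually independent, which is the hypothesis underlying the matrix concentration bound and hence the entire analysis.
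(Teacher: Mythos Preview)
Your high-level strategy matches the paper's: projection non-expansiveness plus strong convexity/smoothness for the convex case, the smoothness descent lemma for the non-convex case, a per-iteration bound on $\bbE\|\be^t\|^2$ coming from \Theoremref{gradient-estimator}, and a union bound over $T$ iterations. The algebra with $\eta=\mu/L^2$ and $\eta=1/(4L)$ is exactly what the paper does.

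There is, however, a genuine mismatch in your error decomposition. You route the error through $\overline{\bg}_{\calH}$, the average over \emph{all} honest workers, and then assert that {\sc RGE} guarantees $\|\btg-\overline{\bg}_{\calH}\|\le\varUpsilon$. But that is not what \Theoremref{gradient-estimator} delivers. Its first part produces a subset $\calS\subsetneq\calH$ of size $(1-(\eps+\eps'))R$ (dropping an additional $\eps'$-fraction of honest workers to get the covariance under control), and its second part bounds $\|\btg-\bg_{\calS}\|$, where $\bg_{\calS}$ is the sample mean over $\calS$, not over $\calH$. Consequently the paper uses a \emph{three}-term split
\[
\nabla F(\bx^t)-\btg(\bx^t)=\bigl(\nabla F-\nabla F_{\calS^t}\bigr)+\bigl(\nabla F_{\calS^t}-\bg_{\calS^t}\bigr)+\bigl(\bg_{\calS^t}-\btg\bigr),
\]
bounding the three pieces by $\kappa^2$, $\tfrac{\sigma^2}{(1-(\eps+\eps'))bR}$, and $\varUpsilon^2$ respectively; this is precisely where the factor $9=3\cdot3$ in $\varGamma$ comes from. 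Your two-term split can be repaired by inserting $\bg_{\calS}$ in place of $\overline{\bg}_{\calH}$, but as written it does not line up with the estimator's guarantee.

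A second gap is your handling of projection in the non-convex case. The claim that ``projection onto $\calC$ only improves the bound'' is not justified for non-convex $F$: the descent-lemma step uses $\bx^{t+1}-\bx^t=-\eta\btg(\bx^t)$, which fails once projection is active. The paper does not argue that projection helps; instead it imposes \Assumptionref{non-convex_size-params-space} on the size of $\calC$ and proves (via a separate lemma using a \emph{deterministic} bound on $\|\be^t\|$) that all iterates remain in $\calC$, so projection is never triggered. Your proposal is missing this ingredient.
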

We prove the strongly-convex part of \Theoremref{SGD_convergence} in \Appendixref{convex_convergence} and the non-convex part in \Appendixref{nonconvex_convergence}.

\paragraph{Projection.}
If the parameter space $\calC$ is not equal to $\R^d$, then our convergence analysis for non-convex objectives requires a mild technical assumption on the size of $\calC$. This assumption is only required to ensure that the iterates $\bx^t$ always stay inside $\calC$ without projection. Similar assumption has also been made in \cite{Bartlett-Byz18} for the same purpose. This assumption streamlines our convergence analysis, as our focus in this paper is on Byzantine-resilience.
\begin{assumption}[Size of $\calC$]\label{assump:non-convex_size-params-space}
Suppose $\|\nabla F(\bx)\|\leq M$ for all $\bx\in\calC$.
We assume that $\calC$ contains the $\ell_2$ ball $\{\bx\in\R^d:\|\bx-\bx^0\|\leq\frac{2L}{\varGamma}(M+\varGamma_1)\|\bx^0-\bx^*\|^2\}$, where $\varGamma=\frac{9\sigma^2}{(1-(\eps+\eps'))bR} + 9\kappa^2 + 9\varUpsilon^2$ and $\varGamma_1=\frac{n_{\max}\sigma}{b} + \kappa + \varUpsilon$, 
where $n_{\max}=\max_{r\in[R]}n_r$ and other parameters are as defined in \Theoremref{SGD_convergence} above.
\end{assumption}
Note the dependence of the size of $\calC$ on $\frac{n_{\max}}{b}$, which is the maximum number of data samples at any worker. 
This happens because we want a {\em deterministic} bound on the size of $\calC$ (not in expectation) even though we are doing {\em stochastic} sampling of data points for gradient computation.
See the proof of \Lemmaref{non-convex_projection-not-needed} (and also \Claimref{robust-grad-est-under-deterministic-variance}) in \Appendixref{nonconvex_convergence} for more details.
\subsection{Remarks about \Theoremref{SGD_convergence}}\label{subsec:remarks_main-result}
In this section, we discuss some important aspects of our convergence results.

\paragraph{Analysis of the approximation error.}
In both parts of \Theoremref{SGD_convergence}, the approximation error $\varGamma$ consists of three error terms: first is $\varGamma_1=\calO(\nicefrac{\sigma^2}{(1-(\eps+\eps'))bR})$, which is the standard error arising due to the sampling of stochastic gradients; second is $\varGamma_2=\calO(\kappa^2)$, which is due to dissimilarity in the local datasets; and third is $\varGamma_3=\calO\left(\left(\frac{\sigma^2}{b\eps'}\left(1 + \frac{d}{(1-(\eps+\eps'))R}\right) + \kappa^2\right)(\eps+\eps')\right)$, which is due to Byzantine attacks. Observe that $\varGamma_1$ decreases with the mini-batch size $b$ and the number of workers $R$, as desired. Note that $\varGamma_3$ consists of two terms $\varGamma_{3,1}=\frac{\sigma^2}{b\eps'}\left(1 + \frac{d}{(1-(\eps+\eps'))R}\right)(\eps+\eps')$ and $\varGamma_{3,2}=\kappa^2(\eps+\eps')$, where we can make $\varGamma_{3,1}$ small by taking a large mini-batch size $b$. Note that the presence of $\varGamma_{3,2}$ is inevitable, since $\kappa$ captures the dissimilarity in different datasets, and that will always show up when bounding the deviation of the true ``global'' gradient from the decoded one in the presence of Byzantine workers.
See \Figureref{robust-grad-est} to get a pictorial intuition on the above analysis of the approximation error.

Hence, by taking a sufficiently large mini-batch size, we can reduce the error term to $\calO(\kappa^2)$, which, in the statistical heterogeneous model described in \Sectionref{statistical-model}, is equal to $\calO\(\kappa_{\text{mean}}^2+\frac{d\log(nd)}{n}\)$; see \Theoremref{concrete-kappa-bound_stat}. Here, $\kappa_{\text{mean}}$ captures the difference between local and global population means (see \Assumptionref{uniform-bound-mean-heterogeneous}) and $n$ is the number of data samples at each worker. In particular, if each worker has $n=\Omega\(d\log(nd)\)$ data points, and they take a sufficiently large mini-batch size in each iteration of \Algorithmref{Byz-SGD}, we can reduce the approximation error to $\calO(\kappa_{\text{mean}}^2)$.
Note that our {\em heterogeneous} data setting generalizes (as far as we know) the only data settings studied in literature for Byzantine-resilient distributed optimization (see also \Remarkref{bounded_local-global}), where workers either have i.i.d.\ {\em homogeneous} data (i.e., $\kappa_{\text{mean}}=0$) \cite{Bartlett-Byz18,Bartlett-Byz_nonconvex19,SuX_Byz19,ChenSX_Byz17}, or are assumed to have access to the {\em same} data \cite{Alistarh_Byz-SGD18,Krum_Byz17} (i.e., $\kappa=0$).
 
\paragraph{Convergence rates.} Note that, in the strongly-convex case, \Algorithmref{Byz-SGD} approximately finds optimal parameters $\bx^*$ (within $\varGamma$ error, which could be a constant)  ``exponentially fast''; and in the non-convex case, \Algorithmref{Byz-SGD} approximately finds a stationary point up to the same error with ``linear speed'', i.e., with a rate of $\nicefrac{1}{T}$.
Thus, we recover the convergence rate of vanilla SGD (running in the Byzantine-free setting) for both the objectives.

\paragraph{Corruption threshold.}
Our proposed algorithm can tolerate up to $\frac{1}{4}$ fraction Byzantine workers, which is away from the  information-theoretically optimal $\frac{1}{2}$ fraction. The $\frac{1}{4}$ bound comes from the subroutine of robust mean estimation (RME) that we use for robust gradient estimation (RGE), as explained in \Sectionref{robust-grad-est}. So, improved algorithms for RME that can be adapted to our setting will directly give an improved corruption threshold for our algorithm.

\paragraph{Failure probability.}
The failure probability of our algorithm is at most $T\exp(-\frac{\eps'^2(1-\eps)R}{16})$, which is at most $\delta$, for any $\delta>0$, provided we run our algorithm for $T \leq \delta\exp(\frac{\eps'^2(1-\eps)R}{16})$ iterations. 
Though the error probability scales linearly with $T$, it also goes down exponentially with the number of workers $R$.
As a result, in settings such as federated learning, where number of workers $R$ could be very large (in tens of thousands, or in millions), we can get a very small probability of error, say, $\nicefrac{1}{100}$, 
even if run our algorithm for a very long time. 
Note that the probability of error is due to the {\em stochastic} sampling of gradients,
and if we want a ``zero'' probability of error, we can run full-batch gradient descent, which is described in the next section.

\subsection{Convergence results for full-batch gradient descent}\label{subsec:GD_main-result}
In this section, we provide our results for the setting where workers compute {\em full-batch} gradients, instead of mini-batch stochastic gradients. This setting will simplify the approximation error in the solution produced by \Algorithmref{Byz-SGD} on both strongly-convex and non-convex objectives as well as their convergence analyses.
\begin{theorem}\label{thm:full-batch-GD}
Suppose an $\epsilon>0$ fraction of $R$ workers are adversarially corrupt.
For an $L$-smooth global objective function $F:\calC\to\R$, let \Algorithmref{Byz-SGD} generate a sequence of iterates $\{\bx^t\}_{t=0}^T$ when run with a fixed learning rate $\eta$, where in the $t$'th iteration, every honest worker $r\in[R]$ sends $\nabla F_r(\bx^t)$ to the master (corrupt workers may send arbitrary vectors).
If $\eps\leq\frac{1}{4}$, then with probability 1, we have the following convergence guarantees (where $\varGamma_{\GD} = 6\kappa^2 + 6\varUpsilon_{\GD}^2$ with $\varUpsilon_{\GD} = \calO\left(\kappa\sqrt{\eps}\right)$):
\begin{itemize}
\item {\bf Strongly-convex:} If $F$ is also $\mu$-strongly convex and we take $\eta=\frac{\mu}{L^2}$, then we have 
\begin{align}\label{convex-GD_convergence_rate}
\|\bx^{T} - \bx^* \|^2 &\leq \left(1-\frac{\mu^2}{2L^2}\right)^T\|\bx^0-\bx^*\|^2 + \frac{2L^2}{\mu^4}\varGamma_{\GD}.
\end{align}
\item {\bf Non-convex:} If we take $\eta=\frac{1}{4L}$, then we have 
\begin{align}\label{nonconvex-GD_convergence_rate}
\frac{1}{T}\sum_{t=0}^T\|\nabla F(\bx^t)\|^2 &\leq \frac{8L^2}{T}\|\bx^0-\bx^*\|^2 + \varGamma_{\GD}.
\end{align}
\end{itemize}
\end{theorem}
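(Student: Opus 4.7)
The plan is to obtain Theorem~\ref{thm:full-batch-GD} as the zero-variance specialization of \Theoremref{SGD_convergence}: in full-batch gradient descent each honest worker's message is the deterministic vector $\nabla F_r(\bx^t)$, which is exactly a mini-batch gradient with $b = n_r$ and, more importantly, is drawn from a ``distribution'' with local variance $\sigma^2 = 0$ in \Assumptionref{bounded_local-variance}. Once $\sigma = 0$ is plugged into $\varGamma$, both $\sigma$-dependent pieces ($\frac{9\sigma^2}{(1-(\eps+\eps'))bR}$ and the $\sigma^2 d$-term inside $\varUpsilon$) disappear, leaving only the gradient-dissimilarity contribution, which is exactly $\varGamma_{\GD} = 6\kappa^2 + 6\varUpsilon_{\GD}^2$ with $\varUpsilon_{\GD} = \calO(\kappa\sqrt{\eps})$. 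The parameter $\eps'$ present in Theorem~\ref{thm:SGD_convergence} was only a slack needed for the matrix concentration bound of \Theoremref{gradient-estimator}, which in turn is needed to control the empirical covariance of random mini-batch gradients. Since the honest gradients are now deterministic, that concentration step is no longer required, $\eps'$ can be taken to $0$, and the failure probability $T\exp(-\eps'^2(1-\eps)R/16)$ collapses to $0$, giving the ``with probability $1$'' statement.

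Concretely, I would first establish the deterministic analog of the robust gradient estimation guarantee. Let $\calH$ be the set of honest workers and $\overline{\bg}(\bx) := \tfrac{1}{|\calH|}\sum_{r\in\calH}\nabla F_r(\bx)$. Because every honest $\nabla F_r(\bx)$ lies within $\kappa$ of $\nabla F(\bx)$ by \Assumptionref{gradient-dissimilarity}, the empirical second-moment matrix $\tfrac{1}{|\calH|}\sum_{r\in\calH}(\nabla F_r(\bx)-\overline{\bg}(\bx))(\nabla F_r(\bx)-\overline{\bg}(\bx))^\top$ has operator norm at most $4\kappa^2$ deterministically, so \Theoremref{gradient-estimator} (or, directly, the filtering guarantee of Steinhardt et al.\ that drives it) can be applied with this bound in place of the stochastic bound, yielding
\begin{equation*}
\|\btg(\bx^t) - \overline{\bg}(\bx^t)\|^2 \;\leq\; \varUpsilon_{\GD}^2 \;=\; \calO(\kappa^2 \eps)
\end{equation*}
for every $t$, with probability $1$, for any $\eps \leq 1/4$. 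Combining this with $\|\overline{\bg}(\bx^t) - \nabla F(\bx^t)\|^2 \leq \kappa^2$ (which follows from \Assumptionref{gradient-dissimilarity} and $|\calH|\geq (1-\eps)R$) gives a deterministic per-iteration bound $\|\btg(\bx^t) - \nabla F(\bx^t)\|^2 \leq \varGamma_{\GD}$.

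From there the two convergence statements follow by repeating, verbatim, the deterministic parts of the proofs of \Theoremref{SGD_convergence}: for the strongly-convex case I would combine $L$-smoothness, $\mu$-strong convexity, non-expansiveness of the projection $\Pi_\calC$, and the choice $\eta = \mu/L^2$ to derive the one-step contraction $\|\bx^{t+1}-\bx^*\|^2 \leq (1-\mu^2/(2L^2))\|\bx^t-\bx^*\|^2 + \frac{1}{L^2}\varGamma_{\GD}$ and unroll to get \eqref{convex-GD_convergence_rate}; for the non-convex case I would apply the descent lemma with $\eta = 1/(4L)$, absorb the cross term using Young's inequality and the bound on $\|\btg - \nabla F\|^2$, and telescope to get \eqref{nonconvex-GD_convergence_rate}. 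No expectations appear since all randomness has been removed, so \Assumptionref{non-convex_size-params-space} is also not needed (projection is trivial).

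The only non-routine step is the first one: confirming that, once the matrix-concentration ingredient of \Theoremref{gradient-estimator} is stripped away, the outlier-filtering algorithm still produces an error of order $\kappa\sqrt{\eps}$ on the deterministic inputs. This is essentially built into the resilience guarantee of Steinhardt et al.\ (an arbitrary $\eps$-corruption of a set whose centered second moment has spectral norm $\tau^2$ is estimable to error $\calO(\tau\sqrt{\eps})$), so the step is conceptually easy but requires careful bookkeeping to extract the constants that ultimately turn the $9$'s of $\varGamma$ into the $6$'s of $\varGamma_{\GD}$.
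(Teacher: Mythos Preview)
Your approach matches the paper's almost exactly. The paper first proves a deterministic robust-gradient-estimation result (\Theoremref{gradient-estimator_GD}) by taking $\calS$ to be the full set of honest workers, showing directly that $\lambda_{\max}\bigl(\tfrac{1}{|\calS|}\sum_{i\in\calS}(\nabla F_i(\bx)-\nabla F_\calS(\bx))(\nabla F_i(\bx)-\nabla F_\calS(\bx))^T\bigr)\le 4\kappa^2$, and then invoking the outlier-filtering guarantee (second part of \Theoremref{gradient-estimator}) with $\eps'=0$ and $\sigma_0^2=4\kappa^2$ to get $\|\nabla\btF(\bx)-\nabla F_\calS(\bx)\|\le\calO(\kappa\sqrt{\eps})$ with probability $1$. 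The two-term decomposition $\|\nabla F(\bx^t)-\nabla\btF(\bx^t)\|^2\le 2\kappa^2+2\varUpsilon_{\GD}^2$ then replaces \Claimref{deviation_true-est-grad}, and the strongly-convex and non-convex arguments are literally the corresponding passages from \Appendixref{convex_convergence} and \Appendixref{nonconvex_convergence} with expectations removed; this is also where the $9$'s become $6$'s (the coefficient $3/\mu^2$ now multiplies $2\kappa^2+2\varUpsilon_{\GD}^2$ instead of a three-term sum).

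One correction: your claim that ``\Assumptionref{non-convex_size-params-space} is also not needed (projection is trivial)'' is not quite right. The absence of stochastic sampling does not by itself keep iterates inside a bounded $\calC$; the paper still imposes \Assumptionref{non-convex_size-params-space_GD} (the GD analog, with the $n_{\max}\sigma/b$ term dropped) and proves \Lemmaref{non-convex_projection-not-needed_GD} to ensure $\bhx^t\in\calC$ in the non-convex bounded case. Removing randomness only simplifies the constants in that assumption and upgrades the conclusion to probability $1$.
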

\Theoremref{full-batch-GD} is proved in \Appendixref{convergence_full-batch-GD}.

As mentioned in \Subsectionref{SGD_main-results}, when $\calC$ is a bounded set,
our convergence analysis for non-convex objectives requires a mild technical assumption on the size of $\calC$:
\begin{assumption}\label{assump:non-convex_size-params-space_GD}
Suppose $\|\nabla F(\bx)\|\leq M$ for every $\bx\in\calC$.
We assume that $\calC$ contains the $\ell_2$ ball $\{\bx\in\R^d:\|\bx-\bx^0\|\leq\frac{2L}{\varGamma_{\GD}}(M+\varGamma_2)\|\bx^0-\bx^*\|^2\}$, where $\varGamma_{\GD}=6\kappa^2 + 6\varUpsilon_{\GD}^2$ and $\varGamma_2 = \kappa + \varUpsilon$, 
where the parameters are as defined in \Theoremref{full-batch-GD} above.
\end{assumption}
Note that, unlike in \Assumptionref{non-convex_size-params-space}, $\varGamma_2$ in \Assumptionref{non-convex_size-params-space_GD} does not depend on the number of local samples at workers. This is because the variance is  zero when workers take full-batch gradients.

\paragraph{A note about the approximation error.}
Note that the approximation error term in both strongly-convex and non-convex objectives is $\varGamma_{\GD} = 6\kappa^2 + \calO(\kappa^2\eps) = \calO(\kappa^2)$, which only depends on the heterogeneity in the data,
and as argued in the error analysis paragraph of \Subsectionref{remarks_main-result}, is inevitable in the heterogeneous data setting.
In the statistical heterogeneous data model described in \Sectionref{statistical-model}, this is equal to $\calO\(\kappa_{\text{mean}}^2+\frac{d\log(nd)}{n}\)$; see the discussion in \Subsectionref{remarks_main-result}. A special case is where all workers have i.i.d.\ data (i.e., $\kappa_{\text{mean}}=0$), which is the setting considered in \cite{Bartlett-Byz18,Bartlett-Byz_nonconvex19,SuX_Byz19,ChenSX_Byz17}. These papers also studied full-batch gradient descent, but to minimize the {\em population risk}, as opposed to minimizing the {\em empirical risk}, which is the focus of this paper.
See \Subsectionref{related-work} for a detailed comparison of our approximation error with that in these works.

\section{Robust Gradient Estimation (RGE)}\label{sec:robust-grad-est}
We are given $R$ gradient vectors $\btlg_1(\bx),\hdots,\btlg_R(\bx)\in\R^d$ for an arbitrary $\bx\in\R^d$, 
where, $\btlg_r(\bx)=\bg_r(\bx)$ is a uniform sample from $\calF_r^{\otimes b}(\bx)$ if the $r$'th worker is honest, otherwise, $\btlg_r(\bx)$ can be arbitrary.
We want to compute $\btg(\bx)$, an estimate of $\nabla F(\bx)=\frac{1}{R}\sum_{r=1}^R\nabla F_r(\bx)$, where $\nabla F_r(\bx)$ is the mean of $\calF_r^{\otimes b}(\bx)$, such that $\|\btg(\bx) - \nabla F(\bx)\|$ is small for all $\bx\in\R^d$. 
In this section, building upon the recent advances in high-dimensional robust mean estimation problem, we provide a polynomial-time decoding algorithm (in particular, we use the outlier-filtering algorithm proposed by \cite{Resilience_SCV18}) and derive a matrix concentration result in order to use that algorithm in our setting. 

First we describe the problem of robust mean estimation (RME). 
In RME, we are given $R$ samples in $\R^d$ (out of which an $\eps$-fraction is corrupted) from an unknown distribution with unknown mean, and the goal is to estimate its mean. 
Though our problem is more general than RME, it would be helpful to first get some perspective on what makes RME, and hence our problem, so difficult.
RME is a classic problem in robust statistics \cite{Tukey-contaminated60,Huber64}. Until recently, all the solutions to this problem were either computationally intractable or were very poor in terms of the quality of the estimator produced. The method of {\em Tukey median} \cite{TukeyMedian75} solves this problem with dimension-independent error guarantees, but it is NP-hard to compute in general \cite{JohnsonP78}. On the other hand, solutions based on {\em geometric-median}, {\em coordinate-wise median} are computationally tractable, but can only give dimension-dependent error guarantees, which scales with $\sqrt{d}$ \cite{Robust_mean_LRV16}. Below we give an intuition on the fundamental difficulty of this problem.

\paragraph{Why is robust mean estimation in high dimensions such a difficult problem?}
To understand this, assume that gradients are distributed according to a high-dimensional Gaussian distribution $\N(\bzero,I)$. It is a well known fact that samples from such a distribution lie around the annulus at a distance $\sqrt{d}$ from the origin, w.h.p. 
So, it would not be in the adversary's best interest to put the corrupt samples far from the annulus, as they can be trivially filtered out just based on the norm. However, the adversary can put the corrupted samples in a concentrated form around the annulus, 
which cannot be detected just based on the norm, but can shift the sample mean away from the true mean in an adversarially chosen direction. This implies that filtering based on individual sample-by-sample basis is not enough, and we have to filter the outliers collectively, i.e., using all the samples at once.
This makes devising computationally-efficient decoding algorithm that provide good approximation guarantees highly non-trivial.
Recently, \cite{Robust_mean_LRV16} and \cite{Robust_mean_DiakonikolasKK016} in their breakthrough papers independently provided {\em computationally efficient} algorithms for RME that give {\em dimension-independent} error guarantees.
Following these papers, there had been a flurry of research improving upon their results in various directions; see \cite{RecentAdvances_RobustStatistics19} and references therein.

\paragraph{Difficulty of our problem.}
When all local datasets $\calD_r,r\in[R]$, are the same, we have that for every $\bx\in\R^d$, all $\calF_r^{\otimes b}(\bx), r\in[R]$ are the same and so are $\nabla F_r(\bx), r\in[R]$. By letting $\nabla F_r(\bx):=\nabla F(\bx)$ and $\calF_r^{\otimes b}(\bx):=\calF^{\otimes b}(\bx)$, we can map our problem to RME as follows:
Let the $R$ gradient samples come i.i.d.\ from $\calF^{\otimes b}(\bx)$ with a uniform distribution, out of which an $\eps$ fraction may be adversarially corrupt. Note that each gradient sample is unbiased and has mean equal to $\nabla F(\bx)$ and has variance bounded by $\frac{\sigma^2}{b}$ (see \eqref{same_mean}, \eqref{reduced_variance}), and our goal is to estimate the mean $\nabla F(\bx)$.
Note that not all results on RME are applicable to our setting, as most of these results have been derived assuming particular distributions, e.g., Gaussian, from which the samples are drawn; whereas, in this paper we only assume the variance bound on the gradients.

Note that our problem is more general than the one described above. In our setting, different workers have different datasets, which adds further complications.
In RME, all samples come from the {\em same} distribution, whereas, in our problem, different samples come from {\em different} local distributions (which are all uniform but over distinct supports, with potentially different support sizes) -- for any worker $r\in[R]$, the $r$'th gradient sample $\bg_r(\bx)$ comes uniformly at random from $\calF_r^{\otimes b}(\bx)$, 
where $\calF_r^{\otimes b}(\bx)$'s are different for different $r\in[R]$.
Note that we get one sample from each distribution, and we want to estimate $\nabla F(\bx)=\frac{1}{R}\sum_{r=1}^R\nabla F_r(\bx)$, the average of the local means.
Observe that, if we do not have any correlation among the local datasets (e.g., a probabilistic model for the data or assuming that all local datasets are the same), it would be impossible to solve this problem using RME-type algorithms and get a meaningful result; see also \Remarkref{bounded_local-global}.
To make this tractable, we assume that the local datasets are not arbitrarily far from each other, in the sense that the true local gradients (evaluated at an arbitrary point in the domain) at any worker are at most $\kappa$ away from the global gradient; see \eqref{bounded_local-global}. As mentioned in \Remarkref{bounded_local-global}, all other papers in distributed optimization literature that provide Byzantine-resilience for optimizing a generic objective function \eqref{eq:problem-expression} under standard assumptions either assume that the local data at different workers is generated i.i.d.\ from the same distribution, or that all workers can access to the same data.

\paragraph{Reducing the sample complexity by increased mini-batch size.} \label{sample-complexity_vs_mini-batch}
It is known that, in the problem of RME, in order to get a good estimator, the sample complexity (i.e., the number of samples required) for robustly estimating the mean grows at least linearly with the dimension $d$ \cite{Robust_mean_LRV16}.
In a setting where all local datasets are the same, this implies that to robustly estimating $\nabla F(\bx)$, the number of workers $R$ should grow linearly with the dimension $d$.
In a distributed setup, since it is not practical to increase the number of workers with the dimension of the problem, we address this issue by increasing the mini-batch size $b$.
As noted in \eqref{reduced_variance}, by increasing the mini-batch size $b$, the variance of the resulting gradients (which are samples from $\calF^{\otimes b}(\bx)$) reduces by a factor of $b$, which implies that as we increase $b$, the resulting gradients become closer to the mean $\nabla F(\bx)$; and as we see later, this will cut down the requirement of $R$ growing linearly with $d$.
Observe that it is crucial that increasing the mini-batch size does not change the mean, as we want to estimate $\nabla F(\bx)$ using $\calF^{\otimes b}(\bx)$.
As we show later, this argument, in fact, holds true in a more general setting that we consider in this paper, where different workers have different datasets, and we get one gradient sample from each $\calF_r^{\otimes b}(\bx)$. In this case, the approximation error will inevitably be affected by $\kappa$, which captures the dissimilarity between the local datasets. Increasing the mini-batch size $b$ will result in making local stochastic gradients close to their corresponding local true gradients, and which, on average (after removing the effect of corrupt gradients), will be about $\kappa$ distance away from the global gradient. See also \Figureref{robust-grad-est}.

\begin{SCfigure}
\begin{tikzpicture}[scale=1.25]
\draw[blue,thick,dashed] (-2.5,1) circle (0.75cm); \draw [fill, black] (-2.5,1) circle (0.05cm); \node [scale=0.75] at (-2.5,0.75) {$\nabla F_1(\bx)$}; \draw [fill, blue] (-2.25,2.0) circle (0.05cm); \node [scale=0.75] at (-2.45,2.0) {$\bg_1$};

\draw[blue,thick,dashed] (-1.25,2.5) circle (0.75cm); \draw [fill, black] (-1.25,2.5) circle (0.05cm); \node [scale=0.75] at (-1.25,2.7) {$\nabla F_2(\bx)$}; \draw [fill, blue] (-0.2,1.5) circle (0.05cm); \node [scale=0.75] at (-0.4,1.5) {$\bg_2$}; \draw [cyan,thick] (-0.35,1.5) ellipse (0.25cm and 0.13cm); 

\draw[blue,thick,dashed] (0.5,2.65) circle (0.75cm); \draw [fill, black] (0.5,2.65) circle (0.05cm); \node [scale=0.75] at (0.5,2.45) {$\nabla F_3(\bx)$}; \draw [fill, blue] (0.1,3) circle (0.05cm); \node [scale=0.75] at (0.35,3) {$\bg_3$};  \draw [cyan,thick] (0.27,3) ellipse (0.28cm and 0.13cm); \draw [->] (0.5,2.65) -- (1,3.2); \node [scale=0.65] at (0.9,2.85) {$\frac{\sigma}{\sqrt{b}}$};

\draw[blue,thick,dashed] (2.35,1.85) circle (0.75cm); \draw [fill, black] (2.35,1.85) circle (0.05cm); \node [scale=0.75] at (2.6,2) {$\nabla F_4(\bx)$}; \draw [fill, blue] (3.25,1.5) circle (0.05cm); \node [scale=0.75] at (3.4,1.65) {$\bg_4$};  \draw [rotate around={45:(3.35,1.6)}, cyan, thick] (3.35,1.6) ellipse (0.25cm and 0.15cm);

\draw[blue,thick,dashed] (2.9,0.1) circle (0.75cm); \draw [fill, black] (2.9,0.1) circle (0.05cm); \node [scale=0.75] at (3.1,0.25) {$\nabla F_5(\bx)$}; \draw [fill, blue] (3.1,-0.45) circle (0.05cm); \node [scale=0.75] at (3.25,-0.3) {$\bg_5$}; \draw [rotate around={48:(3.2,-0.35)}, cyan, thick] (3.2,-0.35) ellipse (0.23cm and 0.17cm);

\draw[blue,thick,dashed] (2.2,-1.35) circle (0.75cm); \draw [fill, black] (2.2,-1.35) circle (0.05cm); \node [scale=0.75] at (2.45,-1.55) {$\nabla F_6(\bx)$}; \draw [fill, blue] (2.4,-1.15) circle (0.05cm); \node [scale=0.75] at (2.6,-1.1) {$\bg_6$}; \draw [rotate around={15:(2.53,-1.12)}, cyan, thick] (2.53,-1.12) ellipse (0.23cm and 0.17cm);

\draw[blue,thick,dashed] (1.25,-2) circle (0.75cm); \draw [fill, black] (1.25,-2) circle (0.05cm); \node [scale=0.75] at (1.45,-2.2) {$\nabla F_7(\bx)$}; \draw [fill, blue] (1.35,-1.25) circle (0.05cm); \node [scale=0.75] at (1.25,-1.45) {$\bg_7$}; \draw [rotate around={35:(1.28,-1.35)}, cyan, thick] (1.28,-1.35) ellipse (0.23cm and 0.17cm);

\draw[red,thick,dashed] (-0.85,-2.2) circle (0.75cm); \draw [fill, black] (-0.85,-2.2) circle (0.05cm); \node [scale=0.75] at (-0.85,-2.45) {$\nabla F_8(\bx)$};
\draw[red,thick,dashed] (-2.1,-1.2) circle (0.75cm); \draw [fill, black] (-2.1,-1.2) circle (0.05cm); \node [scale=0.75] at (-2.1,-1.45) {$\nabla F_9(\bx)$};

\draw [fill, black] (0.0,0.0) circle (0.1cm); \node [scale=0.75] at (-0.1,-0.3) {$\nabla F(\bx)$}; \draw [->] (0.0,0.0) -- (-1.22,2.47); \node [scale=0.8] at (-0.7,1) {$\kappa$};

\draw [fill, blue] (0.5,0.5) circle (0.1cm); \node [scale=0.75] at (0.6,0.25) {$\nabla F_{[2:7]}(\bx)$};
\draw [fill, cyan] (1.05,0.8) circle (0.1cm); \node [scale=0.75] at (1.4,0.6) {$\bg_{[2:7]}$}; \draw [rotate around={-22:(1.29,0.69)}, cyan, thick] (1.29,0.69) ellipse (0.45cm and 0.23cm);
\draw [fill, black] (0.55,1) circle (0.1cm); \node [scale=0.75] at (0.5,1.25) {$\btg(\bx)$};

\end{tikzpicture}
\centering
\caption{We have total 9 workers, out of which 2 workers (numbered 8, 9) are Byzantine. Since different workers have different datasets, their true local gradients (denoted by $\nabla F_i(\bx)$) are placed in different locations. The blue dashed circles (numbered 1 to 7) are centered at the true local gradients of honest workers, and have their radius equal to the standard deviation $\nicefrac{\sigma}{\sqrt{b}}$, which implies that their stochastic gradient samples $\bg_i$ may not lie inside the blue circles.
The red dashed circles correspond to the Byzantine workers, and we do not have any control over them.
Let $\{\bg_2,\hdots,\bg_7\}$ be the subset $\calS$ of uncorrupted gradients ensured by the first part of \Theoremref{gradient-estimator}.
Let the robust gradient estimator in the second part of \Theoremref{gradient-estimator} outputs $\btg(\bx)$ as an estimate of $\bg_{[2:7]}:=\frac{1}{6}\sum_{i=2}^7\bg_i$. 
To bound the approximation error $\bbE\|\btg(\bx)-\nabla F(\bx)\|$, note that $\bbE\|\btg(\bx)-\nabla F(\bx)\| \leq \bbE\|\btg(\bx)-\bg_{[2:7]}(\bx)\|+\bbE\|\bg_{[2:7]}-\nabla F_{[2:7]}(\bx)\| + \|\nabla F_{[2:7]}(\bx)-\nabla F(\bx)\|$, where the first term can be bounded by $\calO(\sigma_0\sqrt{\eps+\eps'})$, the second term can be bounded by the square root of $\nicefrac{\sigma^2}{6b}$, which comes from the variance bound for sampling, and the third term can be bounded by $\kappa$, which is the gradient dissimilarity bound from \eqref{bounded_local-global}. Note that the $\kappa$ term is inevitable because, in the presence of a constant number of Byzantine workers, intuitively, $\nabla F_{[2:7]}(\bx)$ will shift away from $\nabla F(\bx)$ by a constant fraction of $\kappa$.}
\label{fig:robust-grad-est}
\end{SCfigure}
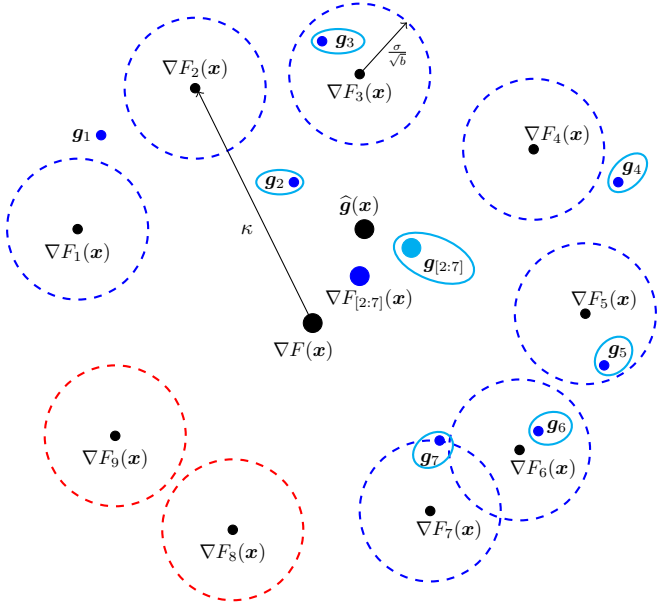
Our main result for robust gradient estimation is as follows:
\begin{theorem}[Robust Gradient Estimation]\label{thm:gradient-estimator}
Fix an arbitrary $\bx\in\R^d$.
Suppose an $\eps$ fraction of workers are corrupt and we are given $R$ gradients $\btlg_1(\bx),\hdots,\btlg_R(\bx)\in\R^d$, where $\btlg_r(\bx)=\bg_r(\bx)$ is a uniform sample from $\calF_r^{\otimes b}(\bx)$ satisfying \eqref{same_mean}, \eqref{reduced_variance} if the $r$'th worker is honest, otherwise can be arbitrary. Let $\btlg_i:=\btlg_i(\bx)$ for $i\in[R]$. Then, for any constant $\eps'>0$, we have the following:
\begin{enumerate}
\item {\bf Matrix concentration:} With probability $1-\exp(-\frac{\eps'^2(1-\eps)R}{16})$, there exists a subset $\calS\subset[R]$ of uncorrupted gradients of size $(1-(\eps+\eps'))R$
such that 
\begin{align}\label{mat-concen_gradient-estimator}
\lambda_{\max}\(\frac{1}{|\calS|}\sum_{i\in\calS} \(\bg_i - \bg_{\calS}\)\(\bg_i - \bg_{\calS}\)^T\) \leq \frac{24\sigma^2}{b\eps'}\left(1 + \frac{d}{(1-(\eps+\eps'))R}\right)+16\kappa^2, 
\end{align}
where $\bg_{\calS}:=\frac{1}{|\calS|}\sum_{i\in\calS}\bg_i$, $\kappa$ is from \eqref{bounded_local-global}, and $\lambda_{\max}$ denotes the largest eigenvalue.
\item {\bf Outlier-filtering algorithm:} If $\eps\leq\frac{1}{4} - \eps'$, then 
we can find an estimate $\btg$ of $\bg_{\calS}$ in polynomial-time with probability 1, such that $\left\| \btg - \bg_{\calS} \right\| \leq \calO\left(\sigma_0\sqrt{\eps+\eps'}\right)$, where $\sigma_0^2 = \frac{24\sigma^2}{b\eps'}\left(1 + \frac{d}{(1-(\eps+\eps'))R}\right)+16\kappa^2$.
\end{enumerate}
\end{theorem}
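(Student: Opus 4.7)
My plan is to prove the two parts separately, invoking the matrix concentration as the engine that lets us feed the outlier-filtering procedure from \cite{Resilience_SCV18}. Let $\calH\subseteq[R]$ denote the set of honest workers, so $|\calH|\geq (1-\eps)R$. For each $i\in\calH$, write $\bg_i = \nabla F_i(\bx) + \bh_i$, where $\bh_i := \bg_i - \nabla F_i(\bx)$ is a zero-mean random vector (by \eqref{same_mean}) with $\bbE\|\bh_i\|^2\leq \sigma^2/b$ (by \eqref{reduced_variance}), and the $\bh_i$'s are independent across $i\in\calH$ because distinct honest workers sample their mini-batches independently.

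\paragraph{Part 1 (matrix concentration).}
The plan is to apply a ``bad-set removal'' matrix concentration result in the spirit of \cite{Untrusted-data_Charikar17}: for independent zero-mean random vectors with uniformly bounded variance, there exists (with high probability) a subset of size $(1-\eps')|\calH|$ on which the empirical second-moment matrix has operator norm $O((\sigma^2/b)(1 + d/|\calH|)/\eps')$. The key point is that we do \emph{not} have an a.s.\ norm bound on $\bh_i$; the removal trick lets us discard the $\eps'|\calH|$ samples with largest norm and then apply a matrix Bernstein/Rosenthal-type bound to the remaining ones, collecting the $d/|\calH|$ factor from the dimension of the covariance. This produces a subset $\calS\subseteq\calH$ of size at least $(1-\eps')|\calH|\geq (1-(\eps+\eps'))R$ with $\lambda_{\max}\bigl(\tfrac{1}{|\calS|}\sum_{i\in\calS}\bh_i\bh_i^T\bigr) \le O((\sigma^2/b)(1 + d/((1-\eps-\eps')R))/\eps')$, failing only with probability $\exp(-\Theta(\eps'^2(1-\eps)R))$. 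To convert this to a bound on the empirical covariance of $\bg_i$ itself, I decompose $\bg_i-\bg_\calS = (\nabla F_i(\bx) - \overline{\nabla F}_\calS) + (\bh_i - \bar{\bh}_\calS)$ with $\overline{\nabla F}_\calS := \tfrac{1}{|\calS|}\sum_{j\in\calS}\nabla F_j(\bx)$ and apply the scalar inequality $(a+b)(a+b)^T \preceq 2aa^T + 2bb^T$ to split the empirical covariance into a ``mean part'' and a ``noise part.'' The mean part is $\tfrac{1}{|\calS|}\sum_{i\in\calS}(\nabla F_i(\bx)-\overline{\nabla F}_\calS)(\cdot)^T$, whose operator norm is bounded by $\max_{i\in\calS}\|\nabla F_i(\bx)-\overline{\nabla F}_\calS\|^2 \le (2\kappa)^2 = 4\kappa^2$ using gradient dissimilarity \eqref{bounded_local-global} twice, while the noise part is controlled by the concentration bound on $\bh_i\bh_i^T$ (plus a similarly small contribution from $\bar{\bh}_\calS\bar{\bh}_\calS^T$). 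Combining the two and tracking constants yields \eqref{mat-concen_gradient-estimator}.

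\paragraph{Part 2 (outlier filtering).}
Conditioned on the event from Part 1, the input to the decoder consists of $R$ vectors, of which a subset $\calS$ of size $(1-(\eps+\eps'))R$ is ``good,'' i.e.\ its empirical covariance around $\bg_\calS$ has operator norm at most $\sigma_0^2$, while the remaining $(\eps+\eps')R$ vectors are arbitrary. This is exactly the setup of the robust mean estimation result of \cite{Resilience_SCV18}, with corruption fraction $\eps+\eps'\leq 1/4$ and variance proxy $\sigma_0^2$. I invoke their outlier-filtering algorithm \textsc{RGE} as a black box: it runs in polynomial time and deterministically returns a vector $\btg$ with $\|\btg - \bg_\calS\| \le O(\sigma_0\sqrt{\eps+\eps'})$.

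\paragraph{Main obstacle.}
The bulk of the novelty lies in Part 1: the existing robust-mean-estimation concentration statements (such as those in \cite{SuX_Byz19, Bartlett-Byz_nonconvex19}) assume i.i.d.\ samples from a single distribution, whereas here the honest $\bg_i$'s have different means $\nabla F_i(\bx)$ and only a bounded-variance assumption on the \emph{deviations} $\bh_i$. Handling the lack of a uniform norm bound on $\bh_i$ (so that standard matrix Bernstein does not directly apply) while simultaneously accommodating heterogeneous means is the technical crux, and is precisely what the bad-set removal argument from \cite{Untrusted-data_Charikar17} plus the $\kappa$-based decomposition above is designed to resolve.
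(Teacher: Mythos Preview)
Your high-level plan matches the paper: obtain a good subset $\calS$ via a Charikar--Steinhardt--Valiant concentration on the centered noise $\bh_i = \bg_i - \nabla F_i(\bx)$, upgrade to concentration around $\bg_\calS$ via the split $\bg_i - \bg_\calS = (\nabla F_i - \overline{\nabla F}_\calS) + (\bh_i - \bar\bh_\calS)$ together with the $\kappa$-bound, and then feed $\sigma_0^2$ into the filter of \cite{Resilience_SCV18}. This is exactly the route taken in \Subsectionref{bounded-variance_subset} and \Appendixref{remaining_part1-robust-grad}, and your decomposition in fact yields slightly better constants on the $\kappa^2$ term than the paper's chain of inequalities.

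Where your plan would break if executed as written is the mechanism you describe for the subset concentration. You propose to ``discard the $\eps'|\calH|$ samples with largest norm and then apply a matrix Bernstein/Rosenthal-type bound to the remaining ones.'' That does not work: norm-based removal is data-dependent, so the survivors are no longer independent and matrix Bernstein does not apply; a fixed-threshold truncation controls only $\|\bh_i\|^2$ via Markov, not the fourth-moment quantities that enter Bernstein's variance proxy for the rank-one matrices $\bh_i\bh_i^T$. The actual proof---both in \cite{Untrusted-data_Charikar17} and in the paper's \Lemmaref{subset_variance}, which you need in its heterogeneous-distribution form since the $\bh_i$'s have different laws even though all are mean-zero---is a Batson--Spielman--Srivastava barrier argument: process the samples sequentially, updating $\bM \leftarrow \bM + \eps'\bh_i\bh_i^T$ and $c \leftarrow c + 4\sigma^2/b$ whenever the potential $\mathrm{tr}\bigl((c\bI-\bM)^{-1}\bigr)$ stays below $b/(4\sigma^2)$; Sherman--Morrison plus Markov give survival probability at least $1-\eps'/2$ per sample, and Chernoff on the independent survival indicators yields both $|\calS|\geq(1-\eps')|\calH|$ and the stated $\exp(-\eps'^2(1-\eps)R/16)$ failure probability. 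The $d$ in the final bound comes from the barrier's initialization $c=4(\sigma^2/b)d$, not from a dimensional constant in any Bernstein inequality. If you intend to invoke the concentration as a black box, cite \Lemmaref{subset_variance} rather than sketching a mechanism that would not deliver the bound.
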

\begin{remark}\label{remark:grad-estimator_error-terms}
The squared approximation error has two terms $T_1=\calO\left(\frac{\sigma^2}{b\eps'}\left(1 + \frac{d}{(1-(\eps+\eps'))R}\right)(\eps+\eps')\right)$ and $T_2=\calO(\kappa^2(\eps+\eps'))$.
Here, the first error term $T_1$ appears due to the stochastic sampling of gradients, and is equal to zero when there is no gradient sampling error (i.e., $\sigma=0$), which would be the case, for instance, when workers take {\em full batch} gradients w.r.t.\ their local datasets; see also the proof of \Theoremref{full-batch-GD} for a detailed discussion for this case with an improved bound on the approximation error.
The second error term $T_2$ accounts for the dissimilarity $\kappa$ among the local datasets at different workers, and will be equal to zero when all workers are assumed to have access to the same dataset.\footnote{Note that when $\sigma=\kappa=0$, then the problem of robust gradient estimation becomes trivial, as this setting would correspond to running {\em full batch} gradient descent in a distributed manner, where all honest workers have the same data, and, therefore, send the same gradient to master, who can perform the majority vote to compute the correct gradient -- in that case, less than 1/2 fraction of Byzantine workers can be tolerated.}
\end{remark}

The statement of \Theoremref{gradient-estimator} consists of two parts: 
First, it shows an existence of a large subset $\calS$ of uncorrupted gradients having bounded concentration around their sample mean, which is a matrix concentration result; 
and second, it efficiently estimates the average of the gradients in $\calS$.
We prove the first part in \Subsectionref{bounded-variance_subset}; 
and for the second part, we use the polynomial-time outlier-filtering procedure of \cite{Resilience_SCV18}, 
which we describe in detail in \Algorithmref{robust-grad-est} in \Appendixref{robust-mean-estimation},
and prove the second part of \Theoremref{gradient-estimator} in \Appendixref{poly-time-lemma_proof} by providing a comprehensive analysis of the outlier-filtering procedure.

We also provide an intuition behind the outlier-filtering procedure from \cite{Resilience_SCV18} in \Appendixref{intuition_algo}
and its running time analysis in \Appendixref{running-time_robust-grad-est}, where we show that 
 \Algorithmref{robust-grad-est} performs at most $\calO(R)$ SVD computations of $d\times R$ matrices, 
 which can be performed in $\calO(dR^2\min\{d,R\})$ time \label{running-time_robust-grad-est-page} in total; hence, \Algorithmref{robust-grad-est} runs in polynomial-time.

Note that the same filtering procedure has also been used by \cite{SuX_Byz19,Bartlett-Byz_nonconvex19} in the context of Byzantine-robust {\em full batch} gradient descent, where data comes i.i.d.\ from a probability distribution, as opposed to the {\em stochastic} gradient descent considered in this paper. In our setting, different workers may have different datasets, and we do not make any probabilistic assumption on the data generation. Our results are derived under standard SGD assumptions in the distributed setting.

\subsection{Matrix concentration}\label{subsec:bounded-variance_subset}
Now we prove the first part of \Theoremref{gradient-estimator}. For that, we need to show an existence of a subset $\calS$ of the $R$ gradients (out of which an $\eps$ fraction is corrupted) that has good concentration, as quantified in \eqref{mat-concen_gradient-estimator}. 
We want to point out that if workers send full-batch {\em deterministic} gradients (as in \cite{SuX_Byz19,Bartlett-Byz_nonconvex19}), then we can take $\calS$ to be the set of all honest workers and get a deterministic bound; see \Theoremref{gradient-estimator_GD} in \Appendixref{convergence_full-batch-GD} for more details.
However, when workers compute mini-batch {\em stochastic} gradients, showing an existence of such a set is non-trivial.
Note that, though \cite{SuX_Byz19,Bartlett-Byz_nonconvex19} studied full-batch gradient descent, they also proved a matrix concentration result, which they needed because they minimize the population risk, whereas, we do not need such a result because, instead, we minimize the empirical risk in this paper. On the other hand, we also prove a matrix concentration bound as stated in the first part of \Theoremref{gradient-estimator}, whose need arises because of the stochasticity of gradients (due to SGD). This bound is of a very different nature than theirs and requires only the bounded variance assumption \eqref{bounded_local-variance} of local gradients to prove, whereas, their bound requires distributional assumptions (sub-exponential/sub-Gaussian) on local gradients.

In order to prove \eqref{mat-concen_gradient-estimator} in the first part of \Theoremref{gradient-estimator}, first we show a separate matrix concentration bound in the following lemma, and then we show how we can use that to prove our desired bound \eqref{mat-concen_gradient-estimator}.

\begin{lemma}\label{lem:subset_variance}
Suppose there are $m$ independent distributions $p_1,p_2,\hdots,p_m$ in $\R^d$ such that $\bbE_{\by\sim p_i}[\by]=\vct{\mu}_i, i\in[m]$ and each $p_i$ has bounded variance in all directions, i.e., $\bbE_{\by\sim p_i}[\langle \by - \vct{\mu}_i, \bv\rangle^2] \leq \sigma_{p_i}^2$ holds for all unit vectors $\bv\in\R^d$. Take an arbitrary $\eps'\in(0,1]$. Then, given $m$ independent samples $\by_1,\by_2,\hdots,\by_m$, where $\by_i\sim p_i$, with probability $1- \exp(-\eps'^2m/16)$, there is a subset $\calS$ of $(1-\eps')m$ points such that
\begin{align*}
\lambda_{\max}\(\frac{1}{|\calS|}\sum_{i\in\calS} \(\by_i-\vct{\mu}_i\)\(\by_i-\vct{\mu}_i\)^T \) \leq \frac{4\sigma_{p_{\max}}^2}{\eps'}\left(1 + \frac{d}{(1-\eps')m}\right), \quad \text{ where } \sigma_{p_{\max}}^2=\max_{i\in[m]}\sigma_{p_{i}}^2.
\end{align*}
\end{lemma}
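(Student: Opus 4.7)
I will construct $\calS$ explicitly by removing the samples with the largest residual norms and then bound the spectral norm of the restricted empirical second-moment matrix. The two main ingredients are (i) a Chernoff bound on Bernoulli indicators for ``large-norm'' samples that produces the claimed failure probability, and (ii) a matrix concentration argument on the truncated summands that delivers the two-term bound.

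First, the hypothesis $\bbE_{\by\sim p_i}[\langle \by-\vct{\mu}_i,\bv\rangle^2]\leq \sigma_{p_i}^2$ for all unit $\bv$ is equivalent to $\mathrm{Cov}(\by_i)\preceq \sigma_{p_i}^2 \bI$, so summing variances along $d$ orthogonal directions yields the scalar bound $\bbE[\|\by_i-\vct{\mu}_i\|^2]\leq d\,\sigma_{p_{\max}}^2$. Choose the truncation threshold $T:=4d\sigma_{p_{\max}}^2/\eps'$; by Markov, $\Pr[\|\by_i-\vct{\mu}_i\|^2>T]\leq \eps'/4$, so the Bernoulli indicators $Z_i:=\mathbf{1}\{\|\by_i-\vct{\mu}_i\|^2>T\}$ are independent with mean at most $\eps'/4$. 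Applying an additive Hoeffding/Chernoff bound to $\sum_{i=1}^m Z_i$ (using the gap between the mean $\leq \eps' m/4$ and the threshold $\eps' m/2$) yields
\[
\Pr\Bigl[\textstyle\sum_{i=1}^m Z_i \geq \eps' m/2\Bigr] \leq \exp(-\eps'^2 m/16),
\]
which exactly matches the failure probability stated in the lemma. On the complementary high-probability event, at least $(1-\eps'/2)m\geq (1-\eps')m$ indices satisfy $\|\by_i-\vct{\mu}_i\|^2\leq T$, and I take $\calS$ to be any subset of these of size exactly $(1-\eps')m$. Each rank-one PSD summand $X_i:=(\by_i-\vct{\mu}_i)(\by_i-\vct{\mu}_i)^T$ for $i\in\calS$ then satisfies $\|X_i\|\leq T$.

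Next, I decompose the restricted sum and bound each piece separately: $\tfrac{1}{|\calS|}\sum_{i\in\calS} X_i = \tfrac{1}{|\calS|}\sum_{i\in\calS}\bbE[X_i] + \tfrac{1}{|\calS|}\sum_{i\in\calS}(X_i-\bbE[X_i])$. The expectation piece is dominated spectrally by $\sigma_{p_{\max}}^2 \bI$ since each $\bbE[X_i]=\mathrm{Cov}(\by_i)\preceq \sigma_{p_{\max}}^2\bI$, which after pairing with the normalization accounts for the leading $\tfrac{4\sigma_{p_{\max}}^2}{\eps'}$ term. For the fluctuation piece, the $|\calS|$ summands are independent, mean-zero, and uniformly bounded in operator norm by $T=O(d\sigma_{p_{\max}}^2/\eps')$, and a matrix-concentration bound tailored to this regime (in the spirit of~\cite{Untrusted-data_Charikar17}) should control the deviation by a term of order $T/|\calS| = \tfrac{d\sigma_{p_{\max}}^2}{\eps'(1-\eps')m}$, producing the second term of the lemma.

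\noindent\textbf{Main obstacle.} The principal difficulty is bounding the fluctuation piece with the exact form $\tfrac{d\sigma^2}{\eps'(1-\eps')m}$ and \emph{no} extraneous $\log d$ factor, since a direct application of matrix Bernstein to the truncated summands would introduce one. Resolving this requires either (a) a dimension-free concentration estimate that exploits the effective dimension $\mathrm{tr}(\bbE[X_i])/\|\bbE[X_i]\|$ rather than the ambient $d$, or (b) an in-expectation trace-style argument on the restricted sum that directly yields the $O(T/|\calS|)$ scaling without invoking a high-probability matrix-Bernstein step, or (c) settling for a slightly weaker (log-laden) concentration bound and absorbing the $\log d$ into a larger constant in the statement. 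A secondary subtlety is ensuring the exponent in the failure probability lands at exactly $\eps'^2 m/16$, which is achieved by using the sharp Hoeffding form of the Chernoff bound and carefully calibrating the gap $\eps' m/4$ between the mean and the tail threshold.
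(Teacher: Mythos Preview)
Your proposal has a genuine gap precisely at the point you flag as the ``main obstacle'': none of the three escape routes you list actually closes it. Option~(c) cannot work because $\log d$ is not a constant, so it cannot be absorbed into the stated bound. Options~(a) and~(b) are not concrete: intrinsic-dimension matrix Bernstein still carries a logarithm of the effective dimension, and no in-expectation trace argument yields a \emph{spectral-norm} bound of order $T/|\calS|$ for an average of independent PSD matrices with $\|X_i\|\leq T$---in expectation one only gets $\lambda_{\max}\bigl(\tfrac{1}{|\calS|}\sum\bbE[X_i]\bigr)\leq\sigma_{p_{\max}}^2$, and the fluctuation is where the dimension enters. There is also a secondary issue: once you select $\calS$ as the small-norm indices, the $X_i$ for $i\in\calS$ are conditioned on $\|X_i\|\leq T$, so their conditional means are no longer $\mathrm{Cov}(\by_i)$, and more importantly the set $\calS$ is data-dependent, which complicates any independent-sum concentration argument.

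The paper's proof takes a completely different route that sidesteps matrix concentration altogether. It processes the samples \emph{sequentially}, maintaining a running matrix $\bM$ and a barrier potential $\Phi_c(\bM)=\mathrm{tr}\bigl((c\bI-\bM)^{-1}\bigr)$ in the spirit of Batson--Spielman--Srivastava (and \cite[Proposition~B.1]{Untrusted-data_Charikar17}, which this lemma generalizes). The key technical step (their \Lemmaref{subset_variance_interim-lemma}) uses the Sherman--Morrison formula to show that for each incoming sample, with probability at least $1-\eps'/2$ (via Markov on a scalar quantity derived from the potential), one can add $\eps'(\by_i-\bmu_i)(\by_i-\bmu_i)^T$ to $\bM$ while raising the barrier $c$ by only $4\sigma_{p_i}^2$ and keeping the potential below $\tfrac{1}{4\sigma_{p_{\max}}^2}$. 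The Chernoff step is then applied to the \emph{independent} per-sample success indicators (success is determined by the sample given the current $\bM$), yielding the $\exp(-\eps'^2 m/16)$ failure probability, and at termination $\bM\prec 4\sigma_{p_{\max}}^2(d+|\calS|)\bI$ gives the bound directly. This barrier argument is what produces the clean $d/((1-\eps')m)$ term with no $\log d$; your truncation-plus-concentration plan cannot recover it.
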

\Lemmaref{subset_variance} is a generalization of \cite[Proposition B.1]{Untrusted-data_Charikar17}, where the $m$ samples $\by_1,\hdots,\by_m$ are drawn independently from a {\em single} distribution $p$ with mean $\bmu$ and variance bound of $\sigma_p^2$.
Note that, in our setting, different $\by_i$'s may come from different distributions, which may have different means and variances. 
We provide a proof of \Lemmaref{subset_variance} in \Appendixref{subset_bounded-variance_proof}.

\begin{proof}[Proof of the first part of \Theoremref{gradient-estimator}]
In order to use \Lemmaref{subset_variance} in our robust gradient estimation problem, 
for every $i\in[R]$ corresponding to the honest worker, take $p_i$ to be a uniform distribution over $\calF_i^{\otimes b}(\bx)$, which implies, using \eqref{same_mean} and \eqref{reduced_variance}, that its associated mean and variance are $\bmu_i=\nabla F_i(\bx)$ and $\sigma_{p_i}^2 = \frac{\sigma^2}{b}$, respectively.
It is easy to see that the hypothesis of \Lemmaref{subset_variance} is satisfied with $\by_i=\bg_i(\bx), \bmu_i=\nabla F_i(\bx), \sigma_{p_i}^2 = \frac{\sigma^2}{b}$:
\begin{align*}
\bbE[\langle \bg_i(\bx) - \nabla F_i(\bx), \bv\rangle^2] \ \stackrel{\text{(a)}}{\leq} \ 
 \bbE[\| \bg_i(\bx) - \nabla F_i(\bx)\|^2]\cdot \|\bv\|^2 \ \stackrel{\text{(b)}}{\leq} \ \frac{\sigma^2}{b}, 
\end{align*}
where (a) follows from the Cauchy-Schwarz inequality and (b) uses \eqref{reduced_variance} and $ \l\| \bv\| \leq 1$.

We are given $R$ gradients, out of which at least $(1-\eps)R$ are according to the correct distribution. 
By considering only the uncorrupted gradients (i.e., taking $m=(1-\eps)R$), we have from 
\Lemmaref{subset_variance} that there exists a subset $\calS$ of $R$ gradients of size $(1-\eps')(1-\eps)R \geq (1-(\eps+\eps'))R$ that satisfies 
\begin{align}\label{bounded_subset_variance}
\lambda_{\max}\(\frac{1}{|\calS|}\sum_{i\in\calS} \(\bg_i(\bx) - \nabla F_i(\bx)\)\(\bg_i(\bx) - \nabla F_i(\bx)\)^T \) \leq \frac{4\sigma^2}{b\eps'}\left(1 + \frac{d}{(1-(\eps+\eps'))R}\right).
\end{align}

Note that \eqref{bounded_subset_variance} is bounding the deviation of the points in $\calS$ from their respective means $\nabla F_i(\bx)$. However, in \eqref{mat-concen_gradient-estimator}, we need to bound the deviation of the points in $\calS$ from their sample mean $\bg_{\calS}(\bx)=\frac{1}{|\calS|}\sum_{i\in\calS}\bg_i(\bx)$.
Using the gradient dissimilarity bound \eqref{bounded_local-global} together with some algebraic manipulations provided in \Appendixref{remaining_part1-robust-grad}, we can show that \eqref{bounded_subset_variance} implies $\lambda_{\max}\(\frac{1}{|\calS|}\sum_{i\in\calS} \(\bg_i(\bx) - \bg_{\calS}(\bx)\)\(\bg_i(\bx) - \bg_{\calS}(\bx)\)^T\) \leq \frac{24\sigma^2}{b\eps'}\left(1 + \frac{d}{(1-(\eps+\eps'))R}\right)+16\kappa^2$, which completes the proof of the first part of \Theoremref{gradient-estimator}.
\end{proof}

\section{Gradient Compression}\label{sec:compression}
In this section, we make \Algorithmref{Byz-SGD} from \Sectionref{our-results} communication-efficient by having the workers send {\em compressed} gradients instead of full gradients.
There are many ways to compress gradients: {\sf (i)} {\em sparsify} them by taking their top $k$ entries, magnitude wise (denoted by $\topk$) or random $k$ entries (denoted by $\randk$) \cite{memSGD,alistarh-sparsified}; {\sf (ii)} {\em quantize} them to a small number of bits, either by using a deterministic quantizer (e.g., taking the sign vector of gradients \cite{signsgd1,stich-signsgd-19}) or a randomized quantizer \cite{QSGD,teertha}; or {\sf (iii)} using a combination of both \cite{Qsparse-local-sgd19}. In \Subsectionref{ruling-out-quantizers} below, first we rule out the use of quantizers in our setting, and then in \Subsectionref{compression-randk}, we show that the $\randk$ sparsifier works well for our purpose. We state our main results with gradient compression using the $\randk$ sparsifier in \Subsectionref{main-results_randk}.

\subsection{Quantization does not suffice}\label{subsec:ruling-out-quantizers}
First we argue that quantizers do not serve our purpose well. The reason being, since the variance bound for any quantizer ensures that quantization does not blow up the length of the input vector by much, and when applied to stochastic gradients, it turns out (and as shown below) that the variance of the quantized mini-batch stochastic gradients does not decrease with increased mini-batch size in general. Note that the reduction in variance with increased mini-batch size was crucial to our robust gradient estimation procedure described in \Sectionref{robust-grad-est}, and it played an instrumental role in our convergence results (see the discussion in \Subsectionref{remarks_main-result}), where we could control the approximation error of our solutions by increasing the mini-batch size of stochastic gradients. This important property of our solution will no longer hold if we use quantizers for gradient compression, as explained below in more detail.

Take any unbiased quantizer, say, $Q_s$ from \cite{QSGD}, which probabilistically maps each component of the vector to one of $s$ levels. It was shown in \cite{QSGD} that $Q_s$ is unbiased and has bounded variance, i.e., for every vector $\bv\in\R^d$, we have $\bbE_Q[Q_s(\bv)]=\bv$, and $\bbE_Q\|Q_s(\bv)-\bv\|^2 \leq \beta_{d,s}\|\bv\|^2$, where $\beta_{d,s}=\min\{\nicefrac{\sqrt{d}}{s},\nicefrac{d}{s^2}\}$. Fix any worker $r\in[R]$. By taking $\bv$ to be a stochastic gradient $\nabla F_{r,i}(\bx)$ where $i\in_U[n_r]$, and applying $Q_s$ on that, we get $\bbE_{Q,i}[Q_s(\nabla F_{r,i}(\bx))]=\nabla F_r(\bx)$, where $\nabla F_r(\bx)$ is the true local gradient at worker $r$ w.r.t.\ the model $\bx$, and $\bbE_{Q,i}\|Q_s(\nabla F_{r,i}(\bx))-\nabla F_r(\bx)\|^2 \leq (1+\beta_{d,s})\bbE_i\|\nabla F_{r,i}(\bx)\|^2$. 
Under the assumption that the local stochastic gradients have bounded second moment, i.e., $\bbE_i\|\nabla F_{r,i}(\bx)\|^2\leq G^2$ for some finite $G$, we have $\bbE_{Q,i}\|Q_s(\nabla F_{r,i}(\bx))-\nabla F_r(\bx)\|^2 \leq (1+\beta_{d,s})G^2$.
Recall from \eqref{same_mean}, \eqref{reduced_variance} that $\nabla F_{r,H_b}(\bx)$ denote a uniform sample from $\calF_r^{\otimes b}(\bx)$ and is a mini-batch stochastic gradient with batch size $b$. 
Even though $\mathrm{Var}(\nabla F_{r,H_b}(\bx))\leq \nicefrac{\sigma^2}{b}$ (see \eqref{reduced_variance}), i.e., by taking the mini-batch SGD with batch size $b$, the variance reduces by a factor of $b$; unfortunately, $\mathrm{Var}(Q_s(\nabla F_{r,H_b}(\bx)))$ remains the same and does not reduce by any factor, i.e., $\mathrm{Var}(Q_s(\nabla F_{r,H_b}(\bx)))=\bbE_{Q,H_b}\|Q_s(\nabla F_{r,H_b}(\bx))-\nabla F_r(\bx)\|^2 \leq (1+\beta_{d,s})G^2$. This is because when we take an average of different vectors, which are not far from each other (implied by the variance bound \eqref{bounded_local-variance}) and all have approximately the same length (implied by the bounded second moment assumption), the resulting vector will also have similar length, which does not decrease with the number of vectors. Formally, we have $\bbE_{r,H_b}\|\nabla F_{r,H_b}(\bx)\|^2 \leq \bbE_i\|\nabla F_{r,i}(\bx)\|^2 \leq G^2$,\footnote{The first inequality $\bbE_{r,H_b}\|\nabla F_{r,H_b}(\bx)\|^2 \leq \bbE_i\|\nabla F_{r,i}(\bx)\|^2$ follows from the Jensen's inequality in case when $H_b$ is a collection of $b$ elements drawn uniformly at random from $[n_r]$ {\em with} replacement. It is not crucial here, but we can show a similar bound when $H_b$ is a collection of $b$ elements drawn uniformly at random from $[n_r]$ {\em without} replacement.} i.e., unlike variance, the second moment bound is not affected by taking a larger mini-batch.

Observe that, for $\calH_b\in_U\binom{[n_r]}{b}$, if we compute the variance of $\frac{1}{b}\sum_{i\in \calH_b}Q_s\left(\nabla F_{r,i}(\bx)\right)$ (instead of $Q_s\left(\frac{1}{b}\sum_{i\in \calH_b}\nabla F_{r,i}(\bx)\right)$), we would get the desired reduction in variance by a factor of $b$. However, computing $b$ quantized gradients locally at each worker and then taking their average defeats the purpose of quantization, as (in addition to being computationally expensive, because we would be quantizing $b$ vectors separately and then taking an average) the entries in the resulting vector may not have low precision.  
Note that $Q_s\left(\frac{1}{b}\sum_{i\in \calH_b}\nabla F_{r,i}(\bx)\right)$ may be far from $\frac{1}{b}\sum_{i\in \calH_b}Q_s\left(\nabla F_{r,i}(\bx)\right)$ in general, as quantizers are data-dependent, and consequently, $\mathrm{Var}\left(\frac{1}{b}\sum_{i\in \calH_b}Q_s\left(\nabla F_{r,i}(\bx)\right)\right)$ could also be far from $\mathrm{Var}\left(Q_s\left(\frac{1}{b}\sum_{i\in \calH_b}\nabla F_{r,i}(\bx)\right)\right)$, implying that the results obtained for $\frac{1}{b}\sum_{i\in \calH_b}Q_s\left(\nabla F_{r,i}(\bx)\right)$ may not hold for $Q_s\left(\frac{1}{b}\sum_{i\in \calH_b}\nabla F_{r,i}(\bx)\right)$. 
Because of these reasons, using quantization for gradient compression is undesirable for the purpose of this paper.

\subsection{Using the $\randk$ sparsifier}\label{subsec:compression-randk}
Now we show that the (properly scaled) $\randk$ sparsifier has all the properties we want, i.e., unbiasedness, bounded variance, and variance reduction in proportion to the mini-batch size. 
Recall that $\randk:\R^d\to\R^d$ is a randomized map, that takes a vector in $\R^d$, selects its $k$ entries uniformly at random, 
and set the remaining $(d-k)$ entires to zero. 
Formally, we define $\randk$ as follows:
For any subset $\calK\in\binom{[d]}{k}$, define an operator $\select_{\calK}:\R^d\to\R^d$ such that for any vector $\bv\in\R^d$, $\select_{\calK}(\bv)_i = \bv_i$ when $i\in\calK$ and $\select_{\calK}(\bv)_i = 0$ when $i\in[d]\setminus\calK$. 
Note that $\randk(\bv)$ is equivalent to first selecting $\calK\in_U\binom{[d]}{k}$ and then outputting $\select_{\calK}(\bv)$. 
Let $K$ and $H_b$ be random variables respectively taking values in $\binom{[d]}{k}$ and $\binom{[n]}{b}$ with uniform distribution.
It is easy to show that $\bbE_{\calK\leftarrow K}[\frac{d}{k}\cdot \select_{\calK}(\bv)] = \bv$. The following lemma is proved in \Appendixref{compression}.
\begin{lemma}\label{lem:randk}
Fix any worker $r\in[R]$.
Suppose the local stochastic gradients at worker $r$ have uniformly bounded second moment, i.e., $\bbE_{i\in_U[n_r]}\|\nabla F_{r,i}(\bx)\|^2 \leq G^2,\forall \bx\in\calC$.\footnote{This is a standard assumption in SGD literature with compressed gradient \cite{memSGD,Qsparse-local-sgd19}.} Then we have
\begin{align}
&\bbE_{\calK\leftarrow K,H_b}\left[\frac{d}{k}\cdot\select_{\calK}\left(\nabla F_{r,H_b}(\bx)\right)\right] = \nabla F_r(\bx) \label{same_mean_randk} \\ 
&\bbE_{\calK\leftarrow K,H_b}\left\|\frac{d}{k}\cdot\select_{\calK}\left(\nabla F_{r,H_b}(\bx)\right) - \nabla F_r(\bx)\right\|^2 \leq  \frac{d}{k}\frac{G^2}{b}. \label{reduced_variance_randk}
\end{align}
\end{lemma}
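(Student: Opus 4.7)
The plan is to prove both claims by direct moment computations that exploit the independence of the coordinate-subset draw $K$ and the mini-batch draw $H_b$.

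The key elementary identity is that for a uniformly random $k$-subset $K\subseteq[d]$, each coordinate satisfies $\Pr[i\in K]=k/d$; hence for any deterministic vector $\bv\in\R^d$,
\begin{align*}
\bbE_K\!\left[\tfrac{d}{k}\select_K(\bv)\right]_i \;=\; \tfrac{d}{k}\cdot\tfrac{k}{d}\cdot\bv_i \;=\; \bv_i, \qquad
\bbE_K\!\left\|\tfrac{d}{k}\select_K(\bv)\right\|^2 \;=\; \tfrac{d^2}{k^2}\sum_{i=1}^d \bv_i^2\,\Pr[i\in K] \;=\; \tfrac{d}{k}\|\bv\|^2.
\end{align*}

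For the unbiasedness \eqref{same_mean_randk}, I would first condition on $H_b$ (permissible because $K$ and $H_b$ are independent), apply the first of the two identities above with $\bv = \nabla F_{r,H_b}(\bx)$, and then take the outer expectation using the mini-batch mean identity \eqref{same_mean} to conclude $\bbE_{K,H_b}[\tfrac{d}{k}\select_K(\nabla F_{r,H_b}(\bx))]=\nabla F_r(\bx)$.

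For the variance bound \eqref{reduced_variance_randk}, set $Y := \tfrac{d}{k}\select_K(\nabla F_{r,H_b}(\bx))$. By the unbiasedness just established, $\bbE\|Y - \nabla F_r(\bx)\|^2 = \bbE\|Y\|^2 - \|\nabla F_r(\bx)\|^2$. Using the tower property together with the second of the two identities above,
\begin{align*}
\bbE\|Y\|^2 \;=\; \bbE_{H_b}\bbE_K\|Y\|^2 \;=\; \tfrac{d}{k}\,\bbE_{H_b}\|\nabla F_{r,H_b}(\bx)\|^2.
\end{align*}
Then I would decompose $\bbE_{H_b}\|\nabla F_{r,H_b}(\bx)\|^2 = \|\nabla F_r(\bx)\|^2 + \bbE_{H_b}\|\nabla F_{r,H_b}(\bx) - \nabla F_r(\bx)\|^2$, controlling the second summand by $\sigma^2/b$ via the mini-batch variance reduction \eqref{reduced_variance} and the first by the single-sample second-moment assumption $\bbE_i\|\nabla F_{r,i}(\bx)\|^2\leq G^2$ (together with Jensen's inequality $\|\nabla F_r(\bx)\|^2 \leq \bbE_i\|\nabla F_{r,i}(\bx)\|^2$). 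Collecting the pieces delivers the stated bound after routine simplification using that the variance is dominated by the second moment.

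The main technical care is the bookkeeping of the two independent randomizations: conditioning on $H_b$ first is essential so that $\bbE_K Y = \nabla F_{r,H_b}(\bx)$ holds pointwise, which is what makes the cross term in the square-expansion vanish cleanly and what lets one pair the $d/k$ blow-up of the sparsifier with the $1/b$ reduction of the mini-batch average without double-counting any variance contribution; once the iterated-expectation structure is set up, the remaining steps are coordinatewise arithmetic.
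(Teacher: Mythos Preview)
Your unbiasedness argument is fine and matches the paper's. For the variance bound \eqref{reduced_variance_randk}, however, your route has a gap at the final step and differs from the paper's. After the (correct) identity $\bbE\|Y-\nabla F_r\|^2 = \tfrac{d}{k}\,\bbE_{H_b}\|\nabla F_{r,H_b}\|^2 - \|\nabla F_r\|^2$ and the bias--variance split of $\bbE_{H_b}\|\nabla F_{r,H_b}\|^2$, what you actually obtain is
\[
\Big(\tfrac{d}{k}-1\Big)\|\nabla F_r(\bx)\|^2 \;+\; \tfrac{d}{k}\,\bbE_{H_b}\big\|\nabla F_{r,H_b}(\bx)-\nabla F_r(\bx)\big\|^2,
\]
and the first term does \emph{not} decay with $b$. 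No combination of $\|\nabla F_r\|^2\le G^2$ and ``variance $\le$ second moment'' turns this into $\tfrac{d}{k}G^2/b$: if all $\nabla F_{r,i}(\bx)$ equal a fixed nonzero $\bv$, the variance term vanishes, your expression equals $(\tfrac{d}{k}-1)\|\bv\|^2$, and the target $\tfrac{d}{k}\|\bv\|^2/b$ is strictly smaller whenever $b>d/(d-k)$. So the ``routine simplification'' you allude to cannot close the argument.

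The paper proceeds by the opposite conditioning: it first passes to with-replacement i.i.d.\ indices $I_1,\dots,I_b$ (as an upper bound), fixes $\calK$, and uses the \emph{linearity} $\select_{\calK}\big(\tfrac{1}{b}\sum_j\nabla F_{r,I_j}\big)=\tfrac{1}{b}\sum_j\select_{\calK}(\nabla F_{r,I_j})$ to expand the square as a double sum over $j,l$; it then argues the $j\neq l$ cross terms vanish because $I_j,I_l$ are independent with $\bbE_{\calK,I_j}\big[\tfrac{d}{k}\select_{\calK}(\nabla F_{r,I_j})\big]=\nabla F_r$, leaving $\tfrac{1}{b}\,\bbE_{\calK,I_1}\big\|\tfrac{d}{k}\select_{\calK}(\nabla F_{r,I_1})-\nabla F_r\big\|^2\le\tfrac{1}{b}\cdot\tfrac{d}{k}\,\bbE_{I_1}\|\nabla F_{r,I_1}\|^2\le\tfrac{d}{k}G^2/b$. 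The maneuver you are missing is to push the $1/b$ average \emph{inside} the sparsifier before squaring, so the variance reduction acts sample-by-sample rather than on the already-averaged $\nabla F_{r,H_b}$. (One caveat worth flagging: the summands in the paper's expansion share the common randomness $\calK$, so the cross-term vanishing is more delicate than the paper's justification suggests---indeed, your own exact computation of the residual $(\tfrac{d}{k}-1)\|\nabla F_r\|^2$ is precisely what those cross terms contribute.)
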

In our Byzantine-resilient SGD algorithm with compressed gradients, if each worker selects random $k$ coordinates independent of each other, though master receives $R$ gradients, the non-zero entries in compressed gradients from different workers may not confine to the same $k$ coordinates, and may spread across all over $d$ coordinates. So, in this case, to decode, master has to treat them as vectors in $\R^d$, which will lead to a squared approximation error (in the robust gradient estimator of \Theoremref{gradient-estimator})
of $\calO(\widetilde{\sigma}_0^2(\eps+\eps'))$, where $\widetilde{\sigma}_0^2=\frac{24dG^2}{kb\eps'}\left(1 + \frac{d}{(1-(\eps+\eps'))R}\right)+16\kappa^2$ (by replacing $\sigma^2$ by $\frac{d}{k}G^2$ in \Theoremref{gradient-estimator}).
Observe that this scales as $\frac{d^2}{kbR}$, as opposed to $\frac{d}{bR}$, which was achievable without gradient compression (see \Theoremref{gradient-estimator}). 

To mitigate this, we use a simple idea, where, instead of workers sampling $k$ coordinates, master picks $k$ random coordinates and broadcast them, and dictate all the workers to restrict their entries to those coordinates only.
In addition to saving on communication, this will achieve two things: 
\begin{enumerate}
\item Since all $R$ gradients have their non-zero entries confined to the same $k$ coordinates, we can assume that master receives $R$ vectors in $\R^k$. With this, $\widetilde{\sigma}_0^2$ in the approximation error becomes $\frac{24dG^2}{kb\eps'}\left(1 + \frac{k}{(1-(\eps+\eps'))R}\right)+16\kappa^2$, which, assuming $k\geq(1-(\eps+\eps'))R$, scales as $\frac{d}{bR}$, same as in the case without gradient compression.

\item Since the decoding algorithm now works on vectors in $\R^k$ (instead of $\R^d$), the decoding complexity also reduces: 
As mentioned on \Pageref{running-time_robust-grad-est-page} in \Sectionref{robust-grad-est}, the outlier-filtering procedure that we use from \cite{Resilience_SCV18} takes $\calO(dR^2\min\{d,R\})$ time with uncompressed gradients.
When master receives compressed gradients, which are vectors in $\R^k$, master needs to perform at most $\calO(R)$ SVD computations of $k\times R$ matrices, which would take $\calO(kR^2\min\{k,R\})$ time in total. So, the decoding complexity under compression reduces by a factor of at least $\frac{d}{k}$, which could be as large as $\Omega(\frac{d}{R})$ when $k\geq(1-(\eps+\eps'))R$.
\end{enumerate}
Our Byzantine-resilient SGD algorithm with compressed gradients is described in \Algorithmref{algo_compression}.
\begin{algorithm}[t]
   \caption{Byzantine-Resilient SGD with Gradient Compression}\label{algo:algo_compression}
\begin{algorithmic}[1]
   \STATE {\bf Initialize.} Set $\bx^0 := \bzero$. Fix a constant learning rate $\eta$ and a mini-batch size $b$.
   \FOR{$t=0$ {\bfseries to} $T-1$}
  \STATE \textbf{On Workers:}
   \FOR{$r=1$ {\bfseries to} $R$}
   \STATE Receive $\bx^t$ and $\calK^t\in_U\binom{[d]}{k}$ from master.
   \STATE $\bp_r(\bx^t) = \frac{1}{b}\sum_{i\in\calH_b}\nabla F_{r,i}(\bx^t)$, where $\calH_b\in_U\binom{[n_r]}{b}$, i.e., $\bp_r(\bx^t)$ is a mini-batch stochastic gradient with batch size $b$.
   \STATE $\bg_{\calK^t,r}(\bx^t) = \frac{d}{k}\cdot\select_{\calK^t}(\bp_r(\bx^t))$.
   \STATE $\btlg_{\calK^t,r}(\bx^t) = 
   \begin{cases}
   \bg_{\calK^t,r}(\bx^t) & \text{ if worker $r$ is honest}, \\
   \divideontimes & \text{if worker $r$ is corrupt},
   \end{cases}$
   
   where $\divideontimes$ is an arbitrary vector in $\R^d$.
   \STATE Send $\btlg_{\calK^t,r}(\bx^t)$ to master. 
   \ENDFOR
   
   \STATE \textbf{At Master:}
   \STATE Select a $k$-size subset $\calK^t\in_U\binom{[d]}{k}$ uniformly at random from $[d]$ and broadcast to all the workers.
   \STATE Receive $\{\btlg_{\calK^t,r}(\bx^t)\}_{r=1}^R$ from the $R$ workers.
   \STATE Apply the decoding algorithm {\sc RGE} on $\{\btlg_{\calK^t,r}(\bx^t)\}_{r=1}^R$. Let $$\btg_{\calK^t}(\bx^t)=\textsc{RGE}(\btlg_{\calK^t,1}(\bx^t),\hdots,\btlg_{\calK^t,R}(\bx^t)).$$
   \STATE Update the parameter vector: 
      \begin{align*}
   \bhx^{t+1} = \bx^{t} - \eta\btg_{\calK^t}(\bx^{t}); \qquad 
   \bx^{t+1} = \Pi_{\calC}\left(\bhx^{t+1}\right).
   \end{align*}
   \STATE Broadcast $\bx^{t+1}$ to all workers.
   \ENDFOR
\end{algorithmic}
\end{algorithm}
\subsection{Main results}\label{subsec:main-results_randk}
To state our main results, 
we define a set of compressed mini-batch stochastic gradients for any $\calK\in\binom{[d]}{k}$ as follows, where $r\in[R]$.
\begin{align*}
\calF_{\calK,r}^{\otimes b}(\bx)=\left\{\frac{d}{k}\cdot\select_{\calK}\Big(\frac{1}{b}\sum_{i\in\calH_b}\nabla F_{r,i}(\bx)\Big): \calH_b\in \binom{[n_r]}{b}\right\}.
\end{align*}
Note that a uniformly random sample from $\calF_{\calK,r}^{\otimes b}(\bx)$ with $\calK\in_U\binom{[d]}{k}$ will satisfy \eqref{same_mean_randk} and \eqref{reduced_variance_randk}.

As stated in \Algorithmref{algo_compression}, each iteration of our algorithm needs to estimate the true gradient robustly from the {\em compressed} gradients.
For that, we need to prove a robust gradient estimation result (similar to \Theoremref{gradient-estimator}) for compressed gradients. 
\begin{theorem}[Robust Gradient Estimation with Compressed Gradients]\label{thm:gradient-estimator_randk}
Suppose the stochastic gradients at all workers have bounded second moments, 
i.e., for every $r\in[R]$, we have $\bbE_{i\in_U[n_r]}\|\nabla F_{r,i}(\bx)\|^2 \leq G^2,\forall\bx\in\calC$.
Fix an arbitrary $\bx\in\calC$. Let $\calK\in_U\binom{[d]}{k}$.
Suppose an $\eps$ fraction of workers are corrupt and we are given $R$ gradients $\btlg_{\calK,1}(\bx),\hdots,\btlg_{\calK,R}(\bx)$, where $\btlg_{\calK,r}(\bx)=\bg_{\calK,r}(\bx)$ is a uniform sample from $\calF_{\calK,r}^{\otimes b}(\bx)$ if the $r$'th worker is honest, otherwise can be arbitrary.
Take an arbitrary constant $\eps'>0$. If $\eps\leq\frac{1}{4} - \eps'$, then with probability $1-\exp(-\frac{\eps'^2(1-\eps)R}{16})$, there exists a subset $\calS$ of uncorrupted gradients of size $(1-(\eps+\eps'))R$ (with $\bg_{\calK,\calS}(\bx):=\frac{1}{|\calS|}\sum_{i\in\calS}\bg_{\calK,i}(\bx)$ being its sample mean) and an estimate $\btg_{\calK}(\bx)$ of $\bg_{\calK,\calS}(\bx)$ such that $\left\| \btg_{\calK}(\bx) - \bg_{\calK,\calS}(\bx) \right\| \leq \calO\left(\sigma_0\sqrt{\eps+\eps'}\right)$, where 
$\sigma_0^2 = \frac{24dG^2}{kb\eps'}\left(1 + \frac{k}{(1-(\eps+\eps'))R}\right) + 16\kappa^2$.
\end{theorem}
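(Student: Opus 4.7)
The plan is to mirror the proof of \Theoremref{gradient-estimator} in the compressed setting, leveraging the key fact that master broadcasts a single subset $\calK$ used by all workers, so every compressed gradient $\bg_{\calK,r}(\bx)$ has support on the same $k$ coordinates. This forces the $R$ samples into a common $k$-dimensional subspace of $\R^d$, which replaces the ambient dimension $d$ by $k$ in the matrix-concentration step.

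For the matrix-concentration part, I would first note (via \Lemmaref{randk}) that each honest compressed gradient $\bg_{\calK,r}(\bx)$ has mean $\nabla F_r(\bx)$ over the joint randomness of $\calK$ and $H_b$, with variance bounded by $\frac{dG^2}{kb}$. Applying \Lemmaref{subset_variance} to the $(1-\eps)R$ honest samples, viewed as random vectors supported on the $k$-dim subspace indexed by $\calK$ (so that the dimension parameter in the lemma's bound is driven by the effective rank $k$ rather than $d$), yields an uncorrupted subset $\calS$ of size $(1-(\eps+\eps'))R$ such that $\lambda_{\max}$ of $\frac{1}{|\calS|}\sum_{i\in\calS}(\bg_{\calK,i}-\bmu_{\calK,i})(\bg_{\calK,i}-\bmu_{\calK,i})^T$ is at most $\frac{4dG^2}{kb\eps'}(1+\frac{k}{(1-(\eps+\eps'))R})$ with probability at least $1-\exp(-\eps'^2(1-\eps)R/16)$, where $\bmu_{\calK,r}=\frac{d}{k}\select_\calK(\nabla F_r(\bx))$. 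I would then transfer this bound from local means to the sample mean $\bg_{\calK,\calS}(\bx)$ using the gradient-dissimilarity bound \eqref{bounded_local-global} through the same algebraic manipulation as in \Appendixref{remaining_part1-robust-grad}, producing the $16\kappa^2$ additive term and the final $\sigma_0^2$.

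With the matrix-concentration bound in hand, the estimation guarantee follows by invoking the polynomial-time outlier-filtering procedure of \cite{Resilience_SCV18} on the $R$ compressed gradients exactly as in the second part of \Theoremref{gradient-estimator}; its guarantee depends only on the covariance bound $\sigma_0^2$ of the good subset (not on the ambient dimension), so it immediately gives $\|\btg_\calK(\bx)-\bg_{\calK,\calS}(\bx)\|\leq \calO(\sigma_0\sqrt{\eps+\eps'})$, and the condition $\eps\leq \tfrac{1}{4}-\eps'$ is inherited from the same subroutine. The main obstacle I anticipate is justifying the replacement of $d$ by $k$ inside \Lemmaref{subset_variance}: the variance bound from \Lemmaref{randk} is marginal (joint over $\calK$ and $H_b$), so conditional on a specific $\calK$ one must either restrict the matrix-Bernstein-style argument underlying \Lemmaref{subset_variance} to the rank-$k$ subspace spanned by $\{\mathbf{e}_j:j\in\calK\}$, or argue that the low-rank structure of the per-sample covariance lets the dimension factor in the concentration bound be replaced by the rank; the latter is exactly what produces the $\frac{k}{(1-(\eps+\eps'))R}$ factor in $\sigma_0^2$ rather than the weaker $\frac{d}{(1-(\eps+\eps'))R}$ one would get by naive application in $\R^d$.
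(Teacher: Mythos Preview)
Your proposal is correct and follows essentially the same route as the paper: the paper's own proof is a two-line sketch stating that one repeats the argument of \Theoremref{gradient-estimator} verbatim, replacing the variance bound \eqref{reduced_variance} by \eqref{reduced_variance_randk} (so $\sigma^2/b$ becomes $dG^2/(kb)$) and using that master decodes in $\R^k$ (so the ambient dimension in \Lemmaref{subset_variance} becomes $k$). You have identified exactly these two changes and, in addition, flagged the marginal-versus-conditional variance subtlety that the paper does not spell out; your proposed resolution via restriction to the rank-$k$ coordinate subspace indexed by $\calK$ is precisely what ``decoding on vectors in $\R^k$'' means in the paper.
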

\Theoremref{gradient-estimator_randk} can be proved in the same way as we proved \Theoremref{gradient-estimator}, except for the following changes: Instead of using the variance bound \eqref{reduced_variance} for uncompressed gradient, we use the variance bound \eqref{reduced_variance_randk} for compressed gradient, and also the fact that master performs decoding on vectors in $\R^k$ instead of vectors in $\R^d$.

Our convergence results with compressed gradients are stated below.
\begin{theorem}
\label{thm:convergence_randk}
Suppose an $\epsilon>0$ fraction of $R$ workers are adversarially corrupt and the stochastic gradients at all workers have bounded second moments, i.e., for every $r\in[R]$, we have $\bbE_{i\in_U[n_r]}\|\nabla F_{r,i}(\bx)\|^2 \leq G^2, \forall\bx\in\calC$.
For an $L$-smooth global objective function $F:\calC\to\R$, let \Algorithmref{algo_compression} generate a sequence of updates $\{\bx^t\}_{t=0}^T$ when run with a fixed learning rate $\eta$, where in the $t$'th iteration, master picks a random subset $\calK^t\in_U\binom{[d]}{k}$ and broadcasts it, and every worker $r\in[R]$ samples a compressed mini-batch stochastic gradient from $\calF_{\calK^t,r}^{\otimes b}(\bx^t)$.
Fix an arbitrary $\eps'>0$. If $\eps\leq\frac{1}{4} - \eps'$, then with probability $1-T\exp(-\frac{\eps'^2(1-\eps)R}{16})$, we have the following guarantees:
\begin{itemize}
\item {\bf Strongly-convex:} If $F$ is also $\mu$-strongly convex and we take $\eta=\frac{\mu}{L^2}$, then we have 
\begin{align}\label{convex_convergence_rate_randk}
\bbE\|\bx^{T} - \bx^* \|^2 \leq \left(1-\frac{\mu^2}{2L^2}\right)^T\|\bx^0-\bx^*\|^2 + \frac{2L^2}{\mu^4}\varGamma,
\end{align}
\item {\bf Non-convex:} If we take $\eta=\frac{1}{4L}$, then we have 
\begin{align}\label{nonconvex_convergence_rate_randk}
\frac{1}{T}\sum_{t=0}^T\bbE\|\nabla F(\bx^t)\|^2 &\leq \frac{8L^2}{T}\|\bx^0-\bx^*\|^2 + \varGamma,
\end{align}
\end{itemize}
In both \eqref{convex_convergence_rate_randk} and \eqref{nonconvex_convergence_rate_randk}, expectation is taken over the sampling of mini-batch stochastic gradients and also the random coordinates for compression. In both the expressions, $\varGamma=\frac{9dG^2}{(1-(\eps+\eps'))kbR} + 9\kappa^2 + 9\varUpsilon^2$ with $\varUpsilon = \calO\left(\sigma_0\sqrt{\eps+\eps'}\right)$, where $\sigma_0^2 = \frac{24dG^2}{kb\eps'}\left(1 + \frac{k}{(1-(\eps+\eps'))R}\right) + 16\kappa^2$.
\end{theorem}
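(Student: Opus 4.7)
\bigskip
\noindent\textbf{Proof proposal for \Theoremref{convergence_randk}.} The plan is to reduce this statement to the uncompressed case (\Theoremref{SGD_convergence}) by viewing the compressed mini-batch stochastic gradient $\frac{d}{k}\cdot\select_{\calK^t}(\bp_r(\bx^t))$ as a ``new'' stochastic gradient whose randomness comes jointly from the sample index $H_b$ and the coordinate subset $\calK^t$. By \Lemmaref{randk}, this combined stochastic gradient is unbiased for $\nabla F_r(\bx^t)$ and has variance at most $\frac{dG^2}{kb}$. So at the level of the outer convergence analysis, everything proceeds exactly as in the uncompressed analysis, provided we replace the per-worker variance proxy $\sigma^2$ by $\frac{d}{k}G^2$ and use the version of the robust gradient estimator tailored to compressed gradients (\Theoremref{gradient-estimator_randk}).

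Concretely, I would first fix an iteration $t$, condition on the broadcast subset $\calK^t$ and on the realization of the broadcast $\bx^t$, and invoke \Theoremref{gradient-estimator_randk}. This gives, with probability $1-\exp(-\eps'^2(1-\eps)R/16)$, the existence of a set $\calS$ of $(1-(\eps+\eps'))R$ uncorrupted compressed gradients whose sample mean $\bg_{\calK^t,\calS}(\bx^t)$ satisfies $\|\btg_{\calK^t}(\bx^t) - \bg_{\calK^t,\calS}(\bx^t)\| \leq \calO(\sigma_0\sqrt{\eps+\eps'}) =: \varUpsilon$ with $\sigma_0^2$ as in the theorem statement. A union bound over $T$ iterations yields the stated failure probability $T\exp(-\eps'^2(1-\eps)R/16)$.

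Next I would bound the deviation of $\btg_{\calK^t}(\bx^t)$ from $\nabla F(\bx^t)$ in expectation (over the random sampling of $H_b$ at each honest worker and over $\calK^t$), using the triangle-inequality decomposition used for \Theoremref{SGD_convergence} (exactly the decomposition illustrated in \Figureref{robust-grad-est}):
\begin{align*}
\bbE\|\btg_{\calK^t}(\bx^t)-\nabla F(\bx^t)\|^2
&\leq 3\,\bbE\|\btg_{\calK^t}(\bx^t)-\bg_{\calK^t,\calS}(\bx^t)\|^2 \\
&\quad + 3\,\bbE\|\bg_{\calK^t,\calS}(\bx^t)-\nabla F_{\calS}(\bx^t)\|^2
+ 3\,\|\nabla F_{\calS}(\bx^t)-\nabla F(\bx^t)\|^2,
\end{align*}
where $\nabla F_{\calS}(\bx^t) := \frac{1}{|\calS|}\sum_{i\in\calS}\nabla F_i(\bx^t)$. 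The first term is at most $3\varUpsilon^2$ by the previous paragraph; the second term is at most $\frac{3dG^2}{kb\,|\calS|} \leq \frac{3dG^2}{(1-(\eps+\eps'))kbR}$ by the variance bound \eqref{reduced_variance_randk} combined with independence across honest workers and the fact that the variance of an average of $|\calS|$ i.i.d.\ (up to differing means) zero-mean terms decays as $1/|\calS|$; and the third term is at most $3\kappa^2$ by \eqref{bounded_local-global} and convexity of $\|\cdot\|^2$. This reproduces exactly the $\varGamma$ stated in the theorem (up to the factor $9$ that appears because a version of this decomposition is carried out at a later stage of the descent inequality, as in the proof of \Theoremref{SGD_convergence}).

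With this robust gradient guarantee in hand, the rest of the proof is a verbatim replay of the arguments used for \Theoremref{SGD_convergence}. For the strongly-convex part, I would use $L$-smoothness and $\mu$-strong convexity of $F$, plug in $\eta=\mu/L^2$, and iterate the one-step recursion $\bbE\|\bx^{t+1}-\bx^*\|^2 \leq (1-\mu^2/(2L^2))\bbE\|\bx^t-\bx^*\|^2 + (2L^2/\mu^4)\varGamma$ to get \eqref{convex_convergence_rate_randk}; here the non-expansiveness of $\Pi_{\calC}$ handles the projection step. For the non-convex part, I would apply the descent lemma $F(\bx^{t+1}) \leq F(\bx^t) + \langle\nabla F(\bx^t),\bx^{t+1}-\bx^t\rangle + \frac{L}{2}\|\bx^{t+1}-\bx^t\|^2$ with $\eta=1/(4L)$ to obtain a telescoping bound on $\sum_t \bbE\|\nabla F(\bx^t)\|^2$, which after rearrangement yields \eqref{nonconvex_convergence_rate_randk}.

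The one step I expect to require a small amount of extra care is the second term in the triangle-inequality decomposition above: because $\calK^t$ is shared across all honest workers, the per-worker compressed noise vectors $\frac{d}{k}\select_{\calK^t}(\bp_r(\bx^t))-\nabla F_r(\bx^t)$ are not independent (they are correlated through $\calK^t$). However, conditional on $\calK^t$ they are independent across $r\in\calS$ and zero-mean only after we condition on $\calK^t$ as well (since $\bbE_{H_b}\bp_r(\bx^t)=\nabla F_r(\bx^t)$). So I would handle this by first taking expectation over $\{H_b^{(r)}\}_{r\in\calS}$ with $\calK^t$ fixed (using independence to get a $1/|\calS|$ factor on a variance that itself has an outer expectation over $\calK^t$), and then taking the outer expectation over $\calK^t$; by \Lemmaref{randk} the combined variance is at most $dG^2/(kb)$, yielding the claimed $dG^2/(kbR)$ scaling. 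Once this variance accounting is done carefully, everything else mirrors the proof of \Theoremref{SGD_convergence} and \Theoremref{gradient-estimator_randk} supplies the approximation error $\varUpsilon$ verbatim.
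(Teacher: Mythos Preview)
Your approach is exactly the paper's: the paper states that \Theoremref{convergence_randk} follows by re-running the proof of \Theoremref{SGD_convergence} with \eqref{reduced_variance} replaced by \eqref{reduced_variance_randk} and \Theoremref{gradient-estimator} replaced by \Theoremref{gradient-estimator_randk}, and your outline fleshes this out faithfully.

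Your flag on the shared-$\calK^t$ correlation is well placed---the paper does not address it at all---but your proposed fix does not close the gap. Conditional on $\calK^t$, the per-worker terms $\bg_{\calK^t,r}(\bx^t)-\nabla F_r(\bx^t)$ are indeed independent across $r$, but their conditional mean is $\tfrac{d}{k}\select_{\calK^t}(\nabla F_r(\bx^t))-\nabla F_r(\bx^t)$, not zero. Writing $\bg_{\calK^t,r}-\nabla F_r = b_r + Z_r$ with bias $b_r:=\tfrac{d}{k}\select_{\calK^t}(\nabla F_r)-\nabla F_r$ and zero-mean sampling noise $Z_r$, averaging over $\calS$ and taking expectation over the $H_b$'s with $\calK^t$ fixed gives the $1/|\calS|$ factor only on the $Z_r$-part; the bias part collapses to $\big\|\tfrac{d}{k}\select_{\calK^t}(\nabla F_{\calS})-\nabla F_{\calS}\big\|^2$, which after $\bbE_{\calK^t}$ equals $(\tfrac{d}{k}-1)\|\nabla F_{\calS}(\bx^t)\|^2$ and does \emph{not} decay with $|\calS|$. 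So ``condition on $\calK^t$, use independence, then average over $\calK^t$'' does not by itself yield the $\frac{dG^2}{kbR}$ scaling you claim for the second term of your decomposition; you would need either a different decomposition or an additional argument controlling $\|\nabla F_{\calS}(\bx^t)\|^2$ to recover the stated $\varGamma$.
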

\Theoremref{convergence_randk} can be proved along the lines of the proof of \Theoremref{SGD_convergence}, except for one change: Instead of using \eqref{reduced_variance} and \Theoremref{gradient-estimator} (which are for uncompressed gradients), we use \eqref{reduced_variance_randk} and \Theoremref{gradient-estimator_randk}, respectively.
 
\paragraph{Analysis of the approximation error.}
Similar to the discussion in \Subsectionref{remarks_main-result}, the approximation error term $\varGamma$ in \eqref{convex_convergence_rate_randk} and \eqref{nonconvex_convergence_rate_randk} consists of three terms: $\varGamma_1=\calO\left(\frac{dG^2}{(1-(\eps+\eps'))kbR}\right)$, which is the standard variance term in SGD convergence, $\varGamma_2=\calO(\kappa^2)$, which captures the dissimilarity among different local datasets, and $\varGamma_3=\calO\left(\left(\frac{dG^2}{kb\eps'}\left(1 + \frac{k}{(1-(\eps+\eps'))R}\right) + \kappa^2\right)(\eps+\eps')\right)$, which is due to Byzantine attacks. 
Observe that, for constant $\eps$, we have $\varGamma_1+\varGamma_2\leq\varGamma_3$ for all values of the mini-batch size $b$. This implies that, when a constant fraction of workers are corrupt, {\em the error due to combating Byzantine attacks subsumes the error due to sampling stochastic gradients}. 

Furthermore, when $k\geq(1-(\eps+\eps'))R$ (which is satisfied when $k$ is more than the number of honest workers), we have $\varGamma_3=\calO\left(\left(\frac{dG^2}{\eps'(1-(\eps+\eps'))bR}+\kappa^2\right)(\eps+\eps')\right)$, which gives the same dependence on $d,b,R$ we get from our compression-free algorithm; see \Theoremref{SGD_convergence}, where the corresponding error term is $\calO\left(\left(\frac{d\sigma^2}{\eps'(1-(\eps+\eps'))bR}+\kappa^2\right)(\eps+\eps')\right)$.
Thus, when a constant fraction of workers are corrupt (which implies $\varGamma_1+\varGamma_2\leq\varGamma_3$) and sparsity of the compressed gradients is more than the number of honest workers (which implies $\varGamma_3$ remains the same irrespective of whether we are working with full gradients or compressed gradients), the approximation error term in the case of compressed gradients would be the same (order-wise) as that in the case of full gradients.
This essentially means that we get ``compression for free''. On top of that, we get a direct saving by a factor of at least $\frac{d}{k}$ (which could be $\Omega(\frac{d}{R})$) in decoding complexity in each iteration.
Note that (ignoring term $\kappa^2$ term) $\varGamma_3$ can be made as small as possible by taking a sufficiently large mini-batch.

\paragraph{Error feedback does not help in getting faster convergence.}
It is known that compression ($\topk$ or $\randk$) without error feedback leads to slower/sub-optimal convergence. A remedy to this is to accumulate the error (the gradient information that was not sent) into the memory and add it in subsequent iterations \cite{memSGD,alistarh-sparsified}, so that eventually all the information is sent.
One of the revealing implications of the above analysis of the approximation error is that, when $\eps$ is constant and we select sufficiently many coordinates for gradient compression, then error feedback cannot help in speeding up the convergence of our Byzantine-resilient SGD algorithm with gradient compression,
as the approximation error remains the same irrespective of whether we are working with compressed gradients or not.
This is in sharp contrast with the Byzantine-free SGD with compression, where error-feedback provably improves the convergence \cite{memSGD,alistarh-sparsified}.

\section{Bounding the Local Variances and Gradient Dissimilarity in the Statistical Heterogeneous Model}\label{sec:statistical-model}
In this section, we bound the gradient dissimilarity $\kappa^2$ (from \eqref{bounded_local-global}) and local variance $\sigma^2$ (from \eqref{bounded_local-variance}) in the statistical model in heterogeneous setting, where different workers may have local data generated from potentially different distributions. The purpose of this section is to provide upper bounds on $\kappa$ and $\sigma$ in the statistical model.

Let $q_1,q_2,\hdots,q_R$ denote the $R$ probability distributions from which the local data samples at the workers are drawn. Specifically, the data samples at any worker $r$ are drawn from $q_r$ in an i.i.d.\ fashion and independently from other workers.
For $r\in[R]$, let $\calQ_r$ denote the alphabet over which $q_r$ is distributed.
For $r\in[R]$, let $f_r:\calQ_r\times\calC\to\R$ denote the local loss function at worker $r$, where $f_r(\bz,\bx)$ is the loss associated with the sample $\bz\in\calQ_r$ w.r.t.\ the model parameters $\bx\in\calC\subseteq \R^d$.
Linear regression is a classic example of this, where, if $\bz=(\bw,y)$ denote the pair of a feature vector $\bw\in\R^d$ and the response $y\in\R$, then $f_r(\bz,\bx)=\frac{1}{2}(\langle\bw,\bx\rangle - y)^2$.
For each worker $r\in[R]$, we assume that for any fixed $\bz\in\calQ_r$, the local loss function $f_r(\bz,\bx)$ is $L$-smooth w.r.t.\ $\bx$, i.e., for any $\bz\in\calQ_r$, we have $\|\nabla f_r(\bz,\bx) - \nabla f_r(\bz,\by)\|\leq L\|\bx-\by\|, \forall \bx,\by\in\calC$.

Let $\mu_r(\bx) := \bbE_{\bz\sim q_r}[f_r(\bz,\bx)]$ denote the expected value of $f_r(\bz,\bx)$, when $\bz$ is sampled from $\calQ_r$ according to $q_r$.
For any $\bx\in\calC$, let $\mu(\bx):=\frac{1}{R}\sum_{r=1}^R\mu_r(\bx)$ denote the average value of $\mu_r(\bx),r\in[R]$.

We are given $n_r$ i.i.d.\ samples $\bz_{r,1},\bz_{r,2},\hdots,\bz_{r,n_r}$ at the $r$'th worker from $q_r$.
Fix an arbitrary parameter vector $\bx\in\calC$. 
Let $\bar{f}_r(\bx) := \frac{1}{n_r}\sum_{i=1}^{n_r}f_r(\bz_{r,i},\bx)$ denote the average loss at worker $r$ on the $n_r$ samples $\bz_{r,1},\hdots,\bz_{r,n_r}$ w.r.t.\ $\bx$. Let $\bar{f}(\bx) := \frac{1}{R}\sum_{r=1}^r\bar{f}_r(\bx)$ denote the average loss across all workers.
The analogues of \eqref{bounded_local-global} and \eqref{bounded_local-variance} in this statistical heterogeneous model are the following: 
\begin{align}
\left\|\nabla\bar{f}_r(\bx) - \nabla\bar{f}(\bx)\right\|^2 &\leq \kappa^2, \quad \forall \bx\in\calC,\label{local-global_statistical} \\
\bbE_{i\in_U[n_r]}\left\|\nabla f_r(\bz_{r,i},\bx) - \nabla \bar{f}_r(\bx)\right\|^2 &\leq \sigma^2, \quad \forall \bx\in\calC. \label{local-variance_statistical}
\end{align}
We need to find good upper bounds on $\kappa$ and $\sigma$ that hold for all $r\in[R],\bx\in\calC$ with high probability.
We provide two bounds on $\kappa$, one when the local gradients at workers are assumed to be sub-exponential random vectors, and other when they are sub-Gaussian random vectors.
We provide a bound on $\sigma$ assuming that the local gradients are sub-Gaussian random vectors.
These are standard assumptions on gradients in statistical models, where data at all workers are sampled from the {\em same} distribution in an i.i.d.\ fashion \cite{ChenSX_Byz17,SuX_Byz19,Bartlett-Byz_nonconvex19}, which is in contrast to our heterogeneous data setting, where data at different workers may be sampled from {\em different} distributions.
Note that these works minimize the {\em population risk} with {\em full batch} gradient descent, whereas, we minimize the {\em empirical risk} with {\em stochastic} gradient descent. 
In particular, \cite{ChenSX_Byz17,SuX_Byz19} make sub-exponential gradient assumption and give convergence guarantees only for strong-convex objectives. On the other hand, \cite{Bartlett-Byz_nonconvex19} gives convergence guarantees for non-convex objectives, but under a stricter condition of sub-Gaussian distribution on gradients. 
In this paper, we provide convergence guarantees for both strongly-convex and non-convex objectives. Moreover, as opposed to \cite{ChenSX_Byz17,SuX_Byz19,Bartlett-Byz_nonconvex19}, our results are in a more general heterogeneous data model. Note that we need sub-Gaussian assumption only to bound the variance, which occurs because workers sample stochastic gradients. 
In case of full batch gradient descent, we only need sub-exponential assumption, as the variance is zero.

Now we state the distributional assumptions on local gradients.
\begin{assumption}[Sub-exponential local gradients]\label{assump:sub-exp-grad}
For every $\bx\in\calC$, the local gradient vectors at any worker $r\in[R]$ are sub-exponential random vectors, i.e., there exist non-negative parameters $(\nu,\alpha)$ such that
\begin{align}
\sup_{\bv\in\R^d:\|\bv\|=1} \bbE_{\bz\sim q_r}\left[\exp\(\lambda\left\langle \nabla f_r(\bz,\bx) - \nabla \mu_r(\bx), \bv \right\rangle\)\right] \leq \exp\(\lambda^2\nu^2/2\), \qquad \forall |\lambda|<\frac{1}{\alpha}.
\end{align}
\end{assumption}
\begin{assumption}[Sub-Gaussian local gradients]\label{assump:sub-gauss-grad}
For every $\bx\in\calC$, the local gradient vectors at any worker $r\in[R]$ are sub-Gaussian random vectors, i.e., there exists a non-negative parameter $\sigmag$ such that
\begin{align}
\sup_{\bv\in\R^d:\|\bv\|=1} \bbE_{\bz\sim q_r}\left[\exp\(\lambda\left\langle \nabla f_r(\bz,\bx) - \nabla \mu_r(\bx), \bv \right\rangle\)\right] \leq \exp\(\lambda^2\sigmag^2/2\), \qquad \forall \lambda \in \R.
\end{align}
\end{assumption}
Though, as stated above in both the assumptions, local gradients at all workers have the same parameters ($(\nu,\alpha)$ for sub-exponential and $\sigmag$ for sub-Gaussian), this is without loss of generality. In case they have different parameters ($(\nu_r,\alpha_r), r\in[R]$ for sub-exponential and $\sigma_r, r\in[R]$ for sub-Gaussian), we can take the final parameters to be the maximum of the respective local parameters -- for sub-exponential, we can take $\nu=\max_{r\in[R]}\nu_r$ and $\alpha=\max_{r\in[R]}\alpha_r$, and for sub-Gaussian, we can take $\sigmag=\max_{r\in[R]}\sigma_r$. 
\subsection{Bounding the gradient dissimilarity $\kappa$}\label{subsec:kappa-bound_stat}
In this section, we provide an upper bound on $\left\|\nabla\bar{f}_r(\bx) - \nabla\bar{f}(\bx)\right\|$.
\begin{align}
&\left\|\nabla\bar{f}_r(\bx) - \nabla\bar{f}(\bx)\right\| 
\leq \left\|\nabla\bar{f}_r(\bx) - \nabla \mu_r(\bx)\right\| + \left\|\nabla \mu_r(\bx) - \nabla \mu(\bx)\right\| + \left\|\nabla\bar{f}(\bx) - \nabla \mu(\bx)\right\| \nonumber \\
&\hspace{1.5cm}\leq \left\|\nabla\bar{f}_r(\bx) - \nabla \mu_r(\bx)\right\| + \left\|\nabla \mu_r(\bx) - \nabla \mu(\bx)\right\|  + \frac{1}{R}\sum_{r=1}^R\left\|\nabla\bar{f}_r(\bx) - \nabla \mu_r(\bx)\right\|,  \label{reduction-local-statistical}
\end{align}
where for the third term, we used $\bar{f}(\bx)=\frac{1}{R}\sum_{r=1}^R\bar{f}_r(\bx)$ and $\mu(\bx)=\frac{1}{R}\sum_{r=1}^R\mu_r(\bx)$, and applied the triangle inequality.
It follows from \eqref{reduction-local-statistical} that in order to bound $\left\|\nabla\bar{f}_r(\bx) - \nabla\bar{f}(\bx)\right\|$ uniformly over $\bx\in\calC$, it suffices to bound $\left\|\nabla \mu_r(\bx) - \nabla \mu(\bx)\right\|$ and $\left\|\nabla\bar{f}_r(\bx) - \nabla \mu_r(\bx)\right\|, \forall r\in[R]$ uniformly over $\bx\in\calC$. 

\paragraph{Bounding $\left\|\nabla \mu_r(\bx) - \nabla \mu(\bx)\right\|$.}
Note that $\nabla \mu_r(\bx) = \bbE_{\bz\sim q_r}[\nabla f_r(\bz,\bx)]$ is a property of the distribution $q_r$ from which the data samples have been drawn and so is $\nabla \mu(\bx) = \frac{1}{R}\sum_{r=1}^R\nabla \mu_r(\bx)$ the property of $q_1,\hdots,q_R$. 
Note that $\left\|\nabla \mu_r(\bx) - \nabla \mu(\bx)\right\|$ captures heterogeneity among distributions through their expected values, and is equal to zero in the i.i.d.\ homogeneous data setting of \cite{ChenSX_Byz17,Bartlett-Byz18,SuX_Byz19,Bartlett-Byz_nonconvex19}.
In order to get a meaningful bound for $\kappa$, it is reasonable to assume that this heterogeneity is bounded.
We assume a uniform bound on the $\|\nabla\mu_r(\bx) - \nabla\mu(\bx)\|$ for every $\bx\in\calC$.
\begin{assumption}\label{assump:uniform-bound-mean-heterogeneous}
For every worker $r\in[R]$, the population mean of the local gradients has a uniformly bounded deviation from the population mean of the global gradient, i.e.,
\begin{align}
\left\|\nabla \mu_r(\bx) - \nabla \mu(\bx)\right\| \leq \kappa_{\text{\em mean}}, \qquad \forall \bx\in\calC.
\end{align}
\end{assumption}

\paragraph{Bounding $\left\|\nabla\bar{f}_r(\bx) - \nabla \mu_r(\bx)\right\|$.}
Now we bound the difference between the sample mean and the true mean under both sub-exponential and sub-Gaussian distributional assumptions on local gradients. For that we use standard tools, such as concentration results for sum of independent sub-Gaussian/sub-exponential random variables and $\eps$-net arguments. 
We prove in \Lemmaref{kappa-bound_sub-exp_all-x} and \Lemmaref{kappa-bound_sub-gauss_all-x}, respectively, in \Appendixref{statistical-model} that under both the assumptions, with high probability, our bounds are $\left\| \nabla \bar{f}_r(\bx) - \nabla \mu_r(\bx)\right\| \leq \calO\(\sqrt{\frac{d\log(n_rd)}{n_r}}\)$ for every $\bx\in\calC$. Note that under the sub-exponential assumption, the bound holds only for sufficiently large $n_r$ such that $n_r=\Omega\(d\log(n_rd)\)$, whereas, under the sub-Gaussian assumption, the bound holds for every $n_r$.

Substituting these bounds in \eqref{reduction-local-statistical} yields the following result, which, for notational convenience, we state for the case when all workers have the same number of data samples. 
Let $D=\max\{\|\bx-\bx'\|:\bx,\bx'\in\calC\}$ be the diameter of $\calC$. 
Note that $D=\Omega(\sqrt{d})$, and we assume that $D$ can grow at most polynomially in $d$.

\begin{theorem}[Gradient dissimilarity]\label{thm:concrete-kappa-bound_stat}
Suppose $n:= n_r, \forall r\in[R]$, and \Assumptionref{uniform-bound-mean-heterogeneous} holds. 
Then, the gradient dissimilarity bound under different distributional assumptions is as follows:
\begin{enumerate}
\item {[Sub-exponential]} Suppose \Assumptionref{sub-exp-grad} holds. 
Let $n\in\mathbb{N}$ be sufficiently large such that $n=\Omega\(d\log(nd)\)$. 
Then, with probability at least $1-\frac{R}{(1+nLD)^d}$, the following bound holds for all $r\in[R]$:
\begin{align}\label{kappa-bound_sub-exp-grad}
\left\|\nabla\bar{f}_r(\bx) - \nabla\bar{f}(\bx)\right\| \leq \kappa_{\text{\em mean}} + \calO\(\sqrt{\frac{d\log(nd)}{n}}\), \qquad \forall \bx\in\calC.
\end{align}
\item {[Sub-Gaussian]} Suppose \Assumptionref{sub-gauss-grad} holds. 
For every $n\in\mathbb{N}$, with probability at least $1-\frac{R}{(1+nLD)^d}$, the following bound holds for all $r\in[R]$:
\begin{align}\label{kappa-bound_sub-gauss-grad}
\left\|\nabla\bar{f}_r(\bx) - \nabla\bar{f}(\bx)\right\| \leq \kappa_{\text{\em mean}} + \calO\(\sqrt{\frac{d\log(nd)}{n}}\), \qquad \forall \bx\in\calC.
\end{align}
\end{enumerate}
\end{theorem}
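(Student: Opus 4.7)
The plan is to follow the decomposition already set up in \eqref{reduction-local-statistical}, which reduces the problem to two sub-tasks: bound the population-level heterogeneity $\|\nabla\mu_r(\bx) - \nabla\mu(\bx)\|$, and control the sample-mean deviation $\|\nabla\bar f_r(\bx) - \nabla\mu_r(\bx)\|$ uniformly over $\bx\in\calC$ and $r\in[R]$. The first term is handled directly by \Assumptionref{uniform-bound-mean-heterogeneous}, which contributes $\kappa_{\text{mean}}$. Everything in the theorem then reduces to showing that, with the claimed probability,
\begin{equation*}
\sup_{\bx\in\calC}\,\bigl\|\nabla\bar f_r(\bx) - \nabla\mu_r(\bx)\bigr\| \;\leq\; \calO\!\left(\sqrt{\tfrac{d\log(nd)}{n}}\right) \quad \text{for every } r\in[R].
\end{equation*}

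For a fixed $\bx\in\calC$ and a fixed unit vector $\bv\in S^{d-1}$, the random variable
$\langle \nabla\bar f_r(\bx) - \nabla\mu_r(\bx),\bv\rangle = \frac1n\sum_{i=1}^n \langle \nabla f_r(\bz_{r,i},\bx) - \nabla\mu_r(\bx),\bv\rangle$
is an average of $n$ i.i.d.\ centered scalars that, by \Assumptionref{sub-exp-grad} or \Assumptionref{sub-gauss-grad}, are sub-exponential (resp.\ sub-Gaussian). A standard Bernstein / Hoeffding-type tail bound gives deviation of order $\sqrt{t/n}$ with probability $1-e^{-t}$. To pass from a single $\bv$ to the supremum over $\bv\in S^{d-1}$ (which equals the norm), I would use a standard $\tfrac12$-net over the unit sphere of size $5^d$, paying a factor $\calO(d)$ in the exponent. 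This yields, for a fixed $\bx$ and a fixed $r$, a bound of the form $\|\nabla\bar f_r(\bx)-\nabla\mu_r(\bx)\|\leq \calO(\sqrt{(d+t)/n})$ with probability $1-e^{-t}$.

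To upgrade this to a uniform bound over $\bx\in\calC$, I would deploy a second $\epsilon$-net argument on $\calC$, exploiting the $L$-smoothness of $f_r(\bz,\cdot)$ which is given in the setup. A net $\calN_\epsilon\subset\calC$ of cardinality $(1+D/\epsilon)^d$ suffices (volume argument for a set of diameter $D$ in $\R^d$); for any $\bx\in\calC$ and its nearest net point $\bx'$, smoothness gives $\|\nabla\bar f_r(\bx) - \nabla\bar f_r(\bx')\| \leq L\|\bx-\bx'\| \leq L\epsilon$, and the same for $\nabla\mu_r$ (by Jensen's inequality on the definition of $\mu_r$). Choosing $\epsilon=1/(nL)$ makes the discretization error $\calO(1/n)$, which is dominated by the statistical rate, and inflates the covering count only by $\log(1+nLD) = \calO(\log(nd))$ per coordinate (using the polynomial-in-$d$ assumption on $D$); taking a union bound over the $(1+nLD)^d$ net points with $t=d\log(1+nLD)$ absorbs this cost into the exponent. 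A final union bound over $r\in[R]$ gives the $R/(1+nLD)^d$ failure probability in the statement. Plugging the three pieces into \eqref{reduction-local-statistical} (noting the third term is bounded in exactly the same way as the first, via Jensen/triangle inequality) produces \eqref{kappa-bound_sub-exp-grad} and \eqref{kappa-bound_sub-gauss-grad}.

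The main obstacle is the sub-exponential case: Bernstein's inequality for sub-exponential variables transitions from sub-Gaussian-like behavior (rate $\nu\sqrt{t/n}$) to linear behavior (rate $\alpha t/n$) at $t\gtrsim n\nu^2/\alpha^2$. To keep the bound in the sub-Gaussian regime after taking $t\asymp d\log(nd)$, we need $n=\Omega(d\log(nd))$, which is precisely the sample-size condition stated for the sub-exponential case; under sub-Gaussianity the deviation stays quadratic for all $t$, so no such restriction is needed, matching the statement. Beyond this, verifying that the smoothness of $f_r(\bz,\bx)$ in $\bx$ transfers cleanly to $\nabla\mu_r$ (via dominated convergence / Jensen) and checking that the projection onto unit vectors preserves the relevant sub-Gaussian/sub-exponential constants (immediate from the uniform-over-$\bv$ form of Assumptions~\ref{assump:sub-exp-grad}--\ref{assump:sub-gauss-grad}) are routine.
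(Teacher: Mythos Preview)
Your proposal is correct and follows essentially the same approach as the paper: the same triangle-inequality decomposition \eqref{reduction-local-statistical}, the same $\tfrac12$-net of size $5^d$ on the unit sphere combined with sub-exponential/sub-Gaussian concentration (the paper's Lemmas~\ref{lem:eps-net-unit-ball_sub-exp} and~\ref{lem:eps-net-unit-ball_sub-gauss}), the same $\delta_0$-net on $\calC$ with $\delta_0=1/(nL)$ and $L$-smoothness to extend uniformly over $\bx$, and the same union bound over the $R$ workers. Your identification of why the sample-size condition $n=\Omega(d\log(nd))$ appears only in the sub-exponential case (to stay in the sub-Gaussian regime of Bernstein's bound) is exactly the paper's reasoning as well.
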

\begin{remark}
Note that under \Assumptionref{sub-exp-grad} (sub-exponential), the gradient dissimilarity bound \eqref{kappa-bound_sub-exp-grad} holds only when each worker has sufficiently large number of samples $n=\Omega\(d\log(nd)\)$. On the other hand, under \Assumptionref{sub-gauss-grad} (sub-Gaussian), the gradient dissimilarity bound \eqref{kappa-bound_sub-gauss-grad} holds for every $n\in\mathbb{N}$.
\end{remark}
\subsection{Bounding the local variances}\label{subsec:variance-bound_stat}
The local variance bound at the $r$'th worker is $\bbE_{i\in_U[n_r]}\left\|\nabla f_r(\bz_{r,i},\bx) - \nabla \bar{f}_r(\bx)\right\|^2 \leq \sigma^2$ (from \eqref{local-variance_statistical}).
We simplify the LHS:
\begin{align}
\bbE_{i\in_U[n_r]}\left\|\nabla f_r(\bz_{r,i},\bx) - \nabla \bar{f}_r(\bx)\right\|^2 
&\leq 2\bbE_{i\in_U[n_r]}\left\|\nabla f_r(\bz_{r,i},\bx) - \nabla \mu_r(\bx)\right\|^2 \notag \\
&\hspace{2cm} + 2\bbE_{i\in_U[n_r]}\left\|\nabla \bar{f}_r(\bx) - \nabla \mu_r(\bx) \right\|^2 \nonumber \\
&\stackrel{\text{(a)}}{=} 2\left\|\nabla f_r(\bz_{r,1},\bx) - \nabla \mu_r(\bx)\right\|^2 + 2\left\|\nabla \bar{f}_r(\bx) - \nabla \mu_r(\bx) \right\|^2 \nonumber \\
&\stackrel{\text{(b)}}{\leq} 4\left\|\nabla f_r(\bz_{r,1},\bx) - \nabla \mu_r(\bx)\right\|^2 \label{simplified-variance-1}
\end{align}
For the first term on the RHS of (a), we used that $\bz_{r,i},i\in[n_r]$ are i.i.d., and the 
second term follows because it is independent of $i\in[n_r]$.
Inequality (b) follows because $\left\|\nabla \bar{f}_r(\bx) - \nabla \mu_r(\bx) \right\|^2 \leq \left\|\nabla f_r(\bz_{r,1},\bx) - \nabla \mu_r(\bx)\right\|^2$, since the average of i.i.d.\ samples gives tighter concentration in comparison to if we use just one sample.

Note that bounding $\left\|\nabla f_r(\bz_{r,1},\bx) - \nabla \mu_r(\bx)\right\|$ is equivalent to bounding $\left\|\nabla f_r(\bz,\bx) - \nabla \mu_r(\bx)\right\|$ for a random $\bz\sim q_r$. 
We provide a uniform bound on $\left\|\nabla f_r(\bz,\bx) - \nabla \mu_r(\bx)\right\|$ for a random $\bz\sim q_r$ in \Appendixref{variance-bound_stat} using the sub-Gaussian gradient assumption.
Below we state our final bound on the local variances. 
\begin{theorem}[Variance bound]\label{thm:variance-bound_stat}
Suppose $n:= n_r, \forall r\in[R]$, and \Assumptionref{sub-gauss-grad} holds. 
Then, with probability at least $1-\frac{R}{(1+nLD)^d}$, the following bound holds for all $r\in[R]$:
\begin{align}
\bbE_{i\in_U[n]}\left\|\nabla f_r(\bz_{r,i},\bx) - \nabla \bar{f}_r(\bx)\right\|^2 
&\leq \calO\(d\log(d)\), \qquad \forall \bx\in\calC.
\end{align}
\end{theorem}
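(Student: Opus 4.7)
The plan is to leverage the reduction in \eqref{simplified-variance-1}: it suffices to uniformly upper bound $\|\nabla f_r(\bz_{r,1},\bx) - \nabla\mu_r(\bx)\|^2$ over $\bx\in\calC$ in a high-probability sense over the random sample $\bz_{r,1}\sim q_r$, and by a further union bound over the $n$ samples at a given worker and over the $R$ workers, the same bound will hold simultaneously. Substituting into \eqref{simplified-variance-1} and using that $\|\nabla\bar{f}_r(\bx)-\nabla\mu_r(\bx)\|$ is no larger than $\max_i\|\nabla f_r(\bz_{r,i},\bx)-\nabla\mu_r(\bx)\|$ then yields the claimed $\calO(d\log d)$ variance bound.

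To bound $\|\nabla f_r(\bz,\bx) - \nabla\mu_r(\bx)\|$ at a fixed $\bx$, I would use a standard $\tfrac{1}{2}$-net argument on the unit sphere $S^{d-1}$: for any vector $\bY\in\R^d$, $\|\bY\|\le 2\max_{\bv\in \mathcal{N}_S}\langle \bY,\bv\rangle$ where $\mathcal{N}_S$ is a $\tfrac{1}{2}$-net with $|\mathcal{N}_S|\le 5^d$. The sub-Gaussian bound in \Assumptionref{sub-gauss-grad} controls the MGF of each one-dimensional projection $\langle\nabla f_r(\bz,\bx)-\nabla\mu_r(\bx),\bv\rangle$, so a Chernoff bound together with a union bound over $\mathcal{N}_S$ gives
\[
\Pr_{\bz\sim q_r}\!\bigl(\|\nabla f_r(\bz,\bx)-\nabla\mu_r(\bx)\|\ge t\bigr) \;\le\; 5^d\exp\!\bigl(-t^2/(8\sigmag^2)\bigr).
\]

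To convert this pointwise bound into one uniform over $\calC$, I would take an $\eps$-net $\mathcal{N}_\calC\subset\calC$ with $|\mathcal{N}_\calC|\le(3D/\eps)^d$, apply the pointwise bound at every point of $\mathcal{N}_\calC$ via a union bound, and extend to arbitrary $\bx\in\calC$ using the given $L$-Lipschitzness of $\nabla f_r(\bz,\cdot)$ and of $\nabla\mu_r(\cdot)$ (the latter inherited by exchanging $\bbE_{\bz\sim q_r}$ and $\nabla$, then applying Jensen). Taking $\eps=1/(nL)$ makes the Lipschitz correction $2L\eps=2/n$ negligible and bounds $|\mathcal{N}_\calC|\le(1+nLD)^d$; choosing the per-point failure probability as roughly $(1+nLD)^{-2d}/R$ gives a total failure probability at most $R/(1+nLD)^d$ after union bounding over $\mathcal{N}_\calC$ and over the $R$ workers, matching the stated probability. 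The resulting deviation $t$ scales as $\sigmag\sqrt{d\log(nLD)}$; under the paper's standing assumption that $D$ grows at most polynomially in $d$ (absorbing $\log n$ into $\log d$), this yields $\|\nabla f_r(\bz,\bx)-\nabla\mu_r(\bx)\|^2=\calO(d\log d)$ uniformly, which, fed back into \eqref{simplified-variance-1}, proves the claim.

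The main technical obstacle will be budgeting the failure probability across the two nested $\eps$-net arguments (sphere and domain) together with the $R$-fold union bound over workers, while keeping $\log(1/\delta_0)$ small enough that the final norm bound remains $\calO(d\log d)$ rather than a larger power of $d$. A secondary subtlety is rigorously transferring $L$-smoothness from $f_r(\bz,\cdot)$ to $\mu_r(\cdot)$ uniformly in $\bz$, which requires justifying interchangeability of $\bbE_{\bz\sim q_r}$ and $\nabla$ (valid under the mild integrability implicit in \Assumptionref{sub-gauss-grad}).
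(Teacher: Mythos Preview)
Your proposal is correct and follows essentially the same route as the paper. The paper's own proof uses the reduction \eqref{simplified-variance-1} and then simply invokes \Lemmaref{kappa-bound_sub-gauss_all-x} with $n_r=1$ (see \Appendixref{variance-bound_stat}), which is precisely the sphere--net plus domain--net argument you spell out; a final union bound over $r\in[R]$ gives the stated probability, and the assumption that $D$ grows at most polynomially in $d$ turns $\log(1+LD)$ into $\calO(\log d)$. Your extra union bound over the $n$ samples is in fact more careful than the paper's informal ``i.i.d.'' step in \eqref{simplified-variance-1}, and only costs an additive $\log n$ inside the square root, which is harmless for the $\calO(d\log d)$ conclusion.
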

\begin{remark}[Sub-Gaussian vs.\ sub-exponential assumption]
Note that, we needed sub-Gaussian assumption on local gradients because we wanted to uniformly bound $\bbE_{i\in[n_r]}\left\|\nabla f_r(\bz_{r,i},\bx) - \nabla \mu_r(\bx)\right\|^2$, which is the case when we use {\em only one} data sample in each SGD iteration. 
In this paper, we use {\em mini-batch} SGD with a variable batch size (to control the approximation error of our solution; see the approximation error analysis in \Subsectionref{remarks_main-result}).
So, when the batch-size $b$ is sufficiently large and satisfies $b=\Omega(d\log(bd))$, we can work with the sub-exponential gradient assumption because the large batch size gives a concentration similar to sub-Gaussian. This would give a bound of $\calO\(\frac{d\log(bd)}{b}\)$ on variance.
\end{remark}

\section{Future Work}\label{sec:conclusion}
We leave a few open problems for future research for Byzantine-resilient SGD on heterogeneous data: Improving the Byzantine tolerance to beyond $\frac{1}{4}$ fraction; obtaining second-order convergence guarantees for non-convex objectives for SGD (note that \cite{Bartlett-Byz_nonconvex19} obtained such guarantees for full-batch GD on i.i.d.\ homogeneous data); improving upon the decoding complexity at master node (note that the outlier-filtering procedure in this paper requires SVD computations, which could be expensive); studying local SGD with Byzantine adversaries to improve communication efficiency; and study gradient compression with arbitrary compressors (beyond the $\randk$ sparsifier).

\bibliography{reference}

\newcommand{\etalchar}[1]{$^{#1}$}
\begin{thebibliography}{GMK{\etalchar{+}}19}

\bibitem[A{\etalchar{+}}18]{IoBTproject}
Tarek~F. Abdelzaher et~al.
\newblock Will distributed computing revolutionize peace? the emergence of
  battlefield iot.
\newblock In {\em ICDCS 2018}, pages 1129--1138, 2018.

\bibitem[AAL18]{Alistarh_Byz-SGD18}
Dan Alistarh, Zeyuan Allen{-}Zhu, and Jerry Li.
\newblock Byzantine stochastic gradient descent.
\newblock In {\em Neural Information Processing Systems (NeurIPS)}, pages
  4618--4628, 2018.

\bibitem[AGL{\etalchar{+}}17]{QSGD}
D.~Alistarh, D.~Grubic, J.~Li, R.~Tomioka, and M.~Vojnovic.
\newblock {QSGD:} communication-efficient {SGD} via gradient quantization and
  encoding.
\newblock In {\em {NIPS}}, pages 1707--1718, 2017.

\bibitem[AHJ{\etalchar{+}}18]{alistarh-sparsified}
D.~Alistarh, T.~Hoefler, M.~Johansson, N.~Konstantinov, S.~Khirirat, and
  C.~Renggli.
\newblock The convergence of sparsified gradient methods.
\newblock In {\em {NeurIPS}}, pages 5977--5987, 2018.

\bibitem[BDKD19]{Qsparse-local-sgd19}
Debraj Basu, Deepesh Data, Can Karakus, and Suhas~N. Diggavi.
\newblock Qsparse-local-sgd: Distributed {SGD} with quantization,
  sparsification and local computations.
\newblock In {\em NeurIPS}, pages 14668--14679, 2019.

\bibitem[BMGS17]{Krum_Byz17}
Peva Blanchard, El~Mahdi~El Mhamdi, Rachid Guerraoui, and Julien Stainer.
\newblock Machine learning with adversaries: Byzantine tolerant gradient
  descent.
\newblock In {\em NIPS}, pages 119--129, 2017.

\bibitem[Bot10]{Bottou10}
L.~Bottou.
\newblock Large-scale machine learning with stochastic gradient descent.
\newblock In {\em COMPSTAT}, pages 177--186, 2010.

\bibitem[BSS12]{Twice-Ramanujan_BatsonSpSr12}
Joshua~D. Batson, Daniel~A. Spielman, and Nikhil Srivastava.
\newblock Twice-ramanujan sparsifiers.
\newblock {\em {SIAM} J. Comput.}, 41(6):1704--1721, 2012.

\bibitem[BWAA18]{signsgd1}
J.~Bernstein, Y.~Wang, K.~Azizzadenesheli, and A.~Anandkumar.
\newblock {SignSGD:} compressed optimisation for non-convex problems.
\newblock In {\em {ICML}}, pages 559--568, 2018.

\bibitem[BZAA19]{SignSGD_Byz19}
Jeremy Bernstein, Jiawei Zhao, Kamyar Azizzadenesheli, and Anima Anandkumar.
\newblock signsgd with majority vote is communication efficient and fault
  tolerant.
\newblock In {\em ICLR}, 2019.

\bibitem[CSV17]{Untrusted-data_Charikar17}
Moses Charikar, Jacob Steinhardt, and Gregory Valiant.
\newblock Learning from untrusted data.
\newblock In {\em STOC}, pages 47--60, 2017.

\bibitem[CSX17]{ChenSX_Byz17}
Yudong Chen, Lili Su, and Jiaming Xu.
\newblock Distributed statistical machine learning in adversarial settings:
  Byzantine gradient descent.
\newblock {\em {POMACS}}, 1(2):44:1--44:25, 2017.

\bibitem[CWCP18]{Draco_ByzGD18}
Lingjiao Chen, Hongyi Wang, Zachary~B. Charles, and Dimitris~S. Papailiopoulos.
\newblock {DRACO:} byzantine-resilient distributed training via redundant
  gradients.
\newblock In {\em ICML}, pages 902--911, 2018.

\bibitem[DD19]{DataDi_ByzCD19}
Deepesh Data and Suhas~N. Diggavi.
\newblock Byzantine-tolerant distributed coordinate descent.
\newblock In {\em ISIT}, pages 2724--2728, 2019.

\bibitem[DG08]{mapreduce}
J.~Dean and S.~Ghemawat.
\newblock Mapreduce: simplified data processing on large clusters.
\newblock {\em Commun. {ACM}}, 51(1):107--113, 2008.

\bibitem[DK19]{RecentAdvances_RobustStatistics19}
Ilias Diakonikolas and Daniel~M. Kane.
\newblock Recent advances in algorithmic high-dimensional robust statistics.
\newblock {\em CoRR}, abs/1911.05911, 2019.

\bibitem[DKK{\etalchar{+}}16]{Robust_mean_DiakonikolasKK016}
Ilias Diakonikolas, Gautam Kamath, Daniel~M. Kane, Jerry Li, Ankur Moitra, and
  Alistair Stewart.
\newblock Robust estimators in high dimensions without the computational
  intractability.
\newblock In {\em FOCS}, pages 655--664, 2016.

\bibitem[DSD19a]{DataSoDi_arxiv-Byz19}
Deepesh Data, Linqi Song, and Suhas~N. Diggavi.
\newblock Data encoding for byzantine-resilient distributed optimization.
\newblock {\em CoRR}, abs/1907.02664, 2019.

\bibitem[DSD19b]{DataSoDi_ByzGD19}
Deepesh Data, Linqi Song, and Suhas~N. Diggavi.
\newblock Data encoding methods for byzantine-resilient distributed
  optimization.
\newblock In {\em ISIT}, pages 2719--2723, 2019.

\bibitem[GMK{\etalchar{+}}19]{Compressed_ByzGD19}
Avishek Ghosh, Raj~Kumar Maity, Swanand Kadhe, Arya Mazumdar, and Kannan
  Ramchandran.
\newblock Communication-efficient and byzantine-robust distributed learning.
\newblock {\em CoRR}, abs/1911.09721, 2019.

\bibitem[GV19]{NirupamVa_Byz-SGD19}
Nirupam Gupta and Nitin~H. Vaidya.
\newblock Byzantine fault-tolerant parallelized stochastic gradient descent for
  linear regression.
\newblock In {\em Allerton}, pages 415--420, 2019.

\bibitem[Hub64]{Huber64}
Peter~J. Huber.
\newblock Robust estimation of a location parameter.
\newblock {\em Ann. Math. Statist.}, 35(1):73--101, 03 1964.

\bibitem[JP78]{JohnsonP78}
David~S. Johnson and Franco~P. Preparata.
\newblock The densest hemisphere problem.
\newblock {\em Theor. Comput. Sci.}, 6:93--107, 1978.

\bibitem[K{\etalchar{+}}19]{Kairouz-FL-survey19}
Peter Kairouz et~al.
\newblock Advances and open problems in federated learning.
\newblock {\em CoRR}, abs/1912.04977, 2019.

\bibitem[KMRR16]{jakub2016fed}
Jakub Konecn{\'{y}}, H.~Brendan McMahan, Daniel Ramage, and Peter
  Richt{\'{a}}rik.
\newblock Federated optimization: Distributed machine learning for on-device
  intelligence.
\newblock {\em CoRR}, abs/1610.02527, 2016.

\bibitem[Kon17]{KonecnyThesis}
Jakub Konecn{\'{y}}.
\newblock Stochastic, distributed and federated optimization for machine
  learning.
\newblock {\em CoRR}, abs/1707.01155, 2017.

\bibitem[KRSJ19]{stich-signsgd-19}
Sai~Praneeth Karimireddy, Quentin Rebjock, Sebastian~U. Stich, and Martin
  Jaggi.
\newblock Error feedback fixes signsgd and other gradient compression schemes.
\newblock In {\em ICML}, pages 3252--3261, 2019.

\bibitem[LRV16]{Robust_mean_LRV16}
Kevin~A. Lai, Anup~B. Rao, and Santosh~S. Vempala.
\newblock Agnostic estimation of mean and covariance.
\newblock In {\em FOCS}, pages 665--674, 2016.

\bibitem[LSP82]{LamportShPe82}
Leslie Lamport, Robert Shostak, and Marshall Pease.
\newblock The byzantine generals problem.
\newblock {\em ACM Trans. Program. Lang. Syst.}, 4(3):382--401, July 1982.

\bibitem[RM51]{RobbinsMonro51}
Herbert Robbins and Sutton Monro.
\newblock A stochastic approximation method.
\newblock {\em The Annals of Mathematical Statistics. JSTOR}, 22, no.
  3:400--407, 1951.

\bibitem[RWCP19]{Detox_ByzSGD19}
Shashank Rajput, Hongyi Wang, Zachary~B. Charles, and Dimitris~S.
  Papailiopoulos.
\newblock {DETOX:} {A} redundancy-based framework for faster and more robust
  gradient aggregation.
\newblock In {\em NeurIPS}, pages 10320--10330, 2019.

\bibitem[SCJ18]{memSGD}
S.~U. Stich, J.~B. Cordonnier, and M.~Jaggi.
\newblock Sparsified {SGD} with memory.
\newblock In {\em {NeurIPS}}, pages 4452--4463, 2018.

\bibitem[SCV18]{Resilience_SCV18}
Jacob Steinhardt, Moses Charikar, and Gregory Valiant.
\newblock Resilience: {A} criterion for learning in the presence of arbitrary
  outliers.
\newblock In {\em ITCS}, pages 45:1--45:21, 2018.

\bibitem[SX19]{SuX_Byz19}
Lili Su and Jiaming Xu.
\newblock Securing distributed gradient descent in high dimensional statistical
  learning.
\newblock {\em {POMACS}}, 3(1):12:1--12:41, 2019.

\bibitem[SYKM17]{teertha}
A.~Theertha Suresh, F.~X. Yu, S.~Kumar, and H.~B. McMahan.
\newblock Distributed mean estimation with limited communication.
\newblock In {\em {ICML}}, pages 3329--3337, 2017.

\bibitem[Tuk60]{Tukey-contaminated60}
John~W. Tukey.
\newblock A survey of sampling from contaminated distributions.
\newblock {\em Contributions to probability and statistics}, 2:448--485, 1960.

\bibitem[Tuk75]{TukeyMedian75}
John~W. Tukey.
\newblock Mathematics and picturing of data.
\newblock In {\em Proceedings of {ICM}}, volume~6, pages 523--531, 1975.

\bibitem[Ver10]{Vershynin-rand-mat10}
Roman Vershynin.
\newblock Introduction to the non-asymptotic analysis of random matrices.
\newblock {\em CoRR}, abs/1011.3027, 2010.

\bibitem[YCRB18]{Bartlett-Byz18}
Dong Yin, Yudong Chen, Kannan Ramchandran, and Peter~L. Bartlett.
\newblock Byzantine-robust distributed learning: Towards optimal statistical
  rates.
\newblock In {\em ICML}, pages 5636--5645, 2018.

\bibitem[YCRB19]{Bartlett-Byz_nonconvex19}
Dong Yin, Yudong Chen, Kannan Ramchandran, and Peter~L. Bartlett.
\newblock Defending against saddle point attack in byzantine-robust distributed
  learning.
\newblock In {\em ICML}, pages 7074--7084, 2019.

\bibitem[YJY19]{Momentum_linear-speedup19}
Hao Yu, Rong Jin, and Sen Yang.
\newblock On the linear speedup analysis of communication efficient momentum
  {SGD} for distributed non-convex optimization.
\newblock In {\em ICML}, pages 7184--7193, 2019.

\end{thebibliography}
\bibliographystyle{alpha}

\appendix

\section{Proofs of \Lemmaref{subset_variance} and the First Part of \Theoremref{gradient-estimator}}\label{app:subset_bounded-variance_proof}
\begin{lemma*}[Restating \Lemmaref{subset_variance}]
Suppose there are $m$ independent distributions $p_1,p_2,\hdots,p_m$ in $\R^d$ such that $\bbE_{\by\sim p_i}[\by]=\vct{\mu}_i, i\in[m]$ and each $p_i$ has bounded variance in all directions, i.e., $\bbE_{\by\sim p_i}[\langle \by - \vct{\mu}_i, \bv\rangle^2] \leq \sigma_{p_i}^2$ holds for all unit vectors $\bv\in\R^d$. Take an arbitrary $\eps'\in(0,1]$. Then, given $m$ independent samples $\by_1,\by_2,\hdots,\by_m$, where $\by_i\sim p_i$, with probability $1- \exp(-\eps'^2m/16)$, there is a subset $\calS$ of $(1-\eps')m$ points such that
\begin{align*}
\lambda_{\max}\(\frac{1}{|\calS|}\sum_{i\in\calS} \(\by_i-\vct{\mu}_i\)\(\by_i-\vct{\mu}_i\)^T \) \leq \frac{4\sigma_{p_{\max}}^2}{\eps'}\left(1 + \frac{d}{(1-\eps')m}\right), \qquad \text{ where } \sigma_{p_{\max}}^2=\max_{i\in[m]}\sigma_{p_{i}}^2.
\end{align*}\end{lemma*}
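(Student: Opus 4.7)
The plan is to first remove a small number of outlier samples based on their norm, leaving a subset on which each $(\by_i - \bmu_i)$ is uniformly bounded, and then to apply a matrix concentration inequality to the associated truncated outer products to bound the spectral norm of the restricted empirical second moment. This two-step structure is inherited from the single-distribution version in \cite[Proposition~B.1]{Untrusted-data_Charikar17}; the generalization to heterogeneous distributions in \Lemmaref{subset_variance} is handled by everywhere replacing the common variance bound by $\sigma_{p_{\max}}^2 = \max_i \sigma_{p_i}^2$, since each estimate used requires only an upper bound on the variance.

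For the trimming step, first observe that applying the variance hypothesis to each coordinate direction $\bv = \be_j$ yields $\bbE_{\by_i \sim p_i}[\|\by_i - \bmu_i\|^2] = \sum_{j=1}^d \bbE[\langle \by_i - \bmu_i, \be_j\rangle^2] \leq d \sigma_{p_{\max}}^2$. Markov's inequality applied with threshold $T := 2 d \sigma_{p_{\max}}^2 / \eps'$ then gives $\Pr[\|\by_i - \bmu_i\|^2 > T] \leq \eps'/2$. Letting $B_i := \mathbbm{1}[\|\by_i - \bmu_i\|^2 > T]$ denote the indicator of a bad sample, the $B_i$ are independent Bernoullis with mean at most $\eps'/2$, so Hoeffding's inequality gives $\sum_i B_i \leq \eps' m$ with probability at least $1 - \exp(-\eps'^2 m / 16)$. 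On this high-probability event, defining $\calS := \{i : B_i = 0\}$ gives $|\calS| \geq (1-\eps') m$ together with the pointwise bound $\|\by_i - \bmu_i\|^2 \leq T$ for every $i \in \calS$.

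For the spectral bound on $\calS$, introduce the truncated PSD matrices $\widetilde{\bZ}_i := (\by_i - \bmu_i)(\by_i - \bmu_i)^T \mathbbm{1}[\|\by_i - \bmu_i\|^2 \leq T]$. These satisfy $\lambda_{\max}(\widetilde{\bZ}_i) \leq T$, remain independent across $i$, and obey $\bbE[\widetilde{\bZ}_i] \preceq \bbE[(\by_i - \bmu_i)(\by_i - \bmu_i)^T] \preceq \sigma_{p_i}^2 I \preceq \sigma_{p_{\max}}^2 I$, where the second inequality uses the variance-in-every-direction hypothesis. Crucially, $\sum_i \widetilde{\bZ}_i$ is identically equal to $\sum_{i \in \calS}(\by_i - \bmu_i)(\by_i - \bmu_i)^T$, so it suffices to bound $\lambda_{\max}(\sum_i \widetilde{\bZ}_i)$. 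Applying Tropp's matrix Chernoff inequality, with the deviation parameter $\theta$ tuned so that $(1+\theta) \lambda_{\max}(\sum_i \bbE[\widetilde{\bZ}_i]) = \frac{4 \sigma_{p_{\max}}^2}{\eps'} \bigl((1-\eps')m + d\bigr)$, yields the required bound on $\lambda_{\max}(\sum_i \widetilde{\bZ}_i)$; dividing by $|\calS| \geq (1-\eps')m$ produces exactly $\frac{4 \sigma_{p_{\max}}^2}{\eps'} \bigl(1 + \tfrac{d}{(1-\eps')m}\bigr)$. The main obstacle will be tuning $T$ and $\theta$ so that the additive decomposition $\sigma^2/\eps' + d\sigma^2/(\eps'(1-\eps')m)$ emerges cleanly, and so that the dimensional prefactor from Tropp's bound is absorbed into the exponent and dominated by the trimming-failure probability $\exp(-\eps'^2 m / 16)$, leaving the overall failure probability as stated.
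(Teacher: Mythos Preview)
Your trimming step is fine, but the matrix-Chernoff step has a genuine gap: its failure probability cannot be made independent of $d$. Under only a second-moment assumption, Markov forces the truncation threshold $T = 2d\sigma_{p_{\max}}^2/\eps'$ to scale linearly in $d$, so in Tropp's bound the ratio $\mu_{\max}/L \leq m\sigma_{p_{\max}}^2/T = m\eps'/(2d)$ governs the exponent. The failure probability you obtain is then of order $d\cdot\bigl(e\mu_{\max}/t\bigr)^{t/L}$ with $t/L = 2(1-\eps')m/d + 2$; when $d$ is comparable to $m$ the exponent is $O(1)$ and the bound degrades to something like $m\cdot(\eps')^{c}$, which grows with $m$, whereas the target $\exp(-\eps'^2 m/16)$ is exponentially small. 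No tuning of $T$ and $\theta$ repairs this: any Markov-based trimming under a bare variance hypothesis needs $T = \Omega(d\sigma_{p_{\max}}^2/\eps')$, and matrix Chernoff always carries a $d$-prefactor together with an exponent proportional to $\mu_{\max}/L = O(m/d)$. (Incidentally, the proof of \cite[Proposition~B.1]{Untrusted-data_Charikar17} is not trim-plus-matrix-Chernoff either; it already uses the barrier method described below.)

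The paper's argument avoids matrix concentration entirely. It processes the samples sequentially against the Batson--Spielman--Srivastava barrier potential $\Phi_c(\bM) = \text{tr}\bigl((c\bI-\bM)^{-1}\bigr)$: for each $i$, a Sherman--Morrison calculation combined with Markov's inequality on a \emph{scalar} quantity shows that the update $\bM \leftarrow \bM + \eps'(\by_i-\bmu_i)(\by_i-\bmu_i)^T$, $c \leftarrow c + 4\sigma_{p_i}^2$ preserves the invariants $\bM \prec c\bI$ and $\Phi_c(\bM) \leq 1/(4\sigma_{p_{\max}}^2)$ with probability at least $1-\eps'/2$, using only the directional variance bound. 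Including $i$ in $\calS$ exactly when this check passes, a scalar Chernoff bound on the number of successes yields $|\calS|\geq(1-\eps')m$ with probability $1-\exp(-\eps'^2m/16)$. The spectral bound $\lambda_{\max}(\bM) \leq 4\sigma_{p_{\max}}^2(d+|\calS|)$ is then \emph{deterministic} given $\calS$ (it is built into the barrier invariants), so no dimension-dependent failure probability ever enters.
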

As mentioned in \Subsectionref{bounded-variance_subset}, \Lemmaref{subset_variance} generalizes \cite[Proposition B.1]{Untrusted-data_Charikar17}, where the $m$ samples $\by_1,\hdots,\by_m$ are drawn independently from a {\em single} distribution $p$ with mean $\bmu$ and variance bound of $\sigma_p^2$,
whereas, in our setting, different $\by_i$'s may come from different distributions, which may have different means and variances. 
\Lemmaref{subset_variance} can be proved using similar arguments given in the proof of \cite[Proposition B.1]{Untrusted-data_Charikar17}, and we provide a complete proof of this in this section.

Proof of \Lemmaref{subset_variance} relies on the following claim.
\begin{lemma}\label{lem:subset_variance_interim-lemma}
Let $p$ be a distribution on $\R^d$ such that $\bbE_{\by\sim p}[\by]=\bmu$ and $\bbE_{\by\sim p}[\langle \by - \bmu, \bv\rangle^2] \leq \sigma^2$ for all unit vectors $\bv\in\R^d$.
Let $\bM$ be a symmetric matrix such that $\bzero\prec\bM\prec c\bI$ for some constant $c>0$ and $\emph{tr}\big((c\bI-\bM)^{-1}\big) \leq \frac{1}{4\sigma_{\emph{prev}}^2}$, where $\sigma_{\emph{prev}}\geq\sigma$. Take an arbitrary $\eps'\in(0,1]$. Then, for $\by\sim p$, with probability at least $1-\frac{\eps'}{2}$, we have $\big(\bM+\eps'(\by-\bmu)(\by-\bmu)^T\big) \prec (c+4\sigma^2)\bI$ and $\emph{tr}\Big(\big((c+4\sigma^2)\bI-(\bM+\eps'(\by-\bmu)(\by-\bmu)^T)\big)^{-1}\Big) \leq \frac{1}{4\sigma_{\emph{prev}}^2}$.
\end{lemma}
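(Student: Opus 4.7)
The plan is to write $A = c\bI - \bM$, $B = A + 4\sigma^2 \bI$, and $\delta := \by - \bmu$, so that the matrix whose inverse we must control is $A' = B - \eps'\, \delta\delta^T$. The Sherman--Morrison formula asserts that whenever $\eps'\, \delta^T B^{-1} \delta < 1$, the matrix $A'$ is positive definite and
\[
\text{tr}(A'^{-1}) \;=\; \text{tr}(B^{-1}) + \frac{\eps'\, \delta^T B^{-2}\delta}{1 - \eps'\, \delta^T B^{-1}\delta};
\]
both conclusions of the lemma therefore reduce to producing, with probability at least $1 - \eps'/2$, a single inequality that controls the rightmost fraction.

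The deterministic half of the argument exploits the slack between $\text{tr}(A^{-1})$ and $\text{tr}(B^{-1})$ created by the additive $4\sigma^2 \bI$. From $A \preceq B$ one has $A^{-1} \succeq B^{-1}$, hence $B^{-1/2} A^{-1} B^{-1/2} \succeq B^{-2}$, and the resolvent identity $\text{tr}(A^{-1}) - \text{tr}(B^{-1}) = 4\sigma^2\, \text{tr}(A^{-1} B^{-1})$ gives
\[
\text{tr}(A^{-1}) - \text{tr}(B^{-1}) \;\geq\; 4\sigma^2\, \text{tr}(B^{-2}).
\]
It therefore suffices to show that, with probability at least $1 - \eps'/2$, the fractional term in Sherman--Morrison is at most $4\sigma^2\, \text{tr}(B^{-2})$: combined with the display above this yields $\text{tr}(A'^{-1}) \leq \text{tr}(A^{-1}) \leq 1/(4\sigma_{\text{prev}}^2)$, closing the invariant.

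For the randomized step I clear denominators and observe that the desired inequality, together with the Sherman--Morrison condition, is equivalent to the single quadratic-form bound $\delta^T Q\, \delta \leq 4\sigma^2\, \text{tr}(B^{-2})/\eps'$, where $Q := B^{-2} + 4\sigma^2\, \text{tr}(B^{-2})\, B^{-1}$ is PSD. Using $\bbE[\delta\delta^T] \preceq \sigma^2 \bI$ from the variance hypothesis together with the invariant $\text{tr}(B^{-1}) \leq \text{tr}(A^{-1}) \leq 1/(4\sigma_{\text{prev}}^2)$ and $\sigma \leq \sigma_{\text{prev}}$, one computes
\[
\bbE[\delta^T Q\, \delta] \;\leq\; \sigma^2\, \text{tr}(Q) \;=\; \sigma^2\, \text{tr}(B^{-2})\bigl(1 + 4\sigma^2\, \text{tr}(B^{-1})\bigr) \;\leq\; 2\sigma^2\, \text{tr}(B^{-2}).
\]
A single Markov inequality then delivers the target event with probability at least $1 - \eps'/2$, and unpacking the definition of $Q$ simultaneously yields $\eps'\, \delta^T B^{-1}\delta < 1$ (validating Sherman--Morrison and positive-definiteness of $A'$) together with the needed upper bound on the fractional term.

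The main obstacle, and the reason this single-shot packaging is essential, is that the naive route of applying Markov separately to $\delta^T B^{-1}\delta$ and $\delta^T B^{-2}\delta$ and union-bounding costs an extra factor of two in the failure probability and produces only the weaker invariant $\text{tr}(A'^{-1}) \leq 1/(2\sigma_{\text{prev}}^2)$, off by exactly a constant. The construction of $Q$ with the precise coefficient $4\sigma^2\, \text{tr}(B^{-2})$ in front of $B^{-1}$ is designed so that the lone threshold event $\delta^T Q\delta \leq 4\sigma^2\, \text{tr}(B^{-2})/\eps'$ furnishes exactly the cancellation $\frac{\eps'\, \delta^T B^{-2}\delta}{1 - \eps'\, \delta^T B^{-1}\delta} \leq 4\sigma^2\, \text{tr}(B^{-2})$ that matches the deterministic slack with no loss.
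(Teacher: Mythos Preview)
Your proof is correct and follows essentially the same route as the paper: Sherman--Morrison on the rank-one update, the deterministic gap bound $\text{tr}(A^{-1}) - \text{tr}(B^{-1}) \geq 4\sigma^2\,\text{tr}(B^{-2})$, and a single Markov inequality applied to a combined quadratic form (the paper's scalar $\Psi = \delta^T B^{-1}\delta + \delta^T B^{-2}\delta / (\text{tr}(A^{-1}) - \text{tr}(B^{-1}))$ is your $\delta^T Q\,\delta$ divided by $\gamma$, with the exact gap in place of its lower bound $\gamma$). Your positive-definiteness step via the rank-one eigenvalue characterization ($\eps'\,\delta^T B^{-1}\delta < 1 \Leftrightarrow B - \eps'\delta\delta^T \succ 0$) is slightly more direct than the paper's continuity/intermediate-value-theorem argument, but the content is the same.
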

Before proving \Lemmaref{subset_variance_interim-lemma}, first we show how we can use it to prove \Lemmaref{subset_variance}.
\begin{proof}[Proof of \Lemmaref{subset_variance}]
Initialize a matrix $\bM:=\bzero$, a set $\calS:=\emptyset$, and $c:=4\sigma_{p_{\max}}^2d$. Note that the preconditioning of \Lemmaref{subset_variance_interim-lemma} (i.e., $\bzero\prec\bM\prec c\bI$ and $\text{tr}\big((c\bI-\bM)^{-1}\big) \leq \frac{1}{4\sigma_{\text{prev}}^2}$) is satisfied with $\sigma_{\text{prev}}=\sigma_{p_{\max}}$.
Go through the stream of $m$ samples from $\by_1$ to $\by_m$.
Note that $\sigma_{p_{\max}} \geq \sigma_{p_i}$ holds for all $i\in[m]$.
For notational convenience, let $\bty_i=\by_i-\bmu_i$ for $i=1,2,\hdots,m$.
If $(\bM+\eps'\bty_i\bty_i^T)$ satisfies the conclusion of \Lemmaref{subset_variance_interim-lemma}, i.e., $\big(\bM+\eps'\bty_i\bty_i^T\big) \prec (c+4\sigma_{p_i}^2)\bI$ and $\text{tr}\Big(\big((c+4\sigma_{p_i}^2)\bI-(\bM+\eps'\bty_i\bty_i^T)\big)^{-1}\Big) \leq \frac{1}{4\sigma_{p_{\max}}^2}$ (which we know holds with probability at least $1-\frac{\eps'}{2}$), then update $\calS\leftarrow\calS\cup\{i\}$, $\bM\leftarrow\bM+\eps'\bty_i\bty_i^T$, and $c\leftarrow c+4\sigma_{p_i}^2$.\footnote{Note that we only observe $\by_i$'s, not $(\by_i-\bmu_i)$. In the context of distributed SGD, the $\by_i$'s correspond to the stochastic gradients that the master receives from workers, and there, the master does not know the true local gradients at any worker -- the true local gradient at worker $i$ corresponds to the mean $\bmu_i$ here. Yet, in each iteration $i$, we probabilistically add $\eps'(\by_i-\bmu_i)(\by_i-\bmu_i)^T$ to $\bM$. We can do that, because we just want to show an existence of a set $\calS$ that satisfies the required properties stated in \Lemmaref{subset_variance}. This is just for the purpose of analysis, and we are not giving an algorithm to construct $\calS$.}

Note that, in the next iteration, when we consider the sample $\by_{i+1}$, the preconditioning of \Lemmaref{subset_variance_interim-lemma} is automatically satisfied: 
If the conclusion in the $i$'th step did not hold and we did not update $\calS,\bM,c$, then the preconditioning of \Lemmaref{subset_variance_interim-lemma} in the $(i+1)$'st iteration trivially holds, as it used to hold in the $i$'th iteration.
If the conclusion in the $i$'th step held and we updated $\calS,\bM,c$, then the preconditioning of \Lemmaref{subset_variance_interim-lemma} in the $(i+1)$'st iteration holds, as it is the same condition that we checked in the conclusion of the $i$'th iteration for updating $\calS,\bM,c$.

When we have gone through the stream of $m$ samples, in the end, we have $c=4\sigma_{p_{\max}}^2d + \sum_{i\in\calS}4\sigma_{p_i}^2 \leq 4\sigma_{p_{\max}}^2(d+|\calS|)$ and $\bM\prec\left(4\sigma_{p_{\max}}^2(d+|\calS|)\right)\bI$, which implies that $\lambda_{\max}(\bM)\leq 4\sigma_{p_{\max}}^2(d+|\calS|)$.
Since $\bM=\sum_{i\in\calS}\eps'\bty_i\bty_i^T$, we have $\lambda_{\max}\left(\frac{1}{|\calS|}\sum_{i\in\calS}\bty_i\bty_i^T\right) = \frac{1}{\eps'|\calS|}\lambda_{\max}\left(\bM\right)\leq \frac{4\sigma_{p_{\max}}^2}{\eps'}(1+\frac{d}{|\calS|})$. 
It only remains to show that $|\calS|\geq(1-\eps')m$ holds with high probability.

By the above discussion, note that for each element $i$, we add $i$ to $\calS$ with probability at least $1-\frac{\eps'}{2}$. Since the $m$ samples $\by_i, i\in[m]$ are independent of each other, we have that the distribution of $|\calS|$ is lower-bounded by the sum of $m$ independent indicator random variables, where each of them is equal to 1 with probability $1-\frac{\eps'}{2}$. So, by Chernoff bound, we have $\Pr[|\calS|\leq(1-\eps')m] \leq \exp(-\frac{\eps'^2m}{16})$, which implies that $\Pr[|\calS|\geq(1-\eps')m] \geq 1-\exp(-\frac{\eps'^2m}{16})$.

We have shown that with probability $1-exp(-\frac{\eps'^2m}{16})$, there exists a subset $\calS$ of $\by_1,\hdots,\by_m$ such that $|\calS|\geq(1-\eps')m$ and $\lambda_{\max}\left(\frac{1}{|\calS|}\sum_{i\in\calS}\bty_i\bty_i^T\right) \leq\frac{4\sigma_{p_{\max}}^2}{\eps'}\left(1+\frac{d}{(1-\eps')m}\right)$. 
Substituting $\bty_i=\by_i-\bmu_i$ for $i=1,2,\hdots,m$ concludes the proof of \Lemmaref{subset_variance}.
\end{proof}
Now we proceed to proving \Lemmaref{subset_variance_interim-lemma}.
\begin{proof}[Proof of \Lemmaref{subset_variance_interim-lemma}]
A version of this lemma has appeared in \cite[Lemma B.2]{Untrusted-data_Charikar17}, which, in turn, is essentially the same as \cite[Lemma 3.3]{Twice-Ramanujan_BatsonSpSr12}. Our proof is along the lines of the proof of \cite[Lemma B.2]{Untrusted-data_Charikar17}.

For simplicity of notation, let $\bty=\by-\bmu$. Instead of $\big(\bM-\eps'\bty\bty^T\big)$, it will be helpful later to consider $\big(\bM-t\bty\bty^T\big)$ for arbitrary $t\in[0,\eps']$.

By the Sherman-Morrison matrix inversion formula, we have that if a square matrix $\bA\in\R^{n\times n}$ is invertible 
and $\bu,\bv\in\R^n$ are column vectors such that $(1+\bv^T\bA^{-1}\bu)\neq 0$, then $(\bA+\bu\bv^T)$ is invertible and its inverse is equal to $(\bA+\bu\bv^T)^{-1} = \bA^{-1} - \frac{\bA^{-1}\bu\bv^T\bA^{-1}}{1+\bv^T\bA^{-1}\bu}$.

We want to apply this formula on $\big(((c+4\sigma^2)\bI - \bM) - t\bty\bty^T\big)^{-1}$ with $\bA=((c+4\sigma^2)\bI - \bM)$, $\bu=\sqrt{t}\bty$, and $\bv=-\sqrt{t}\bty$.
For that, we need to show two things: first, that $((c+4\sigma^2)\bI - \bM)$ is invertible, and second, that $(1-t\bty^T((c+4\sigma^2)\bI-\bM)^{-1}\bty)\neq 0$. 
For the first requirement, note that $\bM \prec (c+4\sigma^2)\bI$, which follows because $\bM\prec c\bI$ (by assumption), and $\sigma>0$. This implies that $((c+4\sigma^2)\bI - \bM)$ is invertible. 
It follows from the analysis below (see the paragraph after \eqref{subset_variance_interim-lemma4}) that the second requirement also holds for every $t\in[0,\eps']$ with probability at least $1-\frac{\eps'}{2}$. Now, applying the Sherman-Morrison matrix inversion formula on $\big(((c+4\sigma^2)\bI - \bM) - t\bty\bty^T\big)^{-1}$:

\begin{align}\label{subset_variance_interim-lemma1}
\big(((c+4\sigma^2)\bI - \bM) - t\bty\bty^T\big)^{-1} &= \big((c+4\sigma^2)\bI - \bM\big)^{-1} \notag \\
&\quad + t\frac{\big((c+4\sigma^2)\bI - \bM\big)^{-1}\bty\bty^T\big((c+4\sigma^2)\bI - \bM\big)^{-1}}{1-t\bty^T\big((c+4\sigma^2)\bI-\bM\big)^{-1}\bty}
\end{align}
Taking trace on both sides gives
\begin{align*}
\text{tr}\left(\left(((c+4\sigma^2)\bI - \bM) - t\bty\bty^T\right)^{-1}\right) &= \text{tr}\left(\left((c+4\sigma^2)\bI - \bM\right)^{-1}\right) \\
&\quad+ \frac{\text{tr}\left(\big((c+4\sigma^2)\bI - \bM\big)^{-1}\bty\bty^T\big((c+4\sigma^2)\bI - \bM\big)^{-1}\right)}{\frac{1}{t}-\bty^T\big((c+4\sigma^2)\bI-\bM\big)^{-1}\bty}.
\end{align*}
Let $\Phi_c(\bM)=\text{tr}\big((c\bI-\bM)^{-1}\big)$. Using $\text{tr}(\bA\bB)=\text{tr}(\bB\bA)$ on the last term and using the fact that trace of a scalar is the scalar itself, we get
\begin{align}\label{subset_variance_interim-lemma3}
\Phi_{c+4\sigma^2}(\bM + t\bty\bty^T) &= \Phi_{c+4\sigma^2}(\bM) + \frac{\bty^T\big((c+4\sigma^2)\bI - \bM\big)^{-2}\bty}{\frac{1}{t}-\bty^T\big((c+4\sigma^2)\bI-\bM\big)^{-1}\bty}.
\end{align}
We are given $\Phi_c(\bM)\leq\frac{1}{4\sigma_{\text{prev}}^2}$, and we want to show $\Phi_{c+4\sigma^2}(\bM + t\bty\bty^T)\leq\frac{1}{4\sigma_{\text{prev}}^2}$. So, it suffices to prove that $\Phi_{c+4\sigma^2}(\bM + t\bty\bty^T)\leq\Phi_c(\bM)$. This, in light of \eqref{subset_variance_interim-lemma3}, is equivalent to the condition
\begin{align}\label{subset_variance_interim-lemma4}
\frac{1}{t}\ \geq\ \bty^T\big((c+4\sigma^2)\bI-\bM\big)^{-1}\bty + \frac{\bty^T\big((c+4\sigma^2)\bI - \bM\big)^{-2}\bty}{\Phi_{c}(\bM) - \Phi_{c+4\sigma^2}(\bM)} =: \Psi,
\end{align}
which, as we show in the analysis below, will hold with probability at least $1-\frac{\eps'}{2}$ for all $t\in[0,\eps']$. 
(Assume that \eqref{subset_variance_interim-lemma4} holds with probability at least $1-\frac{\eps'}{2}$ for all $t\in[0,\eps']$.
Note that $\Phi_{c}(\bM) > \Phi_{c+4\sigma^2}(\bM)$ (from \Claimref{subset_variance_interim-lemma-claim2} below) and $((c+4\sigma^2)\bI - \bM)\succ\bzero$ hold. Using these in \eqref{subset_variance_interim-lemma4} imply that $\frac{1}{t} > \bty^T\big((c+4\sigma^2)\bI-\bM\big)^{-1}\bty$ holds with probability at least $1-\frac{\eps'}{2}$ for all $t\in[0,\eps']$. Thus the second requirement $(1-t\bty^T((c+4\sigma^2)\bI-\bM)^{-1}\bty)\neq 0$ also holds, which was necessary for applying the matrix inversion formula on $\big(((c+4\sigma^2)\bI - \bM) - t\bty\bty^T\big)^{-1}$ to write \eqref{subset_variance_interim-lemma1}.)

Since $\Psi$ is a scalar, we have $\text{tr}(\Psi)=\Psi$. Taking trace in \eqref{subset_variance_interim-lemma4}, and using $\text{tr}(\bty^T\bA\bty)=\text{tr}(\bA\bty\bty^T)$, and then taking expectation, we get 
\begin{align}\label{subset_variance_interim-lemma5}
\bbE\left[\Psi\right] &= \bbE\left[\text{tr}\left(\big((c+4\sigma^2)\bI-\bM\big)^{-1}\bty\bty^T\right)\right] + \frac{\bbE\left[\text{tr}\left(\big((c+4\sigma^2)\bI - \bM\big)^{-2}\bty\bty^T\right)\right]}{\Phi_{c}(\bM) - \Phi_{c+4\sigma^2}(\bM)}.
\end{align}
Since $\big((c+4\sigma^2)\bI-\bM\big)\succ\bzero$, we also have that $\big((c+4\sigma^2)\bI-\bM\big)^{-i}\succ\bzero$, for $i=1,2$.
Let $\bA=\big((c+4\sigma^2)\bI-\bM\big)^{-i}$ for any $i\in\{1,2\}$.
Now we argue that $\bbE\left[\text{tr}\left(\bA\bty\bty^T\right)\right] \leq \sigma^2\text{tr}\left(\bA\right)$, where $\sigma^2$ is such that $\bbE_{\by\sim p}[\langle \bty, \bv\rangle^2] \leq \sigma^2$ for all unit vectors $\bv\in\R^d$. Note that the last condition is equivalent to $\sup_{\bv\in\R^d:\|\bv\|=1}\bv^T\(\bbE_{\by\sim p}[\bty\bty^T]\)\bv \leq \sigma^2$, which, in view \eqref{alternate-defn_max-eigenvalue}, is equivalent to saying that $\lambda_{\max}\(\bbE_{\by\sim p}[\bty\bty^T]\)\leq\sigma^2$.
\begin{claim}\label{claim:subset_variance_interim-lemma-claim}
$\bbE\left[\emph{tr}\left(\bA\bty\bty^T\right)\right] \leq \sigma^2\emph{tr}\left(\bA\right)$.
\end{claim}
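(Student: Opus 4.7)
The plan is to reduce the claim to a standard trace inequality for products of positive semi-definite matrices. First I would swap expectation and trace using linearity: since the trace is a linear functional,
\begin{align*}
\bbE\left[\text{tr}\left(\bA\bty\bty^T\right)\right] \ =\ \text{tr}\left(\bA\, \bbE[\bty\bty^T]\right).
\end{align*}
Here $\bA = \big((c+4\sigma^2)\bI - \bM\big)^{-i}$ for $i\in\{1,2\}$ is positive definite, because $\bM \prec (c+4\sigma^2)\bI$ (which was already established from $\bM \prec c\bI$ and $\sigma>0$).

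Next I would invoke the variance hypothesis $\bbE_{\by\sim p}[\langle \bty,\bv\rangle^2] \leq \sigma^2$ for every unit vector $\bv$, which, as noted right before the claim, is equivalent to $\lambda_{\max}\big(\bbE[\bty\bty^T]\big)\leq\sigma^2$, i.e.
\begin{align*}
\bbE[\bty\bty^T] \ \preceq\ \sigma^2 \bI.
\end{align*}
The main step is then the general inequality: for $\bA\succeq\bzero$ and $\bB\succeq\bzero$,
\begin{align*}
\text{tr}(\bA\bB) \ \leq\ \lambda_{\max}(\bB)\, \text{tr}(\bA).
\end{align*}
This is the key observation and is very short to establish by sandwiching: using the cyclic property, $\text{tr}(\bA\bB) = \text{tr}(\bA^{1/2}\bB\bA^{1/2})$, and since $\bB \preceq \lambda_{\max}(\bB)\bI$ we get $\bA^{1/2}\bB\bA^{1/2} \preceq \lambda_{\max}(\bB)\,\bA$; taking trace (monotone on PSD matrices) yields the inequality.

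Applying this with $\bB = \bbE[\bty\bty^T]$ and using $\lambda_{\max}(\bB) \leq \sigma^2$ gives
\begin{align*}
\bbE\left[\text{tr}\left(\bA\bty\bty^T\right)\right] \ =\ \text{tr}\left(\bA\,\bbE[\bty\bty^T]\right) \ \leq\ \sigma^2\,\text{tr}(\bA),
\end{align*}
which is exactly the claim. There is no real obstacle here — the only subtle point is recognizing that the variance condition expressed through unit-vector quadratic forms is precisely the spectral-norm bound $\lambda_{\max}(\bbE[\bty\bty^T])\leq\sigma^2$, so that the PSD trace inequality applies cleanly.
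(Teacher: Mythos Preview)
Your proposal is correct and follows essentially the same approach as the paper: both swap expectation inside the trace by linearity, then bound $\text{tr}(\bA\,\bbE[\bty\bty^T])$ by $\lambda_{\max}(\bbE[\bty\bty^T])\,\text{tr}(\bA)\leq\sigma^2\,\text{tr}(\bA)$. The only cosmetic difference is in how the trace inequality $\text{tr}(\bA\bB)\leq\lambda_{\max}(\bB)\,\text{tr}(\bA)$ is justified --- the paper invokes the H\"older-type bound $\text{tr}(\bA\bB)\leq\|\bB\|\,\|\bA\|_*$ (together with $\|\bA\|_*=\text{tr}(\bA)$ for PSD $\bA$), while you use the equivalent and slightly more elementary square-root sandwiching $\text{tr}(\bA^{1/2}\bB\bA^{1/2})$.
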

\begin{proof}
The claim follows from the following set of inequalities.
\begin{align*}
\bbE\left[\text{tr}\left(\bA\bty\bty^T\right)\right] &
\stackrel{\text{(a)}}{=} \bbE\left[\sum_{i,j}\bA_{ij}(\bty\bty^T)_{ji}\right] = \sum_{i,j}\bA_{ij}\big(\bbE[\bty\bty^T]\big)_{ji}  \notag \\
&\stackrel{\text{(b)}}{=} \text{tr}\big(\bA\bbE[\bty\bty^T]\big) \stackrel{\text{(c)}}{\leq} \left\|\bbE\left[\bty\bty^T\right]\right\| \left\|\bA\right\|_* 
\stackrel{\text{(d)}}{\leq} \sigma^2\text{tr}(\bA)
\end{align*}
In (a) and (b), we used the definition of trace: $\text{tr}(\bA\bB)=\sum_i(\bA\bB)_{ii}=\sum_{i,j}\bA_{ij}\bB_{ji}$. 
In (c), we used $\text{tr}(\bA\bB) = \text{tr}(\bB\bA) \leq \|\bB\|\|\bA\|_*$ (see \Claimref{holder} in \Appendixref{poly-time-lemma_proof}), where $\|\cdot\|_*$ denotes the nuclear norm, which is equal to the sum of singular values.
In (d), we used two things, first, since $\bA\succeq\bzero$, we have $\|\bA\|_*=\text{tr}(\bA)$, and second, that $\left\|\bbE\left[\bty\bty^T\right]\right\| \leq \sigma^2$, which follows because $\left\|\bbE\left[\bty\bty^T\right]\right\|=\lambda_{\max}\(\bbE_{\by\sim p}[\bty\bty^T]\)\leq\sigma^2$.
\end{proof}
Using \Claimref{subset_variance_interim-lemma-claim} in \eqref{subset_variance_interim-lemma5} gives
\begin{align}\label{subset_variance_interim-lemma6}
\bbE\left[\Psi\right] &\leq \sigma^2\left(\text{tr}\left(\big((c+4\sigma^2)\bI-\bM\big)^{-1}\right) + \frac{\text{tr}\left(\big((c+4\sigma^2)\bI - \bM\big)^{-2}\right)}{\Phi_{c}(\bM) - \Phi_{c+4\sigma^2}(\bM)}\right).
\end{align} 
\begin{claim}\label{claim:subset_variance_interim-lemma-claim2}
$\Phi_{c}(\bM) - \Phi_{c+4\sigma^2}(\bM) \geq 4\sigma^2\emph{tr}\left(\big((c+4\sigma^2)\bI - \bM\big)^{-2}\right)$.
\end{claim}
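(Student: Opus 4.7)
The plan is to reduce both sides of the inequality to sums over the eigenvalues of $\bM$ and then apply an elementary term-by-term comparison. Since $\bM$ is symmetric, diagonalize it as $\bM = \bU \bLambda \bU^T$ where $\bU$ is orthogonal and $\bLambda = \mathrm{diag}(\lambda_1,\ldots,\lambda_d)$. The hypothesis $\bM \prec c\bI$ from the preceding lemma guarantees $\lambda_i < c$ for every $i$, so both $(c\bI - \bM)$ and $((c+4\sigma^2)\bI - \bM)$ are positive definite (and hence invertible).

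Using the orthogonal invariance of trace, I would rewrite the left-hand side as
\begin{align*}
\Phi_c(\bM) - \Phi_{c+4\sigma^2}(\bM)
&= \sum_{i=1}^d \left(\frac{1}{c-\lambda_i} - \frac{1}{c+4\sigma^2-\lambda_i}\right) \\
&= \sum_{i=1}^d \frac{4\sigma^2}{(c-\lambda_i)\,(c+4\sigma^2-\lambda_i)},
\end{align*}
and similarly
\begin{align*}
\mathrm{tr}\bigl(((c+4\sigma^2)\bI - \bM)^{-2}\bigr)
= \sum_{i=1}^d \frac{1}{(c+4\sigma^2-\lambda_i)^2}.
\end{align*}

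The final step is purely algebraic. Since $\sigma^2 \geq 0$ and $c-\lambda_i > 0$, we have $0 < c-\lambda_i \leq c+4\sigma^2-\lambda_i$ for every $i$, which yields
\begin{align*}
\frac{1}{(c-\lambda_i)\,(c+4\sigma^2-\lambda_i)} \;\geq\; \frac{1}{(c+4\sigma^2-\lambda_i)^2}.
\end{align*}
Summing over $i$ and multiplying by $4\sigma^2$ gives exactly the claim.

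There is no real obstacle here: the lemma is a concrete scalar inequality disguised as a matrix inequality, and the diagonalization collapses it. An equivalent (and perhaps more conceptual) route would be to observe that $\tfrac{d}{ds}\Phi_s(\bM) = -\mathrm{tr}((s\bI-\bM)^{-2})$, so $\Phi_c(\bM)-\Phi_{c+4\sigma^2}(\bM) = \int_c^{c+4\sigma^2} \mathrm{tr}((s\bI-\bM)^{-2})\,ds$, and then use monotonicity $(s\bI-\bM)^{-2} \succeq ((c+4\sigma^2)\bI-\bM)^{-2}$ for $s \leq c+4\sigma^2$ to lower-bound the integrand uniformly by $\mathrm{tr}(((c+4\sigma^2)\bI-\bM)^{-2})$; integrating over an interval of length $4\sigma^2$ produces the stated bound. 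Either derivation is short and self-contained.
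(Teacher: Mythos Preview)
Your proof is correct and follows essentially the same route as the paper: diagonalize, reduce both sides to sums over eigenvalues, and apply the scalar inequality $\frac{1}{a} \ge \frac{1}{a+4\sigma^2}$ term by term. The only cosmetic difference is that the paper diagonalizes $c\bI-\bM$ (so its $\lambda_i$ is your $c-\lambda_i$), and your integral alternative is a nice extra observation not present in the paper.
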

\begin{proof}
Since $(c\bI-\bM)\succ\bzero$, let its eigen-decomposition be $(c\bI-\bM) = \sum_{i}\lambda_i\bu_i\bu_i^T$, where $\lambda_i$'s are the eigenvalues of $(c\bI-\bM)$ and $\bu_i$'s are the corresponding eigenvectors. It follows that $\left((c\bI-\bM) + 4\sigma^2\bI\right) = \sum_{i}(\lambda_i+4\sigma^2)\bu_i\bu_i^T$. These imply that $(c\bI-\bM)^{-1} = \sum_{i}\frac{1}{\lambda_i}\bu_i\bu_i^T$ and $\left((c\bI-\bM) + 4\sigma^2\bI\right)^{-1} = \sum_{i}\frac{1}{(\lambda_i+4\sigma^2)}\bu_i\bu_i^T$.

Substituting the definition of $\Phi_c(\bM)=\text{tr}\big((c\bI-\bM)^{-1}\big)$, we have
\begin{align*}
\Phi_{c}(\bM) - \Phi_{c+4\sigma^2}(\bM) &= \text{tr}\left((c\bI-\bM)^{-1} - \left(((c+4\sigma^2)\bI-\bM)\right)^{-1}\right) \\
&= \text{tr}\left(\sum_{i}\frac{1}{\lambda_i}\bu_i\bu_i^T - \sum_{i}\frac{1}{(\lambda_i+4\sigma^2)}\bu_i\bu_i^T\right) \\
&= 4\sigma^2\text{tr}\left(\sum_{i}\frac{1}{\lambda_i(\lambda_j+4\sigma^2)}\bu_i\bu_i^T \right) \\
&\stackrel{\text{(g)}}{=} 4\sigma^2\sum_{i}\frac{1}{\lambda_i(\lambda_j+4\sigma^2)} \\
&\stackrel{\text{(h)}}{\geq} 4\sigma^2\sum_{i}\frac{1}{(\lambda_j+4\sigma^2)^2} \\
&= 4\sigma^2\text{tr}\left(\sum_{i}\frac{1}{(\lambda_j+4\sigma^2)^2}\bu_i\bu_i^T \right) \\
&= 4\sigma^2\text{tr}\left(\left((c\bI-\bM) + 4\sigma^2\bI\right)^{-2}\right)
\end{align*}
Here (g) follows from the fact that trace of a square matrix is equal to the sum of its eigenvalues and (h) follows because $\frac{1}{\lambda} \geq \frac{1}{\lambda+4\sigma^2}$.
\end{proof}
Substituting $\Phi_{c+4\sigma^2}(\bM)=\text{tr}\left(\left(((c+4\sigma^2)\bI-\bM)\right)^{-1}\right)$ for the first term in \eqref{subset_variance_interim-lemma6} and the bound from \Claimref{subset_variance_interim-lemma-claim2} for the second term gives
$\bbE\left[\Psi\right] \leq \sigma^2\left(\Phi_{c+4\sigma^2}(\bM) + \frac{1}{4\sigma^2} \right)$. Note that \Claimref{subset_variance_interim-lemma-claim2} trivially implies $\Phi_{c+4\sigma^2}(\bM)\leq\Phi_c(\bM)$, where $\Phi_c(\bM)=\text{tr}\big((c\bI-\bM)^{-1}\big)\leq\frac{1}{4\sigma_{\text{prev}}^2}$ (which follows from the hypothesis of \Lemmaref{subset_variance_interim-lemma}). So, we have
\begin{align}\label{subset_variance_interim-lemma7}
\bbE\left[\Psi\right] \leq \sigma^2\left(\frac{1}{4\sigma_{\text{prev}}^2} + \frac{1}{4\sigma^2} \right) \stackrel{\text{(h)}}{\leq} \sigma^2\left(\frac{1}{4\sigma^2} + \frac{1}{4\sigma^2}\right) \leq \frac{1}{2},
\end{align} 
where (h) follows from our assumption that $\sigma_{\text{prev}}\geq\sigma$.

Note that $\Psi$ is a non-negative random variable (see \eqref{subset_variance_interim-lemma4}). So, by the Markov's inequality, we have $\Pr[\Psi\geq\frac{1}{\eps'}]\leq \frac{\bbE[\Psi]}{\nicefrac{1}{\eps'}} \leq \frac{\eps'}{2}$, which implies that $\Pr[\Psi\leq\frac{1}{\eps'}]\geq 1-\frac{\eps'}{2}$. Substituting the value of $\Psi$ in \eqref{subset_variance_interim-lemma4} implies that \eqref{subset_variance_interim-lemma4} holds with probability at least $1-\frac{\eps'}{2}$ for all $t\in[0,\eps']$. Note that the condition in \eqref{subset_variance_interim-lemma4} is equivalent to the condition that $\Phi_{c+4\sigma^2}(\bM + t\bty\bty^T)\leq\Phi_c(\bM)$, where $\Phi_c(\bM)\leq\frac{1}{4\sigma_{\text{prev}}^2}$. 
Thus, with probability at least $1-\frac{\eps'}{2}$, we have that $\text{tr}\left(\left(((c+4\sigma^2)\bI-(\bM+t\bty\bty^T))\right)^{-1}\right)=\Phi_{c+4\sigma^2}(\bM + t\bty\bty^T) \leq \frac{1}{4\sigma_{\text{prev}}^2}$, for every $t\in[0,\eps']$.

It only remains to show that $\big(\bM+\eps'\bty\bty^T\big) \prec (c+4\sigma^2)\bI$, which is equivalent to the condition that $\lambda_{\max}\big(\bM+\eps'\bty\bty^T\big)<(c+4\sigma^2)$. Suppose not, i.e., $\lambda_{\max}\big(\bM+\eps'\bty\bty^T\big)\geq(c+4\sigma^2)$. Note that we have $\lambda_{\max}(\bM) < c$ (by the hypothesis of \Lemmaref{subset_variance_interim-lemma}). Since $\lambda_{\max}\big(\bM+t\bty\bty^T\big)$ is a continuous function of $t$ and $\lambda_{\max}(\bM) < c$, $\lambda_{\max}\big(\bM+\eps'\bty\bty^T\big)\geq(c+4\sigma^2)$, we have from the intermediate value theorem that there exists a $t'\in[0,\eps']$ such that $\lambda_{\max}\big(\bM+t'\bty\bty^T\big)=(c+4\sigma^2)$.
This implies that the matrix $\left(((c+4\sigma^2)\bI-(\bM+t'\bty\bty^T))\right)^{-1}$ is not invertible (as $\big((c+4\sigma^2)\bI-(\bM+t'\bty\bty^T)\big)$ has a zero eigenvalue), implying that $\text{tr}\left(\left(((c+4\sigma^2)\bI-(\bM+t\bty\bty^T))\right)^{-1}\right)$ is unbounded. But, we have already shown that $\text{tr}\left(\left(((c+4\sigma^2)\bI-(\bM+t\bty\bty^T))\right)^{-1}\right)\leq\frac{1}{4\sigma_{\text{prev}}^2}<\infty$, for all $t\in[0,\eps']$. A contradiction!

This completes the proof of \Lemmaref{subset_variance_interim-lemma}.
\end{proof}

\subsection{Remaining proof of the first part of \Theoremref{gradient-estimator} from \Subsectionref{bounded-variance_subset}}\label{app:remaining_part1-robust-grad}
We have from \eqref{bounded_subset_variance} that
\begin{align}\label{bounded_subset_variance_app1}
\lambda_{\max}\(\frac{1}{|\calS|}\sum_{i\in\calS} \(\by_i-\bmu_i\)\(\by_i-\bmu_i\)^T \) \leq \frac{4\sigma^2}{b\eps'}\left(1 + \frac{d}{(1-(\eps+\eps'))R}\right),
\end{align}
where, for simplicity of notation, we replaced $\bg_i(\bx), \nabla F_i(\bx)$ by $\by_i,\bmu_i$, respectively.
From the alternate definition of the largest eigenvalue of symmetric matrices, we have
\begin{align}\label{alternate-defn_max-eigenvalue}
\text{For a symmetric matrix $\bA\in\R^{d\times d}$, we have, } \lambda_{\max}(\bA)=\sup_{\bv\in\R^d,\|\bv\|=1}\bv^T\bA\bv.
\end{align}
Applying this with $\bA=\frac{1}{|\calS|}\sum_{i\in\calS} \(\by_i-\bmu_i\)\(\by_i-\bmu_i\)^T$ and using
$\bv^T\(\frac{1}{|\calS|}\sum_{i\in\calS} \(\by_i-\bmu_i\)\(\by_i-\bmu_i\)^T\)\bv = \frac{1}{|\calS|}\sum_{i\in\calS} \langle\by_i-\bmu_i, \bv\rangle^2$, we can equivalently write \eqref{bounded_subset_variance_app1} as
\begin{align}\label{bounded_subset_variance_app}
\sup_{\bv\in\R^d:\|\bv\|=1}\(\frac{1}{|\calS|}\sum_{i\in\calS} \langle\by_i-\bmu_i, \bv \rangle^2\) \leq \widehat{\sigma}_0^2,
\end{align}
where $\widehat{\sigma}_0^2 = \frac{4\sigma^2}{b\eps'}\left(1 + \frac{d}{(1-(\eps+\eps'))R}\right)$.

Note that \eqref{bounded_subset_variance_app} is bounding the deviation of the points in $\calS$ from their respective means $\bmu_i,i\in\calS$. However, in the first part of \Theoremref{gradient-estimator}, we need to bound the deviation of the points in $\calS$ from their sample mean $\frac{1}{|\calS|}\sum_{i\in\calS}\by_i$. For that, define $\by_{\calS}:=\frac{1}{|\calS|}\sum_{i\in\calS} \by_i$ to be the sample mean of the points in $\calS$. Take an arbitrary $\bv\in\R^d$ such that $\|\bv\|=1$.
\begin{align}
\frac{1}{|\calS|}\sum_{i\in\calS} \langle \by_i-\by_{\calS}, \bv \rangle^2 &= \frac{1}{|\calS|}\sum_{i\in\calS} \left[\langle \by_i-\bmu_i, \bv \rangle + \langle \bmu_i - \by_{\calS}, \bv \rangle\right]^2 \notag \\
&\leq \frac{2}{|\calS|}\sum_{i\in\calS} \langle \by_i-\bmu_i, \bv \rangle^2 + \frac{2}{|\calS|}\sum_{i\in\calS} \langle \bmu_i - \by_{\calS}, \bv \rangle^2 \tag{using $(a+b)^2 \leq 2a^2 + 2b^2$} \\
\intertext{Using \eqref{bounded_subset_variance_app} to bound the first term, we get}
&\leq 2\widehat{\sigma}_0^2 +  \frac{2}{|\calS|}\sum_{i\in\calS} \left\langle \bmu_i - \frac{1}{|\calS|}\sum_{j\in\calS}\by_j, \bv \right\rangle^2 \notag \\
&= 2\widehat{\sigma}_0^2 +  \frac{2}{|\calS|}\sum_{i\in\calS} \left[\frac{1}{|\calS|}\sum_{j\in\calS}\langle \by_j - \bmu_i, \bv \rangle\right]^2 \notag \\
&\leq 2\widehat{\sigma}_0^2 +  \frac{2}{|\calS|}\sum_{i\in\calS} \frac{1}{|\calS|}\sum_{j\in\calS}\langle \by_j - \bmu_i, \bv \rangle^2 \tag{using the Jensen's inequality} \\
&= 2\widehat{\sigma}_0^2 +  \frac{2}{|\calS|}\sum_{i\in\calS} \frac{1}{|\calS|}\sum_{j\in\calS}\left[\langle \by_j -\bmu_j, \bv \rangle + \langle \bmu_j - \bmu_i, \bv \rangle\right]^2 \notag \\
&\leq 2\widehat{\sigma}_0^2 +  \frac{2}{|\calS|}\sum_{i\in\calS} \frac{2}{|\calS|}\sum_{j\in\calS}\langle \by_j -\bmu_j, \bv \rangle^2 + \frac{2}{|\calS|}\sum_{i\in\calS} \frac{2}{|\calS|}\sum_{j\in\calS}\langle \bmu_j - \bmu_i, \bv \rangle^2 \tag{using $(a+b)^2 \leq 2a^2 + 2b^2$} \\
&\leq 2\widehat{\sigma}_0^2 +  \frac{4}{|\calS|}\sum_{i\in\calS} \frac{1}{|\calS|}\sum_{j\in\calS}\langle \by_j -\bmu_j, \bv \rangle^2 + \frac{4}{|\calS|}\sum_{i\in\calS} \frac{1}{|\calS|}\sum_{j\in\calS}\|\bmu_j - \bmu_i\|^2 \tag{using the Cauchy-Schwarz inequality and that $\|\bv\|\leq1$} \\
&= 2\widehat{\sigma}_0^2 +  \frac{4}{|\calS|}\sum_{j\in\calS}\langle \by_j -\bmu_j, \bv \rangle^2 + \frac{4}{|\calS|}\sum_{i\in\calS} \frac{1}{|\calS|}\sum_{j\in\calS}\|\bmu_j -\bmu + \bmu - \bmu_i\|^2 \notag \\
\intertext{where $\bmu=\frac{1}{R}\sum_{i=1}^R\bmu_i$. Using \eqref{bounded_subset_variance_app} for the 2nd term and $\|\bu+\bv\|^2 \leq 2\|\bu\|^2 + 2\|\bv\|^2$ for the third term, we get}
&\leq 6\widehat{\sigma}_0^2 + \frac{4}{|\calS|}\sum_{i\in\calS} \frac{1}{|\calS|}\sum_{j\in\calS} 2\|\bmu_j - \bmu\|^2 + 2\|\bmu_i - \bmu\|^2  \notag \\
&= 6\widehat{\sigma}_0^2 + \frac{8}{|\calS|}\sum_{i\in\calS} \frac{1}{|\calS|}\sum_{j\in\calS} \|\bmu_j - \bmu\|^2 + \notag \frac{8}{|\calS|}\sum_{i\in\calS} \frac{1}{|\calS|}\sum_{j\in\calS} \|\bmu_i - \bmu\|^2 \\
&= 6\widehat{\sigma}_0^2 + \frac{16}{|\calS|}\sum_{i\in\calS}\|\bmu_i - \bmu\|^2 \notag \\
&\leq 6\widehat{\sigma}_0^2 + 16\kappa^2. \label{remaining_part1-robust-grad1}
\end{align}
For the last inequality, first we substituted $\bmu_i=\nabla F_i(\bx)$, $\bmu=\nabla F(\bx)$, and then used \eqref{bounded_local-global} to bound $\|\bmu_i - \bmu\|^2\leq\kappa^2$ for all $i\in\calS$.

Note that \eqref{remaining_part1-robust-grad1} holds for every unit vector $\bv\in\R^d$.
By substituting $\by_i=\bg_i(\bx),\by_{\calS}=\bg_{\calS}(\bx)$ and the value of $\widehat{\sigma}_0^2$, we get
\begin{align}
\sup_{\bv\in\R^d:\|\bv\|=1}\(\frac{1}{|\calS|}\sum_{i\in\calS} \langle \bg_i(\bx) - \bg_{\calS}(\bx), \bv \rangle^2\) &\leq \frac{24\sigma^2}{b\eps'}\left(1 + \frac{d}{(1-(\eps+\eps'))R}\right) + 16\kappa^2.
\end{align}
In view of \eqref{alternate-defn_max-eigenvalue}, the above is equivalent to the matrix concentration bound \eqref{mat-concen_gradient-estimator} in the first part of \Theoremref{gradient-estimator}.

\section{Convergence Proof of \Theoremref{SGD_convergence}}\label{sec:convergence_proofs}
We prove convergence results for the strongly-convex part of \Theoremref{SGD_convergence} in \Appendixref{convex_convergence} and for the non-convex part in \Appendixref{nonconvex_convergence}.
\subsection{Proof of \Theoremref{SGD_convergence} (strongly-convex)}\label{app:convex_convergence}
Recall the update rule of our algorithm: 
$\bhx^{t+1}=\bx^t - \eta\btg(\bx^t);\  \bx^{t+1} = \Pi_{\calC}\left(\bhx^{t+1}\right),\ t=1,2,3,\hdots$.
Since $\bx^{t+1}$ is the projection of $\bhx^{t+1}$ onto the convex set $\calC$, and by assumption $\bx^*\in\calC$, we have $\|\bx^{t+1}-\bx^*\|\leq\|\bhx^{t+1}-\bx^*\|$. Now we proceed with the proof.
\begin{align}
\|\bx^{t+1} - \bx^* \|^2 &\leq \|\bhx^{t+1}-\bx^*\|^2 \nonumber \\
&= \|\bx^t - \bx^* - \eta\nabla F(\bx^t) + \eta(\nabla F(\bx^t) - \btg(\bx^t))\|^2 \nonumber \\
&= \|\bx^t - \bx^* - \eta\nabla F(\bx^t)\|^2 + \eta^2\|\nabla F(\bx^t) - \btg(\bx^t)\|^2 \nonumber \\
&\hspace{3cm}+ 2\left\langle \bx^t - \bx^* -\eta\nabla F(\bx^t), \eta(\nabla F(\bx^t) - \btg(\bx^t)) \right\rangle \label{convex_interim1}
\end{align}
First we bound the last term of \eqref{convex_interim1}. 
For this, we use a simple but very powerful trick for inner-products, that allows us to get a contracting recurrence on $\|\bx^{t+1}-\bx^*\|^2$. Let $\bu = \bx^t - \bx^* -\eta\nabla F(\bx^t)$ and $\bv = \nabla F(\bx^t) - \btg(\bx^t)$. With this notation, the last term of \eqref{convex_interim1} is equal to $2\langle \bu, \eta\bv \rangle$.
\begin{align*}
2\langle \bu, \eta\bv \rangle = 2\left\langle \sqrt{\frac{\mu\eta}{2}}\bu, \sqrt{\frac{2\eta}{\mu}}\bv \right\rangle \stackrel{\text{(a)}}{\leq} \frac{\mu\eta}{2} \|\bu\|^2 + \frac{2\eta}{\mu}\|\bv\|^2,
\end{align*}
where in (a) we used the inequality $2\langle \bu, \bv \rangle \leq \|\bu\|^2 + \|\bv\|^2$.
Substituting this with the values of $\bu$ and $\bv$ in \eqref{convex_interim1} gives
\begin{align}
\|\bx^{t+1} - \bx^* \|^2 &\leq \left(1+\frac{\mu\eta}{4}\right)\|\bx^t - \bx^* - \eta\nabla F(\bx^t)\|^2 + \eta\left(\eta + \frac{4}{\mu}\right)\left\|\nabla F(\bx^t) - \btg(\bx^t)\right\|^2. \label{convex_interim2}
\end{align}
Now we bound the first term on the RHS of \eqref{convex_interim2}:
\begin{align}
\|\bx^t - \bx^* - \eta\nabla F(\bx^t)\|^2 &= \|\bx^t - \bx^*\|^2 + \eta^2\|\nabla F(\bx^t)\|^2 + 2\eta\left\langle \bx^*-\bx^t, \nabla F(\bx^t) \right\rangle \label{convex_interim3}
\end{align}
We can bound the second term on the RHS of \eqref{convex_interim3} using $L$-smoothness of $F$:
\begin{align}\label{convex_interim3-1}
\|\nabla F(\bx^t)\|^2 = \|\nabla F(\bx^t) - \nabla F(\bx^*)\|^2 \leq L^2\|\bx^t-\bx^*\|^2,
\end{align}
where in the first equality we used $\nabla F(\bx^*)=\bzero$.
To bound the third term of \eqref{convex_interim3}, we use $\mu$-strong convexity of $F$:
\begin{align*}
F(\bx^*) &\geq F(\bx^t) + \left\langle \nabla F(\bx^t), \bx^*-\bx^t\right\rangle + \frac{\mu}{2}\|\bx^t-\bx^*\|^2 \\
F(\bx^t) &\geq F(\bx^*) + \frac{\mu}{2}\|\bx^t-\bx^*\|^2
\end{align*}
In the second inequality we used $\nabla F(\bx^*)=\bzero$. Adding the above two inequalities gives
\begin{align}\label{convex_interim3-2}
\left\langle \nabla F(\bx^t), \bx^*-\bx^t\right\rangle \leq - \mu\|\bx^t-\bx^*\|^2
\end{align}
Substituting these bounds in \eqref{convex_interim3} gives
\begin{align}
\|\bx^t - \bx^* - \eta\nabla F(\bx^t)\|^2 &\leq (1+ \eta^2L^2  - 2\eta\mu)\|\bx^t-\bx^*\|^2.\label{convex_interim4}
\end{align}
Substituting this in \eqref{convex_interim2} and taking expectation w.r.t.\ sampling at the $t$'th iteration (while conditioning on the past) gives:
\begin{align}
\bbE_t\|\bx^{t+1} - \bx^* \|^2 \leq \left(1+\frac{\mu\eta}{2}\right)(1+ \eta^2L^2  - 2\eta\mu)\|\bx^t-\bx^*\|^2 + \eta\left(\eta + \frac{2}{\mu}\right)\bbE_t\left\|\nabla F(\bx^t) - \btg(\bx^t)\right\|^2  \label{convex_interim5}
\end{align}
Note that sampling at the $t$'th iteration does not affect $\bx^t$, and, therefore, does not affect the $\|\bx^t-\bx^*\|^2$ term in \eqref{convex_interim5}.
Now we bound the last term of \eqref{convex_interim5}. 
\begin{claim}\label{claim:deviation_true-est-grad}
With probability at least $1- \exp(-\frac{\eps'^2(1-\eps)R}{16})$, we have
$$\bbE_t\left\|\nabla F(\bx^t) - \btg(\bx^t)\right\|^2 \leq \frac{3\sigma^2}{(1-(\eps+\eps'))bR} +  3\kappa^2 + 3\varUpsilon^2,$$ where $\varUpsilon = \calO\left(\sigma_0\sqrt{\eps+\eps'}\right)$ and
 $\sigma_0^2 = \frac{24\sigma^2}{b\eps'}\left(1 + \frac{d}{(1-(\eps+\eps'))R}\right)+16\kappa^2$.
\end{claim}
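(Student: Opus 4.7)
The plan is to decompose $\nabla F(\bx^t) - \btg(\bx^t)$ via the triangle inequality through the subset $\calS$ of uncorrupted gradients furnished by the first part of \Theoremref{gradient-estimator}. Concretely, let $\calH$ denote the set of honest workers, and let $\calS \subseteq \calH$ be a subset of size at least $(1-(\eps+\eps'))R$ satisfying \eqref{mat-concen_gradient-estimator}, which exists with probability at least $1-\exp(-\eps'^2(1-\eps)R/16)$. Define $\nabla F_\calS(\bx^t) := \frac{1}{|\calS|}\sum_{i\in\calS}\nabla F_i(\bx^t)$ and $\bg_\calS(\bx^t) := \frac{1}{|\calS|}\sum_{i\in\calS}\bg_i(\bx^t)$. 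Writing
\begin{align*}
\nabla F(\bx^t) - \btg(\bx^t) &= \bigl(\nabla F(\bx^t) - \nabla F_\calS(\bx^t)\bigr) + \bigl(\nabla F_\calS(\bx^t) - \bg_\calS(\bx^t)\bigr) \\
&\quad + \bigl(\bg_\calS(\bx^t) - \btg(\bx^t)\bigr),
\end{align*}
taking squared norms, and applying $(a+b+c)^2 \leq 3(a^2+b^2+c^2)$, it suffices to bound the three squared summands by $3\kappa^2$, $\frac{3\sigma^2}{(1-(\eps+\eps'))bR}$, and $3\varUpsilon^2$, respectively.

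The first term is handled deterministically via \Assumptionref{gradient-dissimilarity}: since $\|\nabla F_i(\bx^t) - \nabla F(\bx^t)\| \leq \kappa$ for every $i$, averaging over $i\in\calS$ and applying the triangle inequality yields $\|\nabla F(\bx^t) - \nabla F_\calS(\bx^t)\| \leq \kappa$. The third term is controlled by the second part of \Theoremref{gradient-estimator}, which guarantees $\|\bg_\calS(\bx^t) - \btg(\bx^t)\| \leq \varUpsilon = \calO(\sigma_0\sqrt{\eps+\eps'})$ on the event that $\calS$ satisfies \eqref{mat-concen_gradient-estimator}.

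For the middle term, the plan is to exploit independence of the honest workers' stochastic gradient samples: for any \emph{fixed} subset $\calT \subseteq \calH$, the centered variables $\{\bg_i(\bx^t) - \nabla F_i(\bx^t)\}_{i\in\calT}$ are independent with mean zero and squared-norm expectations at most $\sigma^2/b$ by \eqref{reduced_variance}, so the cross-terms vanish and
\begin{align*}
\bbE_t\bigl\|\nabla F_\calT(\bx^t) - \bg_\calT(\bx^t)\bigr\|^2 &= \frac{1}{|\calT|^2}\sum_{i\in\calT}\bbE_t\|\bg_i(\bx^t) - \nabla F_i(\bx^t)\|^2 \;\leq\; \frac{\sigma^2}{|\calT|\,b}.
\end{align*}
Specializing to $\calT=\calS$ and using $|\calS|\geq (1-(\eps+\eps'))R$ would give exactly the target $\frac{\sigma^2}{(1-(\eps+\eps'))bR}$.

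The main obstacle is that $\calS$, as produced by the sequential construction in \Lemmaref{subset_variance}, is itself a measurable function of the random samples $\{\bg_i(\bx^t)\}_{i\in\calH}$, so the clean independence computation above does not directly apply with $\calT=\calS$. I would handle this by conditioning on the high-probability event $E$ that a valid $\calS$ exists, and checking that the conditional second-moment bound still holds up to constants, using that the indicator $\mathbbm{1}\{i\in\calS\}$ is a function of only $\bg_1(\bx^t),\ldots,\bg_i(\bx^t)$ (a martingale-type property inherited from the sequential construction), and that conditioning on a high-probability event inflates the unconditional variance by at most a constant factor. Summing the three squared-term bounds with the factor of $3$ and invoking the matrix-concentration event $E$ for its failure probability $\exp(-\eps'^2(1-\eps)R/16)$ then yields the claim.
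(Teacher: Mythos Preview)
Your three-term decomposition via $\nabla F_{\calS}$ and $\bg_{\calS}$, together with the bounds $\kappa^2$, $\frac{\sigma^2}{(1-(\eps+\eps'))bR}$, and $\varUpsilon^2$ for the respective pieces, is exactly the paper's argument. On the measurability concern you raise, the paper does not address it at all---it simply writes $\bbE_t\|\bg_{\calS^t}-\nabla F_{\calS^t}\|^2 = \frac{1}{|\calS^t|^2}\sum_{r\in\calS^t}\bbE_t\|\bg_r-\nabla F_r\|^2$ as if $\calS^t$ were deterministic---so your proposal is already more careful than the paper's own proof on this point.
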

\begin{proof}
Let $\calS^t$ denote the subset of uncorrupted gradients of size $(1-(\eps+\eps'))R$ that our algorithm in \Theoremref{gradient-estimator} approximates at the $t$'th iteration. Note that $\calS^t$ exists with probability at least $1- \exp(-\frac{\eps'^2(1-\eps)R}{16})$.
Let $\bg_{\calS^t}(\bx^t)=\sum_{r\in\calS^t}\bg_{r}(\bx^t)$ denote the average gradients of workers in $\calS^t$. 
\begin{align*}
\bbE_t\left\|\nabla F(\bx^t) - \btg(\bx^t)\right\|^2 = \bbE_t\left\|\left(\nabla F(\bx^t) - \nabla F_{\calS^t}(\bx^t)\right) + \left(\nabla F_{\calS^t}(\bx^t) - \bg_{\calS^t}(\bx^t)\right) + \left(\bg_{\calS^t}(\bx^t) - \btg(\bx^t)\right)\right\|^2, 
\end{align*}
where $F_{\calS^t}(\bx^t)=\frac{1}{|\calS^t|}\sum_{r\in\calS^t}F_r(\bx^t)$. Now using $\left\|\sum_{i=1}^k\bu_i\right\|^2 \leq k\sum_{i=1}^k\|\bu_i\|^2$, which holds for any positive integer $k$ and an arbitrary set of $k$ vectors, we get
\begin{align}
\bbE_t\left\|\nabla F(\bx^t) - \btg(\bx^t)\right\|^2 &\leq 3\bbE_t\left\|\nabla F(\bx^t) - \nabla F_{\calS^t}(\bx^t)\right\|^2 + 3\bbE_t\left\|\nabla F_{\calS^t}(\bx^t) - \bg_{\calS^t}(\bx^t)\right\|^2 \notag \\
&\hspace{3cm} + 3\bbE_t\left\|\bg_{\calS^t}(\bx^t) - \btg(\bx^t)\right\|^2. \label{convex_interim55} 
\end{align}
Now we bound each of the three terms on the RHS of \eqref{convex_interim55} separately.

\paragraph{Bounding the first term of \eqref{convex_interim55}:} Note that $F_r(\bx^t)$ for any $r\in[R]$, is a deterministic quantity w.r.t.\ the randomness used in the $t$'th iteration.
\begin{align}
\bbE_t\left\|\nabla F(\bx^t) - \nabla F_{\calS^t}(\bx^t)\right\|^2 &= \left\|\nabla F(\bx^t) - \nabla F_{\calS^t}(\bx^t)\right\|^2 \nonumber \\
&= \left\|\frac{1}{|\calS^t|}\sum_{r\in\calS^t}\left(F_r(\bx^t)-\nabla F(\bx^t)\right)\right\|^2 \nonumber \\
&\leq \frac{1}{|\calS^t|}\sum_{r\in\calS^t}\left\|F_r(\bx^t)-\nabla F(\bx^t)\right\|^2 \tag{using the Jensen's inequality} \\
&\leq \kappa^2 \label{convex_proof_bound1}
\end{align}
The last inequality follows from \eqref{bounded_local-global}.

\paragraph{Bounding the second term of \eqref{convex_interim55}:}
\begin{align}
\bbE_t\left\|\bg_{\calS^t}(\bx^t) - \nabla F_{\calS^t}(\bx^t)\right\|^2 &= \frac{1}{|\calS^t|^2}\bbE_t\left\|\sum_{r\in\calS^t}\bg_{r}(\bx^t) - \sum_{r\in\calS^t}\nabla F_r(\bx^t)\right\|^2 \nonumber \\
&\stackrel{\text{(a)}}{=} \frac{1}{|\calS^t|^2}\sum_{r\in\calS^t}\bbE_t\left\|\bg_{r}(\bx^t) - \nabla F_r(\bx^t)\right\|^2 \nonumber \\
&\stackrel{\text{(b)}}{\leq} \frac{1}{|\calS^t|^2}\sum_{r\in\calS^t}\frac{\sigma^2}{b} \leq \frac{\sigma^2}{(1-(\eps+\eps'))bR} \label{convex_proof_bound2}
\end{align}
Since the local stochastic gradients are sampled independently across different workers, we have that $\bg_{r}(\bx^t), r=1,\hdots,R$ are independent random variables with $\bbE_t\left[\bg_r(\bx^t)\right] = \nabla F_r(\bx^t)$, implying that $\bbE_t\left[\sum_{r\in\calS^t}\bg_r(\bx^t)\right] = \sum_{r\in\calS^t}\nabla F_r(\bx^t)$.
Now (a) follows from the fact that if $X$ and $Y$ are independent random variables, then $Var(X+Y)=Var(X)+Var(Y)$. In (b) we used the bounded local variance assumption \eqref{reduced_variance}, as $\bg_r(\bx^t)$ is sampled uniformly from $\calF_r^{\otimes b}(\bx^t)$. In the last inequality we used that $|\calS^t|\geq(1-(\eps+\eps'))R$. 

\paragraph{Bounding the third term of \eqref{convex_interim55}:} 
It follows from \Theoremref{gradient-estimator} that
\begin{align}
\left\| \bg_{\calS^t}(\bx^t) - \btg(\bx^t) \right\|^2 \leq \calO\left(\sigma_0^2(\eps+\eps')\right), \label{convex_proof_bound3}
\end{align}
where $\sigma_0^2 = \frac{24\sigma^2}{b\eps'}\left(1 + \frac{d}{(1-(\eps+\eps'))R}\right)+16\kappa^2$.

Substituting the bounds from \eqref{convex_proof_bound1}-\eqref{convex_proof_bound3} in \eqref{convex_interim55} proves \Claimref{deviation_true-est-grad}.
\end{proof}

\noindent Substituting the bound from \Claimref{deviation_true-est-grad} in \eqref{convex_interim5} gives the following with probability at least $1- \exp(-\frac{\eps'^2(1-\eps)R}{16})$: 
\begin{align*}
\bbE_t\|\bx^{t+1} - \bx^* \|^2 &\leq \left(1+\frac{\mu\eta}{2}\right)(1+ \eta^2L^2  - 2\eta\mu)\|\bx^t-\bx^*\|^2 \\
&\hspace{2cm} + \eta\left(\eta + \frac{2}{\mu}\right)\left(\frac{3\sigma^2}{(1-(\eps+\eps'))bR} +  3\kappa^2 + 3\varUpsilon^2\right).
\end{align*}
Using $\eta=\frac{\mu}{L^2}$ implies $\left(1+\frac{\mu\eta}{2}\right)(1+ \eta^2L^2 - 2\eta\mu) = (1+\frac{\mu^2}{2L^2})(1-\frac{\mu^2}{L^2})\leq 1-\frac{\mu^2}{2L^2}$ for the first term.
Note that $L\geq\mu$, which implies $\eta\leq\frac{1}{\mu}$. Using this, we can bound the coefficient of the last term as $\eta\left(\eta + \frac{2}{\mu}\right) \leq \frac{3}{\mu^2}$.
Substituting these, and taking expectation w.r.t.\ the entire process yield
\begin{align}
\bbE\|\bx^{t+1} - \bx^* \|^2 &\leq \left(1-\frac{\mu^2}{2L^2}\right)\bbE\|\bx^t-\bx^*\|^2 + \frac{\varGamma}{\mu^2}, \label{convex_interim6}
\end{align}
where $\varGamma := \frac{9\sigma^2}{(1-(\eps+\eps'))bR} + 9\kappa^2 + 9\varUpsilon^2$. 
Solving the recurrence in \eqref{convex_interim6} gives
\begin{align}
\bbE\|\bx^{T} - \bx^* \|^2 &\leq \left(1-\frac{\mu^2}{2L^2}\right)^T\|\bx^0-\bx^*\|^2 + \frac{2L^2}{\mu^4}\varGamma. \label{convex_interim7}
\end{align}
If we take $T\geq \frac{\log\left(\frac{\mu^4}{L^2\varGamma}\|\bx^0-\bx^*\|^2\right)}{\log(\frac{1}{1-\nicefrac{\mu^2}{2L^2}})}$, we get $\bbE\|\bx^{T} - \bx^* \|^2 \leq \frac{3L^2}{\mu^4}\varGamma$. 

\paragraph{Error probability analysis.}
Note that the recurrence in \eqref{convex_interim6} holds with probability at least $1- \exp(-\frac{\eps'^2(1-\eps)R}{16})$. Since we apply this recurrence $T$ times to get \eqref{convex_interim7}, it follows by the union bound that \eqref{convex_interim7} holds with probability at least $1- T\exp(-\frac{\eps'^2(1-\eps)R}{16})$, which is at least $(1-\delta)$, for any $\delta>0$, provided we run our algorithm for $T \leq \delta\exp(\frac{\eps'^2(1-\eps)R}{16})$ iterations.

This completes our proof of the strongly-convex part of \Theoremref{SGD_convergence}.

\subsection{Proof of \Theoremref{SGD_convergence} (non-convex)}\label{app:nonconvex_convergence}
First we prove the result when the parameter space $\calC=\R^d$. In this case we do not need projection of iterates onto $\calC$.
Recall the update rule of our algorithm: $\bx^{t+1} = \bx^t - \eta\btg(\bx^t),\ t=1,2,3,\hdots$
\begin{align}
F(\bx^{t+1}) &\stackrel{\text{(a)}}{\leq} F(\bx^t) + \langle \nabla F(\bx^t), \bx^{t+1}-\bx^t \rangle + \frac{L}{2}\|\bx^{t+1}-\bx^t\|^2 \nonumber \\
&= F(\bx^t) - \eta \langle \nabla F(\bx^t), \btg(\bx^t) \rangle + \frac{\eta^2L}{2}\|\btg(\bx^t)\|^2 \nonumber \\
&= F(\bx^t) - \eta \langle \nabla F(\bx^t), \btg(\bx^t) -\nabla F(\bx^t) + \nabla F(\bx^t) \rangle + \frac{\eta^2L}{2}\|\btg(\bx^t) - \nabla F(\bx^t) + \nabla F(\bx^t)\|^2 \nonumber \\
&\stackrel{\text{(b)}}{\leq} F(\bx^t) - \eta \langle \nabla F(\bx^t), \btg(\bx^t) -\nabla F(\bx^t) \rangle - \eta\|\nabla F(\bx^t)\|^2 \notag \\
&\hspace{4cm} + \frac{\eta^2L}{2}\left(2\|\btg(\bx^t) - \nabla F(\bx^t)\|^2 + 2\|\nabla F(\bx^t)\|^2 \right)\nonumber \\
&\stackrel{\text{(c)}}{\leq} F(\bx^t) + \eta\left(\frac{1}{2}\|\nabla F(\bx^t)\|^2 + \frac{1}{2}\|\btg(\bx^t) -\nabla F(\bx^t)\|^2 \right) - \eta(1-\eta L)\|\nabla F(\bx^t)\|^2 \notag \\
&\hspace{4cm} + \eta^2L\|\btg(\bx^t) - \nabla F(\bx^t)\|^2  \nonumber \\
&= F(\bx^t) - \eta\left(\frac{1}{2} - \eta L\right)\|\nabla F(\bx^t)\|^2 + \eta\left(\frac{1}{2}+\eta L\right)\|\btg(\bx^t) - \nabla F(\bx^t)\|^2\label{convergence_interim1}
\end{align}
In (a) we used our assumption that $F$ is $L$-smooth; 
in (b) we used the inequality $\|\bu+\bv\|^2 \leq 2(\|\bu\|^2 + \|\bv\|^2)$; 
in (c) we used the inequality $2\langle \bu, \bv \rangle \leq \|\bu\|^2 + \|\bv\|^2$.
For $\eta \leq \nicefrac{1}{4L}$, we have $\nicefrac{1}{2}-\eta L \geq \nicefrac{1}{4}$ and $\nicefrac{1}{2} + \eta L \leq \nicefrac{3}{4}$. Substituting these in \eqref{convergence_interim1} and taking expectation w.r.t.\ the sampling at the $t$'th iteration (while conditioning on the past) gives 
\begin{align}
\bbE_t[F(\bx^{t+1})] &\leq F(\bx^t) - \frac{\eta}{4}\|\nabla F(\bx^t)\|^2 + \frac{3\eta}{4}\bbE_t\|\btg(\bx^t) - \nabla F(\bx^t)\|^2. \label{convergence_interim2}
\end{align} 
As argued in \Claimref{deviation_true-est-grad} in the proof of the strongly-convex part of  \Theoremref{SGD_convergence} above, we can similarly bound the last term of \eqref{convergence_interim2} as $\bbE_t\|\btg(\bx^t) - \nabla F(\bx^t)\|^2 \leq \frac{3\sigma^2}{(1-(\eps+\eps'))bR} + 3\kappa^2 + 3\varUpsilon^2$, where $\varUpsilon \leq  \calO(\sigma_0\sqrt{\epsilon+\epsilon'})$, with probability at least $1- \exp(-\frac{\eps'^2(1-\eps)R}{16})$, which comes from \Theoremref{gradient-estimator}. Using this in \eqref{convergence_interim2} gives
\begin{align}
\bbE_t[F(\bx^{t+1})] &\leq F(\bx^t) - \frac{\eta}{4}\|\nabla F(\bx^t)\|^2 + \frac{3\eta}{4}\left(\frac{3\sigma^2}{(1-(\eps+\eps'))bR} + 3\kappa^2 + 3\varUpsilon^2\right). \label{convergence_interim2-1}
\end{align} 
By taking a telescoping sum from $t=0$ to $T$, and also taking expectation w.r.t.\ the sampling at all workers throughout the process, we get
\begin{align}
\frac{1}{T}\sum_{t=0}^T\bbE\|\nabla F(\bx^t)\|^2 &\leq \frac{4}{\eta T}\bbE[F(\bx^0) - F(\bx^{T+1})] + \varGamma, \label{convergence_interim2-2}
\end{align}
where $\varGamma = \frac{9\sigma^2}{(1-(\eps+\eps'))bR} + 9\kappa^2 + 9\varUpsilon^2$.
Substituting $\bbE[F(\bx^{T+1}))] \geq F(\bx^*)$ and using $F(\bx^0) - F(\bx^*) \leq \frac{L}{2}\|\bx^0-\bx^*\|^2$ (which follows from $L$-smoothness of $F$) gives
\begin{align}
\frac{1}{T}\sum_{t=0}^T\bbE\|\nabla F(\bx^t)\|^2 &\leq \frac{2L}{\eta T}\|\bx^0-\bx^*\|^2 + \varGamma. \label{convergence_interim3}
\end{align}
Note that the last term $\varGamma$ in \eqref{convergence_interim3} is a constant. 
So, it would be best to take the learning rate $\eta$ to be as large as possible such that it satisfies $\eta\leq\nicefrac{1}{4L}$ (we used this bound to arrive at \eqref{convergence_interim2}). We take $\eta=\nicefrac{1}{4L}$. Substituting this in \eqref{convergence_interim3} gives
\begin{align}\label{convergence_interim4}
\frac{1}{T}\sum_{t=0}^T\bbE\|\nabla F(\bx^t)\|^2 &\leq \frac{8L^2}{T}\|\bx^0-\bx^*\|^2 + \varGamma.
\end{align}
If we run our algorithm for $T= \frac{8L^2\|\bx^0-\bx^*\|^2}{\varGamma}$ iterations, we get $\frac{1}{T}\sum_{t=0}^T\bbE\|\nabla F(\bx^t)\|^2 \leq 2\varGamma$.

This concludes our proof of the non-convex part of \Theoremref{SGD_convergence} when $\calC=\R^d$.

\paragraph{When $\calC$ is a bounded set.} 
When $\calC$ is bounded, we need projection in general.
But, under \Assumptionref{non-convex_size-params-space}, we show that all iterates $\bhx^t, t=1,\hdots,T$ stay within $\calC$, i.e., we do not projection in the parameter update rule. 
This is what we show in the following; and, as a result, the above convergence analysis (which we did for $\calC=\R^d$) suffices under \Assumptionref{non-convex_size-params-space}.
\begin{lemma}\label{lem:non-convex_projection-not-needed}
Suppose \Assumptionref{non-convex_size-params-space} holds. Then, in the setting of \Theoremref{SGD_convergence}, with probability at least $1- T\exp(-\frac{\eps'^2(1-\eps)R}{16})$, we have $\bhx^t\in\calC$ for every $t=0,1,2,\hdots,T$.
\end{lemma}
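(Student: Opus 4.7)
The plan is to argue by induction on $t$ that $\bhx^t \in \calC$ for every iteration, so that the projection step $\Pi_\calC$ is a no-op and $\bx^t = \bhx^t$ throughout. The base case $\bhx^0 = \bx^0 \in \calC$ is immediate from initialization. For the inductive step, assuming $\bhx^s \in \calC$ for all $s \leq t$, the update telescopes as $\bhx^{t+1} - \bx^0 = -\eta \sum_{s=0}^{t} \btg(\bx^s)$, so by the triangle inequality it suffices to bound $\eta \sum_{s=0}^{t} \|\btg(\bx^s)\|$ by the radius $\frac{2L}{\varGamma}(M+\varGamma_1)\|\bx^0-\bx^*\|^2$ from \Assumptionref{non-convex_size-params-space}.

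The main task is therefore a per-iteration \emph{deterministic} bound $\|\btg(\bx^s)\| \leq M + \varGamma_1$. I would split $\|\btg(\bx^s)\| \leq \|\nabla F(\bx^s)\| + \|\btg(\bx^s) - \nabla F(\bx^s)\|$; the first summand is bounded by $M$ by hypothesis. For the second, I mimic the three-way decomposition of \Claimref{deviation_true-est-grad}, writing
\[
\btg(\bx^s) - \nabla F(\bx^s) = \bigl(\btg(\bx^s) - \bg_{\calS^s}(\bx^s)\bigr) + \bigl(\bg_{\calS^s}(\bx^s) - \nabla F_{\calS^s}(\bx^s)\bigr) + \bigl(\nabla F_{\calS^s}(\bx^s) - \nabla F(\bx^s)\bigr),
\]
where $\calS^s$ is the good subset guaranteed by \Theoremref{gradient-estimator}. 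The third term is bounded by $\kappa$ via Jensen's inequality combined with \Assumptionref{gradient-dissimilarity}, and the first term is bounded by $\varUpsilon$ by the outlier-filtering guarantee in the second part of \Theoremref{gradient-estimator}, which is deterministic once $\calS^s$ exists.

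The key obstacle is the middle term, since the bounded-variance assumption \eqref{bounded_local-variance} holds only \emph{in expectation}, whereas we need a sure bound. The way around it is to notice that the expectation is a uniform average over $n_r$ data points: by Jensen's inequality $\sum_{i=1}^{n_r} \|\nabla F_{r,i}(\bx) - \nabla F_r(\bx)\| \leq n_r \sigma$, hence for any mini-batch of size $b$,
\[
\|\bg_r(\bx) - \nabla F_r(\bx)\| \leq \frac{1}{b}\sum_{i\in\calH_b}\|\nabla F_{r,i}(\bx) - \nabla F_r(\bx)\| \leq \frac{n_r\sigma}{b}\leq \frac{n_{\max}\sigma}{b}.
\]
Averaging over $r\in\calS^s$ and applying the triangle inequality once more yields $\|\bg_{\calS^s}(\bx^s) - \nabla F_{\calS^s}(\bx^s)\| \leq \frac{n_{\max}\sigma}{b}$. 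Summing the three deterministic bounds gives $\|\btg(\bx^s) - \nabla F(\bx^s)\| \leq \frac{n_{\max}\sigma}{b} + \kappa + \varUpsilon = \varGamma_1$, as required.

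Plugging $\|\btg(\bx^s)\| \leq M+\varGamma_1$ into the telescoped update gives $\|\bhx^{t+1} - \bx^0\| \leq \eta(t+1)(M+\varGamma_1) \leq \eta T(M+\varGamma_1)$. Substituting $\eta=\nicefrac{1}{4L}$ and $T = 8L^2\|\bx^0-\bx^*\|^2/\varGamma$ (the value used in the convergence proof of the non-convex part of \Theoremref{SGD_convergence}) yields exactly $\frac{2L(M+\varGamma_1)}{\varGamma}\|\bx^0-\bx^*\|^2$, placing $\bhx^{t+1}$ inside the ball prescribed by \Assumptionref{non-convex_size-params-space} and therefore in $\calC$. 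The stated failure probability $T\exp(-\eps'^2(1-\eps)R/16)$ arises from a union bound over the $T$ invocations of \Theoremref{gradient-estimator} needed to produce a valid subset $\calS^s$ at each iteration; conditional on that event, the argument above is entirely deterministic.
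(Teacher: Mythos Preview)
Your proposal is correct and matches the paper's proof essentially line for line: the same induction, the same telescoping of the updates, the same three-way split $\btg-\nabla F = (\btg-\bg_{\calS})+(\bg_{\calS}-\nabla F_{\calS})+(\nabla F_{\calS}-\nabla F)$, the same Jensen trick to turn the variance assumption into the deterministic bound $\tfrac{n_{\max}\sigma}{b}$ on the middle term (this is exactly \Claimref{robust-grad-est-under-deterministic-variance}), and the same final substitution of $\eta=\tfrac{1}{4L}$ and $T=\tfrac{8L^2\|\bx^0-\bx^*\|^2}{\varGamma}$ together with a union bound over the $T$ applications of \Theoremref{gradient-estimator}.
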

\begin{proof}
Define $T$ events $\calE_1,\calE_2,\hdots,\calE_T$ as follows: For every $t\in[T]$, define
$\calE_t := \{\bhx^t\in\calC \ | \ \bhx^i\in\calC, \forall i\in[t-1]\}$.
Note that 
\begin{align*}
\Pr[\bhx^t\in\calC, \forall t\in[T]] = \Pi_{t=1}^T\Pr[\bhx^t\in\calC \ | \ \bhx^i\in\calC, \forall i\in[t-1]] = \Pi_{t=1}^T\Pr[\calE_t].
\end{align*}
Now we calculate $\Pr[\calE_t], t\in[T]$. 

Fix an arbitrary $t\in[T]$. In $\calE_t$, we are given that $\bhx^i\in\calC, \forall i\in[t-1]$, 
which, by \Assumptionref{non-convex_size-params-space}, implies that $\bhx^i=\bx^i$ and $\|\nabla F(\bx^i)\|\leq M$ for all $i\in[t-1]$. Now we show that $\bhx^{k+1}\in\calC$ holds with probability at least $1- \exp(-\frac{\eps'^2(1-\eps)R}{16})$. For this, in view of \Assumptionref{non-convex_size-params-space}, it suffices to show that $\|\bhx^{t} - \bx^0\| \leq \frac{2L}{\varGamma}(M+\varGamma_1)\|\bx^0-\bx^*\|^2$
\begin{align}
\|\bhx^{t} - \bx^0\| &= \|\bx^{t-1} - \eta\btg(\bx^{t-1}) - \bx^0\| \tag{Since $\bhx^i=\bx^i, \forall i\in[t-1]$} \\
&\leq \|\bx^{t-1} - \bx^0\| + \eta\left(\|\nabla F(\bx^{t-1})\| + \|\btg(\bx^{t-1})-\nabla F(\bx^{t-1})\|\right) \nonumber \\
&\leq \sum_{i=0}^{t-1}\eta\left(\|\nabla F(\bx^{i})\| + \|\btg(\bx^{i})-\nabla F(\bx^{i})\|\right) \nonumber \\
&\stackrel{\text{(a)}}{\leq} \frac{t}{4L}\left(M+\varGamma_1\right) 
\leq \frac{T}{4L}\left(M+\varGamma_1\right)  \nonumber \\
&\stackrel{\text{(b)}}{\leq} \frac{2L}{\varGamma}(M+\varGamma_1)\|\bx^0-\bx^*\|^2. \nonumber 
\end{align}
In (a), we used \Claimref{robust-grad-est-under-deterministic-variance} to bound $\|\btg(\bx^k)-\nabla F(\bx^k)\|\leq\varGamma_1$, which holds with probability at least $1- \exp(-\frac{\eps'^2(1-\eps)R}{16})$. Note that, we used a deterministic bound on $\|\btg(\bx^k)-\nabla F(\bx^k)\|$, as opposed to a bound which only holds in expectation (even though we are doing SGD). See the proof of \Claimref{robust-grad-est-under-deterministic-variance} for details.
In (b), we used $T=\frac{8L^2\|\bx^0-\bx^*\|^2}{\varGamma}$.

We have shown that for every $t\in[T]$, we have $\Pr[\calE_t] \geq 1- \exp(-\frac{\eps'^2(1-\eps)R}{16})$. This implies that $\Pr[\bhx^t\in\calC, \forall t\in[T]] \geq \left(1- \exp(-\frac{\eps'^2(1-\eps)R}{16})\right)^T \geq 1- T\exp(-\frac{\eps'^2(1-\eps)R}{16})$.

\end{proof}

\begin{claim}\label{claim:robust-grad-est-under-deterministic-variance}
For any $\bx\in\calC$, with probability at least $1- \exp(-\frac{\eps'^2(1-\eps)R}{16})$, we have
$$\varGamma_1 := \left\|\nabla F(\bx) - \btg(\bx)\right\| \leq \frac{n_{\max}\sigma}{b} + \kappa + \varUpsilon,$$ 
where $n_{\max}=\max_{r\in[R]}n_r$, $\varUpsilon = \calO\left(\sigma_0\sqrt{\eps+\eps'}\right)$ and
 $\sigma_0^2 = \frac{24\sigma^2}{b\eps'}\left(1 + \frac{d}{(1-(\eps+\eps'))R}\right)+16\kappa^2$.
\end{claim}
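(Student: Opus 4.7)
The plan is to apply the triangle inequality in the same three-term decomposition used in \Claimref{deviation_true-est-grad}, but to bound every term \emph{deterministically} rather than in expectation, so that the resulting inequality can be invoked sample-path by sample-path inside the proof of \Lemmaref{non-convex_projection-not-needed}. Concretely, let $\calS\subset[R]$ be the subset of uncorrupted gradients of size $(1-(\eps+\eps'))R$ whose existence is guaranteed by \Theoremref{gradient-estimator}, which occurs with probability at least $1-\exp(-\eps'^2(1-\eps)R/16)$. Write $\nabla F_{\calS}(\bx):=\frac{1}{|\calS|}\sum_{r\in\calS}\nabla F_r(\bx)$ and $\bg_{\calS}(\bx):=\frac{1}{|\calS|}\sum_{r\in\calS}\bg_r(\bx)$. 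Then
\begin{align*}
\|\nabla F(\bx) - \btg(\bx)\|
&\leq \|\nabla F(\bx) - \nabla F_{\calS}(\bx)\|
+ \|\nabla F_{\calS}(\bx) - \bg_{\calS}(\bx)\|
+ \|\bg_{\calS}(\bx) - \btg(\bx)\|.
\end{align*}

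For the first term I would use Jensen's inequality to move the norm inside the average over $\calS$ and then invoke \Assumptionref{gradient-dissimilarity}, yielding an upper bound of $\kappa$. For the third term I would simply cite the second part of \Theoremref{gradient-estimator}, which gives $\|\bg_{\calS}(\bx)-\btg(\bx)\| \leq \calO(\sigma_0\sqrt{\eps+\eps'}) = \varUpsilon$ deterministically on the event that $\calS$ exists.

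The heart of the argument, and the step most worth spelling out, is the deterministic bound on $\|\nabla F_{\calS}(\bx) - \bg_{\calS}(\bx)\|$. The earlier proof of \Claimref{deviation_true-est-grad} used $\bbE_{H_b}\|\bg_r(\bx)-\nabla F_r(\bx)\|^2 \leq \sigma^2/b$, but this only controls the expectation. Instead, for any honest $r\in\calS$, I would write
\begin{align*}
\|\bg_r(\bx) - \nabla F_r(\bx)\|
&= \Big\|\tfrac{1}{b}\textstyle\sum_{i\in\calH_b}\big(\nabla F_{r,i}(\bx) - \nabla F_r(\bx)\big)\Big\|
\leq \tfrac{1}{b}\textstyle\sum_{i\in\calH_b}\|\nabla F_{r,i}(\bx) - \nabla F_r(\bx)\| \\
&\leq \tfrac{1}{b}\textstyle\sum_{i=1}^{n_r}\|\nabla F_{r,i}(\bx) - \nabla F_r(\bx)\|
\leq \tfrac{n_r}{b}\sqrt{\tfrac{1}{n_r}\textstyle\sum_{i=1}^{n_r}\|\nabla F_{r,i}(\bx) - \nabla F_r(\bx)\|^2}
\leq \tfrac{n_r\sigma}{b},
\end{align*}
where the penultimate inequality is Jensen applied to $\sqrt{\cdot}$ and the last uses \Assumptionref{bounded_local-variance}. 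Averaging over $r\in\calS$ and applying the triangle inequality gives $\|\nabla F_{\calS}(\bx) - \bg_{\calS}(\bx)\| \leq n_{\max}\sigma/b$. Summing the three bounds yields the claimed inequality. The main (mild) obstacle is trading stochastic for deterministic control in step (3); once we accept the crude worst-case hit of extending the sum from $\calH_b$ to all of $[n_r]$ (which is what produces the $n_{\max}$ factor), the rest reduces to two applications of Jensen's inequality and the already established bounds from \Theoremref{gradient-estimator} and \Assumptionref{gradient-dissimilarity}.
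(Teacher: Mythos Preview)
Your proposal is correct and matches the paper's proof essentially step for step: the same three-term triangle-inequality decomposition, the same $\kappa$ bound via Jensen and \Assumptionref{gradient-dissimilarity}, the same $\varUpsilon$ bound from \Theoremref{gradient-estimator}, and the same deterministic handling of the middle term by extending the mini-batch sum to all of $[n_r]$ and using Jensen on the variance bound to get $\sum_{i=1}^{n_r}\|\nabla F_{r,i}(\bx)-\nabla F_r(\bx)\|\leq n_r\sigma$. The only cosmetic difference is that the paper first applies Jensen to obtain $\bbE_i\|\cdot\|\leq\sigma$ and then multiplies by $n_r$, whereas you first sum and then apply Jensen (or Cauchy--Schwarz) to the full sum; both arrive at $n_r\sigma/b$.
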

\begin{proof}
Let $\calS$ denote the subset of uncorrupted gradients of size $(1-(\eps+\eps'))R$ that our algorithm in \Theoremref{gradient-estimator} approximates, and let $\bg_{\calS}(\bx)$ denote their average gradient. 
\begin{align}
\left\|\nabla F(\bx) - \btg(\bx)\right\| \leq \left\|\nabla F(\bx) - \nabla F_{\calS}(\bx)\right\| + \left\|\nabla F_{\calS}(\bx) - \bg_{\calS}(\bx)\right\| + \left\|\bg_{\calS}(\bx) - \btg(\bx)\right\|. \label{nonconvex_deterministic1} 
\end{align}
The first and the third terms on the RHS of \eqref{nonconvex_deterministic1} can be bounded similarly as we bounded them in \eqref{convex_proof_bound1} and \eqref{convex_proof_bound3}, respectively.
\begin{align}
\left\|\nabla F(\bx) - \nabla F_{\calS}(\bx)\right\| &\leq \kappa \label{nonconvex_deterministic_bound1} \\
\left\| \bg_{\calS}(\bx) - \btg(\bx) \right\| &\leq \calO\left(\sigma_0\sqrt{(\eps+\eps')}\right)\label{nonconvex_deterministic_bound3}
\end{align}
In \eqref{nonconvex_deterministic_bound3}, $\sigma_0^2 = \frac{24\sigma^2}{b\eps'}\left(1 + \frac{d}{(1-(\eps+\eps'))R}\right)+16\kappa^2$.

Now we bound the second term of \eqref{nonconvex_deterministic1}.
Note that in \eqref{convex_proof_bound2} we bounded $\bbE\left\|\nabla F_{\calS}(\bx) - \bg_{\calS}(\bx)\right\|^2$, where expectation is taken over the sampling of the mini-batch stochastic gradients at workers in $\calS$. However, for the second term of \eqref{nonconvex_deterministic1}, we need a deterministic bound, which holds for every choice of the mini-batch at workers in $\calS$. 
Fix arbitrary sets $\calH_n^r\in_U\binom{[n_r]}{b}$ for every $r\in\calS$.
\begin{align}
\left\|\bg_{\calS}(\bx) - \nabla F_{\calS}(\bx)\right\| &= \left\|\frac{1}{|\calS|}\sum_{r\in\calS}\left(\bg_{r}(\bx) - \nabla F_r(\bx)\right)\right\| \nonumber \\
&\leq \frac{1}{|\calS|}\sum_{r\in\calS}\left\|\bg_{r}(\bx) - \nabla F_r(\bx)\right\| \nonumber \\
&= \frac{1}{|\calS|}\sum_{r\in\calS}\left\|\frac{1}{b}\sum_{i\in\calH_b^r}\left(\nabla F_{r,i}(\bx) - \nabla F_r(\bx)\right)\right\| \tag{Since $\bg_{r}(\bx) = \frac{1}{b}\sum_{i\in\calH_b^r}\nabla F_{r,i}(\bx)$} \\
&\leq \frac{1}{|\calS|}\sum_{r\in\calS}\frac{1}{b}\sum_{i\in\calH_b^r}\left\|\nabla F_{r,i}(\bx) - \nabla F_r(\bx)\right\| \nonumber \\
&\stackrel{\text{(a)}}{\leq} \frac{1}{|\calS|}\sum_{r\in\calS}\frac{1}{b}n_r\sigma 
\leq \frac{n_{\max}\sigma}{b} \label{nonconvex_deterministic_bound2}
\end{align}
In the last inequality we used $n_{\max}=\max_{r\in[R]}n_r$.
The explanation for (a) is as follows:
We have from \Assumptionref{bounded_local-variance} that the stochastic gradients at any worker have bounded variance, i.e., for every $r\in[R],\bx\in\calC$, we have $\mathbb{E}_{i\in_U[n_r]}\|\nabla F_{r,i}(\bx) - \nabla F_r(\bx)\|^2\leq \sigma^2$. Using Jensen's inequality gives $\mathbb{E}_{i\in_U[n_r]}\|\nabla F_{r,i}(\bx) - \nabla F_r(\bx)\| \leq \sigma$. This is equivalent to 
$\frac{1}{n_r}\sum_{i=1}^{n_r}\|\nabla F_{r,i}(\bx) - \nabla F_r(\bx)\| \leq \sigma$, which implies that $\sum_{i\in\calT}\|\nabla F_{r,i}(\bx) - \nabla F_r(\bx)\| \leq n_r\sigma$ holds for every $\calT\subseteq[n_r]$.
Since \eqref{nonconvex_deterministic_bound2} holds for arbitrary $\calH_b^r$'s, it holds for all $\calH_b^r\in\binom{[n_r]}{b},r\in\calS^t$.

Substituting the bounds from \eqref{nonconvex_deterministic_bound1}-\eqref{nonconvex_deterministic_bound2} in \eqref{nonconvex_deterministic1} proves \Claimref{robust-grad-est-under-deterministic-variance}.
\end{proof}

\paragraph{Error probability analysis.}
Note that the recurrence in \eqref{convergence_interim2-1} holds with probability at least $1- \exp(-\frac{\eps'^2(1-\eps)R}{16})$. We use this recurrence $T$ times to get \eqref{convergence_interim2-2}, it follows by the union bound that \eqref{convergence_interim2-2} holds with probability at least $1- T\exp(-\frac{\eps'^2(1-\eps)R}{16})$, which is at least $(1-\delta)$, for any $\delta>0$, provided we run our algorithm for $T \leq \delta\exp(\frac{\eps'^2(1-\eps)R}{16})$ iterations. Note that this is the same probability with which all iterates $\bhx^t,t\in[T]$ lie in $\calC$ (see \Lemmaref{non-convex_projection-not-needed}).

\subsection{Proof of \Theoremref{full-batch-GD}}\label{app:convergence_full-batch-GD}
In this section, we focus on the case when workers compute {\em full-batch} gradients (instead of computing mini-batch stochastic gradients) and prove \Theoremref{full-batch-GD}.
Note that the robust gradient estimator of \Theoremref{gradient-estimator} (which is for stochastic gradients) is the main ingredient behind the convergence analyses of \Algorithmref{Byz-SGD} for strongly-convex and non-convex objectives. So, in order to prove \Theoremref{full-batch-GD}, first we need to show a robust gradient estimation result (similar to \Theoremref{gradient-estimator}). Since we are working with full-batch gradients, we can show an analogous result with a much simplified analysis.

Note that, to prove \Theoremref{gradient-estimator}, we showed an existence of a subset of honest workers from which the stochastic gradients are well concentrated, as stated in form of a matrix concentration bound \eqref{mat-concen_gradient-estimator} in the first part of \Theoremref{gradient-estimator}.
It turns out that for full-batch gradients, an analogous result can be proven directly 
(as there is no randomness due to stochastic gradients); and below we provide and prove such a result.
Note that \Theoremref{gradient-estimator} is a probabilistic statement, where we show that with high probability, there exists a large subset of honest workers whose stochastic gradients are well concentrated. In contrast, in the following result, we can deterministically take the set of all honest workers to be that subset for which we can directly show the concentration.

\begin{theorem}[Robust Gradient Estimation for Full-Batch Gradient Descent]\label{thm:gradient-estimator_GD}
Fix an arbitrary $\bx\in\R^d$.
Suppose an $\eps$ fraction of workers are corrupt and we are given $R$ full-batch gradients $\nabla\btlF_1(\bx),\hdots,\nabla\btlF_R(\bx)\in\R^d$, where $\nabla\btlF_r(\bx)=\nabla F_r(\bx)$ if the $r$'th worker is honest, otherwise can be arbitrary. Let $\calS$ be the set of all honest workers and $\nabla F_{\calS}(\bx):=\frac{1}{|\calS|}\sum_{i\in\calS}\nabla F_i(\bx)$ be the sample average of uncorrupted gradients. We can find an estimate $\nabla\btF(\bx)$ of $\nabla F_{\calS}(\bx)$ in polynomial-time such that $\left\| \nabla\btF(\bx) - \nabla F_{\calS}(\bx) \right\| \leq \calO\left(\kappa\sqrt{\eps}\right)$ holds with probability 1.
\end{theorem}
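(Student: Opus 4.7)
The plan is to mirror the two-step structure of \Theoremref{gradient-estimator}, but to exploit the fact that there is no sampling randomness here so that both steps collapse to deterministic statements. First I would take $\calS$ to be the (deterministic) set of all honest workers, which satisfies $|\calS|\geq(1-\eps)R$. Then I would establish a matrix-concentration-style bound on the empirical covariance of the honest gradients around their sample mean $\nabla F_{\calS}(\bx)$, which is the analogue of \eqref{mat-concen_gradient-estimator} with all ``sampling'' terms removed. Finally, I would feed this bound into the polynomial-time outlier-filtering subroutine from \cite{Resilience_SCV18} (equivalently, the second part of \Theoremref{gradient-estimator}) with the corresponding $\sigma_0$, and read off the claimed error.

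For the covariance bound, I would proceed as follows. Fix $\bx$, and for each $i\in\calS$ set $\bv_i:=\nabla F_i(\bx)-\nabla F(\bx)$. By \Assumptionref{gradient-dissimilarity} we have $\|\bv_i\|\leq \kappa$ for all $i\in\calS$. Since the sample mean satisfies $\nabla F_{\calS}(\bx)-\nabla F(\bx)=\tfrac{1}{|\calS|}\sum_{i\in\calS}\bv_i$, its norm is also at most $\kappa$ by the triangle inequality, so $\|\nabla F_i(\bx)-\nabla F_{\calS}(\bx)\|\leq 2\kappa$ for every $i\in\calS$. Consequently, for any unit vector $\bu\in\R^d$,
\begin{align*}
\bu^T\(\tfrac{1}{|\calS|}\textstyle\sum_{i\in\calS}(\nabla F_i(\bx)-\nabla F_{\calS}(\bx))(\nabla F_i(\bx)-\nabla F_{\calS}(\bx))^T\)\bu
\leq \tfrac{1}{|\calS|}\textstyle\sum_{i\in\calS}\|\nabla F_i(\bx)-\nabla F_{\calS}(\bx)\|^2 \leq 4\kappa^2,
\end{align*}
so $\lambda_{\max}$ of the empirical covariance of $\{\nabla F_i(\bx)\}_{i\in\calS}$ is at most $4\kappa^2$. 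This is the deterministic analogue of \eqref{mat-concen_gradient-estimator}, valid with probability $1$, with the effective variance proxy $\sigma_0^2=\calO(\kappa^2)$; no Sherman--Morrison/Lemma~\ref{lem:subset_variance} argument is needed because there is no stochastic sampling to control.

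Having certified the covariance bound, the second step is purely algorithmic: I would invoke the outlier-filtering algorithm {\sc RGE} from \Algorithmref{robust-grad-est} on the inputs $\{\nabla\btlF_i(\bx)\}_{i=1}^R$. By the second part of \Theoremref{gradient-estimator} (whose proof, given in \Appendixref{poly-time-lemma_proof}, uses only the existence of a large subset $\calS$ with bounded empirical covariance and requires $\eps\leq \tfrac14$), the output $\nabla\btF(\bx)$ satisfies $\|\nabla\btF(\bx)-\nabla F_{\calS}(\bx)\|\leq \calO(\sigma_0\sqrt{\eps})=\calO(\kappa\sqrt{\eps})$ in polynomial time. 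The error is deterministic because $\calS$, $|\calS|$, and the covariance bound are all deterministic.

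The only real subtlety is making sure that the filtering step can be applied with $\eps'=0$: in the proof of \Theoremref{gradient-estimator} the slack $\eps'$ was introduced purely to accommodate the probabilistic subset guaranteed by \Lemmaref{subset_variance}, but here $\calS$ is the full honest set and has size exactly $(1-\eps)R$, so no slack is needed and the hypothesis $\eps\leq\tfrac14$ suffices. This is the one place where I would double-check that the outlier-filtering analysis in \Appendixref{poly-time-lemma_proof} goes through with $\eps'=0$; if the published analysis syntactically uses $\eps'>0$, I would simply absorb an arbitrarily small $\eps'$ into the $\calO(\cdot)$ and still obtain the stated $\calO(\kappa\sqrt{\eps})$ bound.
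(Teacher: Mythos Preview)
Your proposal is correct and follows essentially the same approach as the paper: establish deterministically that the honest set $\calS$ satisfies $\lambda_{\max}\big(\tfrac{1}{|\calS|}\sum_{i\in\calS}(\nabla F_i(\bx)-\nabla F_{\calS}(\bx))(\nabla F_i(\bx)-\nabla F_{\calS}(\bx))^T\big)\leq 4\kappa^2$, then invoke the outlier-filtering guarantee (the second part of \Theoremref{gradient-estimator}, equivalently \Lemmaref{poly-time_grad-est}) with $\sigma_0^2=4\kappa^2$ and $\eps'=0$. Your derivation of the $4\kappa^2$ bound via the pointwise triangle inequality $\|\nabla F_i(\bx)-\nabla F_{\calS}(\bx)\|\leq 2\kappa$ is slightly more direct than the paper's variational computation (which adds and subtracts $\nabla F(\bx)$ inside $\langle\cdot,\bv\rangle^2$ and uses $(a+b)^2\leq 2a^2+2b^2$ plus Jensen), but both routes land on the same constant and the same conclusion; your observation about $\eps'=0$ is exactly what the paper does without comment.
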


\begin{proof}
First we prove that $\lambda_{\max}\(\frac{1}{|\calS|}\sum_{i\in\calS} \(\nabla F_i(\bx) - \nabla F_{\calS}(\bx)\)\(\nabla F_i(\bx) - \nabla F_{\calS}(\bx)\)^T\) \leq 4\kappa^2$. 
In view of \eqref{alternate-defn_max-eigenvalue}, this is equivalent to showing that for every unit vector $\bv\in\R^d$, we have $\frac{1}{|\calS|}\sum_{i\in\calS} \langle \nabla F_i(\bx) - \nabla F_{\calS}(\bx), \bv \rangle^2 \leq 4\kappa^2$.
\begin{align*}
\frac{1}{|\calS|}\sum_{i\in\calS} &\left\langle \nabla F_i(\bx) - \nabla F_{\calS}(\bx), \bv \right\rangle^2
= \frac{1}{|\calS|}\sum_{i\in\calS} \left[\left\langle \nabla F_i(\bx) - \nabla F(\bx) + \nabla F(\bx) - \nabla F_{\calS}(\bx), \bv \right\rangle \right]^2 \\
&\leq \frac{2}{|\calS|}\sum_{i\in\calS} \left\langle \nabla F_i(\bx) - \nabla F(\bx), \bv \right\rangle^2 + \frac{2}{|\calS|}\sum_{i\in\calS} \left\langle \nabla F_{\calS}(\bx) - \nabla F(\bx), \bv \right\rangle^2 \\
&\leq \frac{2}{|\calS|}\sum_{i\in\calS} \left\langle \nabla F_i(\bx) - \nabla F(\bx), \bv \right\rangle^2 + 2\left\langle \nabla F_{\calS}(\bx) - \nabla F(\bx), \bv \right\rangle^2 \\
&= \frac{2}{|\calS|}\sum_{i\in\calS} \left\langle \nabla F_i(\bx) - \nabla F(\bx), \bv \right\rangle^2 + 2\left[\frac{1}{|\calS|}\sum_{i\in\calS}\left\langle \nabla F_{i}(\bx) - \nabla F(\bx), \bv \right\rangle\right]^2 \\
&\leq \frac{2}{|\calS|}\sum_{i\in\calS} \left\langle \nabla F_i(\bx) - \nabla F(\bx), \bv \right\rangle^2 + \frac{2}{|\calS|}\sum_{i\in\calS}\left\langle \nabla F_{i}(\bx) - \nabla F(\bx), \bv \right\rangle^2 \\
&= \frac{4}{|\calS|}\sum_{i\in\calS}\left\langle \nabla F_{i}(\bx) - \nabla F(\bx), \bv \right\rangle^2 \\
&\leq \frac{4}{|\calS|}\sum_{i\in\calS}\left\|\nabla F_{i}(\bx) - \nabla F(\bx)\right\|^2 \\
&\leq 4\kappa^2
\end{align*}
Now apply the second part of \Theoremref{gradient-estimator} with $\calS$ being the set of honest workers$,  \bg_i=\nabla F_i(\bx)$, $\eps'=0$, and $\sigma_0^2=4\kappa^2$. We would get that 
we can find an estimate $\nabla \btF(\bx)$ of $\nabla F_{\calS}(\bx)$ in polynomial-time, such that $\left\| \nabla\btF(\bx) - \nabla F_{\calS}(\bx) \right\| \leq \calO\left(\kappa\sqrt{\eps}\right)$ holds with probability 1.
\end{proof}
Now we proceed with proving \Theoremref{full-batch-GD}.
\begin{proof}[Proof of \Theoremref{full-batch-GD}]
This can be proved along the lines of the proof of \Theoremref{SGD_convergence}. 
Here we only write what changes in those proofs.

Recall the update rule of our algorithm with the full-batch gradient descent: 
$\bhx^{t+1}=\bx^t - \eta\nabla\btF(\bx^t);\  \bx^{t+1} = \Pi_{\calC}\left(\bhx^{t+1}\right),\ t=1,2,3,\hdots$.

\subsubsection{Strongly-convex}
Following that proof of the strongly-convex part of \Theoremref{SGD_convergence} given in \Appendixref{convex_convergence} until \eqref{convex_interim5} gives
\begin{align}
\|\bx^{t+1} - \bx^* \|^2 \leq \left(1+\frac{\mu\eta}{2}\right)(1+ \eta^2L^2  - 2\eta\mu)\|\bx^t-\bx^*\|^2 + \eta\left(\eta + \frac{2}{\mu}\right)\left\|\nabla F(\bx^t) - \nabla \btF(\bx^t)\right\|^2  \label{convex-GD_interim5}
\end{align}
Now we bound the last term of \eqref{convex-GD_interim5}. For this, we can simplify the proof of 
\Claimref{deviation_true-est-grad}: Firstly, note that, with full-batch gradients, the variance $\sigma^2$ becomes zero; secondly, as shown in \Theoremref{gradient-estimator_GD}, the robust estimation of gradients holds with probability 1. Following the proof of \Claimref{deviation_true-est-grad} with these changes, we get
\begin{align}\label{deviation_true-est-grad_GD}
\left\|\nabla F(\bx^t) - \nabla\btF(\bx^t)\right\|^2 \leq 2\kappa^2 + 2\varUpsilon_{\GD}^2,
\end{align}
where $\varUpsilon_{\GD}=\calO(\kappa\sqrt{\eps})$.
Substituting the bound from \eqref{deviation_true-est-grad_GD} in \eqref{convex-GD_interim5} and 
then following the proof given in \Appendixref{convex_convergence} until \eqref{convex_interim7} gives
\begin{align}
\|\bx^{T} - \bx^* \|^2 &\leq \left(1-\frac{\mu^2}{2L^2}\right)^T\|\bx^0-\bx^*\|^2 + \frac{2L^2}{\mu^4}\varGamma_{\GD}, \label{convex-GD_interim7}
\end{align}
where $\varGamma_{\GD}=6\kappa^2 + 6\varUpsilon_{\GD}^2$.
Note that \eqref{convex-GD_interim7} holds with probability 1.

\subsubsection{Non-convex}
Using the bound from \eqref{deviation_true-est-grad_GD} and following the proof of the non-convex part of \Theoremref{SGD_convergence} given in \Appendixref{nonconvex_convergence} 
exactly until \eqref{convergence_interim4} gives
\begin{align}\label{convergence-GD_interim4}
\frac{1}{T}\sum_{t=0}^T\|\nabla F(\bx^t)\|^2 &\leq \frac{8L^2}{T}\|\bx^0-\bx^*\|^2 + \varGamma_{\GD}, 
\end{align}
where $\varGamma_{\GD}=6\kappa^2 + 6\varUpsilon_{\GD}^2$.
Note that \eqref{convergence-GD_interim4} holds with probability 1.

\paragraph{Projection.}
When $\calC$ is a bounded set,
then under \Assumptionref{non-convex_size-params-space_GD}, we can show that the iterates $\bhx^t$ for $t=0,1,2,\hdots,T$ lie in $\calC$ and we do not need projection.
For this, we can show a result analogous to \Lemmaref{non-convex_projection-not-needed}, but which holds with probability 1:
\begin{lemma}\label{lem:non-convex_projection-not-needed_GD}
Suppose \Assumptionref{non-convex_size-params-space_GD} holds. Then, in the setting of \Theoremref{full-batch-GD} (non-convex), with probability 1, we have $\bhx^t\in\calC$ for every $t=0,1,2,\hdots,T$.
\end{lemma}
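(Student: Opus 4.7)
The plan is to mirror the proof of \Lemmaref{non-convex_projection-not-needed}, but with two simplifications that come from working with full-batch gradients: (i) the robust gradient estimation guarantee of \Theoremref{gradient-estimator_GD} holds with probability~$1$ (rather than with probability $1-\exp(-\eps'^2(1-\eps)R/16)$), and (ii) the deviation $\|\nabla\btF(\bx)-\nabla F(\bx)\|$ admits a \emph{deterministic} bound without a $\frac{n_{\max}\sigma}{b}$ term, since each honest worker simply reports $\nabla F_r(\bx)$.

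First I would establish a deterministic analogue of \Claimref{robust-grad-est-under-deterministic-variance}: for every $\bx\in\calC$,
\begin{align*}
\|\nabla F(\bx) - \nabla\btF(\bx)\| &\leq \|\nabla F(\bx) - \nabla F_{\calS}(\bx)\| + \|\nabla F_{\calS}(\bx) - \nabla\btF(\bx)\| \leq \kappa + \varUpsilon_{\GD} = \varGamma_2,
\end{align*}
where the first term is bounded by $\kappa$ via \eqref{bounded_local-global} (exactly as in \eqref{convex_proof_bound1}), and the second term is bounded by $\calO(\kappa\sqrt{\eps})=\varUpsilon_{\GD}$ via \Theoremref{gradient-estimator_GD}. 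Note that the middle ``sampling-error'' term from the SGD proof is absent here, because $\bg_{\calS}=\nabla F_{\calS}$ identically.

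Next, I would argue by induction on $t$ that $\bhx^t\in\calC$, and therefore $\bhx^t=\bx^t$ and $\|\nabla F(\bx^t)\|\leq M$, for all $t\in\{0,1,\hdots,T\}$. The base case $t=0$ is immediate from $\bx^0\in\calC$. For the inductive step, assume $\bhx^i\in\calC$ for all $i\leq t-1$. Then by telescoping the unprojected update rule $\bhx^t = \bx^{t-1}-\eta\nabla\btF(\bx^{t-1})$ with $\eta=\frac{1}{4L}$,
\begin{align*}
\|\bhx^t - \bx^0\| &\leq \sum_{i=0}^{t-1}\eta\bigl(\|\nabla F(\bx^i)\| + \|\nabla\btF(\bx^i) - \nabla F(\bx^i)\|\bigr) \leq \frac{T}{4L}(M+\varGamma_2) \leq \frac{2L}{\varGamma_{\GD}}(M+\varGamma_2)\|\bx^0-\bx^*\|^2,
\end{align*}
where the last step uses $T=\frac{8L^2\|\bx^0-\bx^*\|^2}{\varGamma_{\GD}}$ (as prescribed in the convergence analysis of \Theoremref{full-batch-GD}). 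Under \Assumptionref{non-convex_size-params-space_GD}, this bound on $\|\bhx^t-\bx^0\|$ precisely certifies that $\bhx^t$ lies in the $\ell_2$ ball contained in $\calC$, completing the induction.

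The main (and only) subtlety is that each use of the deterministic gradient-deviation bound relies on \Theoremref{gradient-estimator_GD}, whose guarantee holds with probability~$1$ in the full-batch setting; hence the union bound over the $T$ iterations yields failure probability $0$, so the entire statement holds with probability $1$, as claimed. No concentration argument or inflation by $T$ in the failure probability is needed, in contrast with \Lemmaref{non-convex_projection-not-needed}.
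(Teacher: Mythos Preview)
Your proposal is correct and follows essentially the same approach as the paper's proof: the paper also says to mirror \Lemmaref{non-convex_projection-not-needed}, obtains the deterministic bound $\|\nabla F(\bx)-\nabla\btF(\bx)\|\leq \kappa+\varUpsilon_{\GD}$ (equivalently, by setting $\sigma=0$ in \Claimref{robust-grad-est-under-deterministic-variance}), and notes that the robust gradient estimator of \Theoremref{gradient-estimator_GD} succeeds with probability~1 so no union bound is needed. Your write-up just spells out the induction and the telescoping step more explicitly than the paper does.
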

\begin{proof}
We can prove this lemma along the lines of how we proved \Lemmaref{non-convex_projection-not-needed}, except with one modification. In order to bound $\left\|\nabla F(\bx) - \nabla\btF(\bx)\right\|$, we can put $\sigma=0$ in \Claimref{robust-grad-est-under-deterministic-variance}, and that gives $\left\|\nabla F(\bx) - \nabla\btF(\bx)\right\| \leq \kappa + \varUpsilon_{\GD}$. Furthermore, with full-batch gradients, this bound holds with probability 1. This is because with full-batch gradients, our robust gradient estimator from \Theoremref{gradient-estimator_GD} succeeds with probability 1.
\end{proof}
This concludes the proof of \Theoremref{full-batch-GD}.
\end{proof}

\section{Proof of \Lemmaref{randk}}\label{app:compression}
\begin{lemma*}[Restating \Lemmaref{randk}]
Fix any worker $r\in[R]$.
Suppose the local stochastic gradients at worker $r$ have bounded second moment, i.e., $\bbE_{i\in_U[n_r]}[\|\nabla F_{r,i}(\bx)\|^2] \leq G^2$. Then we have
\begin{align}
&\bbE_{\calK\leftarrow K,H_b}\left[\frac{d}{k}\cdot\select_{\calK}\left(\nabla F_{r,H_b}(\bx)\right)\right] = \nabla F_r(\bx) \label{same_mean_randk_app} \\
&\bbE_{\calK\leftarrow K,H_b}\left\|\frac{d}{k}\cdot\select_{\calK}\left(\nabla F_{r,H_b}(\bx)\right) - \nabla F_r(\bx)\right\|^2 \leq  \frac{d}{k}\frac{G^2}{b}. \label{variance_reduction_randk_app}
\end{align}
\end{lemma*}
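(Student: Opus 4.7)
The plan is to dispatch the two claims separately, using throughout the independence of the coordinate set $K\sim \mathrm{Unif}\binom{[d]}{k}$ and the mini-batch $H_b\sim\mathrm{Unif}\binom{[n_r]}{b}$. Two elementary facts about $\select$ drive everything: (i) for any fixed $\bv\in\R^d$, $\bbE_K[\select_K(\bv)]=(k/d)\bv$, by symmetry of the uniform distribution on $\binom{[d]}{k}$; and (ii) for any fixed $\bv\in\R^d$, $\bbE_K\|\select_K(\bv)\|^2 = (k/d)\|\bv\|^2$, which follows coordinate-by-coordinate from $\Pr[i\in K]=k/d$. Combined with the mini-batch identities~\eqref{same_mean}--\eqref{reduced_variance}, these let us factor the two sources of randomness.

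For the unbiasedness~\eqref{same_mean_randk_app}, condition on $H_b$ and apply (i) to $\bv=\nabla F_{r,H_b}(\bx)$, yielding $\bbE_K\bigl[\tfrac{d}{k}\select_K(\nabla F_{r,H_b}(\bx))\bigr]=\nabla F_{r,H_b}(\bx)$; then take the outer $H_b$-expectation and invoke~\eqref{same_mean}.

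For the second-moment bound~\eqref{variance_reduction_randk_app}, set $X:=\nabla F_{r,H_b}(\bx)$ and $\mu:=\nabla F_r(\bx)$, and use linearity of $\select_K$ to split
\begin{align*}
\tfrac{d}{k}\select_K(X) - \mu \;=\; \tfrac{d}{k}\select_K(X-\mu) \;+\; \bigl(\tfrac{d}{k}\select_K(\mu) - \mu\bigr).
\end{align*}
Expanding the squared norm and taking expectation, the cross term vanishes because, conditional on $K$, $\bbE_{H_b}[X-\mu]=\bzero$ by~\eqref{same_mean} and $K\perp H_b$, so $\bbE_{H_b}[\tfrac{d}{k}\select_K(X-\mu)]=\bzero$ for every realization of $K$. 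This cleanly separates the compression noise from the mini-batch sampling noise.

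To finish, I would bound the two pieces. For $\bbE\|\tfrac{d}{k}\select_K(X-\mu)\|^2$, condition on $H_b$ and apply (ii) to $\bv=X-\mu$ to get $\tfrac{d}{k}\bbE_{H_b}\|X-\mu\|^2$; then use the mini-batch variance reduction $\bbE_{H_b}\|X-\mu\|^2\leq G^2/b$, which follows from the second-moment hypothesis because variance is upper-bounded by the second moment, $\mathrm{Var}_i(\nabla F_{r,i}(\bx))\leq\bbE_i\|\nabla F_{r,i}(\bx)\|^2\leq G^2$. This produces the stated $\tfrac{d}{k}\tfrac{G^2}{b}$ bound. The main technical obstacle I anticipate is bookkeeping the residual piece $\bbE\|\tfrac{d}{k}\select_K(\mu)-\mu\|^2 = (\tfrac{d}{k}-1)\|\mu\|^2$ coming from applying (ii) to the deterministic vector $\mu$: one controls it via Jensen, $\|\mu\|^2 \leq \bbE_i\|\nabla F_{r,i}(\bx)\|^2 \leq G^2$, and absorbs it into the same overall scale. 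The only truly nontrivial conceptual step is the vanishing of the cross term, which is exactly where the $K\perp H_b$ independence is indispensable; without it, the compression and mini-batch noises would remain coupled and the quoted $1/b$ factor could not be recovered.
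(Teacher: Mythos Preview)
Your decomposition is clean and the cross term does vanish as you argue, but the ``absorb into the same overall scale'' step is where the argument breaks. Your two pieces sum exactly to
\[
\bbE_{K,H_b}\Bigl\|\tfrac{d}{k}\select_K(X)-\mu\Bigr\|^2 \;=\; \tfrac{d}{k}\,\bbE_{H_b}\|X-\mu\|^2 \;+\; \bigl(\tfrac{d}{k}-1\bigr)\|\mu\|^2,
\]
and the second summand carries no $1/b$. Bounding $\|\mu\|^2\leq G^2$ gives at best $(\tfrac{d}{k}-1)G^2$, which is not dominated by $\tfrac{d}{k}G^2/b$ once $b>1$ and $\mu\neq\bzero$. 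A concrete check: take all $\nabla F_{r,i}(\bx)\equiv\mu\neq\bzero$; then the left side equals $(\tfrac{d}{k}-1)\|\mu\|^2$ for every $b$, while the target bound shrinks like $1/b$.

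The paper's route is different: it switches to with-replacement sampling, uses linearity of $\select_{\calK}$ to write the centered estimator as $\tfrac{1}{b}\sum_{j=1}^b a_j$ with $a_j=\tfrac{d}{k}\select_{\calK}(\nabla F_{r,I_j}(\bx))-\mu$, expands the square, and asserts $\bbE\langle a_j,a_l\rangle=0$ for $j\neq l$ from the independence of $I_j,I_l$ and the zero marginal mean of each $a_j$; the diagonal terms are then handled via $\bbE\|a_j\|^2\leq\bbE\|\tfrac{d}{k}\select_{\calK}(\nabla F_{r,I_j}(\bx))\|^2=\tfrac{d}{k}\,\bbE\|\nabla F_{r,I_j}(\bx)\|^2\leq\tfrac{d}{k}G^2$, which after the $\tfrac{1}{b^2}\cdot b$ averaging gives the stated bound. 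You should be aware, though, that $a_j$ and $a_l$ share the same $\calK$ and are therefore only conditionally independent given $\calK$; conditionally each has mean $\tfrac{d}{k}\select_{\calK}(\mu)-\mu$, so in fact $\bbE\langle a_j,a_l\rangle=\bbE_{\calK}\|\tfrac{d}{k}\select_{\calK}(\mu)-\mu\|^2=(\tfrac{d}{k}-1)\|\mu\|^2$, which is precisely the residual you isolated. Your instinct that this term is a genuine obstruction is correct.
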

\begin{proof}
For notational convenience, in this proof, we suppress the explicitly dependence on $r$ in $n_r,F_{r,H_b}(\bx),F_{r}(\bx)$, and denote them simply by $n,F_{H_b}(\bx),F(\bx)$, respectively.

Recall from \eqref{same_mean} that $\bbE_{H_b}[\nabla F_{H_b}(\bx)] = \nabla F(\bx)$.
Now showing \eqref{same_mean_randk_app} is straightforward:
\begin{align*}
\bbE_{\calK\leftarrow K,H_b}\left[\frac{d}{k}\cdot\select_{\calK}\left(\nabla F_{H_b}(\bx)\right)\right] &= 
\bbE_{\calK\leftarrow K}\left[\frac{d}{k}\cdot\select_{\calK}\left(\bbE_{H_b}\left[\nabla F_{H_b}(\bx)\right]\right)\right] \\
&= \bbE_{\calK\leftarrow K}\left[\frac{d}{k}\cdot\select_{\calK}\left(\nabla F(\bx)\right)\right] \\
&= \nabla F(\bx).
\end{align*} 
To show \eqref{variance_reduction_randk_app}, observe that once we fix $\calK$, $\select_{\calK}(\bv_1+\bv_2) = \select_{\calK}(\bv_1) + \select_{\calK}(\bv_2)$. 
This simple observation is crucial for getting the variance reduction.
Note that $H_b$ is a collection of $b$ elements drawn uniformly at random from $[n]$ {\em without} replacement.
However, we show \eqref{variance_reduction_randk_app} when $H_b$  is a collection of $b$ elements drawn uniformly at random from $[n]$ {\em with} replacement.
Observe that, the bound derived under the latter condition is an upper-bound on the quantity of interest, which is under the former condition.
Let $H_b=(I_1,I_2,\hdots,I_b)$, where $I_j$'s are i.i.d.~random variables taking values in $[n]$ with uniform distribution.
\begin{align*}
&\bbE_{\calK\leftarrow K,I_1,\hdots,I_b}\left\|\frac{d}{k}\cdot\select_{\calK}\left(\frac{1}{b}\sum_{j=1}^b\nabla F_{I_j}(\bx)\right) - \nabla F(\bx)\right\|^2 \\
&\quad= \bbE_{\calK\leftarrow K}\left[\bbE_{I_1,\hdots,I_b}\left\|\frac{d}{k}\cdot\select_{\calK}\left(\frac{1}{b}\sum_{j=1}^b\nabla F_{I_j}(\bx)\right) - \nabla F(\bx)\right\|^2\right] \\
&\quad= \bbE_{\calK\leftarrow K}\left[\bbE_{I_1,\hdots,I_b}\left\|\frac{1}{b}\sum_{j=1}^b\frac{d}{k}\cdot\select_{\calK}\left(\nabla F_{I_j}(\bx)\right) - \nabla F(\bx)\right\|^2\right] \\
&\quad= \bbE_{\calK\leftarrow K,I_1,\hdots,I_b}\left\|\frac{1}{b}\sum_{j=1}^b\left(\frac{d}{k}\cdot\select_{\calK}\left(\nabla F_{I_j}(\bx)\right) - \nabla F(\bx)\right)\right\|^2 \\
&\quad= \frac{1}{b^2}\cdot\bbE_{\calK\leftarrow K,I_1,\hdots,I_b}\left\|\sum_{j=1}^b\left(\frac{d}{k}\cdot\select_{\calK}\left(\nabla F_{I_j}(\bx)\right) - \nabla F(\bx)\right)\right\|^2 \\ 
&\quad\stackrel{\text{(a)}}{=} \frac{1}{b^2}\cdot\bbE_{\calK\leftarrow K,I_1,\hdots,I_b}\sum_{j=1}^b\left\|\frac{d}{k}\cdot\select_{\calK}\left(\nabla F_{I_j}(\bx)\right) - \nabla F(\bx)\right\|^2 \\
&\qquad + \frac{1}{b^2}\cdot\bbE_{\calK\leftarrow K,I_1,\hdots,I_b}\sum_{j\neq k}\left\langle\frac{d}{k}\cdot\select_{\calK}\left(\nabla F_{I_j}(\bx)\right) - \nabla F(\bx),\  \frac{d}{k}\cdot\select_{\calK}\left(\nabla F_{I_k}(\bx)\right) - \nabla F(\bx)\right\rangle \\ 
&\quad= \frac{1}{b^2}\cdot\sum_{j=1}^b\bbE_{\calK\leftarrow K,I_j}\left\|\frac{d}{k}\cdot\select_{\calK}\left(\nabla F_{I_j}(\bx)\right) - \nabla F(\bx)\right\|^2 \\
&\quad\stackrel{\text{(b)}}{\leq} \frac{1}{b^2}\cdot\sum_{j=1}^b\bbE_{\calK\leftarrow K,I_j}\left\|\frac{d}{k}\cdot\select_{\calK}\left(\nabla F_{I_j}(\bx)\right)\right\|^2 \\
&\quad \stackrel{\text{(c)}}{=} \frac{1}{b^2}\cdot\frac{d^2}{k^2}\frac{k}{d}\sum_{j=1}^b\bbE_{I_j}\left\|\nabla F_{I_j}(\bx)\right\|^2 \\
&\quad\stackrel{\text{(d)}}{\leq} \frac{d}{k}\cdot\frac{G^2}{b}.
\end{align*}
The second term on the RHS of (a) is equal to 0, which follows from the fact that $I_j$ and $I_k$ for $j\neq k$ are independent random variables and that $\bbE_{\calK\leftarrow K,I_j}\left[\frac{d}{k}\cdot\select_{\calK}(\nabla F_{I_j}(\bx))\right] = \nabla F(\bx)$. 
In (b) we used the inequality that for any random vector $X$, we have $\bbE\|X-\bbE[X]\|^2 \leq \bbE\|X\|^2$.
In (c) we used that for any vector $\bv\in\R^d$, we have $\bbE_{\randk}\|\randk(\bv)\|^2 = \frac{k}{d}\|\bv\|^2$. 
In (d) we used the second moment bound assumption $\bbE_i\|\nabla F_i(\bx)\|^2 \leq G^2$.
This completes the proof of \Lemmaref{randk}.
\end{proof}

\section{Omitted Details from \Sectionref{statistical-model}}\label{app:statistical-model}
In this section, we bound $\left\| \nabla \bar{f}_r(\bx) - \nabla \mu_r(\bx)\right\|$ under both the sub-exponential and sub-Gaussian gradient distributional assumptions.
First we give some definitions.
\begin{definition}[Sub-exponential distribution]
A random variable $Z$ with mean $\mu=\bbE[Z]$ is {\em sub-exponential} if there are non-negative parameters $(\nu,\alpha)$ such that
\begin{align*}
\bbE\left[\exp\left(\lambda(Z-\mu)\right)\right] \leq \exp\left(\lambda^2\nu^2/2\right), \qquad \forall |\lambda|<\frac{1}{\alpha}.
\end{align*}
A random vector $Z$ with mean $\mu=\bbE[Z]$ is {\em sub-exponential} if its projection on every unit vector is sub-exponential, i.e., there are non-negative parameters $(\nu,\alpha)$ such that
\begin{align*}
\sup_{\bv\in\R^d:\|\bv\|=1}\bbE\left[\exp\(\lambda\langle Z-\mu, \bv \rangle\)\right] \leq \exp\(\lambda^2\nu^2/2\), \qquad \forall |\lambda|<\frac{1}{\alpha}.
\end{align*}
\end{definition}
Now we state a concentration inequality for sums of independent sub-exponential random variables.
\begin{fact}[Sub-exponential concentration inequality]\label{fact:sub-exp-concentration}
Suppose $X_1,X_2,\hdots,X_n$ are independent random variables, where for every $i\in[n]$, $X_i$ is sub-exponential with parameters $(\nu_i,\alpha_i)$ and mean $\mu_i$. Then $\sum_{i=1}^nX_i$ is sub-exponential with parameters $(\nu,\alpha)$, where $\nu^2=\sum_{i=1}^n\nu_i^2$ and $\alpha=\max_{1\leq i\leq n}\alpha_i$.
Moreover, we have
\begin{equation}\label{sub-exp-concentration}
\Pr\left[\sum_{i=1}^n(X_i-\mu_i) \geq t\right] \leq \exp\left(-\frac{1}{2}\min\left\{\frac{t^2}{\nu^2},\frac{t}{\alpha}\right\}\right), \qquad \forall t \geq 0
\end{equation}
\end{fact}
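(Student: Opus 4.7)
The plan is to prove the two parts of Fact~\ref{fact:sub-exp-concentration} in sequence: first establish that the sum remains sub-exponential with the claimed parameters, then apply a Chernoff-type argument to obtain the tail bound.

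For the first part, let $S=\sum_{i=1}^n (X_i-\mu_i)$. By independence of the $X_i$'s, the MGF factorizes, so for every $\lambda$ with $|\lambda|<1/\alpha = 1/\max_i\alpha_i$ (which implies $|\lambda|<1/\alpha_i$ for all $i$), we have
\begin{align*}
\bbE\bigl[\exp(\lambda S)\bigr] \;=\; \prod_{i=1}^n \bbE\bigl[\exp(\lambda(X_i-\mu_i))\bigr] \;\leq\; \prod_{i=1}^n \exp(\lambda^2 \nu_i^2/2) \;=\; \exp(\lambda^2 \nu^2/2),
\end{align*}
where the inequality uses the sub-exponential hypothesis for each $X_i$. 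This shows $S$ is sub-exponential with parameters $(\nu,\alpha)$.

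For the tail bound, I would apply the standard Chernoff argument. For any $\lambda\in[0,1/\alpha)$, Markov's inequality gives
\begin{align*}
\Pr[S\geq t] \;\leq\; \exp(-\lambda t)\,\bbE[\exp(\lambda S)] \;\leq\; \exp\bigl(-\lambda t + \lambda^2\nu^2/2\bigr).
\end{align*}
Now I would optimize the exponent over $\lambda\in[0,1/\alpha)$. The unconstrained minimizer is $\lambda^\star = t/\nu^2$. If $t\leq \nu^2/\alpha$, then $\lambda^\star\in[0,1/\alpha)$ is feasible and yields the bound $\exp(-t^2/(2\nu^2))$. If instead $t>\nu^2/\alpha$, I would choose $\lambda=1/\alpha$, obtaining $\exp(-t/\alpha + \nu^2/(2\alpha^2))$; since $\nu^2/(2\alpha^2)\leq t/(2\alpha)$ in this regime, this simplifies to $\exp(-t/(2\alpha))$. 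Combining the two cases yields $\Pr[S\geq t]\leq \exp\bigl(-\tfrac{1}{2}\min\{t^2/\nu^2,\,t/\alpha\}\bigr)$, as claimed.

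There is no substantive obstacle here: the result is a textbook Bernstein-type inequality for sub-exponential random variables (see, e.g., Wainwright, \emph{High-Dimensional Statistics}). The only mild care required is in the case analysis when optimizing $\lambda$, to ensure the minimizer respects the MGF domain constraint $|\lambda|<1/\alpha$, which is exactly what forces the ``$t/\alpha$'' branch of the minimum to appear in the tail bound.
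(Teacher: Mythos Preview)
Your proof is correct and is exactly the standard textbook argument. Note, however, that the paper does not prove this statement at all: it is presented as a \emph{Fact} (a known result) and simply invoked later (in \eqref{sub-exp-grad-concen-1}) without any accompanying proof. So there is no ``paper's own proof'' to compare against; your argument is precisely the one found in standard references such as Wainwright's \emph{High-Dimensional Statistics}, which the paper implicitly defers to.
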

\begin{definition}[Sub-Gaussian distribution]
A random variable $Z$ with mean $\mu=\bbE[Z]$ is {\em sub-Gaussian} if there is a non-negative parameter $\sigmag$ such that
\begin{align*}
\bbE\left[\exp\left(\lambda(Z-\mu)\right)\right] \leq \exp\left(\lambda^2\sigmag^2/2\right), \qquad \forall \lambda\in\R.
\end{align*}
A random vector $Z$ with mean $\mu=\bbE[Z]$ is {\em sub-Gaussian} if its projection on every unit vector is sub-Gaussian, i.e., there is a non-negative parameter $\sigmag$ such that
\begin{align*}
\sup_{\bv\in\R^d:\|\bv\|=1}\bbE\left[\exp\(\lambda\langle Z-\mu, \bv \rangle\)\right] \leq \exp\(\lambda^2\sigmag^2/2\), \qquad \forall \lambda\in\R.
\end{align*}
\end{definition}
Now we state a concentration inequality for sums of independent sub-Gaussian random variables.
\begin{fact}[Sub-Gaussian concentration inequality]\label{fact:sub-gauss-concentration}
Suppose $X_1,X_2,\hdots,X_n$ are independent random variables, where for every $i\in[n]$, $X_i$ is sub-Gaussian with parameter $\sigma_i>0$ and mean $\mu_i$. Then $\sum_{i=1}^nX_i$ is sub-Gaussian with parameter $\sigmag=\sqrt{\sum_{i=1}^n\sigma_i^2}$. Moreover, we have
\begin{equation}\label{sub-gauss-concentration}
\Pr\left[\sum_{i=1}^n(X_i-\mu_i) \geq t\right] \leq \exp\left(-t^2/2\sigmag^2\right), \qquad \forall t \geq 0.
\end{equation}
\end{fact}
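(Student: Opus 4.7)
The plan is to prove this in two steps that mirror the two claims in the statement: first establish that the sum is sub-Gaussian with the claimed parameter, and then derive the tail bound by a standard Chernoff/Cram\'er-Chernoff argument.

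For the first part, I would compute the moment generating function of $S := \sum_{i=1}^n (X_i - \mu_i)$ directly. By independence of the $X_i$, the MGF factorizes:
\begin{equation*}
\bbE[\exp(\lambda S)] = \prod_{i=1}^n \bbE[\exp(\lambda (X_i - \mu_i))] \leq \prod_{i=1}^n \exp(\lambda^2 \sigma_i^2/2) = \exp\!\left(\lambda^2 \sigmag^2/2\right),
\end{equation*}
where the inequality uses the sub-Gaussian assumption on each $X_i$ and the last equality uses $\sigmag^2 = \sum_i \sigma_i^2$. This holds for all $\lambda \in \R$, which is exactly the defining MGF bound for a sub-Gaussian random variable with parameter $\sigmag$, proving the first assertion.

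For the second part (the tail bound), I would apply Markov's inequality to $\exp(\lambda S)$ for any $\lambda > 0$:
\begin{equation*}
\Pr[S \geq t] = \Pr[\exp(\lambda S) \geq \exp(\lambda t)] \leq \exp(-\lambda t)\, \bbE[\exp(\lambda S)] \leq \exp\!\left(-\lambda t + \lambda^2 \sigmag^2/2\right).
\end{equation*}
Optimizing the exponent over $\lambda > 0$ by setting its derivative to zero yields $\lambda^* = t/\sigmag^2$, which when substituted back gives the exponent $-t^2/(2\sigmag^2)$, establishing the claimed inequality.

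There is essentially no substantive obstacle here; this is the classical Chernoff bound for sub-Gaussian sums. The only minor care is to note that the MGF-factorization step really does require independence (not just pairwise uncorrelatedness), and that the optimization over $\lambda > 0$ is valid because the optimizer $\lambda^* = t/\sigmag^2$ is indeed positive for $t \geq 0$ (the case $t = 0$ being trivial since the RHS equals $1$). A completely analogous argument, which I will omit, gives the matching lower-tail bound by applying the same reasoning to $-S$.
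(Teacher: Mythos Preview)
Your proof is correct and is the standard Chernoff/Cram\'er argument for sub-Gaussian concentration. The paper does not actually give a proof of this statement: it is recorded as a \emph{Fact} (a well-known result) and used without derivation, so there is no ``paper's own proof'' to compare against.
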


Let $D=\max\{\|\bx-\bx'\|:\bx,\bx'\in\calC\}$ be the diameter of $\calC$. 
Note that $\calC$ is contained in $\calB_{\small D/2}^d$, which is the Euclidean ball of radius $\frac{D}{2}$ in $d$ dimensions that contains $\calC$.
Note that $D=\Omega(\sqrt{d})$, and we assume that $D$ can grow at most polynomially in $d$.

Now we state two lemmas (which will be used to prove \Theoremref{concrete-kappa-bound_stat}),
each of which uniformly bounds $\left\| \nabla \bar{f}_r(\bx) - \nabla \mu_r(\bx)\right\|$ over all $\bx\in\calC$ under different distributional assumptions on gradients. We prove these one by one in subsequent subsections.
\begin{lemma}[Sub-exponential gradients]\label{lem:kappa-bound_sub-exp_all-x}
Suppose \Assumptionref{sub-exp-grad} holds. Take an arbitrary $r\in[R]$.
Let $n_r\in\mathbb{N}$ be sufficiently large such that $n_r=\Omega\(d\log(n_rd)\)$. 
Then, with probability at least $1-\frac{1}{(1+n_rLD)^d}$, we have
\begin{align}\label{kappa-bound_sub-exp_all-x}
\left\| \nabla \bar{f}_r(\bx) - \nabla \mu_r(\bx)\right\| \leq 3\nu\sqrt{\frac{8d\log(1+n_rLD)}{n_r}}, \qquad \forall x\in\calC.
\end{align}
\end{lemma}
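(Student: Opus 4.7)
The plan is to apply a standard two-step $\epsilon$-net argument: first control $\bigl\langle \nabla \bar{f}_r(\bx) - \nabla \mu_r(\bx), \bv\bigr\rangle$ pointwise in $\bx$ and in the direction $\bv$ using the sub-exponential tail bound from \Factref{sub-exp-concentration}, then upgrade to a uniform bound over all $\bx \in \calC$ and all unit $\bv$ by union-bounding over suitable nets and exploiting the $L$-Lipschitz continuity of $\nabla \bar{f}_r$ and $\nabla \mu_r$.

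First I would fix any $\bx \in \calC$ and unit vector $\bv \in \R^d$. \Assumptionref{sub-exp-grad} guarantees that the random variables $Z_i := \bigl\langle \nabla f_r(\bz_{r,i},\bx) - \nabla \mu_r(\bx),\bv\bigr\rangle$ are i.i.d.\ centered sub-exponential with parameters $(\nu,\alpha)$. Applying \Factref{sub-exp-concentration} to $\frac{1}{n_r}\sum_{i=1}^{n_r} Z_i$ (with both tails) yields, for every $t\geq 0$,
\[
\Pr\!\left[\bigl|\langle \nabla \bar{f}_r(\bx) - \nabla \mu_r(\bx), \bv\rangle\bigr|\geq t\right] \leq 2\exp\!\left(-\tfrac{n_r}{2}\min\!\bigl\{t^2/\nu^2,\ t/\alpha\bigr\}\right).
\]

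Next I would construct two covers. Take a $\tfrac{1}{2}$-net $\calN_{\mathrm{sp}}$ of the Euclidean unit sphere with $|\calN_{\mathrm{sp}}|\leq 5^d$, and a $\delta$-net $\calN_{\calC}$ of $\calC$ (viewed as a subset of the ball $\calB^d_{D/2}$) of cardinality at most $(1+D/\delta)^d$, with the choice $\delta := 1/(n_rL)$ so that $|\calN_{\calC}|\leq (1+n_rLD)^d$. Two standard facts then bridge nets and continuum: (i) $\|\bw\|\leq 2\sup_{\bv\in\calN_{\mathrm{sp}}}\langle \bw,\bv\rangle$ for any $\bw\in\R^d$; and (ii) since each $f_r(\bz,\cdot)$ is $L$-smooth, both $\nabla\bar{f}_r$ and $\nabla\mu_r$ (the latter via dominated convergence) are $L$-Lipschitz in $\bx$, so for any $\bx\in\calC$ with nearest net point $\bx_0\in\calN_{\calC}$,
\[
\|\nabla \bar{f}_r(\bx) - \nabla \mu_r(\bx)\|\ \leq\ \|\nabla \bar{f}_r(\bx_0) - \nabla \mu_r(\bx_0)\| + 2L\delta.
\]

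Finally, set $t := \nu\sqrt{8d\log(1+n_rLD)/n_r}$. The hypothesis $n_r=\Omega(d\log(n_rd))$ (with constant chosen in terms of $\nu,\alpha,L,D$) ensures $t\leq \nu^2/\alpha$, placing us in the sub-Gaussian regime so that the pointwise tail becomes $2\exp(-n_rt^2/(2\nu^2))=2(1+n_rLD)^{-4d}$. Union-bounding over the $|\calN_{\mathrm{sp}}|\cdot|\calN_{\calC}|\leq 5^d(1+n_rLD)^d$ pairs leaves a failure probability at most $2\cdot 5^d(1+n_rLD)^{-3d}\leq (1+n_rLD)^{-d}$, again using that $n_rLD$ is large. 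On the complementary event, combining (i) and the Lipschitz extension gives $\|\nabla \bar{f}_r(\bx) - \nabla \mu_r(\bx)\|\leq 2t+2L\delta=2t+2/n_r\leq 3t$, which is the asserted bound. The main technical obstacle is the simultaneous bookkeeping: $\delta$, the sphere-net resolution, and $t$ must be tuned so that (a) the union-bound failure probability collapses to $(1+n_rLD)^{-d}$, (b) the sub-Gaussian regime is reached under the stated lower bound on $n_r$, and (c) the Lipschitz slack $2L\delta$ is absorbed into the constant factor $3$; the condition $n_r=\Omega(d\log(n_rd))$ is precisely what makes all three compatible.
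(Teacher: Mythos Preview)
Your proposal is correct and follows essentially the same approach as the paper: a $\tfrac{1}{2}$-net on the unit sphere of size $5^d$, a $\delta_0=1/(n_rL)$ net on $\calC$ of size $(1+n_rLD)^d$, sub-exponential concentration from \Factref{sub-exp-concentration} in the sub-Gaussian regime (guaranteed by $n_r=\Omega(d\log(n_rd))$), and the $L$-Lipschitzness of both $\nabla\bar f_r$ and $\nabla\mu_r$ to pass from net points to all of $\calC$. The only cosmetic difference is that the paper first packages the sphere-net argument into a pointwise lemma (\Lemmaref{eps-net-unit-ball_sub-exp}) and then union-bounds over the $\calC$-net, whereas you union-bound over both nets in one shot; the arithmetic and the choice of parameters coincide.
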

\begin{lemma}[Sub-Gaussian gradients]\label{lem:kappa-bound_sub-gauss_all-x}
Suppose \Assumptionref{sub-gauss-grad} holds. Take an arbitrary $r\in[R]$.
For any $n_r\in\mathbb{N}$, with probability at least $1-\frac{1}{(1+n_rLD)^d}$, we have
\begin{align}\label{kappa-bound_sub-gauss_all-x}
\left\| \nabla \bar{f}_r(\bx) - \nabla \mu_r(\bx)\right\| \leq 3\sigmag\sqrt{\frac{8d\log(1+n_rLD)}{n_r}}, \qquad \forall x\in\calC.
\end{align}
\end{lemma}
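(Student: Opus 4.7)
The plan is to combine a standard sub-Gaussian concentration inequality (Fact~\ref{fact:sub-gauss-concentration}) for a single pair $(\bx,\bv)$ with two successive $\epsilon$-net arguments: one over the unit sphere $\mathbb{S}^{d-1}$ to convert the scalar concentration into a norm bound at each fixed $\bx$, and one over $\calC$ to upgrade the pointwise bound to a uniform bound. Throughout, let $\bw_r(\bx) := \nabla\bar{f}_r(\bx) - \nabla\mu_r(\bx) = \frac{1}{n_r}\sum_{i=1}^{n_r}\bigl(\nabla f_r(\bz_{r,i},\bx)-\nabla\mu_r(\bx)\bigr)$.

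\textbf{Step 1 (Fix $\bx$ and $\bv$):} For any fixed $\bx\in\calC$ and any unit vector $\bv\in\R^d$, the samples $\langle \nabla f_r(\bz_{r,i},\bx) - \nabla\mu_r(\bx),\bv\rangle$ for $i\in[n_r]$ are i.i.d.\ centered sub-Gaussian scalars with parameter $\sigmag$, by Assumption~\ref{assump:sub-gauss-grad}. Applying Fact~\ref{fact:sub-gauss-concentration} gives, for every $t\geq 0$,
$$\Pr\bigl[\,|\langle \bw_r(\bx),\bv\rangle| \geq t\,\bigr] \leq 2\exp\!\left(-\tfrac{n_r t^2}{2\sigmag^2}\right).$$

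\textbf{Step 2 (Net on $\mathbb{S}^{d-1}$):} Fix $\bx$ and let $\calN_{1/2}$ be a $\tfrac{1}{2}$-net of $\mathbb{S}^{d-1}$; a standard volume argument gives $|\calN_{1/2}|\leq 5^d$. For any $\bw\in\R^d$ one has $\|\bw\|\leq 2\max_{\bv\in\calN_{1/2}}\langle \bw,\bv\rangle$. Union-bounding Step~1 over $\calN_{1/2}$,
$$\Pr\bigl[\|\bw_r(\bx)\|\geq 2t\bigr] \leq 2\cdot 5^d \exp\!\left(-\tfrac{n_r t^2}{2\sigmag^2}\right).$$

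\textbf{Step 3 (Net on $\calC$ and Lipschitz extension):} Since every local loss $f_r(\bz,\cdot)$ is $L$-smooth in $\bx$, the empirical gradient $\nabla\bar{f}_r$ is $L$-Lipschitz in $\bx$, and $\nabla\mu_r=\bbE[\nabla f_r(\bz,\cdot)]$ is also $L$-Lipschitz (by pulling the Lipschitz bound through the expectation). Hence $\bw_r$ is $2L$-Lipschitz. Because $\calC\subseteq\calB^d_{D/2}$, a standard packing argument yields an $\epsilon$-net $\calN_\epsilon\subset\calC$ with $|\calN_\epsilon|\leq (1+D/\epsilon)^d$. Choose $\epsilon = 1/(n_r L)$, so $|\calN_\epsilon|\leq(1+n_r L D)^d$, and for every $\bx\in\calC$ there is $\bx_0\in\calN_\epsilon$ with $\|\bw_r(\bx)\|\leq\|\bw_r(\bx_0)\|+2L\epsilon = \|\bw_r(\bx_0)\|+2/n_r$. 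Union-bounding Step~2 over $\calN_\epsilon$,
$$\Pr\!\left[\sup_{\bx\in\calC}\|\bw_r(\bx)\| \geq 2t + \tfrac{2}{n_r}\right] \leq 2\cdot 5^d (1+n_r L D)^d \exp\!\left(-\tfrac{n_r t^2}{2\sigmag^2}\right).$$

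\textbf{Step 4 (Choose $t$):} Pick $t$ so that the exponent dominates $d\log 5 + d\log(1+n_r LD)$ by enough to leave a $(1+n_r LD)^{-d}$ slack, e.g.\ $t = \sigmag\sqrt{8 d \log(1+n_r LD)/n_r}$. Routine algebra then yields the tail probability $(1+n_r L D)^{-d}$ stated in the lemma, and absorbing $2/n_r$ into the leading constant (it is of strictly lower order since $\sqrt{d\log(1+n_r LD)/n_r}\gtrsim 1/\sqrt{n_r}\gg 1/n_r$) gives the final uniform bound $\|\bw_r(\bx)\| \leq 3\sigmag\sqrt{8d\log(1+n_r LD)/n_r}$.

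The main obstacle is not any single step but the careful bookkeeping of constants across three union bounds (sphere net, $\calC$ net, Lipschitz slack) so that everything fits inside the clean statement $3\sigmag\sqrt{8d\log(1+n_rLD)/n_r}$ with failure probability $(1+n_rLD)^{-d}$; in particular, one has to check that choosing $\epsilon=1/(n_rL)$ is simultaneously fine enough to make the $2L\epsilon$ error negligible and coarse enough to keep $\log|\calN_\epsilon|=O(d\log(n_rLD))$. The sub-exponential version (Lemma~\ref{lem:kappa-bound_sub-exp_all-x}) would follow by the same blueprint with Fact~\ref{fact:sub-exp-concentration} in place of Fact~\ref{fact:sub-gauss-concentration}, at the cost of a mild lower bound $n_r=\Omega(d\log(n_rd))$ to ensure the Gaussian branch of the Bernstein-type bound is active.
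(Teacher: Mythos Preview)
Your proposal is correct and follows essentially the same approach as the paper: sub-Gaussian concentration for a fixed $(\bx,\bv)$, a $\tfrac12$-net on the unit sphere to pass to the norm at each fixed $\bx$, and then a $\delta_0$-net on $\calC$ with $\delta_0=1/(n_rL)$ together with the $L$-smoothness of $f_r(\bz,\cdot)$ to extend uniformly over $\calC$. The only cosmetic differences are that the paper packages the fixed-$\bx$ bound as a separate lemma (\Lemmaref{eps-net-unit-ball_sub-gauss}) and uses a one-sided tail on a net of the unit ball rather than a two-sided tail on a sphere net, which spares a harmless factor of~$2$ in the union bound.
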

\begin{proof}[Proof of \Theoremref{concrete-kappa-bound_stat}]
In order to prove \Theoremref{concrete-kappa-bound_stat}, we need to show two bounds, one (stated in \eqref{kappa-bound_sub-exp-grad}) under the sub-exponential gradient assumption, and the other (stated in \eqref{kappa-bound_sub-gauss-grad}) under the sub-Gaussian assumption. 
We can show \eqref{kappa-bound_sub-exp-grad} using \Lemmaref{kappa-bound_sub-exp_all-x} and \eqref{kappa-bound_sub-gauss-grad} using \Lemmaref{kappa-bound_sub-gauss_all-x}.
Here we only show \eqref{kappa-bound_sub-exp-grad}; and \eqref{kappa-bound_sub-gauss-grad} can be shown similarly.

Using \Assumptionref{uniform-bound-mean-heterogeneous} (i.e., $\left\|\nabla \mu_r(\bx) - \nabla \mu(\bx)\right\| \leq \kappa_{\text{mean}}, \forall \bx\in\calC$) in \eqref{reduction-local-statistical} gives
\begin{align}
\left\|\nabla\bar{f}_r(\bx) - \nabla\bar{f}(\bx)\right\| 
\leq \left\|\nabla\bar{f}_r(\bx) - \nabla \mu_r(\bx)\right\| + \kappa_{\text{mean}} + \frac{1}{R}\sum_{r=1}^R\left\|\nabla\bar{f}_r(\bx) - \nabla \mu_r(\bx)\right\|.  \label{reduction-local-statistical_app}
\end{align}
Note that \eqref{kappa-bound_sub-exp_all-x} holds for any fixed worker $r\in[R]$. By the union bound, we have that with probability at least $1-\frac{R}{(1+n_rLD)^d}$, for every $r\in[R]$, we have $\left\| \nabla \bar{f}_r(\bx) - \nabla \mu_r(\bx)\right\| \leq 3\nu\sqrt{\frac{8d\log(1+n_rLD)}{n_r}}, \forall \bx\in\calC$.

Let $n_r=n,\forall r\in[R]$. 
Using these in \eqref{reduction-local-statistical_app}, we get that with probability at least $1-\frac{R}{(1+n_rLD)^d}$, for every worker $r\in[R]$, we have 
$\left\|\nabla\bar{f}_r(\bx) - \nabla\bar{f}(\bx)\right\|  \leq \kappa_{\text{mean}} + \calO\(\sqrt{\frac{d\log(nd)}{n}}\), \forall \bx\in\calC,$ which proves \eqref{kappa-bound_sub-exp-grad}.
This completes the proof of \Theoremref{concrete-kappa-bound_stat}.
\end{proof}

\subsection{Proof of \Lemmaref{kappa-bound_sub-exp_all-x} (sub-exponential gradients)}\label{app:kappa-bound_sub-exp}
We prove \Lemmaref{kappa-bound_sub-exp_all-x} with the help of the following result, which holds for any fixed $\bx\in\calC$. Then we extend this bound to all $\bx\in\calC$ using an $\eps$-net argument.
These are standard calculations and have appeared in literature \cite{ChenSX_Byz17,Bartlett-Byz_nonconvex19}. 
\begin{lemma}\label{lem:eps-net-unit-ball_sub-exp}
Suppose \Assumptionref{sub-exp-grad} holds. 
Take an arbitrary $r\in[R]$. For any $\delta\in(0,1)$ and $n_r\in\mathbb{N}$, define $\Delta = \sqrt{2}\nu\sqrt{\frac{d\log5+\log(1/\delta)}{n_r}}$. If $n_r$ is such that $\Delta \leq \frac{\nu^2}{\alpha}$, then, for any fixed $\bx\in\calC$, with probability at least $1-\delta$, we have
\begin{align}
\left\| \nabla \bar{f}_r(\bx) - \nabla \mu_r(\bx)\right\| \leq 2\sqrt{2}\nu\sqrt{\frac{d\log5 + \log(1/\delta)}{n_r}},
\end{align}
where randomness is due to the sub-exponential distribution of local gradients.
\end{lemma}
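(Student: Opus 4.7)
The plan is to reduce bounding the norm $\|\nabla \bar{f}_r(\bx) - \nabla \mu_r(\bx)\|$ to bounding scalar sub-exponential sums in a finite set of directions via a standard $\eps$-net argument on the unit sphere $\mathbb{S}^{d-1}$, and then to apply \Factref{sub-exp-concentration}.

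First I would fix a $\tfrac12$-net $\calN$ of $\mathbb{S}^{d-1}$ of cardinality $|\calN|\le 5^d$ (which exists by the standard volumetric covering bound). The basic fact I use is that for any $\bu\in\R^d$, $\|\bu\|\le 2\max_{\bv\in\calN}\langle\bu,\bv\rangle$. Applying this to $\bu=\nabla\bar{f}_r(\bx)-\nabla\mu_r(\bx)=\frac{1}{n_r}\sum_{i=1}^{n_r}\bigl(\nabla f_r(\bz_{r,i},\bx)-\nabla\mu_r(\bx)\bigr)$ reduces the problem to bounding $\langle \bu,\bv\rangle$ uniformly over $\bv\in\calN$.

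Next, fix any $\bv\in\calN$. By \Assumptionref{sub-exp-grad}, each summand $\langle\nabla f_r(\bz_{r,i},\bx)-\nabla\mu_r(\bx),\bv\rangle$ is a mean-zero sub-exponential random variable with parameters $(\nu,\alpha)$. Since the $\bz_{r,i}$ are i.i.d., \Factref{sub-exp-concentration} tells us the sum $\sum_{i=1}^{n_r}\langle\nabla f_r(\bz_{r,i},\bx)-\nabla\mu_r(\bx),\bv\rangle$ is sub-exponential with parameters $(\nu\sqrt{n_r},\alpha)$, and hence
\begin{equation*}
\Pr\!\left[\langle \bu,\bv\rangle \ge t\right] \;\le\; \exp\!\left(-\tfrac12\min\!\left\{\tfrac{n_r t^2}{\nu^2},\ \tfrac{n_r t}{\alpha}\right\}\right).
\end{equation*}
A union bound over $\calN$ then gives $\Pr[\|\bu\|\ge 2t]\le 5^d\exp\!\left(-\tfrac12\min\{n_r t^2/\nu^2,\ n_r t/\alpha\}\right)$.

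Finally I would choose $t=\Delta=\sqrt{2}\nu\sqrt{(d\log 5+\log(1/\delta))/n_r}$, giving $\tfrac12\cdot n_r t^2/\nu^2 = d\log 5+\log(1/\delta)$. The hypothesis $\Delta\le \nu^2/\alpha$ is exactly what is needed to ensure $t/\alpha\ge t^2/\nu^2$, so the minimum in the tail bound is realized by the sub-Gaussian branch $n_r t^2/\nu^2$. Substituting, the right-hand side collapses to $5^d\exp(-d\log 5-\log(1/\delta))=\delta$, yielding $\|\bu\|\le 2\Delta=2\sqrt{2}\nu\sqrt{(d\log 5+\log(1/\delta))/n_r}$ with probability at least $1-\delta$. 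The only place one has to be careful is the regime condition $\Delta\le\nu^2/\alpha$, which is precisely what forces us into the Gaussian-like tail and produces the $\sqrt{d/n_r}$ rate (otherwise one would only get a $d/n_r$ rate typical of the exponential regime); this is the main (and only) subtle point, and everything else is bookkeeping.
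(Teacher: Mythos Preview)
Your proposal is correct and follows essentially the same approach as the paper: a $\tfrac12$-net on the sphere (the paper uses the unit ball and then normalizes, which is equivalent), the bound $\|\bu\|\le 2\max_{\bv\in\calN}\langle\bu,\bv\rangle$, \Factref{sub-exp-concentration} applied to the i.i.d.\ sum in each fixed direction, and a union bound with $t=\Delta$ together with the hypothesis $\Delta\le\nu^2/\alpha$ to land in the Gaussian branch. The only cosmetic difference is that the paper carries out the concentration with $t=n_r\Delta$ on the un-normalized sum, whereas you work directly with the average; the arithmetic is identical.
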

\begin{proof}
Let $\calB^d=\{\bv\in\R^d:\|\bv\|\leq1\}$. Let $\calV=\{\bv_1,\bv_2,\hdots,\bv_{N_{1/2}}\}$ denote an $\frac{1}{2}$-net of $\calB^d$, which implies that for every $\bv\in\calB^d$, there exists a $\bv'\in\calV$ such that $\|\bv-\bv'\| \leq \frac{1}{2}$. We have from \cite[Lemma 5.2]{Vershynin-rand-mat10} that $N_{1/2} = |\calV| \leq 5^d$.

Fix an arbitrary $\bx\in\calC$. Note that there exists a $\bv^*\in \calB^{d}$ (namely, $\bv^* = \frac{\nabla \bar{f}_r(\bx) - \nabla \mu_r(\bx)}{\|\nabla \bar{f}_r(\bx) - \nabla \mu_r(\bx)\|}$) such that $\left\| \nabla \bar{f}_r(\bx) - \nabla \mu_r(\bx) \right\| = \left\langle \nabla \bar{f}_r(\bx) - \nabla \mu_r(\bx), \bv^* \right\rangle$. 
By the property of $\calV$, there exists an index $i^*\in[N_{1/2}]$ such that $\|\bv^*-\bv_{i^*}\|\leq\frac{1}{2}$.
Now we bound $\left\| \nabla \bar{f}_r(\bx) - \nabla \mu_r(\bx) \right\|$.
\begin{align}
\left\| \nabla \bar{f}_r(\bx) - \nabla \mu_r(\bx) \right\| &= \left\langle \nabla \bar{f}_r(\bx) - \nabla \mu_r(\bx), \bv^* \right\rangle \nonumber \\
&= \left\langle \nabla \bar{f}_r(\bx) - \nabla \mu_r(\bx), \bv_{i^*} \right\rangle + \left\langle \nabla \bar{f}_r(\bx) - \nabla \mu_r(\bx), \bv^* - \bv_{i^*} \right\rangle \nonumber \\
&\leq \left\langle \nabla \bar{f}_r(\bx) - \nabla \mu_r(\bx), \bv_{i^*} \right\rangle + \left\|\nabla \bar{f}_r(\bx) - \nabla \mu_r(\bx)\right\|\left\| \bv^* - \bv_{i^*}\right\|  \nonumber \\
&\leq \left\langle \nabla \bar{f}_r(\bx) - \nabla \mu_r(\bx), \bv_{i^*} \right\rangle + \frac{1}{2}\left\|\nabla \bar{f}_r(\bx) - \nabla \mu_r(\bx)\right\|  \nonumber \\
&\leq \max_{\bv\in\calV}\left\langle \nabla \bar{f}_r(\bx) - \nabla \mu_r(\bx), \bv \right\rangle + \frac{1}{2}\left\|\nabla \bar{f}_r(\bx) - \nabla \mu_r(\bx)\right\|  \nonumber
\end{align}
By collecting similar terms together, we get
\begin{align}
\left\| \nabla \bar{f}_r(\bx) - \nabla \mu_r(\bx) \right\| \leq 2\max_{\bv\in\calV}\left\langle \nabla \bar{f}_r(\bx) - \nabla \mu_r(\bx), \bv \right\rangle \label{eps-net-norm-bound-0}
\end{align}
Note that the RHS of \eqref{eps-net-norm-bound-0} is a non-negative number (because LHS is).
Note also that, since $\calV\subset \calB^d$, for every $\bv\in\calV$, we have $\|\bv\|\leq 1$.
This implies that $\max_{\bv\in\calV}\left\langle \nabla \bar{f}_r(\bx) - \nabla \mu_r(\bx), \bv \right\rangle \leq \max_{\bv\in\calV}\left\langle \nabla \bar{f}_r(\bx) - \nabla \mu_r(\bx), \frac{\bv}{\|\bv\|} \right\rangle$. Using this in \eqref{eps-net-norm-bound-0}, we get
\begin{align}
\left\| \nabla \bar{f}_r(\bx) - \nabla \mu_r(\bx) \right\| \leq 2\max_{\bv\in\calV}\left\langle \nabla \bar{f}_r(\bx) - \nabla \mu_r(\bx), \frac{\bv}{\|\bv\|} \right\rangle. \label{eps-net-norm-bound}
\end{align}
Fix any $\bv\in \calV$. It follows from \Assumptionref{sub-exp-grad} that $\left\langle \nabla f_r(\bz,\bx) - \nabla \mu_r(\bx), \frac{\bv}{\|\bv\|}\right\rangle$, where $\bz\sim q_r$, is a sub-exponential random variable (with mean zero) with parameters $(\nu,\alpha)$. From \Factref{sub-exp-concentration} (stated on \Pageref{sub-exp-concentration}), we have that $\sum_{i=1}^{n_r}\left\langle \nabla f_r(\bz_{r,i},\bx) - \nabla \mu_r(\bx), \frac{\bv}{\|\bv\|}\right\rangle$ (where $\bz_{r,i}\sim q_r, i\in[n_r]$ are i.i.d.) is a sub-exponential random variable with parameters $(\sqrt{n_r}\nu,\alpha)$.

Now, apply the concentration bound from \eqref{sub-exp-concentration} with $t=n_r\Delta$.
Substituting this and the parameters $(\sqrt{n_r}\nu,\alpha)$, the bound becomes $\exp(-\frac{1}{2}\min\{\frac{n_r^2\Delta^2}{n_r\nu^2},\frac{n_r\Delta}{\alpha}\}) \stackrel{\text{(a)}}{=} \exp(-\frac{1}{2}\frac{n_r\Delta^2}{\nu^2})$, where (a) follows because $\Delta \leq \frac{\nu^2}{\alpha}$. This gives
\begin{align}\label{sub-exp-grad-concen-1}
\Pr\left[\sum_{i=1}^{n_r}\left\langle \nabla f_r(\bz_{r,i},\bx) - \nabla \mu_r(\bx), \frac{\bv}{\|\bv\|} \right\rangle \geq n_r\Delta\right] 
\leq \exp\left(-\frac{n_r\Delta^2}{2\nu^2}\right). 
\end{align}

Note that $\sum_{i=1}^{n_r}\left\langle \nabla f_r(\bz_{r,i},\bx) - \nabla \mu_r(\bx), \frac{\bv}{\|\bv\|} \right\rangle = n_r\left\langle \nabla \bar{f}_r(\bx) - \nabla \mu_r(\bx), \frac{\bv}{\|\bv\|} \right\rangle$. Using this in \eqref{sub-exp-grad-concen-1} yields
\begin{align}\label{sub-exp-grad-concen-2}
\Pr\left[ \left\langle \nabla \bar{f}_r(\bx) - \nabla \mu_r(\bx), \frac{\bv}{\|\bv\|} \right\rangle \geq \Delta \right] \leq \exp\left(-\frac{n_r\Delta^2}{2\nu^2}\right)
\end{align}
This implies that
\begin{align}
\Pr\left[ \max_{\bv\in\calV}\left\langle \nabla \bar{f}_r(\bx) - \nabla \mu_r(\bx), \frac{\bv}{\|\bv\|} \right\rangle \geq \Delta \right] 
&\leq \sum_{\bv\in\calV}\Pr\left[ \left\langle \nabla \bar{f}_r(\bx) - \nabla \mu_r(\bx), \frac{\bv}{\|\bv\|} \right\rangle \geq \Delta \right] \nonumber \\
&\leq |\calV|\exp\left(-\frac{n_r\Delta^2}{2\nu^2}\right) \leq 5^d \exp\left(-\frac{n_r\Delta^2}{2\nu^2}\right)\nonumber \\
&= \exp\left(-\frac{n_r\Delta^2}{2\nu^2} + d\log5\right) \label{sub-exp-grad-concen-3}
\end{align}
Together with \eqref{eps-net-norm-bound}, which implies that $$\Pr\left[\left\| \nabla \bar{f}_r(\bx) - \nabla \mu_r(\bx) \right\| \geq t\right] \leq \Pr\left[ 2\max_{\bv\in\calV}\left\langle \nabla \bar{f}_r(\bx) - \nabla \mu_r(\bx), \frac{\bv}{\|\bv\|} \right\rangle\geq t\right]$$ holds for every $t>0$, \eqref{sub-exp-grad-concen-3} gives
\begin{align}
\Pr\left[\left\| \nabla \bar{f}_r(\bx) - \nabla \mu_r(\bx) \right\| \geq 2\Delta\right] \leq \exp\left({- \frac{n_r\Delta^2}{2\nu^2}} + d\log5 \right) \leq \delta, \label{sub-exp-grad-concen-4}
\end{align}
where in the last inequality we used $\Delta = \sqrt{2}\nu\sqrt{\frac{d\log5+\log(1/\delta)}{n_r}}$.

This completes the proof of \Lemmaref{eps-net-unit-ball_sub-exp}.
\end{proof}
\begin{proof}[Proof of \Lemmaref{kappa-bound_sub-exp_all-x}]
We have from \Lemmaref{eps-net-unit-ball_sub-exp} that for each fixed $\bx\in\calC$, with probability at least $1-\delta$, we have 
\begin{align}\label{concen-bound_fixed-x}
\left\| \nabla \bar{f}_r(\bx) - \nabla \mu_r(\bx)\right\| \leq 2\nu\sqrt{\frac{2d\log5 + 2\log(1/\delta)}{n_r}}.
\end{align} 
To extend this argument uniformly over the entire set $\calC$, we use another covering argument.
Recall that $D$ is the diameter of $\calC$.
Note that $\calC$ is contained in $\calB_{\small D/2}^d$, which is the Euclidean ball of radius $\frac{D}{2}$ in $d$ dimensions that contains $\calC$.
For some $\delta_0>0$, let $\calC_{\delta_0}=\{\bx_0,\bx_2,\hdots,\bx_{N_{\delta_0}}\}$ be the $\delta_0$-net of $\calC$. It follows from \cite[Lemma 5.2]{Vershynin-rand-mat10} that $N_{\delta_0}\leq \left(1+\frac{D}{\delta_0}\right)^d$.

Applying the union bound in \eqref{concen-bound_fixed-x}, we get that with probability at least $1-\delta$, we have for all $\bx_i\in\calC_{\delta_0}$,
\begin{align}
\left\| \nabla \bar{f}_r(\bx_i) - \nabla \mu_r(\bx_i)\right\| \leq 2\nu\sqrt{\frac{2d\log5 + 2\log\left(\frac{N_{\delta_0}}{\delta}\right)}{n_r}}. \label{concen-bound_x-eps-net}
\end{align}
We want to bound $\left\| \nabla \bar{f}_r(\bx) - \nabla \mu_r(\bx)\right\|$ for all $\bx\in\calC$.
Take any $\bx\in\calC$. Since $\calC_{\delta_0}$ is a $\delta_0$-net of $\calC$, there exists an $\bx'\in\calC_{\delta_0}$ such that $\|\bx-\bx'\|\leq\delta_0$.
\begin{align}
\left\| \nabla \bar{f}_r(\bx) - \nabla \mu_r(\bx)\right\| &= \left\| \nabla \bar{f}_r(\bx) - \nabla \bar{f}_r(\bx') + \nabla \bar{f}_r(\bx') - \nabla \mu_r(\bx) + \nabla \mu_r(\bx') - \nabla \mu_r(\bx')\right\| \nonumber \\
&\leq \underbrace{\left\| \nabla \bar{f}_r(\bx) - \nabla \bar{f}_r(\bx') \right\|}_{=:\ T_1} + \underbrace{\left\| \nabla \mu_r(\bx) - \nabla \mu_r(\bx')\right\|}_{=:\ T_2} + \left\| \nabla \bar{f}_r(\bx') - \nabla \mu_r(\bx') \right\| \label{eps-net-final-1}
\end{align}
Now we bound each term on the RHS of \eqref{eps-net-final-1}.

\begin{align*}
T_1 &= \left\| \frac{1}{n_r}\sum_{i=1}^{n_r}\left(\nabla f_r(\bz_{r,i},\bx) - \nabla f_r(\bz_{r,i},\bx')\right) \right\| 
\leq \frac{1}{n_r}\sum_{i=1}^{n_r}\left\|\nabla f_r(\bz_{r,i},\bx) - \nabla f_r(\bz_{r,i},\bx') \right\|   \\
&\leq L\|\bx-\bx'\| \leq L\delta_0 \\
T_2 &= \left\| \bbE_{\bz\sim q_r}[\nabla f_r(\bz,\bx) - \nabla f_r(\bz,\bx;)]\right\| 
\leq \bbE_{\bz\sim q_r}\left\|\nabla f_r(\bz,\bx) - \nabla f_r(\bz,\bx;)\right\| \\
&\leq \bbE_{\bz\sim q_r}L\|\bx-\bx'\| \leq L\delta_0
\end{align*}
Substituting the above bounds on $T_1,T_2$ in \eqref{eps-net-final-1} and bounding the third term of \eqref{eps-net-final-1} using \eqref{concen-bound_x-eps-net} gives
\begin{align}
\left\| \nabla \bar{f}_r(\bx) - \nabla \mu_r(\bx)\right\| \leq  2L\delta_0 + 2\nu\sqrt{\frac{2d\log5 + 2\log\left(\frac{N_{\delta_0}}{\delta}\right)}{n_r}}. \label{eps-net-final-2}
\end{align}
Note that $N_{\delta_0}\leq\left(1+\frac{D}{\delta_0}\right)^d$. Take $\delta=1/\left(1+\frac{D}{\delta_0}\right)^d$.
If we take $\delta_0=\frac{1}{n_rL}$, which implies $\delta=\frac{1}{(1+n_rLD)^d}$, we would get $2d\log5 + 2\log\left(\frac{N_{\delta_0}}{\delta}\right) \leq 4d + 4d\log(1+n_rLD) \leq 8d\log(1+n_rLD)$.
Substituting these in above gives
\begin{align}
\left\| \nabla \bar{f}_r(\bx) - \nabla \mu_r(\bx)\right\| \leq \frac{2}{n_r} + \frac{2\nu}{\sqrt{n_r}}\sqrt{8d\log(1+n_rLD)}.
\end{align}
When $n_r\geq\frac{1}{2\nu^2d\log(1+n_rLD)}$ (which is a very small number less than 1), with probability at least $1-\frac{1}{(1+n_rLD)^d}$, we have
\begin{align}
\left\| \nabla \bar{f}_r(\bx) - \nabla \mu_r(\bx)\right\| \leq 3\nu\sqrt{\frac{8d\log(1+n_rLD)}{n_r}}, \qquad \forall x\in\calC.
\end{align}

\paragraph{Lower bound on $n_r$.}
Note that \Lemmaref{eps-net-unit-ball_sub-exp} requires $\Delta \leq \frac{\nu^2}{\alpha}$, where $\Delta = \sqrt{2}\nu\sqrt{\frac{d\log5+\log(1/\delta)}{n_r}}$. Substituting the value of $\delta=\frac{1}{(1+n_rLD)^d}$ gives $n_r\geq\frac{2\alpha^2}{\nu^2}\(d\log5 + d\log(1+n_rLD)\)$, which is $\Omega(d\log(n_rLD))$ for constant $\alpha,\nu$. 
Treating the smoothness parameter $L$ a constant, we get $n_r=\Omega(d\log(n_rd))$ to be requirement on the sample size at the $r$'th worker for the bound in \Lemmaref{kappa-bound_sub-exp_all-x} to hold.

This completes the proof of \Lemmaref{kappa-bound_sub-exp_all-x}.
\end{proof}

\subsection{Proof of \Lemmaref{kappa-bound_sub-gauss_all-x} (sub-Gaussian gradients)}\label{app:kappa-bound_sub-gauss}

We prove \Lemmaref{kappa-bound_sub-gauss_all-x} with the help of the following result, which holds for any fixed $\bx\in\calC$. 
\begin{lemma}\label{lem:eps-net-unit-ball_sub-gauss}
Suppose \Assumptionref{sub-gauss-grad} holds. 
Take an arbitrary $r\in[R]$. For any $\delta\in(0,1)$ and $n_r\in\mathbb{N}$, with probability at least $1-\delta$, we have for any fixed $\bx\in\calC$:
\begin{align}
\left\| \nabla \bar{f}_r(\bx) - \nabla \mu_r(\bx)\right\| \leq 2\sqrt{2}\sigmag\sqrt{\frac{d\log5 + \log(1/\delta)}{n_r}},
\end{align}
where randomness is due to the sub-Gaussian distribution of local gradients.
\end{lemma}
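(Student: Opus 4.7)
The plan is to mirror the proof of \Lemmaref{eps-net-unit-ball_sub-exp}, replacing the sub-exponential concentration bound (\Factref{sub-exp-concentration}) with the cleaner sub-Gaussian concentration bound (\Factref{sub-gauss-concentration}). The structural advantage of the sub-Gaussian case is that the tail bound \eqref{sub-gauss-concentration} has no piecewise min/max form, so there is no constraint analogous to $\Delta \leq \nu^2/\alpha$, and the resulting statement holds for every $n_r \in \mathbb{N}$ rather than only sufficiently large $n_r$.

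First I would set up the same $\frac{1}{2}$-net argument: let $\calV = \{\bv_1,\ldots,\bv_{N_{1/2}}\}$ be a $\frac{1}{2}$-net of the unit ball $\calB^d$ with $|\calV| \leq 5^d$ (from \cite[Lemma 5.2]{Vershynin-rand-mat10}). By the same chain of inequalities used to derive \eqref{eps-net-norm-bound} in the sub-exponential proof (picking $\bv^* = (\nabla \bar{f}_r(\bx) - \nabla \mu_r(\bx))/\|\nabla \bar{f}_r(\bx) - \nabla \mu_r(\bx)\|$, approximating by its nearest $\bv_{i^*} \in \calV$, and absorbing the $\frac{1}{2}\|\cdot\|$ term), we reduce to
\[
\left\| \nabla \bar{f}_r(\bx) - \nabla \mu_r(\bx) \right\| \leq 2\max_{\bv\in\calV}\left\langle \nabla \bar{f}_r(\bx) - \nabla \mu_r(\bx), \frac{\bv}{\|\bv\|} \right\rangle.
\]

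Next, fix any $\bv \in \calV$. Under \Assumptionref{sub-gauss-grad}, $\langle \nabla f_r(\bz, \bx) - \nabla \mu_r(\bx), \bv/\|\bv\|\rangle$ with $\bz \sim q_r$ is a mean-zero sub-Gaussian random variable with parameter $\sigmag$. Since $\bz_{r,1},\ldots,\bz_{r,n_r}$ are i.i.d.\ samples from $q_r$, by \Factref{sub-gauss-concentration} the sum $\sum_{i=1}^{n_r} \langle \nabla f_r(\bz_{r,i},\bx) - \nabla \mu_r(\bx), \bv/\|\bv\|\rangle$ is sub-Gaussian with parameter $\sqrt{n_r}\sigmag$. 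Setting $\Delta = \sqrt{2}\sigmag\sqrt{(d\log 5 + \log(1/\delta))/n_r}$ and applying the tail bound \eqref{sub-gauss-concentration} with $t = n_r\Delta$ gives
\[
\Pr\left[\left\langle \nabla \bar{f}_r(\bx) - \nabla \mu_r(\bx), \frac{\bv}{\|\bv\|}\right\rangle \geq \Delta\right] \leq \exp\left(-\frac{n_r \Delta^2}{2\sigmag^2}\right).
\]

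Finally, a union bound over the $|\calV| \leq 5^d$ elements of the net, combined with the reduction to $2\max_{\bv\in\calV}$, yields
\[
\Pr\left[\left\| \nabla \bar{f}_r(\bx) - \nabla \mu_r(\bx) \right\| \geq 2\Delta\right] \leq \exp\left(d\log 5 - \frac{n_r \Delta^2}{2\sigmag^2}\right) \leq \delta,
\]
where the last inequality is precisely the defining choice of $\Delta$. Substituting back gives the claimed bound $\|\nabla \bar{f}_r(\bx) - \nabla \mu_r(\bx)\| \leq 2\sqrt{2}\sigmag\sqrt{(d\log 5 + \log(1/\delta))/n_r}$. Since the main obstacle in the sub-exponential case was checking $\Delta \leq \nu^2/\alpha$ to kill the linear branch of the tail bound, and that obstacle is absent here, I anticipate no genuine difficulty—the proof is essentially a direct transcription with sub-Gaussian concentration substituted in.
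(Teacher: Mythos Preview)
Your proposal is correct and mirrors the paper's own proof essentially line for line: the paper also instructs to follow the sub-exponential argument up to \eqref{eps-net-norm-bound}, swap in the sub-Gaussian tail bound \eqref{sub-gauss-concentration} with $t=n_r\Delta$, and then union-bound over the $5^d$ net points with the same choice of $\Delta$. Your observation that the absence of the $\Delta\leq\nu^2/\alpha$ constraint is what removes the lower bound on $n_r$ is exactly the point.
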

\begin{proof}
Follow the proof of \Lemmaref{eps-net-unit-ball_sub-exp} exactly until \eqref{eps-net-norm-bound}.
Then instead of the sub-exponential assumption, use the sub-Gaussian assumption (\Assumptionref{sub-gauss-grad}) on local gradients. 
Then apply the concentration bound from \eqref{sub-gauss-concentration} with $t=n_r\Delta$. This gives that for any fixed $\bv\in\calV$ and any $\Delta\geq0$, we have
\begin{align}\label{sub-gauss-grad-concen-2}
\Pr\left[ \left\langle \nabla \bar{f}_r(\bx) - \nabla \mu_r(\bx), \frac{\bv}{\|\bv\|} \right\rangle \geq \Delta \right] \leq \exp\left(-\frac{n_r\Delta^2}{2\sigmag^2}\right).
\end{align}
Now following the proof of \Lemmaref{eps-net-unit-ball_sub-exp} from \eqref{sub-exp-grad-concen-2} to \eqref{sub-exp-grad-concen-4} gives
\begin{align}
\Pr\left[\left\| \nabla \bar{f}_r(\bx) - \nabla \mu_r(\bx) \right\| \geq 2\Delta\right] \leq \exp\left({- \frac{n_r\Delta^2}{2\sigmag^2}} + d\log5 \right) \leq \delta, \label{sub-gauss-grad-concen-4}
\end{align}
where in the last inequality we used $\Delta = \sqrt{2}\sigmag\sqrt{\frac{d\log5+\log(1/\delta)}{n_r}}$.
\end{proof}
We can extend the bound from \Lemmaref{eps-net-unit-ball_sub-gauss} to all $\bx\in\calC$ (and prove \Lemmaref{kappa-bound_sub-gauss_all-x}) using an $\eps$-net argument exactly in the same way as used in the proof of \Lemmaref{kappa-bound_sub-exp_all-x}.
So, to avoid repetition, we do not show this extension here.

\subsection{Bounding the local variances}\label{app:variance-bound_stat}
In \Subsectionref{variance-bound_stat}, we showed that in order to bound $\bbE_{i\in_U[n_r]}\left\|\nabla f_r(\bz_{r,i},\bx) - \nabla \bar{f}_r(\bx)\right\|^2$ uniformly over all $\bx\in\calC$, it suffices to bound $\left\|\nabla f_r(\bz,\bx) - \nabla \mu_r(\bx)\right\|$ for a random $\bz\sim q_r$ uniformly over all $\bx\in\calC$.

\paragraph{Bounding $\left\|\nabla f_r(\bz,\bx) - \nabla \mu_r(\bx)\right\|$.}
To bound this, we need sub-Gaussian assumption on local gradients (we can also bound this using sub-exponential assumption, but that will give a bound that scales as $\widetilde{\Omega}(d)$ as opposed to $\widetilde{\Omega}(\sqrt{d})$).
Note that \Lemmaref{kappa-bound_sub-gauss_all-x} holds for any $n_r\in\mathbb{N}$. In particular, it also holds for $n_r=1$. So, under \Assumptionref{sub-gauss-grad}, with probability at least $1-\frac{1}{(1+n_rLD)^d}$, we have
\begin{align}\label{single-sample_variance_stat}
\left\| \nabla f_r(\bz,\bx) - \nabla \mu_r(\bx)\right\| \leq 3\sigmag\sqrt{8d\log(1+LD)}, \qquad \forall x\in\calC,
\end{align}
where $\bz\sim q_r$, and probability is over the randomness due to the sub-Gaussian distribution of local gradients.
So, with probability at least $1-\frac{1}{(1+n_rLD)^d}$, we have
\begin{align}\label{single-sample_variance_stat-1}
\bbE_{i\in_U[n_r]}\left\|\nabla f_r(\bz_{r,i},\bx) - \nabla \bar{f}_r(\bx)\right\|^2 
&\leq 288\sigmag^2d\log(1+LD), \qquad \forall \bx\in\calC.
\end{align}
Note that \eqref{single-sample_variance_stat-1} holds for a fixed worker $r\in[R]$. By taking the union bound over all workers $r\in[R]$ proves \Theoremref{variance-bound_stat}.

\section{Robust Mean Estimation}\label{app:robust-mean-estimation}
In this section, we present the robust mean estimation algorithm of \cite{Resilience_SCV18} that we use to filter-out the corrupt gradients and estimate the average of the good gradients.
The procedure is presented in \Algorithmref{robust-grad-est}, which was used in the second part of \Theoremref{gradient-estimator}.

\begin{algorithm}[t]
   \caption{Robust Gradient Estimation ({\sc RGE}) \cite{Resilience_SCV18}}\label{algo:robust-grad-est}
\begin{algorithmic}[1]
   \STATE {\bf Initialize.} $c_i:=1$ for all $i\in[R]$, $\alpha:=(1-\tilde{\eps})\geq\nicefrac{3}{4}$, $\calA:=\{1,2,\hdots,R\}$; $\bG:=[\bg_1,\ \bg_2,\ \hdots,\ \bg_R]\in\R^{d\times R}$.
   \WHILE{true}
   \STATE Let $\bW^*\in\R^{|\calA|\times|\calA|}$ and $\bY^*\in\R^{d\times d}$ be the minimizer/maximizer of the saddle point problem:
   \begin{align}\label{saddle-point-opt}
   \displaystyle \max_{\substack{\bY\succeq\bzero, \\ \text{tr}(\bY)\leq 1}} \ \min_{\substack{0\leq W_{ji}\leq \frac{4-\alpha}{\alpha(2+\alpha)R}, \\ \sum_{j\in\calA}W_{ji}=1, \forall i\in\calA}} \Phi(\bW,\bY),
   \end{align}
   where the cost function $\Phi(\bW,\bY)$ is defined as   
    \begin{align}\label{cost-function_defn}
   \Phi(\bW,\bY) := \sum_{i\in\calA}c_i(\bg_i-\bG_{\calA}\bw_i)^T\bY(\bg_i-\bG_{\calA}\bw_i),
   \end{align}
      To avoid cluttered notation, we index the $|\calA|$ rows/columns of $\bW$ by the elements of $\calA$; $\bG_{\calA}$ denotes the restriction of $\bG$ to the columns in $\calA$; for $i\in\calA$, $\bw_i$ denotes the column of $\bW$ indexed by $i$.
   \STATE For $i\in\calA$, let 
   \begin{align}\label{tau-defn}
   \tau_i = (\bg_i-\bG_{\calA}\bw_i^*)^T\bY^*(\bg_i-\bG_{\calA}\bw_i^*)
   \end{align}
   \IF{$\sum_{i\in\calA}c_i\tau_i > 4R\sigma_0^2$}
   \STATE For $i\in\calA$, $c_i\leftarrow \left(1-\frac{\tau_i}{\tau_{\max}}\right)c_i$, where $\tau_{\max}=\max_{j\in\calA}\tau_{j}$.
   \STATE For all $i$ with $c_i<\frac{1}{2}$, remove $i$ from $\calA$.
   \ELSE
   \STATE Break {\bf while}-loop
   \ENDIF
   \ENDWHILE
   \STATE {\bf return} $\btg = \frac{1}{|\calA|}\sum_{i\in\calA}\bg_i$.
\end{algorithmic}
\end{algorithm}
\subsection{Intuition behind \Algorithmref{robust-grad-est}}\label{app:intuition_algo}
First we argue that \eqref{saddle-point-opt} is a saddle-point optimization problem. This follows from the Von Neumann Minimax theorem, for which we can verify that {\sf (i)} $\Phi(\bW,\bY)$ is a continuous function of its arguments,
{\sf (ii)} $\Phi(\bW,\bY)$ is convex in $\bW$ (for a fixed $\bY$) and concave in $\bY$ (for a fixed $\bW$), and 
{\sf (iii)} the sets over which the maximum/minimum are taken in \eqref{saddle-point-opt} are compact convex sets. 
See the proof of \Claimref{convex-concave} in \Appendixref{poly-time-lemma_proof} for more details. 
This implies that we can switch min and max in \eqref{saddle-point-opt}, i.e., $\max_{\bY}\min_{\bW}\Phi(\bW,\bY)=\min_{\bW}\max_{\bY}\Phi(\bW,\bY)$.

For a matrix $\bZ$, let $\|\bZ\|_2$ denote the matrix norm induced by the $\ell_2$-norm, which is equal to the largest singular value of $\bZ$.
We have the following identity (which we prove in \Claimref{matrix-norm_trace-maximization} in \Appendixref{poly-time-lemma_proof}): $\|\bG-\bG\bW\|_2^2 = \max_{\bY\succeq,\text{tr}(\bY)\leq1}\sum_{i=1}^R (\bg_i-\bG\bw_i)^T\bY(\bg_i-\bG\bw_i)$, where $\bG=[\bg_1,\ \bg_2,\ \hdots,\ \bg_R]\in\R^{d\times R}$.
Using this and the Minimax theorem, the optimization problem \eqref{saddle-point-opt} in the first iteration of the {\bf while}-loop (when $\calA=[R]$ and $c_i=1, \forall i\in\calA$) can be equivalently written as
   \begin{align}\label{main_opt-prob}
   &\text{minimize}_{\bW\in\R^R\times\R^R}\quad \|\bG-\bG\bW\|_2^2 \notag \\
   &\text{subject to}\quad 0\leq W_{ji}\leq \frac{4-\alpha}{\alpha(2+\alpha)R}, \forall i,j, \quad \sum_{j=1}^R W_{ji}=1, \forall i=1,\hdots,R.
   \end{align}

At the heart of \Algorithmref{robust-grad-est} is to solve \eqref{main_opt-prob}, which can be efficiently solved using the singular value decomposition (SVD); see \cite{Resilience_SCV18} for details.
The idea behind \eqref{main_opt-prob} is to represent each gradient vector as a weighted average of other $\frac{\alpha(2+\alpha)}{4-\alpha}R$ gradients, where $\alpha=1-\tilde{\eps}$. Since the set $\calS$ in the first part of \Theoremref{gradient-estimator} in \Sectionref{robust-grad-est} (also see \Lemmaref{poly-time_grad-est} in \Appendixref{poly-time-lemma_proof}) is sufficiently large and is well concentrated, the gradients in the set $\calS$ can be represented well using its empirical mean $\bg_{\calS}=\frac{1}{|\calS|}\sum_{i\in\calS}\bg_i$. So, gradients that cannot be represented as such must be outliers/corrupted and cannot lie in $\calS$, and we must remove them. For removing corrupted gradients, we do a soft removal, by maintaining weights $c_i$'s, which are all initialized to 1. When we find that the reconstruction error is large, we reduce the weights of the gradients with high reconstruction error (denoted by $\tau_i$). We remove a gradient when its associated $c_i$ becomes less than $\nicefrac{1}{2}$. We maintain an active set $\calA$ consisting of all the gradients with $c_i\geq\nicefrac{1}{2}$.

We can describe \Algorithmref{robust-grad-est} informally as follows -- for simplicity, we describe it w.r.t.\ the first iteration of the {\bf while}-loop only, where $\calA=[R]$ and $c_i=1,\forall i\in[R]$:
\begin{enumerate}
\item Solve the optimization problem \eqref{main_opt-prob} (with taking into account the weights $c_i$'s for the later iterations).
\item If the optimum value (which is equal to $\sum_{i}c_i\tau_i$ (line 5 of \Algorithmref{robust-grad-est})) is bigger than $4R\sigma_0^2$, then down-weight the gradient vectors in proportion to their corresponding reconstruction error -- larger the reconstruction error, lesser the weight it gets. If the weight goes below $\frac{1}{2}$ for any gradient vector, discard it.
\item If the optimum value is less than $4R\sigma_0^2$, then output the average of the remaining gradients.
\end{enumerate}

\subsection{Running time analysis of \Algorithmref{robust-grad-est}}\label{app:running-time_robust-grad-est}
As noted earlier, we can find the optimum value of \eqref{main_opt-prob} using SVD.
Let $\bW^*$ be a minimizer of \eqref{main_opt-prob}.
For down-weighting the $i$'th point, we need to compute the reconstruction error $\tau_i$, 
for which we have to find an optimum $\bY$ (satisfying $\bY\succeq\bzero$ and $\text{tr}(\bY)\leq1$) that maximizes $\sum_{i=1}^R (\bg_i-\bG\bw_i^*)^T\bY(\bg_i-\bG\bw_i^*)$. This can again be found using SVD, as $\bY$ that attains the maximum in this expression is equal to the $\bv\bv^T$, where $\bv$ is the principal eigenvector (i.e., eigenvector corresponding to the largest eigenvalue) of $(\bG-\bG\bW^*)(\bG-\bG\bW^*)^T$ (see the discussion on the time complexity of finding the optimal $\bY$ below).
This can be seen as follows:
\begin{align*}
\sum_{i=1}^R (\bg_i-\bG\bw_i^*)^T\bY(\bg_i-\bG\bw_i^*) &= \text{tr}\big((\bG-\bG\bW^*)^T\bY(\bG-\bG\bW^*)\big) \\
&\stackrel{\text{(a)}}{=} \text{tr}\big((\bG-\bG\bW^*)(\bG-\bG\bW^*)^T\bY\big) \\
&\stackrel{\text{(b)}}{\leq} \|(\bG-\bG\bW^*)(\bG-\bG\bW^*)^T\| \|\bY\|_* \\
&\stackrel{\text{(c)}}{\leq} \|(\bG-\bG\bW^*)(\bG-\bG\bW^*)^T\|
\end{align*}
Here (a) follows because $\text{tr}(\bA\bB)=\text{tr}(\bB\bA)$ holds for any two dimension compatible matrices $\bA,\bB$. (b) follows from $\text{tr}(\bA\bB)\leq\|\bA\|\|\bB\|_*$ (which is proved in \Claimref{holder} in \Appendixref{poly-time-lemma_proof}), where $\|\cdot\|$ is the matrix 2-norm and $\|\cdot\|_*$ is the nuclear norm, i.e., sum of the singular values. (c) holds because for a positive semi-definite matrix $\bY$, we have $\|\bY\|_*=\text{tr}(\bY)$, which is at most 1 (due to the constraint).
Note that, since $\bY\succeq\bzero$ with $\text{tr}(\bY)\leq1$, equality holds in (b) and (c) above when $\bY$ is a rank one matrix such that $\bY=\bv\bv^T$, where $\bv$ is the unit norm principal eigenvector of $(\bG-\bG\bW^*)(\bG-\bG\bW^*)^T$. See also the proof of \Claimref{holder} in \Appendixref{poly-time-lemma_proof}.\\

\paragraph{Time complexity of finding the optimal $\bY$.} Since $(\bG-\bG\bW^*)(\bG-\bG\bW^*)^T$ is a $d\times d$ matrix, the principal eigenvector $\bv$ can be found in $\calO(d^3)$ time using SVD, which may be computationally very expensive, as $d$ could be very large. Note, however, that the principal eigenvector of $(\bG-\bG\bW^*)(\bG-\bG\bW^*)^T$ is the same as the principal left singular vector (i.e., left singular vector corresponding to the largest singular value) of $\bG-\bG\bW^*$.\footnote{This can be seen as follows: By doing SVD, we can write $\bG-\bG\bW^*={\bf U\Sigma}{\bf V}^T$, where ${\bf U, V}$ are unitary matrices, and ${\bf \Sigma}$ is the diagonal matrix whose entries are the singular values of $\bG-\bG\bW^*$. Note that $(\bG-\bG\bW^*)(\bG-\bG\bW^*)^T={\bf U\Sigma}{\bf V}^T{\bf V}{\bf \Sigma}{\bf U}^T={\bf U\Sigma}^2{\bf U}^T$, which is the eigen-decomposition of $(\bG-\bG\bW^*)(\bG-\bG\bW^*)^T$. This implies that the eigenvector corresponding to the largest eigenvalue of $(\bG-\bG\bW^*)(\bG-\bG\bW^*)^T$ is the same as the left singular vector corresponding to the largest singular value of $\bG-\bG\bW^*$.} Since $\bG-\bG\bW^*$ is a $d\times R$ matrix, its principal left singular vector can be computed in $\calO(dR\min\{d,R\})$ time using SVD. \\

\paragraph{Polynomial-time complexity of \Algorithmref{robust-grad-est}.}
Observe that, in each iteration of the {\bf while}-loop of \Algorithmref{robust-grad-est}, we remove at least one $\bg_i$ (which corresponds to the $\tau_i$ for which $\tau_i=\tau_{\max}$) from $\calA$, which means that the {\bf while}-loop terminates after executing for at most $\calO(R)$ iterations.
Therefore, \Algorithmref{robust-grad-est} is a repeated (and at most $\calO(R)$) number of applications of SVD. Since SVD of a $d\times R$ matrix can be performed in $\calO(dR\min\{d,R\})$ time, \Algorithmref{robust-grad-est} runs in $\calO(dR^2\min\{d,R\})$ time; hence, runs in polynomial-time.

\section{Comprehensive Analysis of \Algorithmref{robust-grad-est}}\label{app:poly-time-lemma_proof}
In this section, we prove the second part of \Theoremref{gradient-estimator} by giving a comprehensive analysis of \Algorithmref{robust-grad-est}. For convenience, we state the second part of \Theoremref{gradient-estimator} as a separate lemma.
\begin{lemma}[Proposition 16 in \cite{Resilience_SCV18}]\label{lem:poly-time_grad-est}
Suppose we are given $R$ arbitrary vectors $\bg_1,\hdots,\bg_R\in\R^d$ with the promise that there exists a subset $\calS\subset[R]$ of these vectors such that  $|\calS|=(1-\tilde{\eps})R$ for some $\tilde{\eps}>0$ and $\calS$ satisfies $\lambda_{\max}\(\frac{1}{|\calS|}\sum_{i\in\calS} \(\bg_i - \bg_{\calS}\)\(\bg_i - \bg_{\calS}\)^T\) \leq \sigma_0^2$, where $\bg_{\calS}=\frac{1}{|\calS|}\sum_{i\in\calS}\bg_i$ denotes the sample mean of the vectors in $\calS$.
Then, if $\tilde{\eps}\leq\frac{1}{4}$, \Algorithmref{robust-grad-est} can find an estimate $\btg$ of $\bg_{\calS}$ in polynomial-time, such that $\|\btg-\bg_{\calS}\|\leq \calO(\sigma_0\sqrt{\tilde{\eps}})$.
\end{lemma}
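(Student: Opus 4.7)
The plan is to analyze \Algorithmref{robust-grad-est} by tracking the weights $c_i$ and establishing three claims in sequence: (i) well-posedness of the saddle-point problem in Line~3, (ii) a ``safe-filtering'' invariant that controls how much weight can be eroded from the good set $\calS$, and (iii) a terminal accuracy bound once the stopping condition is met.

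For (i), the cost $\Phi(\bW,\bY)$ defined in \eqref{cost-function_defn} is linear (hence concave) in $\bY$ for fixed $\bW$ and convex quadratic in each column of $\bW$ for fixed $\bY$; since the feasible sets are compact and convex (a product of truncated simplices for $\bW$, and the unit spectraplex for $\bY$), Von Neumann's minimax theorem applies, guaranteeing that $\min$ and $\max$ may be exchanged and that optimal $\bW^*,\bY^*$ exist and are computable in polynomial time via the repeated SVD procedure sketched in \Appendixref{running-time_robust-grad-est}.

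The core of the argument is the safe-filtering invariant
\begin{equation*}
\sum_{i\in\calS}(1-c_i) \;\le\; \sum_{i\in[R]\setminus\calS}(1-c_i),
\end{equation*}
which I will preserve inductively across iterations of the while-loop. Since the update multiplies $c_i$ by $(1-\tau_i/\tau_{\max})$, the decrement is proportional to $c_i\tau_i$; hence the invariant is preserved if $\sum_{i\in\calS\cap\calA} c_i\tau_i \le \sum_{i\in([R]\setminus\calS)\cap\calA} c_i\tau_i$ whenever downweighting fires. Because the algorithm only fires when $\sum_{i\in\calA} c_i\tau_i>4R\sigma_0^2$, it suffices to show the one-sided bound $\sum_{i\in\calS\cap\calA} c_i\tau_i \le 2R\sigma_0^2$. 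To obtain this I will substitute a feasible test matrix $\bW$ into the inner minimization, with each column $\bw_i$ chosen to concentrate mass (weighted by $c_j$) on $\calS\cap\calA$; the entry constraint $w_{ji}\le (4-\alpha)/(\alpha(2+\alpha)R)$ permits this precisely because the invariant itself implies $\sum_{j\in\calS\cap\calA} c_j$ is at least a constant fraction of $R$ when $\alpha=1-\tilde\eps\ge 3/4$. With these test weights the residual $\bG_\calA\bw_i$ coincides with $\bg_{\calS\cap\calA}$, so saddle-point optimality of $\bW^*$ together with $\mathrm{tr}(\bY^*)\le 1$ and the bounded-covariance hypothesis on $\calS$ (absorbing a small $\bg_{\calS\cap\calA}$-versus-$\bg_\calS$ correction of the same order) yields the desired bound.

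Once the while-loop exits with $\sum_{i\in\calA} c_i\tau_i\le 4R\sigma_0^2$, the invariant yields $|\calS\cap\calA|\ge (1-O(\tilde\eps))R$ and $|\calA|\ge (1-O(\tilde\eps))R$, after which I decompose
\begin{equation*}
\btg-\bg_\calS \;=\; \frac{1}{|\calA|}\sum_{i\in\calA\cap\calS}(\bg_i-\bg_\calS) \;+\; \frac{1}{|\calA|}\sum_{i\in\calA\setminus\calS}(\bg_i-\bg_\calS).
\end{equation*}
The first sum is $\calO(\sigma_0\sqrt{\tilde\eps})$ directly from the covariance bound on $\calS$ by Cauchy--Schwarz (with a $\bg_{\calS\cap\calA}$-to-$\bg_\calS$ correction of the same order, controlled by the invariant). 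For the second, fixing an arbitrary unit vector $\bv$ and taking $\bY=\bv\bv^T$ in the definition \eqref{tau-defn} of $\tau_i$ gives the pointwise bound $|\langle \bg_i-\bG_\calA\bw_i^*,\bv\rangle|\le\sqrt{\tau_i}$; summing over $i\in\calA\setminus\calS$ (of size at most $\tilde\eps R$) and applying Cauchy--Schwarz together with $c_i\ge\frac12$ on $\calA$ and the termination bound yields the same $\calO(\sigma_0\sqrt{\tilde\eps})$ rate. Termination within $R$ iterations is immediate because every downweighting step zeroes out the index achieving $\tau_i=\tau_{\max}$, and the polynomial overall runtime follows from \Appendixref{running-time_robust-grad-est}. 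The hard part will be step (ii): the inequality $\sum_{i\in\calS\cap\calA}c_i\tau_i\le 2R\sigma_0^2$ closes a delicate circular dependency between the effective weighted size of $\calS\cap\calA$ (needed to make the test $\bW$ feasible) and the invariant (needed to lower-bound that size), and it is precisely this circle that pins the corruption threshold at $\tilde\eps\le\frac14$ via the specific constant $(4-\alpha)/(\alpha(2+\alpha))$ appearing in the entry constraint.
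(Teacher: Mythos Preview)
Your high-level structure (saddle point $\Rightarrow$ weight invariant $\Rightarrow$ terminal accuracy) matches the paper, and your test-column argument for bounding $\sum_{i\in\calS\cap\calA}c_i\tau_i$ is essentially \Claimref{auxilliary1-3_proof}. One quantitative issue: your invariant $\sum_{i\in\calS}(1-c_i)\le\sum_{i\notin\calS}(1-c_i)$ is too weak to close the circle you flag. It gives only $|\calS\cap\calA|\ge(1-3\tilde\eps)R$, whereas feasibility of the uniform (or $c$-weighted) test column needs $|\calS\cap\calA|\ge\frac{\alpha(2+\alpha)}{4-\alpha}R$, and one checks $(1-3\tilde\eps)<\frac{\alpha(2+\alpha)}{4-\alpha}$ for every $\tilde\eps>0$. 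The paper instead maintains $\sum_{i\in\calS}(1-c_i)\le\frac{\alpha}{4}\sum_i(1-c_i)$ (\Lemmaref{auxilliary1}), which is exactly calibrated so that the derived size bound equals the feasibility threshold; the factor $\alpha/4$ comes from $\alpha R\sigma_0^2$ versus the firing threshold $4R\sigma_0^2$. This is a minor fix.

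The serious gap is in step~(iii). Your claim that taking $\bY=\bv\bv^T$ yields $|\langle\bg_i-\bG_\calA\bw_i^*,\bv\rangle|\le\sqrt{\tau_i}$ is false: $\tau_i$ is defined with the \emph{optimal} $\bY^*$, and saddle-point optimality only gives the \emph{aggregate} bound $\sum_{i\in\calA}c_i\langle\bg_i-\bG_\calA\bw_i^*,\bv\rangle^2\le 4R\sigma_0^2$. More importantly, even granting that aggregate bound, your decomposition still needs $\bG_\calA\bw_i^*$ close to $\bg_\calS$, but $\bw_i^*$ may put substantial mass on corrupted indices in $\calA\setminus\calS$, so $\bG_\calA\bw_i^*-\bg_\calS$ is a priori uncontrolled. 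The underlying obstruction is that $\bI-\bW^*$ is nearly singular (column-stochastic $\bW^*$ has a singular value $\ge1$), so you cannot simply pass from residuals to points. The paper resolves this with a spectral step you are missing: from the entry cap, $\|\bW^*\|_F^2\le\frac{4-\alpha}{\alpha(2+\alpha)}<1.62$, hence at most one singular value of $\bW^*$ exceeds $0.9$ (\Claimref{W-W1_rank1}); writing $\bW^*=\bW_1+(\text{rank one})$ with $\|\bW_1\|<0.9$, the matrix $\bW_0:=(\bW^*-\bW_1)(\bI-\bW_1)^{-1}=\bu\mathbf{1}^T$ is rank-one column-stochastic, and with $\bg':=\bG_\calA\bu$ one gets $\|\bG_\calA-\bg'\mathbf{1}^T\|=\|\bG_\calA(\bI-\bW^*)(\bI-\bW_1)^{-1}\|\le 20\sqrt{2R}\,\sigma_0$ (\Lemmaref{auxilliary2}), the last step being precisely the termination condition rewritten as $\|\bG_\calA(\bI-\bW^*)\sqrt{\mathrm{diag}(c)}\|^2\le 4R\sigma_0^2$. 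This operator-norm control of \emph{every} column of $\bG_\calA$ around a single point is what gives resilience of $\calA$ around $\btg$ (\Claimref{resilience_grad-est}); combined with resilience of $\calS$ around $\bg_\calS$ and the large intersection $\calS\cap\calA$, the triangle inequality through $\bg_{\calS\cap\calA}$ finishes (\Claimref{final-claim}).
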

The above lemma is from \cite{Resilience_SCV18} and has also been used in \cite{SuX_Byz19,Bartlett-Byz_nonconvex19} in the context of Byzantine-resilient {\em full batch} gradient descent, where workers have i.i.d.~homogeneous data drawn from a probability distribution. We apply it in the context of {\em mini-batch stochastic} gradient descent on {\em heterogeneous} data, where different workers may have different local datasets and we do not assume any probabilistic assumption on data. Our results are under the standard SGD assumption of bounded local variance \eqref{bounded_local-variance} and also under the bounded gradient dissimilarity bound \eqref{bounded_local-global}. For completeness, we provide a comprehensive proof of \Lemmaref{poly-time_grad-est} in this section. 
The proof details are along the lines of those from \cite{Resilience_SCV18,SuX_Byz19}.

To prove \Lemmaref{poly-time_grad-est}, we will use the techniques developed by \cite{Resilience_SCV18} and later used by \cite{SuX_Byz19}. Note that the results in \cite{Resilience_SCV18} are stated for a general norm $\|\cdot\|$, and we use those results in this paper by specializing them to the $\ell_2$-norm, but for simplicity, we write it as $\|\cdot\|$. 
First we define a notion called {\em $(\eps,\delta)$-resilience} for a given set $\calS$, which says that if $\calS$ is resilient around a point $\bmu$ (which need not be its mean), then dropping some elements from $\calS$ does not change the concentration of the resulting set around $\bmu$ by much.

\begin{definition}[Resilience, \cite{Resilience_SCV18}]\label{defn:resilience}
A set $\calS = \{\by_1,\by_2,\hdots,\by_m\}$ of $m$ points, each lying in $\R^d$, is {\em $(\eps,\delta)$-resilient} around a point $\bmu\in\R^d$, if every subset $\calT\subseteq \calS$ of cardinality at least $(1-\eps)m$ satisfies $\left\|\frac{1}{|\calT|}\sum_{\by\in\calT}(\by-\bmu)\right\| \leq \sigma$.
\end{definition}

The notion of resilience is useful for robust gradient estimation, because of the following reasons:
{\sf (i)} Since the subset $\calS$ in the statement of \Lemmaref{poly-time_grad-est} is such that the vectors in $\calS$ are concentrated around their mean, this implies using \Claimref{resilience_honest-grad} (on \Pageref{resilience_honest-grad}) that $\calS$ is resilient around its sample mean $\bg_{\calS}$.
{\sf (ii)} \Algorithmref{robust-grad-est} in fact finds a sufficiently large subset $\calA\subset[R]$ (see \eqref{auxilliary1-2} in \Lemmaref{auxilliary1}) and outputs its empirical mean $\btg$. We show in \Claimref{resilience_grad-est} (on \Pageref{resilience_grad-est}) that $\calA$ is also resilient around its mean $\btg$.
{\sf (iii)} Since both $\calS$ and $\calA$ are large, they must have a large intersection, and by resilience, their means must both be close to the mean of their intersection, and hence to each-other, by the triangle inequality; see \Claimref{final-claim} on \Pageref{final-claim}. This implies that $\btg$ must be close to $\bg_{\calS}$.

In the following discussion, we make these statements precise.

\begin{lemma}\label{lem:auxilliary1}
Let $\alpha := 1-\tilde{\eps}$.
The following invariants are maintained in every iteration of the {\bf while}-loop of \Algorithmref{robust-grad-est}:
\begin{align}
\sum_{i\in\calS\cap\calA}c_i\tau_i \ &\leq \ \alpha R\sigma_0^2 \label{auxilliary1-3} \\
\sum_{i\in\calS}(1-c_i) \ &\leq \ \frac{\alpha}{4}\sum_{i=1}^R(1-c_i) \label{auxilliary1-1} \\
|\calS\cap\calA| \ &\geq \ \frac{\alpha(2+\alpha)}{4-\alpha}R \label{auxilliary1-2}
\end{align}
\end{lemma}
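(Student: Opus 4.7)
The plan is to prove the three invariants jointly by induction on the iterations of the while-loop, exploiting their interdependence: invariant \eqref{auxilliary1-1} implies \eqref{auxilliary1-2} by a counting argument, invariant \eqref{auxilliary1-2} is the key structural input for proving \eqref{auxilliary1-3} via the saddle-point property of \eqref{saddle-point-opt}, and \eqref{auxilliary1-3} in turn drives \eqref{auxilliary1-1} through the next re-weighting step. The base case ($\calA=[R]$, $c_i=1$) is immediate: \eqref{auxilliary1-1} holds since both sides vanish, and \eqref{auxilliary1-2} reduces to $\alpha(4-\alpha)\geq\alpha(2+\alpha)$, i.e.\ $\alpha\leq 1$. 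For the deduction of \eqref{auxilliary1-2} from \eqref{auxilliary1-1} in general, any index removed from $\calA$ had $c_i<\tfrac{1}{2}$ at the moment of removal and is never updated thereafter, so $|\calS\setminus\calA|\leq 2\sum_{i\in\calS}(1-c_i)$. Plugging $\sum_{i\notin\calS}(1-c_i)\leq(1-\alpha)R$ into \eqref{auxilliary1-1} yields $\sum_{i\in\calS}(1-c_i)\leq\frac{\alpha(1-\alpha)}{4-\alpha}R$, whence $|\calS\cap\calA|\geq \alpha R - \frac{2\alpha(1-\alpha)}{4-\alpha}R = \frac{\alpha(2+\alpha)}{4-\alpha}R$.

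The crux of the proof is \eqref{auxilliary1-3}. Using \eqref{auxilliary1-2}, the uniform vector $\widetilde{w}_{ji}=\tfrac{1}{|\calS\cap\calA|}$ for $j\in\calS\cap\calA$ (and $0$ otherwise) satisfies the per-entry cap $\widetilde{w}_{ji}\leq\tfrac{4-\alpha}{\alpha(2+\alpha)R}$. I build $\widetilde{\bW}$ to agree with $\bW^*$ on columns indexed by $\calA\setminus\calS$ and to use this uniform vector on columns indexed by $\calS\cap\calA$. Since $\bW^*$ minimizes $\Phi(\cdot,\bY^*)$ (being the best response to $\bY^*$ at the saddle), $\Phi(\bW^*,\bY^*)\leq\Phi(\widetilde{\bW},\bY^*)$; the $(\calA\setminus\calS)$-terms cancel on both sides, giving
\[
\sum_{i\in\calS\cap\calA} c_i\tau_i \;\leq\; \sum_{i\in\calS\cap\calA} c_i(\bg_i-\bg_{\calS\cap\calA})^T\bY^*(\bg_i-\bg_{\calS\cap\calA}),
\]
where $\bg_{\calS\cap\calA}:=\tfrac{1}{|\calS\cap\calA|}\sum_{j\in\calS\cap\calA}\bg_j$. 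Using $c_i\leq 1$ and the standard PSD trace bound $\mathrm{tr}(\bA\bY^*)\leq\lambda_{\max}(\bA)\cdot\mathrm{tr}(\bY^*)\leq\lambda_{\max}(\bA)$ (for $\bA\succeq\bzero$, since $\mathrm{tr}(\bY^*)\leq 1$), the right-hand side is at most $\lambda_{\max}\!\left(\sum_{i\in\calS\cap\calA}(\bg_i-\bg_{\calS\cap\calA})(\bg_i-\bg_{\calS\cap\calA})^T\right)$. Finally, the elementary PSD identity $\sum_{i\in\calT}(\bg_i-\bg_\calT)(\bg_i-\bg_\calT)^T\preceq\sum_{i\in\calT}(\bg_i-\bmu)(\bg_i-\bmu)^T$ for any $\bmu$, applied with $\calT=\calS\cap\calA$ and $\bmu=\bg_\calS$ and then extended from $\calS\cap\calA$ to $\calS$, reduces this maximum eigenvalue to at most $|\calS|\sigma_0^2=\alpha R\sigma_0^2$ by the hypothesis of \Lemmaref{poly-time_grad-est}.

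For the inductive step on \eqref{auxilliary1-1}, the re-weighting rule gives the increments
\[
\sum_{i\in\calS}(1-c_i^{\text{new}}) - \sum_{i\in\calS}(1-c_i) \;=\; \tau_{\max}^{-1}\!\!\sum_{i\in\calS\cap\calA} c_i\tau_i,
\]
and similarly $\sum_{i=1}^R(1-c_i)$ grows by $\tau_{\max}^{-1}\sum_{i\in\calA}c_i\tau_i$. By \eqref{auxilliary1-3} the $\calS$-increment is at most $\tau_{\max}^{-1}\alpha R\sigma_0^2$; by the while-loop guard $\sum_{i\in\calA}c_i\tau_i>4R\sigma_0^2$, the total-sum increment exceeds $4\tau_{\max}^{-1}R\sigma_0^2$. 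Hence the ratio of increments is strictly less than $\alpha/4$, and combined with the inductive hypothesis this preserves \eqref{auxilliary1-1} for the next iteration. The principal obstacle is the saddle-point argument for \eqref{auxilliary1-3}: the column-swap that builds $\widetilde{\bW}$, the cancellation of the $(\calA\setminus\calS)$-terms, and the trace/PSD manipulations needed to reduce the final bound to the matrix-concentration hypothesis of the lemma.
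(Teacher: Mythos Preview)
Your proposal is correct and follows essentially the same approach as the paper: the same inductive scheme, the same feasible witness (uniform on $\calS\cap\calA$) plugged into the saddle-point optimality of $\bW^*$ to bound $\sum_{i\in\calS\cap\calA}c_i\tau_i$, the same re-weighting increment comparison for \eqref{auxilliary1-1}, and the same counting argument for \eqref{auxilliary1-2}. The only cosmetic differences are that the paper argues column-by-column (noting the objective decouples over $\bw_i$) rather than via a global $\widetilde{\bW}$ with cancellation of the $\calA\setminus\calS$ terms, and it phrases the reduction to $|\calS|\sigma_0^2$ via the scalar minimizer property of the sample mean plus $\text{tr}(\bA\bY)\leq\|\bA\|\|\bY\|_*$ rather than the PSD ordering you invoke; these are equivalent.
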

In \Lemmaref{auxilliary1}, 
\eqref{auxilliary1-3} ensures that the reconstruction error in the non-corrupted gradients is small; \eqref{auxilliary1-1} ensures that we reduce the weights of non-corrupted data at least 4 times slower than overall; and
\eqref{auxilliary1-2} ensures that we do not remove too many non-corrupted points.
\begin{proof}[Proof of \Lemmaref{auxilliary1}]
Note that $c_i,\tau_i,\calA$ get updated throughout the execution of the {\bf while}-loop in \Algorithmref{robust-grad-est}. So, for convenience, we denote these quantities at the beginning of the $t$'th iteration by $c_i(t),\tau_i(t),\calA(t)$.
We prove \Lemmaref{auxilliary1} by induction on \eqref{auxilliary1-1} and \eqref{auxilliary1-2}. Note that induction is not on \eqref{auxilliary1-3}.

{\bf Base case, $t=1$:} Note that $\calA(1)=[R]$ and $c_i(1)=1,\forall i\in\calA$. So \eqref{auxilliary1-1} trivially holds. For \eqref{auxilliary1-2}, note that $\calS\cap\calA(1) = \calS$, and $|\calS|=\alpha R\geq \frac{\alpha(2+\alpha)}{4-\alpha}R$, where the first inequality follows from hypothesis of \Lemmaref{poly-time_grad-est} and the second inequality holds because $\frac{2+\alpha}{4-\alpha}\leq1$ for $\alpha\geq\nicefrac{3}{4}$. We show that \eqref{auxilliary1-3} holds for all $t\geq1$ in the induction step below.

{\bf Induction step:} Suppose the {\bf while}-loop has not terminated in the $t$'th iteration, which means that we update $c_i(t),\calA(t)$ to $c_i(t+1),\calA(t+1)$. 
This implies that the condition on line 5 of \Algorithmref{robust-grad-est} is satisfied, i.e.,
\begin{equation}\label{auxilliary1_interim1}
\sum_{i\in\calA(t)}c_i(t)\tau_i(t) > 4R\sigma_0^2.
\end{equation}
By induction hypothesis, \eqref{auxilliary1-1} and \eqref{auxilliary1-2} hold at the $t$'th iteration, i.e., 
\begin{align}
\sum_{i\in\calS}\big(1-c_i(t)\big) &\leq \frac{\alpha}{4}\sum_{i=1}^R\big(1-c_i(t)\big) \label{auxilliary1_interim2} \\
|\calS\cap\calA(t)| &\geq \frac{\alpha(2+\alpha)}{4-\alpha}R. \label{auxilliary1_interim3}
\end{align} 
Before proving that they also hold at the $(t+1)$'th iteration, we first show that \eqref{auxilliary1-3} holds at the $t$'th iteration for all $t\geq 1$.

\begin{claim}\label{claim:auxilliary1-3_proof}
For all $t\geq1$, we have $\sum_{i\in\calS\cap\calA(t)}c_i(t)\tau_i(t) \ \leq \ \alpha R\sigma_0^2$.
\end{claim}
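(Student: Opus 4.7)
The plan is to exploit the saddle-point optimality of $(\bW^*, \bY^*)$ together with the inductive lower bound $|\calS \cap \calA(t)| \geq \frac{\alpha(2+\alpha)}{4-\alpha} R$ from \eqref{auxilliary1_interim3}. The key structural observation is that this inductive hypothesis is exactly what makes the uniform distribution on $\calS \cap \calA(t)$ a feasible column in the inner minimization of \eqref{saddle-point-opt}; substituting it for the columns of $\bW^*$ indexed by uncorrupted points will expose the right bound on the restricted sum.

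Concretely, I will construct a competitor $\bW'$ that agrees with $\bW^*$ on every column $i \in \calA(t) \setminus \calS$, but for each $i \in \calS \cap \calA(t)$ sets $\bw_i'$ equal to the uniform distribution on $\calS \cap \calA(t)$, so that $\bG_{\calA(t)} \bw_i' = \bg_{\calS \cap \calA(t)} := \frac{1}{|\calS \cap \calA(t)|} \sum_{j \in \calS \cap \calA(t)} \bg_j$. The column-cap constraint $W'_{ji} \leq \frac{4-\alpha}{\alpha(2+\alpha)R}$ reduces to $\frac{1}{|\calS \cap \calA(t)|} \leq \frac{4-\alpha}{\alpha(2+\alpha)R}$, which is just \eqref{auxilliary1_interim3}. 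Since $\bW^*$ minimizes $\Phi(\cdot, \bY^*)$ over the feasible polytope, $\Phi(\bW^*, \bY^*) \leq \Phi(\bW', \bY^*)$; the terms indexed by $\calA(t) \setminus \calS$ coincide on both sides and cancel, leaving
\[
\sum_{i \in \calS \cap \calA(t)} c_i(t)\, \tau_i(t) \;\leq\; \sum_{i \in \calS \cap \calA(t)} c_i(t)\, (\bg_i - \bg_{\calS \cap \calA(t)})^T \bY^* (\bg_i - \bg_{\calS \cap \calA(t)}).
\]

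To close the argument, I will use $c_i(t) \leq 1$ and eigendecompose $\bY^* = \sum_k \lambda_k \bv_k \bv_k^T$ with $\lambda_k \geq 0$ and $\sum_k \lambda_k \leq 1$; the right-hand side becomes $\sum_k \lambda_k \sum_{i \in \calS \cap \calA(t)} \langle \bg_i - \bg_{\calS \cap \calA(t)}, \bv_k \rangle^2$. For each direction $\bv_k$, the one-dimensional variance-minimization identity permits replacing the center $\bg_{\calS \cap \calA(t)}$ by $\bg_\calS$ (the unweighted sample mean is optimal for its own index set, so any other choice can only enlarge the sum), extending the index set from $\calS \cap \calA(t)$ to $\calS$ (non-negative summands), and finally invoking the concentration hypothesis $\lambda_{\max}\big(\frac{1}{|\calS|}\sum_{i \in \calS}(\bg_i - \bg_\calS)(\bg_i - \bg_\calS)^T\big) \leq \sigma_0^2$ from \Lemmaref{poly-time_grad-est} to conclude $\sum_{i \in \calS} \langle \bg_i - \bg_\calS, \bv_k \rangle^2 \leq |\calS| \sigma_0^2$ for every unit $\bv_k$. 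Summing with $\sum_k \lambda_k \leq 1$ and $|\calS| = \alpha R$ yields the claim.

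The main obstacle is identifying the right competitor $\bW'$: the whole argument hinges on the precise alignment between the column cap $\frac{4-\alpha}{\alpha(2+\alpha)R}$ baked into \eqref{saddle-point-opt} and the inductive lower bound on $|\calS \cap \calA(t)|$, which together make the uniform-on-$\calS \cap \calA(t)$ choice feasible; the apparently peculiar form of that cap is there precisely to enable this substitution. Once this structural observation is in place, the remaining pieces---saddle-point optimality, cancellation of the adversarial terms, direction-wise variance minimization, and the concentration hypothesis of \Lemmaref{poly-time_grad-est}---fit together routinely.
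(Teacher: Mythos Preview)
Your proposal is correct and follows essentially the same route as the paper: both construct the uniform-on-$\calS\cap\calA(t)$ competitor (whose feasibility is precisely guaranteed by the inductive bound \eqref{auxilliary1_interim3}), use optimality of $\bW^*$ against $\bY^*$ to bound the restricted sum, replace the center $\bg_{\calS\cap\calA(t)}$ by $\bg_{\calS}$ via variance minimization, extend to all of $\calS$, and finish with the concentration hypothesis. The only cosmetic differences are that the paper invokes column-separability of the inner minimization to bound each $\tau_i$ individually (rather than your global $\Phi(\bW^*,\bY^*)\leq\Phi(\bW',\bY^*)$ with cancellation), and closes with a trace--H\"older argument in place of your explicit eigendecomposition of $\bY^*$; both pairs of choices are equivalent.
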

\begin{proof}
First we define the sets over which we maximize/minimize the expression in \eqref{saddle-point-opt} in the $t$'th iteration:
\begin{align*}
\W(t) &:= \left\{\bW\in\R^{|\calA(t)|\times|\calA(t)|} : \sum_{j\in\calA(t)}W_{ji}=1, \forall i\in\calA(t) \text{ and } W_{ji}\in\left[0,\frac{4-\alpha}{\alpha(2+\alpha)R}\right]\right\}, \\
\Y(t) &:= \left\{\bY\in\R^{d\times d} : \bY\succeq\bzero, \text{tr}(\bY)\leq 1 \right\}.
\end{align*}
It is immediate from the definitions that $\W(t)$ and $\Y(t)$ are compact and convex sets.
Fix any $\bW\in\W(t),\bY\in\Y(t)$, and
consider the cost function $\Phi$ from \eqref{cost-function_defn} in \Algorithmref{robust-grad-est}:
\begin{align}\label{eq:cost-function}
\Phi(\bW,\bY) = \sum_{i\in\calA(t)}c_i(t)(\bg_i-\bG_{\calA(t)}\bw_i)^T\bY(\bg_i-\bG_{\calA(t)}\bw_i).
\end{align}
It can be verified that $\Phi$ is a continuous function of its arguments.
The following claim is proved at the end of this section.
\begin{claim}\label{claim:convex-concave}
$\Phi$ is convex in $\bW$ (for a fixed $\bY$) and concave in $\bY$ (for a fixed $\bW$).
\end{claim}

\noindent Given that \Claimref{convex-concave} holds, and $\W(t),\Y(t)$ are compact and convex sets, and $\Phi$ is a continuous function, it follows from the Von-Neumann Minimax theorem that 
\begin{align}
\min_{\bW\in\W(t)}\max_{\bY\in\Y(t)} \Phi\big(\bW,\bY\big) = \max_{\bY\in\Y(t)}\min_{\bW\in\W(t)} \Phi\big(\bW,\bY\big).
\end{align}
Furthermore, there is a saddle-point $\bW^*(t),\bY^*(t)$ (that achieves the optimum in \eqref{saddle-point-opt} in the $t$'th iteration) such that
\[\max_{\bY\in\Y(t)} \Phi\big(\bW^*(t),\bY\big) = \min_{\bW\in\W(t)} \Phi\big(\bW,\bY^*(t)\big) = \Phi\big(\bW^*(t),\bY^*(t)\big).\] 
In particular, we have $\bW^*(t) \in \arg\min_{\bW\in\W(t)} \Phi(\bW,\bY^*(t))$. 
Substituting the expression for $\Phi$ from \eqref{eq:cost-function}, we get
\begin{align}\label{auxilliary1-3_proof_interim1}
\bW^*(t) \ \in \ \arg\min_{\bW\in\W(t)} \sum_{i\in\calA(t)}c_i(t)\big(\bg_i-\bG_{\calA(t)}\bw_i\big)^T\bY^*(t)\big(\bg_i-\bG_{\calA(t)}\bw_i\big).
\end{align}
Observe that the summation in \eqref{auxilliary1-3_proof_interim1} has $|\calA(t)|$ terms, and $\bw_i$ (the $i$'th column of $\bW$) appears only in the $i$'th summand. This implies that we can separately optimize for each column of $\bW$ in \eqref{auxilliary1-3_proof_interim1}. 
Define \[\W'(t) := \left\{\bw\in\R^{|\calA(t)|} : \sum_{j\in\calA(t)}w_{j}=1 \text{ and } w_{j}\in\left[0,\frac{4-\alpha}{\alpha(2+\alpha)R}\right]\right\}\]
Let $\bw_i^*(t)$ denote the $i$'th column of $\bW^*(t)$. We can write
\begin{align}\label{auxilliary1-3_proof_interim2}
\bw_i^*(t) \ \in \ \arg\min_{\bw\in\W'(t)} \big(\bg_i-\bG_{\calA(t)}\bw\big)^T\bY^*(t)\big(\bg_i-\bG_{\calA(t)}\bw\big).
\end{align}
We have from \eqref{tau-defn} in \Algorithmref{robust-grad-est} that $\tau_i(t) = \big(\bg_i-\bG_{\calA(t)}\bw_i^*(t)\big)^T\bY^*(t)\big(\bg_i-\bG_{\calA(t)}\bw_i^*(t)\big)$.
Note that $\tau_i(t)$ is equal to the expression on the RHS of \eqref{auxilliary1-3_proof_interim2} evaluated with its minimizer $\bw_i^*(t)$.
Define $\btw(t)\in\R^{|\calA(t)|}$ such that $\btw(t)_j := \frac{\mathbbm{1}_{j\in\calS\cap\calA(t)}}{|\calS\cap\calA(t)|}$. Since $|\calS\cap\calA(t)| \geq \frac{\alpha(2+\alpha)}{4-\alpha}R$ (from the induction hypothesis \eqref{auxilliary1_interim3}), it follows that $\btw(t)\in\W'(t)$.
This, together with the definition of $\tau_i(t)$, implies that 
\begin{align}\label{auxilliary1-3_proof_interim3}
\tau_i(t) \leq \big(\bg_i-\bG_{\calA(t)}\btw_i(t)\big)^T\bY^*(t)\big(\bg_i-\bG_{\calA(t)}\btw_i(t)\big)
\end{align}
Let $\bG_{\calA(t)}\in\R^{d\times|\calA(t)|}$ be the matrix with $\bg_{i}, i\in\calA(t)$ as its columns. We have $$\bG_{\calA(t)}\btw(t) = \frac{1}{|\calS\cap\calA(t)|}\sum_{i\in\calS\cap\calA(t)}\bg_i =: \bg_{\calS\cap\calA(t)}.$$

In order to prove \Claimref{auxilliary1-3_proof}, we have to bound 
$\sum_{i\in\calS\cap\calA(t)}c_i(t)\tau_i(t)$:
\begin{align*}
\sum_{i\in\calS\cap\calA(t)}c_i(t)\tau_i(t) &\leq \sum_{i\in\calS\cap\calA(t)}c_i(t)\big(\bg_i-\bG_{\calA(t)}\btw_i(t)\big)^T\bY^*(t)\big(\bg_i-\bG_{\calA(t)}\btw_i(t)\big) \tag{From \eqref{auxilliary1-3_proof_interim3}} \\
&\stackrel{\text{(a)}}{=} \sum_{i\in\calS\cap\calA(t)}c_i(t)\big(\bg_i-\bg_{\calS\cap\calA(t)}\big)^T\bY^*(t)\big(\bg_i-\bg_{\calS\cap\calA(t)}\big) \\
&\stackrel{\text{(b)}}{\leq} \sum_{i\in\calS\cap\calA(t)}\big(\bg_i-\bg_{\calS\cap\calA(t)}\big)^T\bY^*(t)\big(\bg_i-\bg_{\calS\cap\calA(t)}\big) \\
&\stackrel{\text{(c)}}{\leq} \sum_{i\in\calS\cap\calA(t)}\big(\bg_i-\bg_{\calS}\big)^T\bY^*(t)\big(\bg_i-\bg_{\calS}\big) \\
&\stackrel{\text{(d)}}{\leq} \sum_{i\in\calS}\big(\bg_i-\bg_{\calS}\big)^T\bY^*(t)\big(\bg_i-\bg_{\calS}\big) \\
&\stackrel{\text{(e)}}{\leq}\left\|\sum_{i\in\calS}\big(\bg_i-\bg_{\calS}\big)\big(\bg_i-\bg_{\calS}\big)^T\right\| \\
&\stackrel{\text{(f)}}{=} \max_{\bv\in\R^d, \|\bv\|=1}\sum_{i\in\calS}\bv^T[(\bg_i-\bg_{\calS})(\bg_i-\bg_{\calS})^T]\bv \\
&= \max_{\bv\in\R^d, \|\bv\|=1}\sum_{i\in\calS}\langle\bg_i-\bg_{\calS}, \bv\rangle^2 \\
&\stackrel{\text{(g)}}{\leq} |\calS|\sigma_0^2 = \alpha R\sigma_0^2.
\end{align*}
In (a) we used $\bg_{\calS\cap\calA(t)}=\bG_{\calA(t)}\btw(t)$. 
In (b) we used that $c_i(t)\leq1$ for every $i\in\calA(t)$.
In (c) we used the fact that $\bg_{\calS\cap\calA(t)}$ is the minimizer of $\min_{\bu\in\R^d}\sum_{i\in\calS\cap\calA(t)}(\bg_i-\bu)^TY^*(t)(\bg_i-\bu)$. 
In (d) we used that $\bY^*(t)\succeq \bzero$, which implies that the additional terms corresponding to $i\in\calS\setminus\calA(t)$ are non-zero.
The reasoning for (e) is given below.
In (f) we used the definition of the matrix norm.
In (g) we used the hypothesis in \Lemmaref{poly-time_grad-est} about the set $\calS$.
\begin{align*}
\sum_{i\in\calS}\big(\bg_i-\bg_{\calS}\big)^T\bY^*(t)\big(\bg_i-\bg_{\calS}\big) 
&\stackrel{\text{(a)}}{=} \sum_{i\in\calS}\text{tr}\left(\big(\bg_i-\bg_{\calS}\big)^T\bY^*(t)\big(\bg_i-\bg_{\calS}\big)\right) \\
&\stackrel{\text{(b)}}{=} \sum_{i\in\calS}\text{tr}\left(\big(\bg_i-\bg_{\calS}\big)\big(\bg_i-\bg_{\calS}\big)^T\bY^*(t)\right) \\
&\stackrel{\text{(c)}}{=} \text{tr}\left[\left(\sum_{i\in\calS}\big(\bg_i-\bg_{\calS}\big)\big(\bg_i-\bg_{\calS}\big)^T\right)\bY^*(t)\right] \\
&\stackrel{\text{(d)}}{\leq} \left\|\sum_{i\in\calS}\big(\bg_i-\bg_{\calS}\big)\big(\bg_i-\bg_{\calS}\big)^T\right\|\left\|\bY^*(t)\right\|_* \\
&\stackrel{\text{(e)}}{\leq}\left\|\sum_{i\in\calS}\big(\bg_i-\bg_{\calS}\big)\big(\bg_i-\bg_{\calS}\big)^T\right\|
\end{align*}
The equality (a) follows because trace of a scalar is the scalar itself.
In (b) we used that $\text{tr}(\bA\bB) = \text{tr}(\bB\bA)$ for any two dimension compatible matrices $\bA,\bB$.
In (c) we used that $\text{tr}(\bA+\bB) = \text{tr}(\bA) + \text{tr}(\bB)$.
In (d) we used $\text{tr}(\bA\bB) \leq \|\bA\|\|\bB\|_*$ (shown below in \Claimref{holder}), where $\|\cdot\|_*$ denotes the nuclear norm, which is equal to the sum of singular values.
Note that the matrix norm here is induced by the $\ell_2$ norm. 
(e) follows from the fact that for a positive semi-definite matrix $\bY$, 
we have $\|\bY\|_*=\text{tr}(\bY)$, which is at most 1.
This completes the proof of \Claimref{auxilliary1-3_proof}.
\end{proof}

\begin{claim}\label{claim:holder}
For any two dimension compatible matrices $\bA,\bB$, we have $\emph{tr}(\bA\bB)\leq\|\bA\|\|\bB\|_*$, where $\|\cdot\|$ is the matrix norm induced by the $\ell_2$-norm, and $\|\cdot\|_*$ is the nuclear norm, which is equal to the sum of singular values of $\bB$.
\end{claim}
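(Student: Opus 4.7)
The plan is to use the singular value decomposition of $\bB$ to reduce the trace to a sum of rank-one traces, each of which is easily bounded by the operator norm $\|\bA\|$. Concretely, write $\bB = \sum_{i} \sigma_i \bu_i \bv_i^T$, where the $\sigma_i \geq 0$ are the singular values of $\bB$ (so $\|\bB\|_* = \sum_i \sigma_i$) and $\{\bu_i\},\{\bv_i\}$ are orthonormal sets. Substituting and using linearity of trace,
\begin{align*}
\text{tr}(\bA\bB) \;=\; \sum_i \sigma_i \, \text{tr}(\bA \bu_i \bv_i^T) \;=\; \sum_i \sigma_i \, \bv_i^T \bA \bu_i,
\end{align*}
where the last equality uses the cyclic property $\text{tr}(\bA\bu_i\bv_i^T) = \text{tr}(\bv_i^T \bA \bu_i) = \bv_i^T \bA \bu_i$ since the right-hand side is a scalar.

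Next I would bound each term $\bv_i^T \bA \bu_i$ individually. By Cauchy–Schwarz, $|\bv_i^T \bA \bu_i| \leq \|\bv_i\|\,\|\bA \bu_i\|$, and since $\bv_i,\bu_i$ are unit vectors and $\|\bA\|$ equals $\sup_{\|\bx\|=1}\|\bA\bx\|$, this gives $|\bv_i^T \bA \bu_i| \leq \|\bA\|$. Combining,
\begin{align*}
\text{tr}(\bA\bB) \;\leq\; \sum_i \sigma_i \, |\bv_i^T \bA \bu_i| \;\leq\; \|\bA\| \sum_i \sigma_i \;=\; \|\bA\| \, \|\bB\|_*,
\end{align*}
which is the desired inequality. (If one wants the absolute-value version, the same argument applied to $|\text{tr}(\bA\bB)|$ via the triangle inequality yields $|\text{tr}(\bA\bB)| \leq \|\bA\|\,\|\bB\|_*$.)

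This is essentially a special case of the matrix Hölder inequality for Schatten norms (with exponents $\infty$ and $1$), and no step should present any real obstacle. The only thing to be slightly careful about is ensuring the dimensions match up so that the SVD of $\bB$ (which may be rectangular) is well-defined; but since the statement only assumes $\bA,\bB$ are dimension compatible so that $\bA\bB$ is square, an SVD $\bB = \bU\boldsymbol{\Sigma}\bV^T$ with $\bU,\bV$ having orthonormal columns exists and the computation above goes through verbatim.
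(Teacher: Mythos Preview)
Your proof is correct and follows essentially the same approach as the paper's: expand $\bB$ via its SVD as $\sum_i \sigma_i \bu_i\bv_i^T$, use linearity and cyclicity of trace to write $\text{tr}(\bA\bB)=\sum_i \sigma_i \bv_i^T\bA\bu_i$, and then bound each scalar term via Cauchy--Schwarz and the definition of the operator norm. The steps, order, and justifications match the paper's proof almost line for line.
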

\begin{proof}
Let $r=\mathrm{rank}(\bB)$, and let $\sigma_i,i=1,2,\hdots,r$ denote the non-zero singular values of $\bB$.
By the singular value decomposition, we have $\bB=\sum_{i=1}^r\sigma_i\bu_i\bv_i^T$, where $\bu_i,\bv_i$ are the left and right singular vectors, respectively, corresponding to the singular value $\sigma_i$.
Note that $\bu_i,\bv_i$ for every $i=1,\hdots,r$ are unit norm vectors.
\begin{align*}
\text{tr}(\bA\bB) &= \text{tr}(\bA\sum_{i=1}^r\sigma_i\bu_i\bv_i^T) \\
&= \sum_{i=1}^r\sigma_i\text{tr}(\bA\bu_i\bv_i^T) \tag{Since \text{tr}(\bA+\bB) = \text{tr}(\bA) + \text{tr}(\bB)} \\
&= \sum_{i=1}^r\sigma_i\text{tr}(\bv_i^T\bA\bu_i) \tag{Since $\text{tr}(\bA\bB) = \text{tr}(\bB\bA)$} \\
&= \sum_{i=1}^r\sigma_i\bv_i^T\bA\bu_i \tag{Since $\bv_i^T\bA\bu_i$ is a scalar} \\
&\stackrel{\text{(a)}}{\leq} \sum_{i=1}^r\sigma_i\|\bv_i\|\|\bA\|\|\bu_i\| \notag \\
&= \|\bA\|\sum_{i=1}^r\sigma_i \tag{Since $\|\bu_i\|=1, \|\bv_i\|=1$, for every $i=1,\hdots,r$} \\
&= \|\bA\|\|\bB\|_*
\end{align*}
In (a), first we used the Cauchy-Schwarz inequality to write $\bv_i^T\bA\bu_i \leq \|\bv_i\|\|\bA\bu_i\|$ and then used the definition of matrix norm to write $\|\bA\bu_i\| \leq \|\bA\|\|\bu_i\|$.
Note that if $\bA,\bB$ are positive semi-definite, then equality holds in (a) above if and only if $\bB$ is a multiple of $\bu\bu^T$, where $\bu$ is the eigenvector corresponding to the largest eigenvalue of $\bA$.

This completes the proof of \Claimref{holder}.
\end{proof}
Now we are ready to prove the inductive step for \eqref{auxilliary1-1} and \eqref{auxilliary1-2}.
\begin{align}
\sum_{i\in\calS}\big(1-c_i(t+1)\big) &= \sum_{i\in\calS\cap\calA(t)}\big(1-c_i(t+1)\big) + \sum_{i\in\calS\setminus\calA(t)}\big(1-c_i(t+1)\big) \nonumber \\
&= \sum_{i\in\calS\cap\calA(t)}\big(1-c_i(t+1)\big) + \sum_{i\in\calS\setminus\calA(t)}\big(1-c_i(t)\big) \tag{Since $c_i(t+1)=c_i(t), \forall i\notin\calA(t)$} \nonumber \\
&=  \sum_{i\in\calS}\big(1-c_i(t)\big) + \sum_{i\in\calS\cap\calA(t)}\big(c_i(t)-c_i(t+1)\big) \label{auxilliary1-induction1}
\end{align}
The first term in \eqref{auxilliary1-induction1} can be bounded using the induction hypothesis \eqref{auxilliary1_interim2}: 
$\sum_{i\in\calS}\big(1-c_i(t)\big) \leq \frac{\alpha}{4}\sum_{i=1}^R\big(1-c_i(t)\big)$. The second term is equal to $\frac{1}{\tau_{\max}(t)}\sum_{i\in\calS\cap\calA(t)}\tau_i(t)c_i(t)$ (see line 6 of \Algorithmref{robust-grad-est}). 
It follows from \eqref{auxilliary1_interim1} and \Claimref{auxilliary1-3_proof} that $\sum_{i\in\calS\cap\calA(t)}c_i(t)\tau_i(t) < \frac{\alpha}{4}\sum_{i\in\calA(t)}c_i(t)\tau_i(t)$. Substituting these in \eqref{auxilliary1-induction1}, we get
\begin{align}
\sum_{i\in\calS}\big(1-c_i(t+1)\big) &< \frac{\alpha}{4}\sum_{i=1}^R\big(1-c_i(t)\big) + \frac{\alpha}{4\tau_{\max}(t)}\sum_{i\in\calA(t)}c_i(t)\tau_i(t) \nonumber \\
&= \frac{\alpha}{4}\left[\sum_{i\in[R]\setminus\calA(t)}\big(1-c_i(t)\big) + \sum_{i\in\calA(t)}\big(1-c_i(t)\big) + \sum_{i\in\calA(t)}c_i(t)\frac{\tau_i(t)}{\tau_{\max}(t)}\right] \nonumber \\
&= \frac{\alpha}{4}\left[\sum_{i\in[R]\setminus\calA(t)}\big(1-c_i(t+1)\big) + \sum_{i\in\calA(t)}\left(1-\underbrace{\left(1-\frac{\tau_i(t)}{\tau_{\max}(t)}\right)c_i(t)}_{=\ c_i(t+1)}\right) \right] \tag{Since $c_i(t+1)=c_i(t), \forall i\notin\calA(t)$} \\
&= \frac{\alpha}{4}\sum_{i=1}^R\big(1-c_i(t+1)\big). \label{auxilliary1-induction2}
\end{align}
This proves that \eqref{auxilliary1-1} holds throughout the execution of the {\bf while}-loop of \Algorithmref{robust-grad-est}. 
Now we prove \eqref{auxilliary1-2}.

From \eqref{auxilliary1-induction2}, we have $\sum_{i\in\calS}\big(1-c_i(t+1)\big) = \frac{\alpha}{4}\left(\sum_{i\in\calS}\big(1-c_i(t+1) + \sum_{i\in[R]\setminus\calS}\big(1-c_i(t+1)\big)\right)$. By rearranging terms, we get $\sum_{i\in\calS}\big(1-c_i(t+1)\big) \leq \frac{\alpha}{4-\alpha}\sum_{i\in[R]\setminus\calS}\big(1-c_i(t+1)\big)$. Since $c_i(t+1)\geq 0$ for all $i,t$, and $|\calS|\geq\alpha R$, we have 
\begin{align}
\sum_{i\in\calS}\big(1-c_i(t+1)\big) \leq \frac{\alpha(1-\alpha)}{4-\alpha}R. \label{auxilliary1-induction3}
\end{align}
It follows from \eqref{auxilliary1-induction3} that the number of $i$'s in $\calS$ for which $c_i(t+1) < \frac{1}{2}$ is at most $\frac{2\alpha(1-\alpha)}{4-\alpha}R$. So, at most $\frac{2\alpha(1-\alpha)}{4-\alpha}R$ elements of $\calS$ can be removed from $\calA(t)$ in total in line 7 of \Algorithmref{robust-grad-est}. This implies $|\calS\setminus\calA(t+1)|\leq\frac{2\alpha(1-\alpha)}{4-\alpha}R$. So, $|\calS\cap\calA(t+1)| = |\calS|-|\calS\setminus\calA(t+1)| \geq \alpha R - \frac{2\alpha(1-\alpha)}{4-\alpha}R = \frac{\alpha(2+\alpha)}{4-\alpha}R$. This proves that \eqref{auxilliary1-2} is preserved throughout the execution of the {\bf while}-loop of \Algorithmref{robust-grad-est}. 

This concludes the proof of \Lemmaref{auxilliary1}.
\end{proof}

\begin{claim}\label{claim:W-W1_rank1}
Let $\bW$ be the minimizer of \eqref{saddle-point-opt} in \Algorithmref{robust-grad-est}, when the {\bf while}-loop terminates. Let $\bW_1$ be the result of zeroing out all singular values of $\bW$ that are greater than 0.9. Then $\bW-\bW_1$ is a rank one matrix.
\end{claim}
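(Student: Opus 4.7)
The plan is to exploit two structural properties of any minimizer $\bW$ in the feasible set $\W(t)$ of \eqref{saddle-point-opt}: the column-sum constraint $\sum_{j\in\calA}W_{ji}=1$ (which forces the operator norm to be at least $1$), and the entry-wise cap $W_{ji}\le \frac{4-\alpha}{\alpha(2+\alpha)R}$ (which forces the Frobenius norm to be strictly less than $\sqrt{1.81}$). Together these rule out having two singular values above $0.9$, so $\bW-\bW_1$ must be rank one.

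First I would note that the column-sum constraint is exactly the statement $\bW^T\bm{1}=\bm{1}$, so $\bm{1}$ is a fixed vector of $\bW^T$ and hence
\[
\sigma_1(\bW)=\|\bW^T\|_{\mathrm{op}}\ \ge\ \frac{\|\bW^T\bm{1}\|}{\|\bm{1}\|}\ =\ 1.
\]
In particular, the largest singular value already exceeds the threshold $0.9$, so \emph{at least one} singular value is zeroed out into $\bW-\bW_1$.

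Next I would upper-bound $\|\bW\|_F^2$ by combining the entry cap with the column sums: for every $i\in\calA$,
\[
\sum_{j\in\calA}W_{ji}^2\ \le\ \Big(\max_{j}W_{ji}\Big)\sum_{j\in\calA}W_{ji}\ \le\ \frac{4-\alpha}{\alpha(2+\alpha)R}\cdot 1,
\]
so $\|\bW\|_F^2\le |\calA|\cdot\frac{4-\alpha}{\alpha(2+\alpha)R}\le \frac{4-\alpha}{\alpha(2+\alpha)}$. The function $\alpha\mapsto \frac{4-\alpha}{\alpha(2+\alpha)}$ is decreasing on $[3/4,1]$, so under the hypothesis $\alpha=1-\tilde\eps\ge 3/4$ one has $\|\bW\|_F^2\le \frac{52}{33}<1.81$.

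Finally, I would conclude by contradiction. If two singular values of $\bW$ were strictly greater than $0.9$, then combined with $\sigma_1\ge 1$ we would obtain
\[
\|\bW\|_F^2\ \ge\ \sigma_1^2+\sigma_2^2\ >\ 1^2+(0.9)^2\ =\ 1.81,
\]
contradicting the Frobenius bound above. Hence exactly one singular value of $\bW$ exceeds $0.9$, and zeroing out the singular values above $0.9$ leaves $\bW-\bW_1$ as a single rank-one dyad. The argument is essentially friction-free; the only calibration needed is that the corruption tolerance $\tilde\eps\le 1/4$ (equivalently $\alpha\ge 3/4$) is tight enough for the Frobenius bound to stay below $1+(0.9)^2$, which is precisely why the threshold $0.9$ appears in the statement.
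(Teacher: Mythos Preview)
Your proof is correct and follows essentially the same route as the paper: bound $\|\bW\|_F^2\le \frac{4-\alpha}{\alpha(2+\alpha)}\le \frac{52}{33}$ from the entry cap and column sums, use column-stochasticity to get $\sigma_1\ge 1$, and conclude that a second singular value above $0.9$ would push the Frobenius norm too high. The only cosmetic difference is that the paper compares $\frac{52}{33}$ against $2\times(0.9)^2=1.62$ rather than your $1^2+(0.9)^2=1.81$, but both inequalities hold and the logic is identical.
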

\begin{proof}
Note that the constraints in \eqref{saddle-point-opt} imply that $\|\bW\|_F^2\leq \frac{4-\alpha}{\alpha(2+\alpha)}$. 
This can be seen as follows: 
\begin{align*}
\|\bW\|_F^2 &= \sum_{i,j\in\calA}W_{ji}^2 \\
&\leq \sum_{i.j\in\calA} W_{ji}\cdot \max_{k,l\in\calA} W_{kl} \\
&\leq \sum_{i\in\calA}\left(\sum_{j\in\calA} W_{ji}\right)\cdot\frac{4-\alpha}{\alpha(2+\alpha)R} \tag{Since $\max_{k,l\in\calA} W_{kl} \leq \frac{4-\alpha}{\alpha(2+\alpha)R}$; see \eqref{saddle-point-opt}} \\
&= \frac{4-\alpha}{\alpha(2+\alpha)}. \tag{Since $\sum_{j\in\calA} W_{ji}=1, \quad \forall i\in\calA$; see \eqref{saddle-point-opt}}
\end{align*}
Let $\sigma_1,\sigma_2,\hdots,\sigma_{|\calA|}$ denote the singular values of $\bW$, arranged in non-increasing order. Since $\|\bW\|_F^2 = \sum_{i\in\calA} \sigma_i^2$, and that, $\frac{4-\alpha}{\alpha(2+\alpha)} \leq \nicefrac{52}{33} < 2\times 0.9^2$ for $\alpha\geq\nicefrac{3}{4}$, we have that at most one singular value of $\bW$ can be greater than $0.9$. Since $\sum_{j\in\calA} W_{ji}=1, \quad \forall i\in\calA$, i.e., $\bW$ is a column stochastic matrix, we know that its largest singular value is at least 1. Together, we have that $\sigma_1\geq 1$ and $\sigma_i < 0.9, \forall i=2,3,\hdots,|\calA|$. Since $\bW_1$ is the result of zeroing out all singular values of $\bW$ that are greater than 0.9, as a consequence, $\bW-\bW_1$ is a rank one matrix.
\end{proof}

\begin{claim}\label{claim:W0-rank1}
Let $\bW_0=(\bW-\bW_1)(\bI-\bW_1)^{-1}$. Then $\bW_0$ is a column stochastic matrix of rank one.
\end{claim}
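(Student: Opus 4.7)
The plan is to exploit the rank-one structure of $\bW - \bW_1$ together with the column-stochasticity of $\bW$ to pin down $\bW_0$ as a scaled outer product, and then invoke Perron--Frobenius to obtain entrywise non-negativity.

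First, I would observe that $\bI - \bW_1$ is invertible: by construction $\bW_1$ has all singular values at most $0.9$, so $\|\bW_1\| \leq 0.9 < 1$. Hence $\bW_0$ is well-defined, and since right-multiplication by an invertible matrix preserves rank, $\bW_0$ inherits rank one from $\bW - \bW_1$ (which is rank one by \Claimref{W-W1_rank1}, and nonzero because the top singular value $\sigma_1 \geq 1$, as shown in the proof of that claim).

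Second, the unit column-sum property is a short calculation. The constraint in \eqref{saddle-point-opt} makes $\bW$ column-stochastic, so $\mathbf{1}^T \bW = \mathbf{1}^T$; therefore $\mathbf{1}^T(\bW - \bW_1) = \mathbf{1}^T(\bI - \bW_1)$, and right-multiplying by $(\bI - \bW_1)^{-1}$ yields $\mathbf{1}^T \bW_0 = \mathbf{1}^T$. Writing the top SVD component $\bW - \bW_1 = \sigma_1 \bu_1 \bv_1^T$ (legitimate because the proof of \Claimref{W-W1_rank1} shows $\sigma_1 \geq 1$ while all other $\sigma_i < 0.9$, so the top singular value is strictly simple), one has $\bW_0 = \bu_1 \bw^T$ for some nonzero row vector $\bw$; combining with $\mathbf{1}^T \bW_0 = \mathbf{1}^T$ forces $\bw = \mathbf{1}/(\mathbf{1}^T \bu_1)$, yielding the closed form $\bW_0 = \bu_1 \mathbf{1}^T/(\mathbf{1}^T \bu_1)$, with every column equal to $\bu_1/(\mathbf{1}^T \bu_1)$.

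The main obstacle is entrywise non-negativity, which reduces to choosing the sign of $\bu_1$ so that $\bu_1 \geq 0$ (and thus $\mathbf{1}^T \bu_1 > 0$). For this I would apply the Perron--Frobenius theorem to the symmetric non-negative matrix $\bW \bW^T$, which is non-negative because the constraint in \eqref{saddle-point-opt} forces $\bW \geq 0$ entrywise. Its top eigenvalue $\sigma_1^2 \geq 1$ is strictly separated from the next eigenvalue $\sigma_2^2 < 0.81$, so the top eigenspace is one-dimensional; Perron--Frobenius then supplies a non-negative eigenvector in this eigenspace, which must coincide (up to sign) with $\bu_1$. Fixing that sign choice makes every entry of $\bW_0$ non-negative, completing the proof. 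The rank-one and column-sum parts are essentially algebraic bookkeeping; the non-negativity step is where the structural facts (non-negativity of $\bW$ from the feasibility constraints, plus the simple-top-singular-value gap) both have to be used simultaneously.
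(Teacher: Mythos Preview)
Your argument for invertibility of $\bI-\bW_1$, rank one, and unit column sums is correct and essentially identical to the paper's proof. The paper, however, stops there: it explicitly declares ``column stochastic'' to mean only $\mathbf{1}^T\bW_0=\mathbf{1}^T$, and never establishes (or uses) entrywise non-negativity of $\bW_0$. Downstream, all that is needed is the form $\bW_0=\bu\mathbf{1}^T$ so that $\bG_{\calA}\bW_0$ has identical columns; no sign condition on $\bu$ is invoked.

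Your Perron--Frobenius step is therefore additional content. It is sound as stated: $\bW\bW^T$ is entrywise non-negative and symmetric with a strict spectral gap $\sigma_1^2\geq 1>0.81>\sigma_2^2$, so the weak Perron--Frobenius theorem for non-negative matrices yields a non-negative vector spanning the (one-dimensional) top eigenspace, forcing $\bu_1\geq 0$ after a sign choice. This buys you the standard meaning of ``column stochastic,'' at the cost of invoking a spectral theorem the paper avoids; the paper's route is shorter but relies on a nonstandard use of the term.
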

\begin{proof}
First we show that $\bW_0$ is a column stochastic matrix, i.e., ${\bf 1}^T\bW_0={\bf 1}^T$.
\begin{align*}
{\bf 1}^T\bW_0 &= {\bf 1}^T(\bW-\bW_1)(\bI-\bW_1)^{-1} \\
&= ({\bf 1}^T-{\bf 1}^T\bW_1)(\bI-\bW_1)^{-1} \tag{Since $({\bf 1}^1\bW={\bf 1}^T) \iff (\sum_{j\in\calA} W_{ji}=1,\ \forall i\in\calA$); see \eqref{saddle-point-opt}} \\
&= {\bf 1}^T(\bI-\bW_1)(\bI-\bW_1)^{-1} \\
&= {\bf 1}^T.
\end{align*}
Now we show that $\mathrm{rank}(\bW_0)=1$. Since for any two dimension-compatible matrices $\bA,\bB$, we have $\mathrm{rank}(\bA\bB) \leq \min\{\mathrm{rank}(\bA),\mathrm{rank}(\bB)\}$. As a consequence, since $\bW_0=(\bW-\bW_1)(I-\bW_1)^{-1}$, where $\mathrm{rank}(W-W_1)=1$ (from \Claimref{W-W1_rank1}), we have that $\mathrm{rank}(\bW_0)\leq 1$. 
Since $\bW_0$ is a column stochastic matrix, its largest singular value is at least one, implying that $\mathrm{rank}(\bW_0)\geq1$. As a result, we have that $\mathrm{rank}(\bW_0)=1$.
\end{proof}
Since $\bW_0$ is a column stochastic matrix of rank one, it follows that $\bW_0=\bu{\bf 1}^T$ for some vector $\bu\in\R^{|\calA|}$. Recall that $\bG_{\calA}$ is a $d\times|\calA|$ matrix whose columns are the gradients $\bg_i, i\in\calA$.
Define $\bg':=\bG_{\calA}\bu$. This implies that all the columns of $\bG_{\calA}\bW_0$ are identical and equal to $\bg'$: $\bG_{\calA}\bW_0=\bG_{\calA}\bu{\bf 1}^T=\bg'{\bf 1}^T=[\bg'\ \bg'\ \hdots\ \bg']$.
\begin{lemma}\label{lem:auxilliary2}
$\left\|\bG_{\calA}-\bg'{\bf 1}^T\right\| \leq 20\sqrt{2R}\sigma_0$.
\end{lemma}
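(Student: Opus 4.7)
The plan is to factor $\bG_{\calA}-\bg'\mathbf{1}^T$ into a product of matrices, bound one factor via the termination condition of \Algorithmref{robust-grad-est} and the saddle-point optimality, and bound the other via the spectral bound on $\bW_1$. First, since $\bW_0=\bu\mathbf{1}^T$ and $\bg'=\bG_{\calA}\bu$, we have $\bg'\mathbf{1}^T=\bG_{\calA}\bW_0$, so
\begin{align*}
\bG_{\calA}-\bg'\mathbf{1}^T \;=\; \bG_{\calA}(\bI-\bW_0) \;=\; \bG_{\calA}(\bI-\bW^*)(\bI-\bW_1)^{-1},
\end{align*}
where the last equality follows by writing $\bI-\bW_0 = [(\bI-\bW_1)-(\bW^*-\bW_1)](\bI-\bW_1)^{-1}=(\bI-\bW^*)(\bI-\bW_1)^{-1}$. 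Submultiplicativity of the matrix $2$-norm gives $\|\bG_{\calA}-\bg'\mathbf{1}^T\|\leq \|\bG_{\calA}(\bI-\bW^*)\|\cdot\|(\bI-\bW_1)^{-1}\|$, so it suffices to bound each factor.

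For the second factor, recall from \Claimref{W-W1_rank1} (more precisely its proof) that every singular value of $\bW_1$ is at most $0.9$, i.e.\ $\|\bW_1\|\leq 0.9$. Then for any unit vector $\bv$, $\|(\bI-\bW_1)\bv\|\geq\|\bv\|-\|\bW_1\bv\|\geq 1-0.9=0.1$, so $\sigma_{\min}(\bI-\bW_1)\geq 0.1$ and hence $\|(\bI-\bW_1)^{-1}\|\leq 10$.

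For the first factor, I would use the variational identity
\begin{align*}
\|\bG_{\calA}(\bI-\bW^*)\|^2 \;=\; \lambda_{\max}\!\big(\bG_{\calA}(\bI-\bW^*)(\bI-\bW^*)^T\bG_{\calA}^T\big) \;=\; \max_{\substack{\bY\succeq\bzero\\\mathrm{tr}(\bY)\leq 1}}\sum_{i\in\calA}(\bg_i-\bG_{\calA}\bw_i^*)^T\bY(\bg_i-\bG_{\calA}\bw_i^*),
\end{align*}
(exactly the identity from \Claimref{matrix-norm_trace-maximization}/\Claimref{holder} used elsewhere in the paper). Letting $\bY^{**}$ denote a maximizer above and using $c_i\geq 1/2$ for all $i\in\calA$ (since the algorithm never keeps $i$ with $c_i<1/2$),
\begin{align*}
\|\bG_{\calA}(\bI-\bW^*)\|^2 \;\leq\; 2\sum_{i\in\calA}c_i(\bg_i-\bG_{\calA}\bw_i^*)^T\bY^{**}(\bg_i-\bG_{\calA}\bw_i^*) \;=\; 2\Phi(\bW^*,\bY^{**}).
\end{align*}
Since $(\bW^*,\bY^*)$ is a saddle point of $\Phi$, we have $\Phi(\bW^*,\bY^{**})\leq\Phi(\bW^*,\bY^*)=\sum_{i\in\calA}c_i\tau_i$, and the termination condition of the {\bf while}-loop ensures $\sum_{i\in\calA}c_i\tau_i\leq 4R\sigma_0^2$. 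Combining these, $\|\bG_{\calA}(\bI-\bW^*)\|^2\leq 8R\sigma_0^2$, i.e.\ $\|\bG_{\calA}(\bI-\bW^*)\|\leq 2\sqrt{2R}\,\sigma_0$.

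Multiplying the two bounds yields $\|\bG_{\calA}-\bg'\mathbf{1}^T\|\leq 2\sqrt{2R}\,\sigma_0\cdot 10=20\sqrt{2R}\,\sigma_0$, as desired. The only mildly delicate step is the saddle-point manipulation, where one must be careful that $\bY^*$ maximizes $\Phi(\bW^*,\cdot)$ (used to pass from $\Phi(\bW^*,\bY^{**})$ to the termination bound) while the maximizer $\bY^{**}$ in the norm identity is over the \emph{unweighted} quadratic form—this is why the factor $c_i\geq 1/2$ is needed to bridge the two.
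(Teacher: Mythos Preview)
Your proof is correct and follows essentially the same route as the paper's: factor $\bG_{\calA}-\bg'\mathbf{1}^T=\bG_{\calA}(\bI-\bW^*)(\bI-\bW_1)^{-1}$, bound $\|(\bI-\bW_1)^{-1}\|\leq 10$ via the $0.9$ spectral bound on $\bW_1$, and bound $\|\bG_{\calA}(\bI-\bW^*)\|$ using the variational trace identity together with $c_i\geq\tfrac12$ and the termination condition. The only cosmetic difference is that the paper packages the weights into the matrix $\bG_{\calA}(\bI-\bW)\sqrt{\mathrm{diag}(c_{\calA})}$ and applies the norm identity to that object directly (so the saddle-point step is implicit), whereas you apply the identity to the unweighted matrix and then explicitly invoke the saddle-point inequality $\Phi(\bW^*,\bY^{**})\leq\Phi(\bW^*,\bY^*)$; both arrive at the same $8R\sigma_0^2$ bound.
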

\begin{proof}
First we show $\left\|\bG_{\calA}(\bI-\bW)\sqrt{\mathrm{diag}(c_{\calA})}\right\|\leq2\sqrt{R}\sigma_0$,
where $\sqrt{\mathrm{diag}(c_A)}$ denotes a $|\calA|\times|\calA|$ diagonal matrix, whose entries are $\sqrt{c_i}, i\in\calA$. For this, we use the following identity: 

\begin{claim}\label{claim:matrix-norm_trace-maximization}
For any matrix $\bZ$ with entries from $\R$, we have 
$$\|\bZ\|^2 = \max_{\bY\succeq\bzero, \emph{tr}(\bY)\leq1}\emph{tr}(\bZ^T\bY\bZ).$$
\end{claim}
\begin{proof}
Take any $\bY\succeq\bzero$ such that $\text{tr}(\bY)\leq1$. First we show that $\text{tr}(\bZ^T\bY\bZ) \leq \|\bZ\|^2$.
\[\text{tr}(\bZ^T\bY\bZ) \ \stackrel{\text{(a)}}{=} \ \text{tr}(\bZ\bZ^T\bY) 
\ \stackrel{\text{(b)}}{\leq}\ \|\bZ\bZ^T\|\|\bY\|_* 
\ \stackrel{\text{(c)}}{\leq}\ \|\bZ\bZ^T\| 
\ \stackrel{\text{(d)}}{=}\ \|\bZ\|^2.\]
Here (a) follows because $\text{tr}(\bA\bB) = \text{tr}(\bB\bA)$ for any $\bA,\bB$.
The inequality (b) follows from $\text{tr}(\bA\bB)\leq\|\bA\|\|\bB\|_*$ (which is proved in \Claimref{holder}), where $\|\cdot\|_*$ denotes the nuclear norm, and $\|\bB\|_*$ is equal to sum of the singular values of $\bB$. (c) follows from the fact that for a positive semi-definite matrix $\bY$, we have $\|\bY\|_*=\text{tr}(\bY)$, which is at most 1. (d) follows because $\|\bZ\| = \|\bZ^T\|$ and that the largest singular value $\sigma_{\max}(\bZ)$ of $\bZ$ is equal to $\sqrt{\lambda_{\max}(\bZ^T\bZ)}$, where $\lambda_{\max}(\bZ^T\bZ)=\|\bZ^T\bZ\|$ is the largest eigenvalue of $\bZ^T\bZ$.

Taking maximum over all $\bY\succeq\bzero$ such that $\text{tr}(\bY)\leq1$, we get 
\begin{align}\label{matrix-norm_trace-max_interim1}
\max_{\bY\succeq\bzero, \text{tr}(\bY)\leq1}\text{tr}(\bZ^T\bY\bZ) \leq \|\bZ\|^2.
\end{align}
Now we show that $\max_{\bY\succeq\bzero, \text{tr}(\bY)\leq1}\text{tr}(\bZ^T\bY\bZ) \geq \|\bZ\|^2$. 
This, together with \eqref{matrix-norm_trace-max_interim1}, proves \Claimref{matrix-norm_trace-maximization}.
\begin{align*}
\max_{\bY\succeq\bzero, \text{tr}(\bY)\leq1} \text{tr}(\bZ^T\bY\bZ) &\stackrel{\text{(e)}}{\geq} \max_{\bv:\|\bv\|=1} \text{tr}(\bZ^T\bv\bv^T\bZ) \\
&= \max_{\bv:\|\bv\|=1} \text{tr}(\bv^T\bZ\bZ^T\bv) \\
&= \max_{\bv:\|\bv\|=1} \bv^T\bZ\bZ^T\bv \\
&= \max_{\bv:\|\bv\|=1} \|\bZ^T\bv\|^2 \\
&\stackrel{\text{(g)}}{=} \|\bZ^T\|^2 \stackrel{\text{(h)}}{=} \|\bZ\|^2
\end{align*}
In (e), we restrict to those $\bY$'s which are of the form $\bY=\bv\bv^T$ for a unit norm vector $\bv$.
It is immediate that such $\bY$'s are positive semi-definite and their trace is equal to one.
Note that the vector $\bv$ that attains the maximum in the RHS of (g) is the eigenvector corresponding to the largest eigenvalue of $\bZ\bZ^T$.
In (g) we used the definition of the matrix norm $\|\bA\| = \max_{\bv:\|\bv\|=1}\|\bA\bv\|$, and in (h) we used the fact that transpose does not change the matrix norm.
\end{proof}

By letting $\bZ=[\bz_1\ \bz_2\ \hdots\ \bz_m]$, where $\{\bz_i\}_{i=1}^m$ are the columns of $\bZ$, we have $\text{tr}(\bZ^T\bY\bZ) = \sum_{i=1}^m\bz_i^T\bY\bz_i$. Applying this identity to $\bZ=\bG_{\calA}(\bI-\bW)\sqrt{\mathrm{diag}(c_{\calA})}$ (which implies $z_i=\sqrt{c_i}(\bg_i-\bG_{\calA}\bw_i)$) gives
\begin{align}
\|\bG_{\calA}(\bI-\bW)\sqrt{\mathrm{diag}(c_A)}\|_2^2  &=   \max_{\bY\succeq\bzero, \text{tr}(\bY)\leq1}\sum_{i\in\calA}\left[\sqrt{c_i}(\bg_i-\bG_{\calA}\bw_i)\right]^T\bY\left[\sqrt{c_i}(\bg_i-\bG_{\calA}\bw_i)\right] \notag \\
&= \max_{\bY\succeq\bzero, \text{tr}(\bY)\leq1}\sum_{i\in\calA}c_i(\bg_i-\bG_{\calA}\bw_i)^T\bY(\bg_i-\bG_{\calA}\bw_i) \notag \\
&\stackrel{\text{(a)}}{=} \sum_{i\in\calA} c_i\tau_i \notag \\
&\stackrel{\text{(b)}}{\leq} 4R\sigma_0^2. \label{auxilliary2_bound}
\end{align}
For (a), note that $\bW$ is the optimal weight matrix when the {\bf while}-loop of \Algorithmref{robust-grad-est} terminates, and $\tau_i$ is defined w.r.t.\ the optimal $\bW,\bY$. In (b) we used the fact that the {\em while}-loop terminates when the condition in line 5 of \Algorithmref{robust-grad-est} is violated, i.e., $\sum_{i\in\calA} c_i\tau_i \leq 4R\sigma_0^2$.

Now, we are ready to prove \Lemmaref{auxilliary2}.
\begin{align*}
\left\|\bG_{\calA}-\bg'{\bf 1}^T\right\| &= \left\|\bG_{\calA}-\bG_{\calA}\bW_0\right\| \tag{Since $\bG_{\calA}\bW_0=\bg'{\bf 1}^T$}\\
&= \left\|\bG_{\calA}(\bI-\bW_1)(\bI-\bW_1)^{-1}-\bG_{\calA}(\bW-\bW_1)(\bI-\bW_1)^{-1}\right\| \\
&= \left\|\bG_{\calA}[(\bI-\bW_1)-(\bW-\bW_1)](\bI-\bW_1)^{-1}\right\| \\
&= \left\|\bG_{\calA}(\bI-\bW)(\bI-\bW_1)^{-1}\right\| \\
&\leq \left\|\bG_{\calA}(\bI-\bW)\right\|\left\|(\bI-\bW_1)^{-1}\right\| \\
&\leq 10\left\|\bG_{\calA}(\bI-\bW)\right\| \tag{Since the largest singular value of $\bW_1$ is at most 0.9} \\
&\stackrel{\text{(a)}}{\leq} 10\sqrt{2}\left\|\bG_{\calA}(\bI-\bW)\sqrt{\mathrm{diag}(c_{\calA})}\right\|  \tag{Since $c_i\geq \frac{1}{2}$ for all $i\in\calA$} \\
&\leq 20\sqrt{2R}\sigma_0. \tag{Using \eqref{auxilliary2_bound}}
\end{align*}
In (a), we used the fact that $c_i\geq \frac{1}{2}$ for all $i\in\calA$; 
and $\sqrt{\mathrm{diag}(c_A)}$ denotes a $|\calA|\times|\calA|$ diagonal matrix, whose entries are $\sqrt{c_i}, i\in\calA$. 

This completes the proof of \Lemmaref{auxilliary2}.
\end{proof}

\begin{claim}\label{claim:resilience_honest-grad}\label{resilience_honest-grad}
The set $\calS$ in the hypothesis of \Lemmaref{poly-time_grad-est} is $(2\sigma_0\sqrt{\eps_1},\eps_1)$-resilient around its empirical mean $\bg_{\calS}$ for any $\eps_1\in[0,\frac{1}{2})$.
\end{claim}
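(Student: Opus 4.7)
The plan is to unfold the definition of resilience and reduce it directly to the matrix concentration hypothesis on $\calS$. Fix any subset $\calT\subseteq\calS$ with $|\calT|\geq (1-\eps_1)|\calS|$, and let $\calT^c := \calS\setminus\calT$, so that $|\calT^c|\leq \eps_1|\calS|$. We need to show $\left\|\frac{1}{|\calT|}\sum_{i\in\calT}(\bg_i-\bg_\calS)\right\|\leq 2\sigma_0\sqrt{\eps_1}$. The first key observation is that $\sum_{i\in\calS}(\bg_i-\bg_\calS)=0$ by definition of the empirical mean, which lets us flip the sum over the (large) set $\calT$ into a sum over the (small) complement: $\sum_{i\in\calT}(\bg_i-\bg_\calS) = -\sum_{i\in\calT^c}(\bg_i-\bg_\calS)$. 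This is the main trick — it trades a concentration bound on a large set for a sum of $|\calT^c|\leq\eps_1|\calS|$ terms.

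Next, I would convert the hypothesis $\lambda_{\max}\(\frac{1}{|\calS|}\sum_{i\in\calS}(\bg_i-\bg_\calS)(\bg_i-\bg_\calS)^T\)\leq\sigma_0^2$ into its variational form via \eqref{alternate-defn_max-eigenvalue}, which yields $\sum_{i\in\calS}\langle\bg_i-\bg_\calS,\bv\rangle^2 \leq |\calS|\sigma_0^2$ for every unit vector $\bv$. Then letting $\bv^*$ be the unit vector in the direction of $\sum_{i\in\calT^c}(\bg_i-\bg_\calS)$, I would compute
\begin{align*}
\left\|\sum_{i\in\calT^c}(\bg_i-\bg_\calS)\right\|
&= \sum_{i\in\calT^c}\langle\bg_i-\bg_\calS,\bv^*\rangle
\leq \sqrt{|\calT^c|}\cdot\sqrt{\sum_{i\in\calT^c}\langle\bg_i-\bg_\calS,\bv^*\rangle^2} \\
&\leq \sqrt{|\calT^c|}\cdot\sqrt{|\calS|\sigma_0^2}
\leq \sigma_0\sqrt{\eps_1}\,|\calS|,
\end{align*}
where the first inequality is Cauchy-Schwarz on the sum of scalars, and the second drops from $\calT^c\subseteq\calS$ together with the variational bound above.

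Finally, dividing by $|\calT|\geq (1-\eps_1)|\calS|$ gives
\[
\left\|\tfrac{1}{|\calT|}\sum_{i\in\calT}(\bg_i-\bg_\calS)\right\| \leq \frac{\sigma_0\sqrt{\eps_1}\,|\calS|}{(1-\eps_1)|\calS|} = \frac{\sigma_0\sqrt{\eps_1}}{1-\eps_1} \leq 2\sigma_0\sqrt{\eps_1},
\]
where the last step uses $\eps_1<\tfrac{1}{2}$, which makes $\tfrac{1}{1-\eps_1}\leq 2$. Since $\calT$ was an arbitrary subset of cardinality at least $(1-\eps_1)|\calS|$, this is exactly the $(2\sigma_0\sqrt{\eps_1},\eps_1)$-resilience of $\calS$ around $\bg_\calS$. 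There is no hard obstacle here; the only subtlety is recognizing that the ``complement trick'' plus the variational form of the matrix bound reduces everything to a one-line Cauchy-Schwarz estimate, and that the constant $2$ comes from the bound $\eps_1<1/2$.
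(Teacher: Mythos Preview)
Your proof is correct and in fact cleaner than the paper's. Both arguments hinge on the same two ingredients --- the ``complement trick'' $\sum_{i\in\calT}(\bg_i-\bg_\calS)=-\sum_{i\in\calS\setminus\calT}(\bg_i-\bg_\calS)$ and the variational form of the eigenvalue hypothesis --- but you apply them directly via a scalar Cauchy--Schwarz on the complement, whereas the paper first splits $\frac{1}{|\calT|}\sum_{i\in\calT}(\bg_i-\bg_\calS)$ into a $(\frac{1}{|\calT|}-\frac{1}{|\calS|})$-weighted $\calT$ part plus a $\frac{1}{|\calS|}$-weighted $\calS\setminus\calT$ part, rewrites both in matrix form, and bounds each by the operator norm $\|\bG_\calS-\bg_\calS\mathbf{1}^T\|$. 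Your route buys simplicity; the paper's matrix-norm decomposition buys reusability --- the same template is applied verbatim in the companion \Claimref{resilience_grad-est}, where resilience of $\calA$ is shown around $\btg$ but the controlling bound is on $\|\bG_\calA-\bg'\mathbf{1}^T\|$ with $\bg'\neq\btg$, so the centered-sum trick $\sum_{i\in\calA}(\bg_i-\bg')=0$ is not available there.
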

\begin{proof}
Take an arbitrary $\eps_1\in[0,\frac{1}{2})$.
We need to show that for any subset $\calT\subseteq\calS$ such that $|\calT|\geq(1-\eps_1)|\calS|$, we have $\left\| \bg_{\calT} -\bg_{\calS} \right\| \leq 2\sigma_0\sqrt{\eps_1}$, where $\bg_{\calT} = \frac{1}{|\calT|}\sum_{i\in\calT}\bg_i$.
For any subset $\calT\subseteq[R]$, we write $\bG_{\calT}$ to denote the $d\times|\calT|$ matrix whose columns are $\bg_i, i\in\calT$.
\begin{align}
\|\bg_{\calT} - \bg_{\calS}\| &= \left\|\frac{1}{|\calT|}\sum_{i\in\calT}(\bg_i - \bg_{\calS})\right\| \nonumber \\
&= \left\|\left(\frac{1}{|\calT|} - \frac{1}{|\calS|}\right)\sum_{i\in\calT}(\bg_i-\bg_{\calS}) + \frac{1}{|\calS|}\sum_{i\in\calT}(\bg_i-\bg_{\calS})\right\| \nonumber \\
&\stackrel{\text{(a)}}{=} \left\|\left(\frac{1}{|\calT|} - \frac{1}{|\calS|}\right)\sum_{i\in\calT}(\bg_i-\bg_{\calS}) - \frac{1}{|\calS|}\sum_{i\in\calS\setminus\calT}(\bg_i-\bg_{\calS})\right\| \nonumber \\
&= \left\|\left(\frac{1}{|\calT|} - \frac{1}{|\calS|}\right)[\bG_{\calT}-\bg_{\calS}{\bf 1}^T]{\bf 1} - \frac{1}{|\calS|}[\bG_{\calS\setminus\calT}-\bg_{\calS}{\bf 1}^T]{\bf 1}\right\| \nonumber \\
&\leq \left(\frac{1}{|\calT|} - \frac{1}{|\calS|}\right)\left\|[\bG_{\calT}-\bg_{\calS}{\bf 1}^T]{\bf 1}\right\| + \frac{1}{|\calS|}\left\|[\bG_{\calS\setminus\calT}-\bg_{\calS}{\bf 1}^T]{\bf 1}\right\| \nonumber \\
&\leq \left(\frac{1}{|\calT|} - \frac{1}{|\calS|}\right)\left\|\bG_{\calT}-\bg_{\calS}{\bf 1}^T\right\|\cdot\left\|{\bf 1_{|\calT|}}\right\| + \frac{1}{|\calS|}\left\|\bG_{\calS\setminus\calT}-\bg_{\calS}{\bf 1}^T\right\|\cdot\left\|{\bf 1_{|\calS\setminus\calT|}}\right\| \nonumber \\
&= \left(\frac{|\calS|-|\calT|}{\sqrt{|\calT|}|\calS|}\right)\left\|\bG_{\calT}-\bg_{\calS}{\bf 1}^T\right\| + \frac{\sqrt{|\calS\setminus\calT|}}{|\calS|}\left\|\bG_{\calS\setminus\calT}-\bg_{\calS}{\bf 1}^T\right\| \nonumber \\
&\stackrel{\text{(b)}}{\leq} \left(\frac{|\calS|-|\calT|}{\sqrt{|\calT|}|\calS|} + \frac{\sqrt{|\calS\setminus\calT|}}{|\calS|}\right)\left\|\bG_{\calS}-\bg_{\calS}{\bf 1}^T\right\| \label{resilience_honest-grad1}
\end{align}
In the above inequalities, we denote the all 1's vector by ${\bf 1}$; its dimension is not explicitly written, and is taken what makes the operation dimension compatible.
In (a) we used the fact that $\sum_{i\in\calS}(\bg_i-\bg_{\calS})=0$ and that $\sum_{i\in\calS}(\bg_i-\bg_{\calS}) = \sum_{i\in\calT}(\bg_i-\bg_{\calS}) + \sum_{i\in\calS\setminus\calT}(\bg_i-\bg_{\calS})$.
The inequality (b) follows from the fact that for any matrix $\bA$, if we consider another matrix $\bA'$ whose column set is a subset of the column set of $\bA$, then we have $\|\bA'\|\leq\|\bA\|$. This can be seen as follows: Let $\bP$ denote the projection matrix that selects columns of $\bA$ to construct $\bA'$ such that $\bA' = \bA\bP$; note that $\bP$ is a diagonal matrix with $\bP_{ii}=1$ if and only if the $i$'th column of $\bA$ is selected. 
Since the matrix norm $\|\cdot\|$ of a matrix is equal to its largest singular value, and the largest singular value of $\bP$ is equal to 1, we have $\|\bP\|=1$.
Now (b) follows from $\|\bA'\| = \|\bA\bP\| \leq \|\bA\|\cdot\|\bP\| = \|\bA\|$, where we used $\|\bA\bB\| \leq \|\bA\|\cdot\|\bB\|$ which holds for any two dimension-compatible matrices $\bA,\bB$.

In order to bound \eqref{resilience_honest-grad1}, first we bound the coefficient $\left(\frac{|\calS|-|\calT|}{\sqrt{|\calT|}|\calS|}\right)$ as follows:
\begin{align}\label{resilience_honest-grad2}
\left(\frac{|\calS|-|\calT|}{\sqrt{|\calT|}|\calS|} + \frac{|\calS\setminus\calT|}{|\calS|}\right) \ \leq \ \left(\frac{\eps_1}{\sqrt{1-\eps_1}} + \sqrt{\eps_1}\right)\frac{1}{\sqrt{|\calS|}}
\ \leq \ \frac{2\sqrt{\eps_1}}{\sqrt{|\calS|}},
\end{align}
where in the first inequality we used $|\calT|\geq(1-\eps_1)|\calS|$ and $|\calS\setminus\calT| \leq \eps_1|\calS|$; the second inequality holds because $\eps_1<\nicefrac{1}{2}$.

Now we bound $\left\|\bG_{\calS}-\bg_{\calS}{\bf 1}^T\right\|$:
\begin{align}
\left\|\bG_{\calS}-\bg_{\calS}{\bf 1}^T\right\| &\stackrel{\text{(c)}}{=} \left\|\(\bG_{\calS}-\bg_{\calS}{\bf 1}^T\)^T\right\| \nonumber \\
&\stackrel{\text{(d)}}{=} \sqrt{\lambda_{\max}\(\(\bG_{\calS}-\bg_{\calS}{\bf 1}^T\)\(\bG_{\calS}-\bg_{\calS}{\bf 1}^T\)^T\)} \nonumber \\
&\stackrel{\text{(e)}}{=} \sqrt{\lambda_{\max}\(\sum_{i\in\calS}(\bg_{i} - \bg_{\calS})(\bg_{i} - \bg_{\calS})^T\)} \nonumber \\
&\stackrel{\text{(f)}}{\leq} \sqrt{|\calS|}\sigma_0. \label{resilience_honest-grad3}
\end{align}
Here, (c) follows from the fact that the matrix norm $\|\cdot\|$ of a rectangular matrix is equal to the matrix norm of its transpose, i.e., for any matrix $\bA$, we have $\|\bA\| = \|\bA^T\|$.
(d) follows from the fact that for any matrix $\bA$, its matrix norm $\|\bA\|$ is equal to its largest singular value, which is equal to $\|\bA\| = \sqrt{\lambda_{\max}(\bA^T\bA)}$.
(e) uses the identity $\(\bG_{\calS}-\bg_{\calS}{\bf 1}^T\)\(\bG_{\calS}-\bg_{\calS}{\bf 1}^T\)^T = \sum_{i\in\calS}(\bg_{i} - \bg_{\calS})(\bg_{i} - \bg_{\calS})^T$.
(f) follows from the hypothesis of \Lemmaref{poly-time_grad-est} that $\lambda_{\max}\(\frac{1}{|\calS|}\sum_{i\in\calS} \(\bg_i - \bg_{\calS}\)\(\bg_i - \bg_{\calS}\)^T\) \leq \sigma_0^2$.

Substituting the bounds from \eqref{resilience_honest-grad2} and \eqref{resilience_honest-grad3} back in \eqref{resilience_honest-grad1} completes the proof of \Claimref{resilience_honest-grad}.
\end{proof}

Note that using an iterative procedure, \Algorithmref{robust-grad-est} finds a subset $\calA\subset [R]$ and outputs $\btg = \frac{1}{|\calA|}\sum_{i\in\calA}\bg_i$. Now we show that $\calA$ is resilient around its sample mean $\btg$. 
\begin{claim}\label{claim:resilience_grad-est}\label{resilience_grad-est}
The set $\calA$ produced when \Algorithmref{robust-grad-est} terminates is $(80\sigma_0\sqrt{\eps_2},\eps_2)$-resilient around its sample mean $\btg$ for any $\eps_2\in[0,\frac{1}{2})$.
\end{claim}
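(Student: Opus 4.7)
The plan is to mimic the proof of \Claimref{resilience_honest-grad}, with $\btg$ playing the role of $\bg_{\calS}$ and the matrix-norm bound from \Lemmaref{auxilliary2} playing the role of the spectral bound on $\sum_{i\in\calS}(\bg_i-\bg_{\calS})(\bg_i-\bg_{\calS})^T$. The key observation is that while \Algorithmref{robust-grad-est} does not furnish a direct concentration bound for the columns of $\bG_{\calA}$ around $\btg$, \Lemmaref{auxilliary2} does give one around the auxiliary point $\bg'$, and the zero-sum expansion used inside \Claimref{resilience_honest-grad} goes through verbatim with this different center.

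Concretely, for any $\calT\subseteq\calA$ with $|\calT|\geq(1-\eps_2)|\calA|$, I would use $\sum_{i\in\calA}(\bg_i-\btg)=\bzero$ to rewrite
\[
\bg_{\calT}-\btg \ = \ \Big(\tfrac{1}{|\calT|}-\tfrac{1}{|\calA|}\Big)\sum_{i\in\calT}(\bg_i-\bg') \ - \ \tfrac{1}{|\calA|}\sum_{i\in\calA\setminus\calT}(\bg_i-\bg'),
\]
the $\bg'$-contributions canceling exactly because their coefficients are $\tfrac{|\calA|-|\calT|}{|\calA|}$ and $\tfrac{|\calA\setminus\calT|}{|\calA|}$, which are equal. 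The triangle inequality, together with the bound $\|[\bG_{\calU}-\bg'{\bf 1}^T]{\bf 1}\|\leq\|\bG_{\calU}-\bg'{\bf 1}^T\|\sqrt{|\calU|}$ and the fact that selecting a subset of columns cannot increase the operator norm (exactly as in step (b) of \eqref{resilience_honest-grad1}), would then give
\[
\|\bg_{\calT}-\btg\| \ \leq \ \bigg(\frac{|\calA|-|\calT|}{\sqrt{|\calT|}\,|\calA|}+\frac{\sqrt{|\calA\setminus\calT|}}{|\calA|}\bigg)\|\bG_{\calA}-\bg'{\bf 1}^T\|.
\]

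Next I would bound the parenthesized factor. For $\eps_2<\tfrac{1}{2}$, each of the two fractions is at most $\sqrt{\eps_2}/\sqrt{|\calA|}$ (using $|\calT|\geq(1-\eps_2)|\calA|$, $|\calA\setminus\calT|\leq\eps_2|\calA|$, and $\sqrt{2}\eps_2\leq\sqrt{\eps_2}$), so their sum is at most $2\sqrt{\eps_2}/\sqrt{|\calA|}$. Plugging in \Lemmaref{auxilliary2} gives $\|\bG_{\calA}-\bg'{\bf 1}^T\|\leq 20\sqrt{2R}\sigma_0$, while \eqref{auxilliary1-2} of \Lemmaref{auxilliary1}, evaluated at $\alpha=\tfrac{3}{4}$, gives $|\calA|\geq|\calS\cap\calA|\geq\tfrac{33}{52}R\geq R/2$. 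Combining,
\[
\|\bg_{\calT}-\btg\| \ \leq \ \frac{2\sqrt{\eps_2}}{\sqrt{|\calA|}}\cdot 20\sqrt{2R}\,\sigma_0 \ = \ 40\sqrt{2R/|\calA|}\,\sigma_0\sqrt{\eps_2} \ \leq \ 80\sigma_0\sqrt{\eps_2},
\]
which is the desired resilience bound.

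The main subtlety, and really the only point worth thinking about, is the choice of proxy center inside the zero-sum expansion: trying to work directly with $\btg$ is circular (it would require a bound on $\|\bG_{\calA}-\btg{\bf 1}^T\|$, which is essentially what one is ultimately trying to exploit), whereas $\bg'$ is a priori unrelated to $\btg$ but is precisely the point around which \Lemmaref{auxilliary2} has already certified concentration of the columns of $\bG_{\calA}$. Once this choice is made, the remainder is a routine reworking of \Claimref{resilience_honest-grad}, and the numeric constants fall into place without slack.
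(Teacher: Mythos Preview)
Your proposal is correct and follows essentially the same route as the paper's proof: center the columns of $\bG_{\calA}$ at $\bg'$ (not $\btg$), use the same decomposition and column-selection bound as in \Claimref{resilience_honest-grad} to arrive at the factor $\big(\tfrac{|\calA|-|\calT|}{\sqrt{|\calT|}|\calA|}+\tfrac{\sqrt{|\calA\setminus\calT|}}{|\calA|}\big)\|\bG_{\calA}-\bg'{\bf 1}^T\|$, then invoke \Lemmaref{auxilliary2} and $|\calA|\geq|\calS\cap\calA|\geq\tfrac{33}{52}R>R/2$ from \eqref{auxilliary1-2}. The only cosmetic difference is that the paper obtains the key identity by directly inserting $-\bg'+\bg'$ into both sums, whereas you first use $\sum_{i\in\calA}(\bg_i-\btg)=0$ and then swap $\btg$ for $\bg'$; both land on the same expression and the constants match exactly.
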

\begin{proof}
Take an arbitrary $\eps_2\in[0,\frac{1}{2})$.
We need to show that for any subset $\calT\subseteq\calA$ such that $|\calT|\geq(1-\eps_2)|\calA|$, we have $\left\| \bg_{\calT} -\btg \right\| \leq 80\sigma_0\sqrt{\eps_2}$, where $\bg_{\calT} = \frac{1}{|\calT|}\sum_{i\in\calT}\bg_i$.
Let $\bG=[\bg_1,\hdots,\bg_R]\in\R^{d\times R}$. 
\begin{align}
\|\bg_{\calT} - \btg\| &= \left\|\frac{1}{|\calT|}\sum_{i\in\calT}\bg_i - \frac{1}{|\calA|}\sum_{i\in\calA}\bg_i\right\| \nonumber \\
&= \left\|\frac{1}{|\calT|}\sum_{i\in\calT}(\bg_i-\bg') - \frac{1}{|\calA|}\sum_{i\in\calA}(\bg_i-\bg')\right\| \tag{where $\bg'=\bG_{\calA}\bu$ and $\bu$ is such that $\bW_0=\bu{\bf 1}^T$} \\
&= \left\|\left(\frac{1}{|\calT|} - \frac{1}{|\calA|}\right)\sum_{i\in\calT}(\bg_i-\bg') - \frac{1}{|\calA|}\sum_{i\in\calA\setminus\calT}(\bg_i-\bg')\right\| \nonumber \\
&= \left\|\left(\frac{1}{|\calT|} - \frac{1}{|\calA|}\right)[\bG_{\calT}-\bg'{\bf 1}^T]{\bf 1} - \frac{1}{|\calA|}[\bG_{\calA\setminus\calT}-\bg'{\bf 1}^T]{\bf 1}\right\| \nonumber \\
&\leq \left(\frac{1}{|\calT|} - \frac{1}{|\calA|}\right)\left\|[\bG_{\calT}-\bg'{\bf 1}^T]{\bf 1}\right\| + \frac{1}{|\calA|}\left\|[\bG_{\calA\setminus\calT}-\bg'{\bf 1}^T]{\bf 1}\right\| \nonumber \\
&\leq \left(\frac{1}{|\calT|} - \frac{1}{|\calA|}\right)\left\|\bG_{\calT}-\bg'{\bf 1}^T\right\|\cdot\left\|{\bf 1_{|\calT|}}\right\| + \frac{1}{|\calA|}\left\|\bG_{\calA\setminus\calT}-\bg'{\bf 1}^T\right\|\cdot\left\|{\bf 1_{|\calA\setminus\calT|}}\right\| \nonumber \\
&= \left(\frac{|\calA|-|\calT|}{\sqrt{|\calT|}|\calA|}\right)\left\|\bG_{\calT}-\bg'{\bf 1}^T\right\| + \frac{\sqrt{|\calA\setminus\calT|}}{|\calA|}\left\|\bG_{\calA\setminus\calT}-\bg'{\bf 1}^T\right\| \nonumber \\
&\leq \left(\frac{|\calA|-|\calT|}{\sqrt{|\calT|}|\calA|} + \frac{\sqrt{|\calA\setminus\calT|}}{|\calA|}\right)\left\|\bG_{\calA}-\bg'{\bf 1}^T\right\| \label{resilience_grad-est1}
\end{align}
In the last inequality, we used that $\left\|\bG_{\calA}-\bg'{\bf 1}^T\right\|\geq\left\|\bG_{\calA'}-\bg'{\bf 1}^T\right\|$ for any subset $\calA'\subseteq\calA$. This can be shown along the lines of the reasoning given for \eqref{resilience_honest-grad1}.
In order to bound the RHS of \eqref{resilience_grad-est1}, first we bound the coefficient $\left(\frac{|\calA|-|\calT|}{\sqrt{|\calT|}|\calA|} + \frac{\sqrt{|\calA\setminus\calT|}}{|\calA|}\right)$ and then we bound $\left\|\bG_{\calA}-\btg{\bf 1}^T\right\|$.
To bound the coefficient, we can use the steps used for obtaining \eqref{resilience_honest-grad2}, which gives 
\begin{align}\label{resilience_grad-est2}
\left(\frac{|\calA|-|\calT|}{\sqrt{|\calT|}|\calA|} + \frac{\sqrt{|\calA\setminus\calT|}}{|\calA|}\right) \leq \frac{2\sqrt{\eps_2}}{\sqrt{|\calA|}}.
\end{align} 
We need to lower-bound $|\calA|$, which we do as follows:
$|\calA| \geq |\calA\cap\calS| \stackrel{\text{(a)}}{\geq} \frac{\alpha(\alpha+2)}{4-\alpha}R \stackrel{\text{(b)}}{\geq} \frac{33R}{52} > \frac{R}{2}$, where (a) follows from \Lemmaref{auxilliary1} and (b) follows by substituting $\alpha=(1-\tilde{\eps}) \geq \nicefrac{3}{4}$. Substituting this in \eqref{resilience_grad-est2} and using the resulting bound in \eqref{resilience_grad-est1} gives:
\begin{align}\label{resilience_grad-est3}
\|\bg_{\calT} - \btg\| \leq 2\sqrt{\frac{2\eps_2}{R}}\left\|\bG_{\calA}-\bg'{\bf 1}^T\right\|
\end{align}
Substituting the bound $\left\|\bG_{\calA}-\bg'{\bf 1}^T\right\|\leq 20\sqrt{2R}\sigma_0$ from \Lemmaref{auxilliary2} gives $\|\bg_{\calT} - \btg\| \leq 80\sqrt{\eps_2}\sigma_0$, which completes the proof of \Claimref{resilience_grad-est}.
\end{proof}
Now we show that \Claimref{resilience_honest-grad} and \Claimref{resilience_grad-est} imply $\|\btg-\bg_{\calS}\|\ \leq \calO(\sigma_0\sqrt{\tilde{\eps}})$, proving \Lemmaref{poly-time_grad-est}.
\begin{claim}\label{claim:final-claim}\label{final-claim}
$\|\btg-\bg_{\calS}\|\ \leq \calO(\sigma_0\sqrt{\tilde{\eps}})$, where $\tilde{\eps}\leq \nicefrac{1}{4}$.
\end{claim}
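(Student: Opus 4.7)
\textbf{Proof plan for Claim~\ref{claim:final-claim}.}

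The strategy is the standard ``two resilience sets $\Rightarrow$ close means'' argument. Let $\calT := \calS \cap \calA$. By the triangle inequality,
\begin{align*}
\|\btg - \bg_{\calS}\| \;\leq\; \|\btg - \bg_{\calT}\| \,+\, \|\bg_{\calT} - \bg_{\calS}\|.
\end{align*}
I plan to bound the first term by invoking Claim~\ref{claim:resilience_grad-est} (resilience of $\calA$ around $\btg$) with parameter $\eps_2 := 1 - |\calT|/|\calA|$, and the second term by invoking Claim~\ref{claim:resilience_honest-grad} (resilience of $\calS$ around $\bg_{\calS}$) with parameter $\eps_1 := 1 - |\calT|/|\calS|$. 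The output $\btg$ from \Algorithmref{robust-grad-est} is precisely $\frac{1}{|\calA|}\sum_{i\in\calA}\bg_i$, so $\btg = \bg_{\calA}$ and both resilience claims apply verbatim.

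The key quantitative step will be to show that both $\eps_1$ and $\eps_2$ are $\calO(\tilde{\eps})$ and strictly below $1/2$ (so that the resilience claims are applicable in their stated range). For this I will use the invariant from \Lemmaref{auxilliary1}, which gives $|\calT| = |\calS \cap \calA| \geq \tfrac{\alpha(2+\alpha)}{4-\alpha}R$ with $\alpha = 1-\tilde{\eps} \geq 3/4$. Combined with $|\calS| = \alpha R$ and $|\calA| \leq R$, this yields
\begin{align*}
\eps_1 \;=\; 1 - \frac{|\calT|}{|\calS|} \;\leq\; 1 - \frac{2+\alpha}{4-\alpha} \;=\; \frac{2(1-\alpha)}{4-\alpha} \;=\; \frac{2\tilde{\eps}}{3+\tilde{\eps}},
\end{align*}
and, using $|\calA \setminus \calS| \leq |[R]\setminus\calS| = \tilde{\eps}R$ and $|\calA| \geq \tfrac{33}{52}R$ (which follows from $\alpha\geq 3/4$),
\begin{align*}
\eps_2 \;=\; 1 - \frac{|\calT|}{|\calA|} \;=\; \frac{|\calA \setminus \calS|}{|\calA|} \;\leq\; \frac{\tilde{\eps} R}{(33/52)R} \;=\; \frac{52\,\tilde{\eps}}{33}.
\end{align*}
For $\tilde{\eps}\leq 1/4$ both quantities are bounded by constants strictly smaller than $1/2$, so Claims~\ref{claim:resilience_honest-grad} and \ref{claim:resilience_grad-est} apply.

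Putting the pieces together, Claim~\ref{claim:resilience_honest-grad} gives $\|\bg_{\calT}-\bg_{\calS}\| \leq 2\sigma_0\sqrt{\eps_1} = \calO(\sigma_0\sqrt{\tilde{\eps}})$, and Claim~\ref{claim:resilience_grad-est} gives $\|\bg_{\calT}-\btg\| \leq 80\sigma_0\sqrt{\eps_2} = \calO(\sigma_0\sqrt{\tilde{\eps}})$. Summing the two bounds yields $\|\btg - \bg_{\calS}\| \leq \calO(\sigma_0\sqrt{\tilde{\eps}})$, which is the desired conclusion. The only delicate point is verifying the size of $\calT$, since everything else is a mechanical application of previously established lemmas; the hypothesis $\tilde{\eps}\leq 1/4$ is used exactly to guarantee $\alpha\geq 3/4$ and hence the numerical bounds $\eps_1,\eps_2 < 1/2$ that let us invoke the two resilience claims.
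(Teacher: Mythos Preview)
Your proposal is correct and follows essentially the same route as the paper: triangle-inequality through $\calT=\calS\cap\calA$, then invoke the two resilience claims with the lower bound $|\calS\cap\calA|\geq\frac{\alpha(2+\alpha)}{4-\alpha}R$ from \Lemmaref{auxilliary1}. The only cosmetic difference is that the paper bounds both fractions simultaneously via the algebraic inequality $\frac{\alpha(2+\alpha)}{4-\alpha}\geq 1-\frac{5}{3}(1-\alpha)$ and takes $\eps_1=\eps_2=\frac{5}{3}\tilde\eps$, whereas you compute $\eps_1$ and $\eps_2$ separately (obtaining slightly sharper constants); either bookkeeping yields the same $\calO(\sigma_0\sqrt{\tilde\eps})$ conclusion.
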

\begin{proof}
We have from \Claimref{resilience_honest-grad} that for any subset $\calT_1\subseteq\calS$ such that $|\calT_1|\geq(1-\eps_1)|\calS|$, where $\eps_1\in[0,\frac{1}{2})$, we have $\left\| \bg_{\calT_1} -\bg_{\calS} \right\| \leq 2\sigma_0\sqrt{\eps_1}$.
We have from \Claimref{resilience_grad-est} that for any subset $\calT_2\subseteq\calA$ such that $|\calT_2|\geq(1-\eps_2)|\calA|$, where $\eps_2\in[0,\frac{1}{2})$, we have $\left\| \bg_{\calT_2} -\btg \right\| \leq 80\sigma_0\sqrt{\eps_2}$.

We take $\calT_1=\calT_2=\calS\cap\calA$. For that we need to show $|\calS\cap\calA| \geq (1-\eps_1)|\calS|$ and $|\calS\cap\calA| \geq (1-\eps_2)|\calA|$ hold for some $\eps_1,\eps_2 \in[0, \frac{1}{2})$. We show these below.

We have from \Lemmaref{auxilliary1} that $|\calS\cap\calA| \geq \frac{\alpha(\alpha+2)}{4-\alpha}R$, where $\alpha=1-\tilde{\eps}\geq\frac{3}{4}$. It can be verified that $\frac{\alpha(\alpha+2)}{4-\alpha} \geq 1-\frac{5}{3}(1-\alpha)$ (which is equivalent to the fact that $(1-\alpha)^2\geq0$). Substituting this and that $R\geq\max\{|\calS|,|\calA|\}$, we get $|\calS\cap\calA| \geq \left(1-\frac{5}{3}(1-\alpha)\right)\max\{|\calS|,|\calA|\}$. Note that $\frac{5}{3}(1-\alpha) < \frac{1}{2}$. 
We have shown that if we take $\calT_1=\calT_2=\calS\cap\calA$ and $\eps_1=\eps_2=\frac{5}{3}(1-\alpha) < \frac{1}{2}$, we have $|\calT_1| \geq \left(1-\frac{5}{3}(1-\alpha)\right)|\calS|$ and $|\calT_2| \geq \left(1-\frac{5}{3}(1-\alpha)\right)|\calA|$. These give
\begin{align*}
\left\| \bg_{\calS\cap\calA} -\bg_{\calS} \right\| &\leq 2\sqrt{\frac{5}{3}}\sigma_0\sqrt{1-\alpha}. \\
\left\| \bg_{\calS\cap\calA} -\btg \right\| &\leq 80\sqrt{\frac{5}{3}}\sigma_0\sqrt{1-\alpha}
\end{align*}
Using the triangle inequality and substituting $\tilde{\eps}=1-\alpha$ gives $\|\btg-\bg_{\calS}\| \leq 82\sqrt{\frac{5}{3}}\sigma_0\sqrt{\tilde{\eps}} = \calO(\sigma_0\sqrt{\tilde{\eps}})$.
This completes the proof of \Claimref{final-claim}.
\end{proof}
This concludes the proof of \Lemmaref{poly-time_grad-est}.

\begin{claim*}[Restating \Claimref{convex-concave}]
The cost function $\Phi$ from \eqref{eq:cost-function} is convex in $\bW$ (for a fixed $\bY$) and concave in $\bY$ (for a fixed $\bW$).
\end{claim*}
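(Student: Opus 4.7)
The plan is to verify each of the two assertions separately; both reduce to standard facts about quadratic forms and affine compositions, so the proof is short and essentially routine. I do not anticipate any genuine obstacle here—the only thing that might require a brief remark is the sign of the weights $c_i$.

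\textbf{Concavity in $\bY$ (for fixed $\bW$).} First I would rewrite
\[
\Phi(\bW,\bY) = \sum_{i\in\calA} c_i\,(\bg_i-\bG_{\calA}\bw_i)^T \bY (\bg_i-\bG_{\calA}\bw_i) = \operatorname{tr}\!\left(\bY \sum_{i\in\calA} c_i (\bg_i-\bG_{\calA}\bw_i)(\bg_i-\bG_{\calA}\bw_i)^T\right).
\]
With $\bW$ fixed the matrix inside the trace is a constant, so $\Phi(\bW,\cdot)$ is a \emph{linear} function of $\bY$, hence both convex and concave.

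\textbf{Convexity in $\bW$ (for fixed $\bY$).} Note that $\Phi(\cdot,\bY)$ is separable across the columns $\bw_i$ of $\bW$, so it suffices to show that each summand $\phi_i(\bw_i) := c_i\,(\bg_i-\bG_{\calA}\bw_i)^T \bY (\bg_i-\bG_{\calA}\bw_i)$ is a convex function of $\bw_i$. The map $\bw_i \mapsto \bg_i - \bG_{\calA}\bw_i$ is affine, and the quadratic form $\bv \mapsto \bv^T \bY \bv$ is convex because $\bY\succeq\bzero$ (this is the standard fact that positive semidefinite quadratic forms are convex). The composition of a convex function with an affine map is convex, so $\bw_i \mapsto (\bg_i-\bG_{\calA}\bw_i)^T \bY (\bg_i-\bG_{\calA}\bw_i)$ is convex. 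Multiplying by the non-negative scalar $c_i$ (see below) preserves convexity, and summing convex functions yields a convex function; hence $\Phi(\cdot,\bY)$ is convex in $\bW$.

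\textbf{The weights $c_i$ are non-negative.} The only subtle point is to justify $c_i \geq 0$ throughout the execution of \Algorithmref{robust-grad-est}. Initially $c_i = 1$, and the update rule on line~6 replaces $c_i$ by $\bigl(1-\tau_i/\tau_{\max}\bigr)c_i$, where $\tau_i\geq 0$ (because it equals $(\bg_i-\bG_{\calA}\bw_i^*)^T\bY^*(\bg_i-\bG_{\calA}\bw_i^*)$ with $\bY^*\succeq\bzero$) and $\tau_i\leq\tau_{\max}$ by definition, so the multiplicative factor lies in $[0,1]$. By induction $c_i\in[0,1]$ at every iteration, which is what is needed in the argument above.
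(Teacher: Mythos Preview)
Your proof is correct and considerably more concise than the paper's. The paper handles concavity in $\bY$ implicitly (it is linear, exactly as you note), but for convexity in $\bW$ it takes a direct algebraic route: it rewrites $\widetilde{\Phi}(\bW,\bY)=\operatorname{tr}\bigl((\bI-\bW)(\bI-\bW^T)\bG^T\bY\bG\bigr)$ and then expands $\widetilde{\Phi}(\lambda\bW_1+(1-\lambda)\bW_2,\bY)$ term by term, eventually isolating the cross term $-\lambda(1-\lambda)\operatorname{tr}\bigl((\bW_1-\bW_2)(\bW_1-\bW_2)^T\bG^T\bY\bG\bigr)$ and arguing it is nonpositive because $\bG^T\bY\bG\succeq\bzero$. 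Your argument instead exploits separability across columns and the standard fact that the composition of a PSD quadratic form with an affine map is convex; this avoids the long trace manipulation entirely and makes the role of the hypotheses $\bY\succeq\bzero$ and $c_i\geq 0$ transparent. Both approaches rely on the same two ingredients, but yours is structurally cleaner, while the paper's brute-force expansion has the minor advantage of being fully self-contained (no appeal to the affine-composition rule). Your inductive justification that $c_i\in[0,1]$ is also slightly more explicit than the paper's, which simply asserts $c_i(t)\geq 0$ without elaboration.
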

\begin{proof}
Recall the definition of $\Phi$ from \eqref{eq:cost-function}:
\begin{align*}
\Phi(\bW,\bY) = \sum_{i\in\calA(t)}c_i(t)(\bg_i-\bG_{\calA(t)}\bw_i)^T\bY(\bg_i-\bG_{\calA(t)}\bw_i).
\end{align*}
In order to prove this claim, first we simplify some notation.
Since $c_i(t)\geq0$ for all $i\in\calA(t)$, we can safely ignore them.
We write $\bG_{\calA(t)}$ simply as $\bG$. We also suppress the explicit dependence on $t$. 
With these simplifications, the modified cost function, denoted by $\widetilde{\Phi}$ can be written as:
\begin{align*}
\widetilde{\Phi}(\bW,\bY) = \sum_{i\in\calA}(\bg_i-\bG\bw_i)^T\bY(\bg_i-\bG\bw_i).
\end{align*}
For $i\in\calA$, define $\bz_i = \bg_i-\bG\bw_i$. Let $\bZ$ be a matrix whose columns are $\bz_i$'s. 
Note that $\bZ=(\bG-\bG\bW)=\bG(\bI-\bW)$.
With this notation, we have $\sum_{i\in\calA}(\bg_i-\bG\bw_i)^T\bY(\bg_i-\bG\bw_i)=\sum_{i\in\calA}\bz_i^T\bY\bz_i=\text{tr}(\bZ^T\bY\bZ)$.
Now $\widetilde{\Phi}$ can be written as
\begin{align}
\widetilde{\Phi}(\bW,\bY) &= \text{tr}\Big(\big(\bG(\bI-\bW)\big)^T\bY\big(\bG(\bI-\bW)\big)\Big) \notag \\
&= \text{tr}\big(\bI-\bW^T)\bG^T\bY\bG(\bI-\bW)\big) \notag \\
&= \text{tr}\big(\bG^T\bY\bG + \bW^T\bG^T\bY\bG\bW - \bW^T\bG^T\bY\bG - \bG^T\bY\bG\bW\big) \notag \\
&\stackrel{\text{(a)}}{=} \text{tr}(\bG^T\bY\bG) + \text{tr}(\bW^T\bG^T\bY\bG\bW) - \text{tr}(\bW^T\bG^T\bY\bG) - \text{tr}(\bG^T\bY\bG\bW) \notag \\
&\stackrel{\text{(b)}}{=} \text{tr}(\bG^T\bY\bG) + \text{tr}(\bW\bW^T\bG^T\bY\bG) - \text{tr}(\bW^T\bG^T\bY\bG) - \text{tr}(\bW\bG^T\bY\bG\big) \notag \\
&= \text{tr}\big(\bG^T\bY\bG + \bW\bW^T\bG^T\bY\bG - \bW^T\bG^T\bY\bG - \bW\bG^T\bY\bG\big) \notag \\
&= \text{tr}\big((\bI+\bW\bW^T-\bW^T-\bW)\bG^T\bY\bG\big) \notag \\
&= \text{tr}\big((\bI-\bW)(\bI-\bW^T)\bG^T\bY\bG\big) \label{convex-concave_interim1}
\end{align}
In (a) we used the property that trace is linear in its arguments (i.e., $\text{tr}(\bA+\bB)=\text{tr}(\bA) + \text{tr}(\bB)$ for all $\bA,\bB$). In (b), we used the property that $\text{tr}(\bA\bB)=\text{tr}(\bB\bA)$.
In order to prove that for a fixed $\bY$, $\widetilde{\Phi}$ is convex in $\bW$, it suffices to show that, for any $\lambda\in[0,1]$ and $\bW_1\bW_2\in\W(t)$, we have $\widetilde{\Phi}(\lambda\bW_1+(1-\lambda)\bW_2,\bY) \leq \lambda\widetilde{\Phi}(\bW_1,\bY) + (1-\lambda)\widetilde{\Phi}(\bW_2,\bY)$.
\begin{align*}
\widetilde{\Phi}&(\lambda\bW_1+(1\,-\,\lambda)\bW_2,\bY) \\
&\leq \text{tr}\Big[\Big(\bI-\big(\lambda\bW_1+(1-\lambda)\bW_2\big)\Big)\Big(\bI-\big(\lambda\bW_1+(1-\lambda)\bW_2\big)^T\Big)\bG^T\bY\bG\Big] \\
&= \text{tr}\Big[\Big(\lambda\big(\bI-\bW_1\big)+(1-\lambda)\big(\bI-\bW_2\big)\Big)\Big(\lambda\big(\bI-\bW_1\big)+(1-\lambda)\big(\bI-\bW_2\big)\Big)^T\bG^T\bY\bG\Big] \\
&= \text{tr}\Big[\Big(\lambda^2\big(\bI-\bW_1\big)\big(\bI-\bW_1\big)^T + \lambda(1-\lambda)\big(\bI-\bW_1\big)\big(\bI-\bW_2\big)^T\Big)\bG^T\bY\bG\Big] \\
&\quad + \text{tr}\Big[\Big(\lambda(1-\lambda)\big(\bI-\bW_2\big)\big(\bI-\bW_1\big)^T + (1-\lambda)^2\big(\bI-\bW_2\big)\big(\bI-\bW_2\big)^T\Big)\bG^T\bY\bG\Big] \\
&\stackrel{\text{(a)}}{=} \text{tr}\Big[\Big(\big(\lambda^2-\lambda\big)\big(\bI-\bW_1\big)\big(\bI-\bW_1\big)^T + \lambda(1-\lambda)\big(\bI-\bW_1\big)\big(\bI-\bW_2\big)^T\Big)\bG^T\bY\bG\Big] \\
&\quad + \text{tr}\Big[\Big(\lambda(1-\lambda)\big(\bI-\bW_2\big)\big(\bI-\bW_1\big)^T + \big((1-\lambda)^2-(1-\lambda)\big)\big(\bI-\bW_2\big)\big(\bI-\bW_2\big)^T\Big)\bG^T\bY\bG\Big] \\
&\qquad + \text{tr}\Big[\Big(\lambda\big(\bI-\bW_1\big)\big(\bI-\bW_1\big)^T + (1-\lambda)\big(\bI-\bW_2\big)\big(\bI-\bW_2\big)^T\Big)\bG^T\bY\bG\Big] \\
&= \text{tr}\Big[\Big(\lambda(1-\lambda)\big(\bI-\bW_1\big)\big[\big(\bI-\bW_2\big)^T-\big(\bI-\bW_1\big)^T\big]\Big)\bG^T\bY\bG\Big] \\
&\quad + \text{tr}\Big[\Big(\lambda(1-\lambda)\big(\bI-\bW_2\big)\big[\big(\bI-\bW_1\big)^T-\big(\bI-\bW_2\big)^T\big] \Big)\bG^T\bY\bG\Big] \\
&\qquad + \text{tr}\Big[\Big(\lambda\big(\bI-\bW_1\big)\big(\bI-\bW_1\big)^T + (1-\lambda)\big(\bI-\bW_2\big)\big(\bI-\bW_2\big)^T\Big)\bG^T\bY\bG\Big] \\
&= \text{tr}\Big[\Big(\lambda(1-\lambda)\big[\big(\bI-\bW_1\big)-\big(\bI-\bW_2\big)\big]\big[\big(\bI-\bW_2\big)^T-\big(\bI-\bW_1\big)^T\big]\Big)\bG^T\bY\bG\Big] \\
&\quad + \text{tr}\Big[\Big(\lambda\big(\bI-\bW_1\big)\big(\bI-\bW_1\big)^T + (1-\lambda)\big(\bI-\bW_2\big)\big(\bI-\bW_2\big)^T\Big)\bG^T\bY\bG\Big] \\
&= \text{tr}\Big[-\Big(\lambda(1-\lambda)\big[\big(\bW_1-\bW_2\big)\big(\bW_1-\bW_2\big)^T\big]\Big)\bG^T\bY\bG\Big] \\
&\quad + \text{tr}\Big[\Big(\lambda\big(\bI-\bW_1\big)\big(\bI-\bW_1\big)^T + (1-\lambda)\big(\bI-\bW_2\big)\big(\bI-\bW_2\big)^T\Big)\bG^T\bY\bG\Big] \\
&\stackrel{\text{(b)}}{\leq} \text{tr}\Big[\Big(\lambda\big(\bI-\bW_1\big)\big(\bI-\bW_1\big)^T + (1-\lambda)\big(\bI-\bW_2\big)\big(\bI-\bW_2\big)^T\Big)\bG^T\bY\bG\Big] \\
&= \lambda\text{tr}\Big[\Big(\big(\bI-\bW_1\big)\big(\bI-\bW_1\big)^T\Big)\bG^T\bY\bG\Big] + (1-\lambda)\text{tr}\Big[\Big(\big(\bI-\bW_2\big)\big(\bI-\bW_2\big)^T\Big)\bG^T\bY\bG\Big] \\
&= \lambda\widetilde{\Phi}(\bW_1,\bY) + (1-\lambda)\widetilde{\Phi}(\bW_2,\bY)
\end{align*}
In (a) we added and subtracted $\text{tr}\Big[\Big(\lambda\big(\bI-\bW_1\big)\big(\bI-\bW_1\big)^T + (1-\lambda)\big(\bI-\bW_2\big)\big(\bI-\bW_2\big)^T\Big)\bG^T\bY\bG\Big]$.
In (b) we used that $\text{tr}\Big[\big(\bW_1-\bW_2\big)\big(\bW_1-\bW_2\big)^T\bG^T\bY\bG\Big] = \text{tr}\Big[\big(\bW_1-\bW_2\big)^T\bG^T\bY\bG\big(\bW_1-\bW_2\big)\Big] \geq 0$, as $\big(\bW_1-\bW_2\big)^T\bG^T\bY\bG\big(\bW_1-\bW_2\big)\succeq\bzero$ (recall that, by assumption, $\bY\succeq\bzero$), implying that all its eigenvalues are greater than or equal to zero. Since trace is equal to the sum of eigenvalues, the trace is also bigger than or equal to zero. This completes the proof of \Claimref{convex-concave}.
\end{proof}

}
\end{document}